\DeclareFontFamily{U}{mathx}{\hyphenchar\font45}
\DeclareFontShape{U}{mathx}{m}{n}{
	<5> <6> <7> <8> <9> <10>
	<10.95> <12> <14.4> <17.28> <20.74> <24.88>
	mathx10
}{}
\DeclareSymbolFont{mathx}{U}{mathx}{m}{n}
\DeclareMathAccent{\widecheck}{0}{mathx}{"71}
\newtheorem{conj}{Conjecture}
\newtheorem{thm}[conj]{Theorem}
\newtheorem{cor}[conj]{Corollary}
\newtheorem{prop}[conj]{Proposition}
\newtheorem{lemma}[conj]{Lemma}
\newtheorem{ass}{Assumption}
\theoremstyle{remark}\newtheorem{remark}{Remark}
\theoremstyle{remark}\newtheorem{example}{Example}
\def\A{\mathcal{A}}
\def\RR{\mathbb{R}}
\def\wh{\widehat}
\def\wt{\widetilde}
\def\bl{\backslash}
\def\E{\mathcal{E}}
\def\u{\mu}
\def\I{\mathcal{I}}
\def\g{\gamma}
\def\EE{\mathbb{E} }
\def\PP{\mathbb{P} }
\def\NN{\mathbb{N} }
\def\eps{\varepsilon}
\def\S{\mathcal{S}}
\def\diag{\textrm{diag}}
\def\i{\infty}
\def\r{{\infty,1}}
\def\c{{1, \infty}}
\def\H{\mathcal{H}}
\def\1{\mathbbm{1}}
\def\M{\Pi}
\def\rs{\!\!\!}
\def\KL{\textrm{KL}}
\def\oa{\overline{\alpha}}
\def\ua{\underline{\alpha}}
\def\og{\overline{\gamma}}
\def\ug{\underline{\gamma}}
\def\red{\color{red}}
\def\blue{\color{blue}}
\def\green{\color{green}}
\def\od{\overline{d}}
\def\ud{\underline{d}}
\let\emptyset\varnothing
\def\BState{\State\hskip-\ALG@thistlm}
\title{A fast algorithm with minimax optimal guarantees for topic models with an unknown number of topics.}
\author{Xin Bing\thanks{Department of Statistical Science, Cornell University, Ithaca, NY. E-mail: \texttt{xb43@cornell.edu}.}~~~~~Florentina Bunea\thanks{Department of Statistical Science, Cornell University, Ithaca, NY. E-mail: \texttt{fb238@cornell.edu}.}~~~~~Marten Wegkamp\thanks{Department of Mathematics and Department of Statistical Science, Cornell University, Ithaca, NY. E-mail: \texttt{mhw73@cornell.edu}. }}
\begin{document}
	\maketitle
	
	\begin{abstract}
		\noindent
				Topic models have become popular for the analysis of data that consists in a collection of n independent multinomial observations, with parameters $N_i\in \NN$ and $\Pi_i\in [0,1]^p$ for $i=1,\ldots,n$.  The model links all cell probabilities, collected in a  $p\times n$ matrix $\Pi$, via the assumption that $\Pi$ can be factorized as the product of  two nonnegative matrices $A\in [0,1]^{p\times K}$ and  $W\in [0,1]^{K\times n}$. Topic models have been originally developed in text mining, when one browses through $n$ documents, based on a dictionary of $p$ 
		words, and covering $K$ topics. In this terminology, the matrix $A$ is called the word-topic matrix, and is the main target of estimation. It can be viewed as a matrix of conditional probabilities, and it is uniquely defined, under appropriate separability assumptions, discussed in detail in this work. Notably, the unique $A$ is required to satisfy what is commonly known as the anchor word assumption, under which $A$ has an unknown  number of rows respectively proportional to the  canonical basis vectors in $\RR^K$. The indices of such rows are referred to as anchor words. Recent computationally feasible algorithms, with theoretical guarantees, 
		utilize constructively this assumption by linking the estimation of the set of anchor words with that of estimating the $K$  vertices of a simplex. This crucial step in the estimation of $A$ requires $K$ to be known, and  cannot be easily extended to the more realistic set-up when $K$ is unknown.  
		
		This work takes a different view on anchor word estimation, and on the estimation of $A$. We propose a new method of estimation in topic models, that is not a variation 
		on the existing simplex finding algorithms, 
		and that estimates $K$ from the observed data. We derive new finite sample minimax lower bounds for the estimation of $A$, as well as new upper bounds for our proposed estimator. We describe the scenarios where our estimator is minimax adaptive. Our finite sample analysis is valid for any $n, N_i, p$ and  $K$, and both $p$ and $K$ are allowed to increase with $n$, a situation not handled well by previous analyses.  
		
		We complement our theoretical results with a detailed simulation study.  We illustrate that the new algorithm is faster and more accurate than the current ones, although  we start out with a computational and theoretical disadvantage of not knowing  the  correct number of topics $K$, while we provide the competing methods with the correct value in our simulations.\\
		
		\noindent {\bf Keywords:}  { \small  Topic model,
			latent model, overlapping clustering, 
			identification,
			high dimensional estimation,
			minimax estimation,
			anchor words,
			separability,
			nonnegative matrix factorization,
			adaptive estimation} 
	\end{abstract}

	\section{Introduction}

\subsection{Background} 	
Topic models have been developed during the last two decades in   natural language processing and machine learning for discovering the themes, or  ``topics'',  that occur in a collection of documents. They have also been successfully used to explore structures in data from genetics, neuroscience and  computational social science, to name just a few areas of application.  Earlier works on versions of these models, called latent semantic indexing models,  appeared mostly in the computer science and information science literature, for instance  \cite{Deerwester,Papa98,Hofmann99, Papa}. Bayesian solutions, involving latent Dirichlet allocation models, have been introduced in \cite{BleiLDA} and MCMC-type solvers have been considered by \cite{MCMC}, to give a very limited number of earlier references.  We refer to \cite{blei-intro} for a in-depth  overview of this field. One weakness of the earlier work on topic models was of 
computational nature, which motivated further, more recent, research on the development of algorithms with polynomial running time, see, for instance, 
\cite{Anandkumar, arora2013practical,Bansal,Tracy}.
Despite these recent advances, {\em fast} algorithms leading to estimators with {\em sharp statistical properties} are still lacking, and motivates this work.  

We begin by describing the topic model, using the terminology employed for its original usage, that of text mining. It is assumed that we observe a collection of  $n$ independent documents, and that  each document is written  using the same dictionary of $p$ words.  For each document $i\in[n]:=\{1,\ldots,n\}$, we sample $N_i$ words and record their frequencies in the vector $X_i\in \RR^p$. 
It is further assumed that the probability $\Pi_{ji}$ with which a word $j$ appears in a document $i$ depends on the topics covered in the document, justifying the following informal application of the Bayes' theorem, 
\begin{align*}  
\Pi_{ji}  :=  \PP_i&(\text{Word} \ j) = \sum_{k = 1}^K \PP_i(\text{Word} \  j| \text{Topic} \ k) \PP_i(\text{Topic} \ k) . 
\end{align*} 
The topic model assumption is that the conditional probability of the occurrence of a word, given the topic, is the same for all documents. 
This leads to the topic model specification: 
\begin{align} \label{basic}
\Pi_{ji}  =   \sum_{k = 1}^K \PP(\text{Word} \  j| \text{Topic} \ k) \PP_i(\text{Topic} \ k), \qquad \text{for each }j\in [p],\ i\in [n].  
\end{align} 
We collect the above conditional probabilities in the $p\times K$ word-topic matrix $A$ and we let $W_i \in \RR^K$ denote the vector containing the probabilities of each of the $K$ topics occurring in document $i\in[n]$.  With this notation, data generated from topic models  are  observed count frequencies $X_i$ corresponding to independent 
\begin{equation}\label{model_multinomial}
Y_i := N_iX_i \sim \text{Multinomial}_p\left(N_i,  AW_i\right),\quad \text{ for each  }i\in[n].
\end{equation} 
Let $X$ be the $p \times n$ observed data matrix,  $W$ be the $K \times n$ matrix with columns $W_i$, and $\Pi$ be the $p \times n$ matrix with entries $\Pi_{ji}$ satisfying (\ref{basic}). The topic model therefore postulates that the expectation of the word-document  frequency matrix $X$ has the non-negative factorization
\begin{equation}\label{model}  
\Pi := \EE[X] = AW, 
\end{equation} 
and the goal is to borrow strength across the $n$ samples to estimate the common matrix of conditional probabilities, $A$. Since the columns in $\Pi, A$ and $W$ are probabilities specified by (\ref{basic}), they have non-negative entries and satisfy
\begin{equation}\label{col_sum_one}
\sum_{j=1}^p\M_{ji} = 1, \quad \sum_{j=1}^pA_{jk}=1, \quad \sum_{k=1}^KW_{ki} = 1, \quad \text{for any $k\in [K]$ and  $i\in[n]$.}
\end{equation}
In Section \ref{sec_iden} we discuss in detail separability conditions on $A$ and $W$ that ensure the uniqueness of the factorization in (\ref{model}). 	

In this context, the main goal of this  work is to estimate $A$ optimally, both computationally and from a minimax-rate perspective,  in identifiable topic models, with an {\em unknown} number $K$ of topics, that is allowed to depend on $n,N_i,p$.

\subsection{Outline and contributions}\label{sec_contri}

In this section we describe  the outline of this paper and give a precise summary of our results which are developed via the following overall strategy: (i) We first show that $A$ can be derived, uniquely, at the population level, from quantities that can be estimated independently of $A$. 
(ii) We use the constructive procedure in (i) for estimation, and replace population level quantities by appropriate estimates, tailored to our final goal of minimax optimal estimation of $A$ in (\ref{model}), via  fast computation.  
\paragraph{Recovery of $A$ at the population level.}

We prove in   Propositions \ref{prop_anchor} and \ref{prop_A} of Section \ref{sec_idenA}  that the target word-topic matrix  $A$ can be uniquely derived from $\Pi$, and give the resulting procedure in Algorithm \ref{alg}. The proofs require the separability Assumptions \ref{ass_sep} and \ref{ass_pd}, common in the topic model literature, when $K$ is known. 
All model assumptions are stated and discussed in Section \ref{sec_iden}, and  informally described here. Assumption \ref{ass_sep} is placed on the word-topic matrix $A$, and is known as the anchor-word assumption as it requires the existence of words that are solely associated with one topic.  In Assumption \ref{ass_pd} we require that  $W$ have full rank.

To the best of our knowledge, parameter identifiability in topic models received a limited amount of attention. If model (\ref{model}) and Assumptions \ref{ass_sep} and \ref{ass_pd} hold, 
and provided that the index set $I$ corresponding to anchor words, as well as the number of topics  $K$, are known, Lemma 3.1 of \cite{arora2012learning} shows that  $A$ can be constructed uniquely via $\Pi$. If $I$ is unknown, but $K$ is known,  Theorem 3.1 of \cite{rechetNMF} further shows that the matrices $A$ and $W$  
can  be constructed uniquely via $\Pi$, by connecting the problem of finding  $I$ with that of finding the $K$ vertices of an appropriately defined simplex.
Methods that utilize simplex structures
are common in the topic models literature, such as the simplex structure in the word-word co-occurrence matrix
\citep{arora2012learning,arora2013practical}, in the original matrix $\Pi$ \citep{ding}, and in the singular vectors of $\Pi$ \citep{Tracy}.

In this work we provide a solution to the open problem of constructing $I$, and then $A$, in topic models,  in the more realistic situation when $K$ is unknown.  For this, we develop a method that is not a  variation of the existing simplex-based constructions. Under the additional Assumption \ref{ass_w} of Section \ref{sec_iden}, but without a priori knowledge of $K$, we recover the index set  $I$ of all anchor words, as well as its partition $\mathcal I$. This constitutes   Proposition \ref{prop_anchor}. 
Our proof only requires the existence of  one anchor word for each topic, but we allow for the existence of more, as this is typically the case in practice, see for instance, \cite{blei-intro}. Our method is optimization-free. It involves comparisons between row and column maxima of a scaled version of the matrix $\Pi\Pi^T$, specifically  of the  matrix $R$ given by (\ref{def_R}). Example \ref{eg_2} of Section \ref{sec_idenA} illustrates our procedure, whereas a  contrast with simplex-based approaches is given in Remark \ref{simplex} of Section \ref{sec_idenA}. 


\paragraph{Estimation of $A$.}
In Section \ref{sec_est_A}, we follow the steps of Algorithm \ref{alg} of Section \ref{sec_idenA}, to develop Algorithms \ref{alg1} and \ref{alg2} for estimating $A$ from the data.

We show first how to construct estimators of $I$, $\mathcal{I}$ and $K$, and summarize this construction in Algorithm \ref{alg1} of Section \ref{sec_est_I}, with theoretical guarantees provided in Theorem \ref{thm_anchor}. Since we follow Algorithm \ref{alg}, this step of our estimation procedure does not involve any of the previously  used simplex recovery algorithms, such as those mentioned above. 

The estimators of $I$,  $\mathcal{I}$ and $K$  are employed in the second step of our procedure, summarized in Algorithm \ref{alg2} of Section \ref{sec_est_A}. This step yields the estimator  $\widehat{A}$ of $A$, and only requires solving a system of equations under linear restrictions, which, in turn, requires the estimation of the inverse of a matrix. For the latter, we develop a  fast and stable algorithm, tailored to this model, which reduces to solving $K$ linear programs, each optimizing over a $K$-dimensional space.  This is  less involved, computationally, than the next best competing estimator of $A$, albeit developed for $K$ known, in \cite{arora2013practical}.  After estimating $I$, their estimate of  $A$  requires solving $p$ restricted convex optimization problems, each optimizing over a $K$-dimensional parameter space. 

We assess the theoretical performance of our estimator $\wh A$ with respect to the $L_{1, \infty}$ and $L_{1}$ losses defined below, by providing finite sample lower and upper bounds on these quantities, that hold for all   $p$, $K$, $N_i$ and $n$. In particular, we allow $K$ and $p$ to grow with  $n$, as we expect that when the number of available documents $n$ increases, so will the number $K$ of topics that they cover, and possibly the number $p$  of words used in these documents.  Specifically, we let $\mathcal{H}_{K}$ denote the set of all $K \times K$ permutation matrices and define:  
\begin{align*}
\| \wh A - A\|_{\c} &:= \max_{1\le k\le K} \sum_{j=1}^p|\wh A_{jk}-A_{jk}|, \quad
&\| \wh A - A\|_1 &:= \sum_{j=1}^p\sum_{k =1}^K|\wh A_{jk}-A_{jk}|,  
\\
L_\c(\wh A, A) &:= \min_{P\in \H_K}\|\wh A - AP\|_\c, \quad	& L_{1}(\wh A, A) &:= \min_{P\in \H_K}\| \wh A - AP\|_1.
\end{align*}
We provide upper bounds for $L_{1}(\wh A, A)$ and  	$L_\c(\wh A, A)$ in Theorem \ref{thm_rate_Ahat} of Section \ref{sec_upperbound}. 
To benchmark these upper bounds,   Theorem \ref{thm_lb} in Section \ref{sec_lowerbound} shows that the corresponding lower bounds are: 
\begin{align}\label{ours}
&\inf_{\wh A} \sup_{A}\PP_{A}\left\{ L_\c(\wh A, A) \ge c_0\sqrt{K(|I_{\max}| + |J|)\over nN} \right\} \ge c_1,\\\nonumber
&\inf_{\wh A} \sup_{A}\PP_{A}\left\{ L_{1}(\wh A, A) \ge c_0K\sqrt{|I| + K |J|\over nN} \right\} \ge c_1,
\end{align}
for absolute constants $c_0>0$ and $c_1\in (0, 1]$ and assuming $N:=N_1 = \cdots N_n$ for ease of presentation. The infimum is taken over all estimators $\wh A$, while the supremum is taken over all matrices $A$ in a prescribed class $\A$, defined in (\ref{classA}). The lower bounds depend on the largest number of anchor words within each topic  ($|I_{\max}|$), the total number of anchor words ($|I|$), and the number of non-anchor words ($|J|$) with $J := [p]\setminus I$.  In Section \ref{sec_upperbound} we discuss conditions under which  our estimator $\widehat A$  is minimax optimal, up to a logarithmic factor, under both losses. To the best of our knowledge, these lower and upper bounds on the $L_{1, \infty}$ loss of our estimators are new, and valid for growing $K$ and $p$. They imply the more commonly studied bounds on the  $L_{1}$ loss. 

Our estimation procedure and the analysis of the resulting estimator $\widehat{A}$ are tailored to count data, and utilize  the restrictions (\ref{col_sum_one}) on the parameters of  model (\ref{model}). Consequently, both the estimation method and the properties of the estimator differ from those developed for general identifiable latent variable models, for instance those in    \cite{LOVE}, and we refer to the latter for further references  and a recent  overview of estimation in such models.

To the best of our knowledge, computationally efficient estimators of the word-topic matrix $A$ in (\ref{model}), that are also accompanied  by a  theoretical analysis, have only been developed for the situation in which $K$ is known in advance. Even in that case, the existing results are limited. 

\cite{arora2012learning,arora2013practical} are the first to analyze theoretically, from a  rate perspective,	estimators of $A$ in the  topic model. They establish  upper bounds    on the global $L_{1}$ loss of their estimators, and their analysis  
allows $K$ and $p$ to grow with $n$. Unfortunately, these bounds   differ by at least a factor of order $p^{3/2}$ from the minimax optimal rate given by our Theorem \ref{thm_rate_Ahat}, even when $K$ is fixed and does not grow with $n$.

The recent work of \cite{Tracy} is tailored  
to topic models with a small, known, number of topics $K$, which is independent of the number of documents $n$. 
Their procedure makes clever use of the geometric simplex structure in the singular vectors of $\M$. To the best of our knowledge, \cite{Tracy} is the first work that proves a  minimax lower bound for  the estimation of $A$ in topic models, with respect to the $L_{1}$ loss,  over a different parameter space than the one we consider. We discuss in detail the corresponding rate over this space, and compare it with ours, in Remark \ref{minimax_Ke} in Section \ref{sec_lowerbound}.
The procedure developed by \cite{Tracy} is rate optimal for fixed $K$, under suitable conditions tailored to their set-up (see pages 13 -- 14 in \cite{Tracy}).

We defer a  detailed rate comparison with existing results to Remark \ref{minimax_Ke} of Section \ref{sec_lowerbound} and to Section \ref {sec_compare_rate}.

In Section \ref{sec_sim} we present a simulation study, in which 
we compare numerically the quality 
of our  estimator with   that of the best performing estimator to date, developed in \cite{arora2013practical}, which also comes with theoretical guarantees, albeit not minimax optimal.
We found that the competing estimator   is generally fast and accurate when $K$ is known, but it is very sensitive to the misspecification of $K$, as we illustrate in Appendix \ref{sec_sim_K} of the supplement. Further, extensive comparisons are presented in Section  \ref{sec_sim}, in terms of the estimation of $I$, $A$ and the computational running time of the algorithms. We found that our procedure dominates on all these counts.

Finally, the proofs of Propositions \ref{prop_orth_AW} and \ref{prop_anchor} of Section \ref{sec_idenA} and the results of Sections \ref{sec_est_I} and \ref{sec_A} are deferred to the appendices. 

\paragraph*{Summary of new contributions.}
We propose a new method that estimates
\begin{enumerate}[noitemsep]
	\item[(a)]  the number of topics $K$;
	\item[(b)]  the anchor words and their partition;
	\item[(c)]  the word-topic matrix $A$; 
\end{enumerate}
and  provide an analysis under a {\em finite sample setting}, that allows $K$, in addition to $N_i$ and $p$ to grow with the sample size 
(number of documents) $n$. In this regime, 
\begin{enumerate}[noitemsep]
	\item[(d)] we establish a minimax lower bound for estimating the word-topic matrix $A$; 
	\item [(e)]  we show that the number of topics can be estimated correctly, with high probability; 
	\item [(f)] we show that $A$ can be estimated at the minimax-optimal rate. 
\end{enumerate}
Furthermore,
\begin{enumerate}[noitemsep]
	\item[(g)] the estimation of $K$  is optimization free;
	\item[(h)] the estimation of the anchor words and that of $A$ is scalable in $n,N_i$, $p$ and $K$. 
\end{enumerate}
To the best of our knowledge, 
estimators of $A$ that are scalable not only with $p$, but also with $K$, and for which (a), (b) and (d) - (f) hold are new in the literature.

\subsection{Notation}
The following notation will be used throughout the 
entire paper. 

The integer set $\{1,\ldots,n\}$ is denoted by $[n]$. For a generic set $S$, we denote $|S|$ as its cardinality. For a generic vector $v\in \RR^d$, we let $\|v\|_q$ denote the vector $\ell_q$ norm, for $q= 0,1,2,\ldots,\infty$ and $\textrm{supp}(v)$ denote its support. We denote by $\diag(v)$ a $d\times d$ diagonal matrix with diagonal elements equal to $v$. For a generic matrix $Q\in \RR^{d\times m}$, we write $\|Q\|_{\infty}=\max_{1\le i \le d, 1\le j \le m}|Q_{ij}|$, $\|Q\|_{{1}} = \sum_{1\le i\le d,1\le j\le m}|Q_{ij}|$ and $\|Q\|_{\r}=\max_{1\le i\le d}\sum_{1\le j\le m}|Q_{ij}|$.  For the submatrix of $A$, we let $Q_{i\cdot}$ and $Q_{\cdot j}$ be the $i$th row and $j$th column of $Q$. For a set $S$, we let $Q_S$ denote its $|S|\times m$ submatrix. We write the $d\times d$ diagonal matrix $$D_Q = \diag(\|Q_{1\cdot}\|_1, \ldots, \|Q_{d\cdot}\|_1)$$ and let $(D_Q)_{ii}$ denote the $i$th diagonal element. 

We use $a_n \lesssim b_n$ to denote there exists an absolute constant $c>0$ such that $a_n \le cb_n$, and write $a_n \asymp b_n$ if there exists two absolute constants $c, c'>0$ such that 
$c b_n \le a_n \le c'b_n$.

We let $n$ stand for the number of documents and $N_i$ for the number of randomly drawn words at  document $i\in[n]$.  Furthermore,  $p$ is the total number of words (dictionary size) and $K$ is the number of topics. We define $M := \max_i N_i \vee n\vee p $. Finally, $I$ is the (index) set of anchor words, and its complement $J:= [p]\setminus I$ forms the (index) set of non-anchor words.

\section{Preliminaries}\label{sec_iden}
In this section we introduce and discuss the assumptions under which $A$ in  model (\ref{model}) can be uniquely determined via $\Pi$, although $W$ is not observed.  

\subsection{An information bound perspective on model assumptions}\label{sec_fisher}

If  we had access to $W$ in model 
(\ref{model}), then the problem  of estimating $A$ would become the more standard problem of estimation in multivariate response regression under the constraints (\ref{col_sum_one}), and dependent  errors. In that case,  $A$ is uniquely defined if $W$ has full rank, which  is our  Assumption \ref{ass_pd} below. Since $W$ is not observable, we mentioned earlier that the identifiability of $A$ requires extra assumptions. We provide insight into their nature, via a classical information bound calculation. We view $W$ as a nuisance parameter and ask when the estimation of $A$ can be done with the  same precision  whether $W$ is known or not. In classical information bound jargon \citep{CoxReid}, we study when the parameters $A$ and $W$ are orthogonal. The latter  is equivalent with verifying 	\begin{eqnarray}\label{fisher}
\EE\left[- {\partial^2 \ell(X_1,\ldots, X_n) \over \partial A_{jk}\partial W_{k'i}} \right] = 0\quad \text{ for all $j\in [p]$, $i\in [n]$ and $k, k'\in [K]$,}
\end{eqnarray}
where $\ell(X_1, \ldots, X_n)$ is the log-likelihood of $n$ independent multinomial vectors.  Proposition \ref{prop_orth_AW} below 
gives necessary and sufficient conditions for parameter orthogonality. 

\begin{prop}\label{prop_orth_AW}
	If $X_1, \ldots X_n$ are an independent sample from (\ref{model_multinomial}),  and (\ref{model}) holds, then  $A$ and $W$ are orthogonal parameters, in the sense (\ref{fisher}) above, if and only if the following holds:
	\begin{equation}\label{cond_orth}
	\Big| \textrm{supp}(A_{j\cdot}) \cap \textrm{supp}(W_{\cdot i}) \Big| \le 1, \quad \text{ for all }j\in[p], i\in [n].
	\end{equation}
\end{prop}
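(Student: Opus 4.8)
The plan is to compute the expected mixed second derivative in (\ref{fisher}) explicitly and then read off exactly when it vanishes. First I would write the log-likelihood of the $n$ independent multinomials as $\ell = \sum_{i=1}^n\sum_{j=1}^p Y_{ji}\log\Pi_{ji} + \textrm{const}$, with $\Pi_{ji} = \sum_{l=1}^K A_{jl}W_{li}$. Since $W_{k'i}$ enters only through document $i$, and $A_{jk}$ enters $\Pi_{mi}$ only through row $m=j$, the mixed partial $\partial^2\ell/\partial A_{jk}\partial W_{k'i}$ receives a contribution from the single term $Y_{ji}\log\Pi_{ji}$. Differentiating once gives $\partial\ell/\partial A_{jk} = Y_{ji}W_{ki}/\Pi_{ji}$, and differentiating again in $W_{k'i}$ yields $Y_{ji}\bigl(\delta_{kk'}/\Pi_{ji} - W_{ki}A_{jk'}/\Pi_{ji}^2\bigr)$.

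Next I would take expectations using $\EE[Y_{ji}] = N_i\Pi_{ji}$, which collapses the random factor and leaves the deterministic identity
\[
\EE\left[-\frac{\partial^2\ell}{\partial A_{jk}\partial W_{k'i}}\right] = N_i\left(\frac{A_{jk'}W_{ki}}{\Pi_{ji}} - \delta_{kk'}\right).
\]
This reduces (\ref{fisher}) to the purely algebraic question of when $A_{jk'}W_{ki} = \delta_{kk'}\Pi_{ji}$; equivalently, writing $a_k = A_{jk}$, $w_k = W_{ki}$ and $\Pi_{ji}=\langle a, w\rangle$, it asks precisely when $\log\langle a,w\rangle$ has vanishing mixed partials, i.e.\ when the bilinear form $\Pi_{ji} = \sum_k A_{jk}W_{ki}$ separates multiplicatively into a factor depending only on $A_{j\cdot}$ and one depending only on $W_{\cdot i}$. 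This is the conceptual core: orthogonality is exactly the separability of the likelihood contribution of $(j,i)$ into an $A$-part and a $W$-part.

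The two implications are then established separately. For the ``if'' direction I would assume $|\textrm{supp}(A_{j\cdot})\cap\textrm{supp}(W_{\cdot i})|\le 1$; if the intersection is empty then $\Pi_{ji}=0$ (a degenerate case dispatched directly), and otherwise $\Pi_{ji} = A_{jk_0}W_{k_0 i}$ for the unique common index $k_0$. I would then note that for every pair of active indices $k\in\textrm{supp}(A_{j\cdot})$, $k'\in\textrm{supp}(W_{\cdot i})$ the crossed product $A_{jk'}W_{ki}$ is nonzero only when $k=k'=k_0$, so that each entry $A_{jk'}W_{ki}/\Pi_{ji}-\delta_{kk'}$ collapses either to $\Pi_{ji}/\Pi_{ji}-1=0$ or to $0-0=0$. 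For the ``only if'' direction I would argue by contraposition: if there are two distinct common indices $k_0,k_1\in\textrm{supp}(A_{j\cdot})\cap\textrm{supp}(W_{\cdot i})$, then the entry with $A$-index $k_0$ and $W$-index $k_1$ equals $N_i\,A_{jk_1}W_{k_0 i}/\Pi_{ji}$, which is strictly positive since all three factors are positive, violating orthogonality.

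The main obstacle I anticipate is not the differentiation but the bookkeeping in the ``if'' direction, which hinges on correctly separating the active (positive) coordinates of $A_{j\cdot}$ and $W_{\cdot i}$ from the structural zeros: it is on the active coordinates that the separability characterization is exact. I would therefore treat the boundary coordinates (where $A_{jk}=0$ or $W_{k'i}=0$, so the corresponding parameter is not being estimated in the interior) and the $\Pi_{ji}=0$ case by a direct degeneracy argument, and this is where care is needed to align the case analysis with the statement ``for all $j,i,k,k'$'' in (\ref{fisher}).
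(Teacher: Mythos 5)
Your proposal is correct and follows essentially the same route as the paper's proof: differentiate the multinomial log-likelihood twice, take expectations via $\EE[Y_{ji}]=N_i\Pi_{ji}$, and then check by case analysis on the supports when the resulting expression $N_i\bigl(A_{jk'}W_{ki}/\Pi_{ji}-\delta_{kk'}\bigr)$ vanishes, with the contrapositive argument for necessity matching the paper's contradiction argument. Your handling of the boundary coordinates (setting the information entry to zero when $A_{jk}=0$ or $W_{k'i}=0$, so that only active index pairs matter) is exactly the convention the paper encodes in its piecewise derivative formulas, so no gap remains.
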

\medskip

\noindent We observe that condition (\ref{cond_orth})  is implied by either of the two following extreme  conditions:  
\begin{enumerate}
	\item[(1)]  
	All rows in $A$ are proportional to canonical vectors in $\RR^K$, which is equivalent to assuming  that all words are anchor words. 
	\item[(2)] $C:=n^{-1}WW^T$ is diagonal.
\end{enumerate} 
In the first scenario, each topic is described via words exclusively used for that topic, which is unrealistic. 
In the second case, the topics are totally unrelated to one another, an assumption that is not generally met, but is  perhaps more plausible than (1). Proposition \ref{prop_orth_AW} above shows that 
one cannot expect the estimation of $A$ in (\ref{model}), when $W$ is not observed, to be as easy as that when $W$ is observed, unless the very stringent conditions of this proposition hold. However, it points 
towards quantities that play a crucial role in the estimation of $A$: the anchor words 
and the rank of $W$. 
This motivates the study of this model, with both $A$ and $W$ unknown,  under the  more realistic assumptions introduced in the next section and used throughout this paper. 

\subsection{Main assumptions}
We make   the following three main  assumptions:  
\begin{ass}\label{ass_sep}
	For each topic $k = 1,\ldots, K$, there exists at least one word $j$ such that $A_{jk} >0$ and $A_{j\ell} = 0$ for any $\ell \ne k$.
\end{ass}
\begin{ass}\label{ass_pd}
	The matrix $W$ has   rank $K\le n$.
\end{ass}
\begin{ass}\label{ass_w}
	The inequality 
	\[ \cos\left( \angle( W_{i\cdot},   W_{j\cdot} ) \right)< \frac{\zeta_i}{\zeta_j} \wedge  \frac{\zeta_j}{\zeta_i}\qquad \text{for all $1\le i \ne j\le K$},\]
	holds, with $\zeta_i:= \| W_{i\cdot} \|_2 / \|W_{i\cdot} \|_1$.
\end{ass}      
Conditions on $A$ and $W$ under which $A$ can be uniquely determined from $\Pi$ are generically known as separability conditions, and were first introduced by \cite{donohoNMF}, for the identifiability of the factors in general nonnegative matrix factorization (NMF) problems. Versions of such conditions have been subsequently adopted in most of the literature  on topic models, which are particular instances of NMF, see, for instance, \cite{rechetNMF,arora2012learning,arora2013practical}.

In the context and interpretation of the topic model, the commonly accepted  Assumption \ref{ass_sep}    postulates that for each topic $k$ there exists at least one word solely associated with that topic. Such words are called {\it anchor words}, as the appearance of an anchor word is a clear indicator of the occurrence of its corresponding topic, and typically more than one anchor word is present. For future reference, for a given word-topic matrix $A$, we let  $I := I(A)$ be the set of anchor words, and $\mathcal{I}$ be its partition relative to topics: 
\begin{eqnarray}\label{def_pure}
I_k := \{j\in[p]: A_{jk}  >  0,  \ A_{j\ell} = 0\ \text{ for all } \ell \ne k\},\quad 
I := \bigcup_{k=1}^{K} I_k,\quad  \I :=  \left\{I_1, \ldots, I_K \right\}.
\end{eqnarray}
Earlier work  \cite{Anandkumar} proposes a tensor-based approach that does not require the anchor word assumption, but assumes that the topics are uncorrelated. \cite{LM-dag,blei2007} showed that, in practice, there is strong evidence against
the lack of correlation between topics.  We therefore relax the orthogonality conditions on the	matrix $W$ in our Assumption \ref{ass_pd}, similar to \cite{arora2012learning, arora2013practical}. 
We note that in  Assumption \ref{ass_pd} we have $K \leq n$, which translates as:  the total number $K$ of topics covered by $n$ documents is smaller than the number of documents.  

Assumption \ref{ass_pd} guarantees that 
the rows of $ W$, viewed as vectors in $\RR^n$, are not parallel, and Assumption \ref{ass_w}
strengthens this, by placing  a mild condition on the angle 
between any two rows of $W$.  
If, for instance,  $WW^T$ is a diagonal matrix, or if  $\zeta_i$ is the same for all $i\in[K]$, 
then Assumption \ref{ass_pd} implies 
Assumption \ref{ass_w}. However, the two assumptions are not equivalent, and neither implies the other, in general. We illustrate this in the   examples of  Section \ref{app_ex} in  the supplement. It is worth mentioning that, when the columns of $W$ are i.i.d. samples from the Dirichlet distribution as commonly assumed in the topic model literature \citep{BleiLDA,blei2007,blei-intro},  Assumption \ref{ass_w} holds with high probability under mild conditions on the hyper-parameter of the Dirichlet distribution. We defer their precise expressions  to Lemma \ref{lem_dir} in Appendix \ref{app_dir} of the supplement.

We discuss these assumptions further  in Remark \ref{simplex} of Section \ref{sec_idenA} and Remark \ref{remark_nu} of Section \ref{sec_est_I} below.

\section{Exact recovery of $I$, $\mathcal{I}$  and $A$ at the population level}\label{sec_idenA}
In this section we construct  $A$ via $\Pi$. Under Assumptions \ref{ass_sep} and \ref{ass_w}, we show first that the set of anchor words $I$ and its partition $\I$ can be determined, from the matrix $R$ given in (\ref{def_R}) below. 
We begin by re-normalizing  the three matrices involved in model (\ref{model}) such that their rows sum up to 1: 
\begin{equation}\label{normal}  
\wt W :=  D_W^{-1}W, \quad  \wt \Pi:= D_\Pi^{-1}\Pi, \quad \wt A :=  D_\Pi^{-1}AD_W. 
\end{equation}
Then
\begin{equation}\label{scaled}  
\wt \Pi= \wt A \wt W, 
\end{equation}
and 
\begin{equation}\label{def_R}
R := n\wt \Pi \wt \Pi^T = \wt A \wt C \wt A^T, 
\end{equation}
with $\wt C = n \wt W \wt W^T$.	
This normalization is standard in the topic model literature \citep{arora2012learning,arora2013practical}, and it preserves the anchor word structure:  matrices $A$ and $\wt A$ have the same support, and  Assumption \ref{ass_sep} is equivalent with the existence, for each $k\in[K]$, of at least one word $j$ such that $\wt A_{jk} =1$ and $\wt A_{j\ell} = 0$ for any $\ell \ne k$. 
Therefore $A$ and $\wt A$ have the same $I$ and $\mathcal{I}$. We differ from the existing literature in the way we make use of this normalization and explain this in Remark \ref{simplex} below. 	
Let  
\begin{equation}\label{comp}  
T_i := \max_{1\le j\le p}R_{ij}, \qquad 	S_i := \left\{j\in [p]: R_{ij} = T_i\right\}, \qquad\text{for any $i\in [p].$}
\end{equation}  
In words, $T_i$ is the maximum entry of row $i$, and $S_i$ is the set of column indices of those entries in row $i$ that equal to the row maximum value. The following proposition shows the exact recovery of $I$ and $\I$ from $R$.


\begin{prop}
	\label{prop_anchor}
	Assume that model (\ref{model}) and Assumptions \ref{ass_sep} and \ref{ass_w} hold. Then:
	\begin{enumerate}
		\item[(a)] 
		$i\in I \iff T_i = T_j, \quad \text{for all }j\in S_i.$ 
		\item[(b)] The anchor word set $I$ can be determined uniquely from $R$. Moreover, its partition $\I$ is unique and can be determined from $R$ up to label permutations.
	\end{enumerate}
\end{prop}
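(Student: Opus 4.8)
The plan is to understand the structure of the matrix $R = \wt A \wt C \wt A^T$ by exploiting the anchor-word structure encoded in $\wt A$, and to show that this structure is detectable purely through the row-maximum comparisons in (\ref{comp}). First I would record the key facts about the normalized matrices. By (\ref{normal}) and the remark following (\ref{def_R}), each row $\wt A_{j\cdot}$ is a probability vector summing to $1$, and word $j$ is an anchor word for topic $k$ exactly when $\wt A_{j\cdot} = e_k^T$, the $k$-th canonical basis row vector; a non-anchor word has $\wt A_{j\cdot}$ supported on at least two topics. Consequently, writing $R_{ij} = \wt A_{i\cdot} \wt C \wt A_{j\cdot}^T$, the entry reduces to a weighted average of the entries of $\wt C$ against the two probability vectors $\wt A_{i\cdot}$ and $\wt A_{j\cdot}$. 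In particular, if $i$ is an anchor word for topic $k$, then $R_{ij} = \wt C_{k\cdot}\wt A_{j\cdot}^T = \sum_{\ell} \wt C_{k\ell}\, \wt A_{j\ell}$, a convex combination of the entries in row $k$ of $\wt C$.

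The heart of the argument is a diagonal-dominance property of $\wt C = n\wt W \wt W^T$: I expect that Assumption \ref{ass_w} is exactly what guarantees $\wt C_{kk} > \wt C_{k\ell}$ for all $\ell \neq k$, i.e. each row of $\wt C$ attains its unique maximum on the diagonal. Indeed, unwinding the definitions, $\wt C_{k\ell} = n \wt W_{k\cdot}\wt W_{\ell\cdot}^T$, and the normalization $\wt W = D_W^{-1}W$ makes $\wt C_{kk} = n\|\wt W_{k\cdot}\|_2^2$ scale-like $\zeta_k^2$ quantities; the hypothesis $\cos(\angle(W_{i\cdot},W_{j\cdot})) < (\zeta_i/\zeta_j)\wedge(\zeta_j/\zeta_i)$ rearranges precisely into $\wt C_{k\ell} < \wt C_{kk}$ and $\wt C_{k\ell} < \wt C_{\ell\ell}$. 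I would prove this rearrangement as the first lemma, since it is the one place the somewhat opaque Assumption \ref{ass_w} gets used, and I anticipate this to be the main obstacle: getting the inner-product normalizations to line up so that the cosine condition becomes off-diagonal strict dominance of both relevant diagonal entries.

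Granting row-wise diagonal dominance of $\wt C$, part (a) follows from a short case analysis. If $i$ is an anchor word for topic $k$, then $R_{ij} = \sum_\ell \wt C_{k\ell}\wt A_{j\ell}$ is maximized over $j$ precisely when the probability vector $\wt A_{j\cdot}$ places all its mass on the index maximizing $\wt C_{k\ell}$, namely $\ell=k$; thus $T_i = \wt C_{kk}$ and $S_i$ consists exactly of the anchor words of topic $k$ (for which $R_{ij} = \wt C_{kk}$). For any such $j\in S_i$, $j$ is itself an anchor word for $k$, so by the same computation $T_j = \wt C_{kk} = T_i$, giving the forward implication. For the converse, suppose $i$ is a non-anchor word, so $\wt A_{i\cdot}$ is supported on a set of size $\ge 2$; I would show that any $j\in S_i$ achieving the row maximum $T_i$ must be an anchor word whose topic lies in $\mathrm{supp}(\wt A_{i\cdot})$, and then exhibit that $T_j = \wt C_{kk}$ strictly exceeds $T_i$, because $T_i$ is a genuine convex combination involving at least two distinct diagonal values $\wt C_{kk}$ and hence is strictly below the largest of them. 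This strict inequality $T_i < T_j$ for some $j\in S_i$ violates the stated equality, establishing $i\notin I \implies$ the condition fails, i.e. the contrapositive of the reverse direction.

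Finally, part (b) is bookkeeping on top of part (a). The criterion $T_i = T_j$ for all $j\in S_i$ is a function of $R$ alone, so $I$ is determined by $R$. To recover the partition $\I$, I would observe that two anchor words $i,i'$ belong to the same topic if and only if $R_{ii'} = T_i = T_{i'}$ (equivalently $i'\in S_i$): anchor words of the same topic $k$ share the value $\wt C_{kk}$ and sit in each other's argmax sets, whereas anchor words of distinct topics $k\neq k'$ give $R_{ii'} = \wt C_{kk'} < \wt C_{kk}$ by diagonal dominance. Thus the relation ``$i'\in S_i$'' restricted to $I$ is an equivalence relation whose classes are exactly the $I_k$, recovering $\I$ up to the arbitrary labelling of topics, i.e. up to permutation. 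The only ambiguity is which class is called $I_1, \ldots, I_K$, which is precisely the claimed permutation indeterminacy.
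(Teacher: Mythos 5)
Your proposal is correct and follows essentially the same route as the paper: both reduce Assumption \ref{ass_w} to strict row-wise diagonal dominance of $\wt C$ (equivalently $\nu>0$) and then run the anchor/non-anchor case analysis on $R_{ij}=\wt A_{i\cdot}\wt C\wt A_{j\cdot}^T$, the paper merely outsourcing that case analysis to lemmas adapted from Theorem 1 of \cite{LOVE}. The one slip is your claim that every $j\in S_i$ for a non-anchor $i$ must be an anchor word --- when the argmax of $\wt C\wt A_{i\cdot}^T$ is not unique, $S_i$ can also contain non-anchor words supported on the tied coordinates --- but this is harmless, since your argument only needs a single anchor $j\in S_i$ with $T_j>T_i$, and such a $j$ always exists.
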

\noindent  The proof of this proposition  is given in Appendix \ref{sec_iden}, and its success relies on the equivalent formulation of Assumption \ref{ass_w},
\[ \min_{1\le i<j\le K} \left( \wt C_{ii}\wedge \wt C_{jj} - \wt C_{ij} \right) > 0. \]

\medskip	

The short proof of Proposition  \ref{prop_A} below gives an explicit construction of  $A$ from 
\begin{equation}\label{teta} \Theta := {1\over n}\Pi\Pi^T, \end{equation} 
using the unique partition $\I$ of $I$ given by Proposition \ref{prop_anchor} above. 

\begin{prop}\label{prop_A}
	Under model (\ref{model}) and Assumptions \ref{ass_sep}, \ref{ass_pd} and \ref{ass_w}, $A$ can be uniquely recovered from $\Theta$ with given $\I$, up to column permutations.
\end{prop}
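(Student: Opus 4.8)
\emph{Proof proposal.} The natural starting point is the identity
\begin{equation*}
\Theta = \frac1n \Pi\Pi^T = \frac1n (AW)(AW)^T = A\Big(\frac1n WW^T\Big)A^T = ACA^T, \qquad C := \frac1n WW^T,
\end{equation*}
which expresses $\Theta$ through $A$ and the $K\times K$ matrix $C$. By Assumption \ref{ass_pd} the matrix $W$ has full row rank $K$, so $C$ is positive definite, and in particular invertible; this invertibility is exactly what will let me cancel the nuisance factor $C$ later. The plan is to exploit the fact that, by Assumption \ref{ass_sep} and the definition (\ref{def_pure}) of $\I$, every anchor word $j\in I_k$ has row $A_{j\cdot}=A_{jk}\,e_k^T$ proportional to the $k$-th canonical vector $e_k$. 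Restricting $\Theta$ to anchor-word indices should therefore produce factorizations in which $A$ and $C$ appear in a form that can be disentangled.

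Concretely, I would first pick one representative anchor word $i_k\in I_k$ for each topic $k\in[K]$ (the given partition $\I$ makes this possible) and collect them as $r:=\{i_1,\ldots,i_K\}$, setting $D:=\diag(A_{i_1,1},\ldots,A_{i_K,K})$, a positive diagonal matrix. A direct computation using $A_{i_k\cdot}=A_{i_k,k}e_k^T$ yields the two building blocks
\begin{equation*}
\Theta_{r,r} = D\,C\,D, \qquad \Theta_{\cdot,r} = A\,C\,D,
\end{equation*}
where $\Theta_{r,r}$ is the $K\times K$ principal submatrix on the representative rows and columns, and $\Theta_{\cdot,r}$ is the $p\times K$ submatrix on all rows and the representative columns. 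Since $\Theta_{r,r}=DCD$ is a product of invertible factors, it is invertible, and
\begin{equation*}
M := \Theta_{\cdot,r}\,\big(\Theta_{r,r}\big)^{-1} = (ACD)(DCD)^{-1} = AD^{-1},
\end{equation*}
because $C$ cancels. Thus $M$ equals $A$ with its $k$-th column rescaled by $1/A_{i_k,k}$, so $A$ is recovered up to an unknown positive scaling of each column.

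Finally I would fix the scaling using the simplex constraint $\sum_{j}A_{jk}=1$ from (\ref{col_sum_one}): since $M_{\cdot k}=A_{\cdot k}/A_{i_k,k}$, its entries sum to $1/A_{i_k,k}$, so normalizing each column of $M$ to sum to one returns exactly $A_{\cdot k}$. This reconstructs $A$ from $\Theta$ and $\I$; because $\I$ is only determined up to relabeling of the topics (Proposition \ref{prop_anchor}), $A$ is recovered up to a permutation of its columns, as claimed, and one checks that the result is independent of the chosen representatives $i_k$.

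I expect the only real subtlety to be organizational rather than technical: verifying the two factored identities $\Theta_{r,r}=DCD$ and $\Theta_{\cdot,r}=ACD$ from the canonical-vector structure of the anchor rows, and confirming that the cancellation of $C$ is legitimate, i.e. that invertibility of $C$ (hence of $DCD$) follows solely from Assumption \ref{ass_pd}. Note that Assumption \ref{ass_w} is not needed for this recovery step itself; it enters only through Proposition \ref{prop_anchor}, which supplies the partition $\I$ that the present construction takes as given.
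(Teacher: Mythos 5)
Your proposal is correct and follows essentially the same route as the paper: select one representative anchor word per topic, observe that $\Theta_{LL}=A_LCA_L$ is invertible under Assumptions \ref{ass_sep} and \ref{ass_pd}, cancel $C$ via $\Theta_{\cdot L}\Theta_{LL}^{-1}=AA_L^{-1}$, and fix the column scaling with the unit-column-sum constraint. The only (immaterial at the population level) difference is that the paper applies this formula only to the non-anchor rows and recovers the anchor rows separately via the row-sum ratios $\|\Pi_{i\cdot}\|_1/\|\Pi_{i_k\cdot}\|_1$, a split that matters for the estimation procedure but not for identification; your observation that Assumption \ref{ass_w} is not used in this step also matches the paper's proof.
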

\begin{proof}
	Given the partition of anchor words $\I = \{I_1, \ldots, I_K\}$, we construct a set $L = \{i_1, \ldots, i_K\}$ by selecting one anchor word $i_k\in I_k$ for each topic $k\in [K]$. We let $A_L$ be the diagonal matrix 
	\begin{equation}\label{def_AL}
	A_L= \diag(A_{i_11}, \ldots, A_{i_KK}). 
	\end{equation}
	We  show first that      
	$B:= A A_L^{-1}$  can be constructed from $\Theta$. Assuming, for now, that $B$ has been constructed, then $A =  BA_L$. The diagonal elements of $A_L$ can be readily determined from this relationship, since, via model (\ref{model}) satisfying (\ref{col_sum_one}), the columns of $A$ sum up to 1:  \begin{equation}\label{AL}1 = \|A_{\cdot k}\|_1 = A_{i_{k}k}\|B_{\cdot  k}\|_1,\end{equation}   for each $k$. Therefore, although $B$ is only unique up to the choice of $L$ and of the scaling matrix $A_L$, the matrix $A$ with unit column sums thus constructed is unique. 
	
	It remains to construct $B$ from $\Theta$. Let $J = \{1, \ldots, p\} \setminus I$. We let $B_J$ denote the $|J| \times K$ sub-matrix of $B$ with row indices in $J$ and $B_{I}$ denote the $|I| \times K$ sub-matrix of $B$ with row indices in $I$. Recall that $C:= n^{-1}WW^T$. Model (\ref{model}) implies the following decomposition of the submatrix of $\Theta$ with row and column indices in $L \cup J$: 
	\[
	\begin{bmatrix}
	\Theta_{LL} & \Theta_{LJ}\\
	\Theta_{JL} & \Theta_{JJ}
	\end{bmatrix} =\begin{bmatrix}
	A_LCA_L & A_LCA_{J}^T\\
	A_{J}CA_L & A_{J}CA_{J}^T
	\end{bmatrix}.
	\] 
	In particular, we have
	\begin{equation}\label{display3}
	\Theta_{LJ} = A_L C A_J^T= A_L C (A_LA_L^{-1})A_J^T = \Theta_{LL} (A_L^{-1}A_J^T) = \Theta_{LL} B^T_J.
	\end{equation}
	Note that $A_{i_kk} >0 $, for each $k\in [K]$,  from Assumption \ref{ass_sep} which, together with Assumption \ref{ass_pd},  implies that $\Theta_{LL}$ is invertible. We then have
	\begin{equation}\label{bj}  
	B_J = \Theta_{JL}\Theta_{LL}^{-1}. \end{equation}
	
	On the other hand, for any $i\in I_k$,  for each $k \in [K]$,
	we have $B_{{ik}} = 
	{ A_{ik} }/ { A_{i_{k}k}}$, 
	by the definition of $B$.
	Also, model (\ref{model}) and Assumption \ref{ass_sep} imply that for any $i \in I_k$, 
	\begin{equation}\label{display2}
	{1\over n}\sum_{t=1}^n \M_{it}= A_{ik}\left({1\over n}\sum_{t=1}^n W_{kt}\right).
	\end{equation}
	Therefore, the matrix $B_I$ has entries
	\begin{equation}\label{eq_iden_BI}
	B_{ik}  = {\|\M_{i\cdot}\|_1\over\|\M_{i_k\cdot}\|_1  } ,\quad \text{for any $i\in I_k$ and $k\in [K]$.}
	\end{equation}
	This, together with $B_J$ given above completes the construction of $B$, and  uniquely  determines $A$.
\end{proof}

Our approach for recovering both $I$ and $A$ is constructive and can be easily adapted to estimation. For this reason, we summarize our approach in Algorithm \ref{alg} and illustrate the algorithm with a simple   example. 

\begin{algorithm}[ht]
	\caption{Recover the word-topic matrix $A$ from $\M$}\label{alg}
	\begin{algorithmic}[1]
		\Require true word-document frequency matrix $\M\in\RR^{p\times n}$
		\Procedure{Top}{$\M$}
		\State compute $\Theta = n^{-1}\M\M^T$ and $R$ from (\ref{def_R})
		\State recover $\I$ via \textsc{FindAnchorWords}$(R)$
		\State construct $L = \{i_1, \ldots, i_K\}$ by choosing any $i_k \in I_k$, for $k\in [K]$
		\State compute $B_I$ from (\ref{bj}) and $B_J$ from (\ref{eq_iden_BI})
		\State recover $A$ by normalizing $B$ to unit column sums
		\State\Return $\I$ and $A$
		\EndProcedure
		\Statex
		\Procedure{FindAnchorWords}{$R$}
		\State initialize $\I = \emptyset$ and $
		\mathcal{P} = [p]$
		\While{$\mathcal{P}\ne \emptyset$} 
		\State take any $i\in \mathcal{P} $, compute $S_i$ and $T_i$ from (\ref{comp})
		\If {$\exists j\in S_i$ s.t. $T_i \ne T_j$}
		\State $\mathcal{P} = \mathcal{P} \setminus \{i\}$
		\Else 
		\State {$\mathcal{P} = \mathcal{P} \setminus S_i$ and $S_i \in \I$}
		\EndIf
		\EndWhile
		\State\Return $\I$
		\EndProcedure
	\end{algorithmic}
\end{algorithm}

\begin{example}\label{eg_2}
	Let $K = 3, p = 6, n = 3$ and consider the following $A$ and $W$:
	\begin{equation*}
	A = \begin{bmatrix}
	0.3 & 0 & 0 \\
	0.2 & 0 & 0\\
	0 & 0.5 & 0\\
	0 & 0 & 0.4\\
	0.2 & 0.5 & 0.3\\
	0.3 & 0 & 0.3
	\end{bmatrix},
	\quad W = \begin{bmatrix}
	0.6 & 0.2 & 0.2 \\ 
	0.3 & 0.7 & 0.0 \\ 
	0.1 & 0.1 & 0.8 \\ 
	\end{bmatrix}, \quad 
	\M = AW = \begin{bmatrix}
	0.18 & 0.06 & 0.06 \\ 
	0.12 & 0.04 & 0.04 \\ 
	0.15 & 0.35 & 0.00 \\ 
	0.04 & 0.04 & 0.32 \\ 
	0.30 & 0.42 & 0.28 \\ 
	0.21 & 0.09 & 0.30 \\ 
	\end{bmatrix}
	\end{equation*}
	Applying \textsc{FindAnchorWords} in  Algorithm \ref{alg} to $R$ gives $\I = \bigl\{\{{\color{red} 1,2}\},\{{\color{blue}3}\},\{{\color{green}4}\}\bigr\}$
	from 
	\[
	R = \begin{bmatrix}
	{\color{red}\bm{1.32}} & 	{\color{red}\bm{1.32}} & 0.96 & 0.72 & 0.96 & 1.02 \\ 
	{\color{red}\bm{1.32}} & 	{\color{red}\bm{1.32}} & 0.96 & 0.72 & 0.96 & 1.02 \\ 
	0.96 & 0.96 & 	{\color{blue}\bm{1.74}} & 0.30 & 1.15 & 0.63 \\ 
	0.72 & 0.72 & 0.30 & 	{\color{green}\bm{1.98}} & 0.89 & 1.35 \\ 
	0.96 & 0.96 & 1.15 & 0.89 & 1.03 & 0.92 \\ 
	1.02 & 1.02 & 0.63 & 1.35 & 0.92 & 1.19 \\ 
	\end{bmatrix}
	\Longrightarrow
	\begin{array}{lll}
	T_1 =1.32, & S_1 = \{{\color{red}1, 2}\} , & {\color{red}1}-\checkmark \\ T_2 = 1.32,&  S_2 = \{{\color{red}1, 2}\}, & {\color{red} 2}-\checkmark\\ T_3 = 1.74,& S_3 = \{{\color{blue}3}\}, & {\color{blue} 3}-\checkmark \\
	T_4 = 1.98,& S_4 = \{{\color{green}4}\}, & {\color{green} 4}-\checkmark\\
	T_5 = 1.15, &  S_5 = \{{\color{blue}3}\}, & 5-\text{\sffamily x}\\ T_6 = 1.35,& S_6 = \{ {\color{green}4}\}, & 6-\text{\sffamily x}
	\end{array}
	\]
	Based on the recovered $\I$,   the matrix $A$ can be recovered from Proposition \ref{prop_A}, which is executed via steps 4 - 6 in Algorithm \ref{alg} . Specifically, by taking $L = \{{\red 1}, {\blue 3}, {\green 4}\}$ as the representative set of anchor words, it follows from (\ref{bj}) and (\ref{eq_iden_BI}) that  
	\[
	B_{I} = \begin{bmatrix}
	{\red 1} & 0 & 0 \\
	{\red 2/3} & 0 & 0\\
	0 & {\blue 1} & 0\\
	0 & 0 & {\green 1}\\
	\end{bmatrix}
	,\quad 
	B_J = \begin{bmatrix}
	0.03 & 0.06 & 0.04 \\ 
	0.02 & 0.02 & 0.04 \\ 
	\end{bmatrix}\begin{bmatrix}
	0.01 & 0.02 & 0.01 \\ 
	0.02 & 0.05 & 0.01 \\ 
	0.01 & 0.01 & 0.04 \\ 
	\end{bmatrix}^{-1}\!\!\! = \begin{bmatrix}
	2/3 & 1 & 3/4\\
	1 & 0 & 3/4
	\end{bmatrix}.
	\]
	Finally, $A$ is recovered by normalizing $B = [B_I^T, B_J^T]^T$ to have unit column sums,
	\[
	A = \begin{bmatrix}
	{\red 1} & 0 & 0 \\
	{\red 2/3} & 0 & 0\\
	0 & {\blue 1} & 0\\
	0 & 0 & {\green 1}\\
	2/3 & 1 & 3/4\\
	1 & 0 & 3/4
	\end{bmatrix}\begin{bmatrix}
	0.3 & 0 & 0\\
	0 & 0.5  & 0 \\
	0 & 0 & 0.4
	\end{bmatrix} = \begin{bmatrix}
	0.3 & 0 & 0 \\
	0.2 & 0 & 0\\
	0 & 0.5 & 0\\
	0 & 0 & 0.4\\
	0.2 & 0.5 & 0.3\\
	0.3 & 0 & 0.3
	\end{bmatrix}.
	\]
	
\end{example}

\begin{remark}\label{simplex}  {\it Contrast with existing results.}
	It is easy to see  that the rows in $R$ (or, alternatively,  $\wt{\Pi}$)  corresponding to  non-anchor words $j\in J$ are convex combinations of the rows in $R$ (or  $\wt{\Pi}$) corresponding to anchor words $i\in I$. Therefore, finding $K$ representative anchor words,  amounts to finding the $K$ vertices of  a simplex. The latter can be accomplished by finding the unique solution of an appropriate linear program, that uses $K$ as input, as shown by \cite{rechetNMF}. This result only utilizes Assumption \ref{ass_sep} and a relaxation of Assumption \ref{ass_pd}, in which it is assumed that no rows of $\wt W$ are convex combinations of the rest. To the best of our knowledge, Theorem 3.1 in \cite{rechetNMF} is the only result to guarantee that, after representative anchor words  are found, a partition of $I$ in $K$ groups can also be found, for the specified $K$. 
	
	When $K$ is not known, this strategy can no longer be employed, since finding the representative anchor words requires  knowledge of $K$. However, we showed that  this problem can still be solved under our mild additional Assumption \ref{ass_w}. This assumption allows us to provide the if and only if characterization of $I$ proved in part (i) of Proposition \ref{prop_anchor}. Moreover, part (ii) of this proposition shows that $K$ is in one-to-one correspondence with the number of groups in $\mathcal{I}$, and we exploit this observation for the estimation of $K$. 
\end{remark}

\section{Estimation of the anchor word set and of the number of topics}\label{sec_est_I}
Algorithm \ref{alg} above recovers  the index set $I$, its partition $\mathcal{I}$ and the number of topics $K$ from the matrix 
$$
R =n\wt \M\wt \M^T=\left(n D_\M^{-1}\right)\Theta\left(nD_\M^{-1}\right)
$$ with $\Theta = n^{-1}\M\M^T$. Algorithm \ref{alg1} below is a sample version of Algorithm \ref{alg}. It has $O(p^2)$ computational complexity  and is optimization free.

\begin{algorithm}[ht]
	\caption{Estimate the partition of the anchor words $\I$ by $\wh \I$}\label{alg1}
	\begin{algorithmic}[1]
		\Require matrix $\wh R\in\RR^{p\times p}$, $C_1$ and $Q\in\RR^{p\times p}$ such that $Q[j,\ell] := C_1\delta_{j\ell}$ 
		\Procedure{FindAnchorWords}{$\wh R$, $Q$}
		\State initialize $\wh \I = \emptyset$
		\For{$i\in [p]$} 
		\State $ \wh a_i = \argmax_{1\le j\le p}\wh R_{ij}$
		\State set $\wh I^{(i)} =  \{\ell\in [p]: \wh R_{i\wh a_i}-\wh R_{il} \le Q[i,\wh a_i]+ Q[i,\ell]\}$ and $\textsc{Anchor}(i) = \textsc{True}$
		\For {$j \in \wh I^{(i)}$}
		\State $\wh a_j = \argmax_{1\le k\le p}\wh R_{jk}$
		\If {$\Bigl|\wh R_{ij}-\wh R_{j\wh a_j}\Bigr| > Q[i,j] + Q[j, \wh a_j]$}   
		\State $\textsc{Anchor}(i) =\textsc{False}$
		\State \textbf{break}
		\EndIf	
		\EndFor
		\If {$\textsc{Anchor}(i) $}
		\State $\wh \I = \textsc{Merge}(\wh I^{(i)}$, $\wh \I$)
		\EndIf
		\EndFor
		\State\Return $\wh \I = \{ \wh I_1, \wh I_2, \ldots, \wh I_{\wh K}\}$ 
		\EndProcedure
		\Statex
		
		\Procedure{Merge}{$\wh I^{(i)}$, $\wh\I$}
		\For {$G \in \wh \I$}
		\If {$G \cap \wh I^{(i)}\ne \emptyset$} 
		\State replace $G$ in $\wh \I$ by $G\cap \wh I^{(i)}$
		\State\Return $\wh \I$
		\EndIf
		\EndFor
		\State {$\wh I^{(i)} \in \wh \I$}
		\State\Return $\wh \I$
		\EndProcedure
	\end{algorithmic}
\end{algorithm}

The matrix   $\Pi$ is replaced by the observed frequency data matrix $X\in \RR^{p\times n}$ with independent  columns $X_1, \ldots X_n$. Since they that are assumed to follow the multinomial model  (\ref{model_multinomial}), an unbiased estimator of $\Theta $ is given by 
\begin{equation}\label{est_Theta}
\wh\Theta := {1\over n}\sum_{i =1}^n\left[
{N_i \over N_i - 1}X_iX_i^T - {1\over N_i-1} \textrm{diag}(X_i)\right],
\end{equation}
with $N_i$ representing the total number of words in document $i$. We then estimate $R$ by 
\begin{equation}\label{def_R_hat}
\wh R := \left(nD_X^{-1}\right) 	\wh\Theta  \left(nD_X^{-1}\right).
\end{equation}

\noindent		The quality of our estimator depends on how well we can control  the 
noise level $\wh R - R$. In the computer science related literature, albeit for different algorithms,  
\citep{arora2012learning,rechetNMF}, only global $\| \wh R - R\|_\r$
control is considered, which ultimately impacts negatively the rate of convergence of $A$. In general latent models with pure variables, the latter being the analogues of 
anchor words, \cite{LOVE} developed  a similar algorithm to ours, under  a less stringent $\| \wh R - R\|_{\infty}$ control, which is still not precise enough for sharp estimation in  topic models. To see why, we note that  Algorithm \ref{alg1}  involves comparisons between two different entries in a row of $\wh R$.  In these comparisons, we must allow for small {\it entry-wise} error margins. These margin levels are precise bounds  $C_1\delta_{j\ell}$ such that $|\wh R_{j\ell}-R_{j\ell}| \le C_1\delta_{j\ell}$ for all $j,\ell \in [p]$, with high probability, for  some universal constant $C_1>1$.
The explicit deterministic  bounds are stated in Proposition \ref{prop_sigma} of Appendix \ref{sec_stat_sigma}, while  
practical data-driven choices are based on Corollary \ref{cor_theta_R} of Appendix \ref{sec_stat_sigma} and given in Section \ref{sec_sim}.

Since the estimation of $\I$ is based on $\wh R$ which is a perturbation of $R$,  one cannot distinguish an anchor word from a non-anchor word that is very close to it,  without further signal strength conditions on $\wt A$. 
Nevertheless,  Theorem \ref{thm_anchor} shows that even without such conditions we can still estimate $K$ consistently. Moreover, 
we guarantee the recovery of $I$ and $\mathcal{I}$ with minimal mistakes. Specifically,  we denote the set of {\em quasi}-anchor words by  
\begin{eqnarray}\label{J1}
J_1 &:=&\left \{ j\in J: \text{there exists }k\in [K] \text{ such that }\wt A_{jk} \ge 1-4\delta/\nu   \right\}
\end{eqnarray}
where 
\begin{equation}\label{def_nu2}
\nu:=\min_{1\le i<j\le K} \left( \wt C_{ii}\wedge \wt C_{jj} - \wt C_{ij} \right)
\end{equation}  
and 
\begin{equation}\label{def_delta}
\delta := \max_{1\le j,\ell \le p}\delta_{j\ell}.
\end{equation}
In the proof of Proposition \ref{prop_anchor}, we argued that the set of anchor words, defined in Assumption \ref{ass_sep},  coincide with those of the scaled matrix $\widetilde{A}$ given in (\ref{normal}). The words corresponding to indices in $J_1$ are almost anchor words,  since in a row of $\wt A$ corresponding to such index the  largest entry is close to $1$, while the other entries are close to $0$,  if $\delta/\nu $ is small. 

For the remaining of the paper we make the blanket assumption that  all  documents have equal length, that is, $N_1 = \cdots = N_n = N$.  We make this assumption for ease of presentation only, as all our results continue to hold when the documents have unequal lengths. 

\begin{thm}\label{thm_anchor}
	Under model (\ref{model}) and Assumption \ref{ass_sep}, assume 
	\begin{equation}\label{ass_nu}
	\nu > 2 \max \left\{2\delta,~  \sqrt{2\|\wt C\|_{\infty}\delta } \right\}
	\end{equation}
	with $\nu$ defined in (\ref{def_nu2}), and 
	\begin{equation}\label{ass_signal}
	\min_{1\le j\le p}{1\over n}\sum_{i =1}^n\M_{ji} \ge {2\log M \over 3N},\qquad \min_{1\le j\le p}\max_{1\le i\le n}\M_{ji} \ge {(3\log M)^2 \over N}.
	\end{equation}   
	Then, with probability greater than $1-8M^{-1}$, we have 
	\[
	\wh K = K,\quad I \subseteq \wh I \subseteq I\cup J_1, \quad I_{\pi(k)} \subseteq \wh I_{k} \subseteq I_{\pi(k)}\cup J_1^{\pi(k)}, \quad \text{ for all }1\le k\le K,
	\]
	where $J_1^k := \{j\in J_1: \wt A_{jk} \ge 1-4\delta/\nu \}$ and $\pi: [K]\rightarrow [K]$ is some label permutation. 
\end{thm}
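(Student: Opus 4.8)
The plan is to run Algorithm \ref{alg1} on the deterministic event
\[
\mathcal{E} := \bigl\{\, |\wh R_{j\ell} - R_{j\ell}| \le C_1\delta_{j\ell}\ \text{ for all } j,\ell\in[p] \,\bigr\},
\]
on which the thresholds $Q[j,\ell]=C_1\delta_{j\ell}$ are valid entrywise error bars for $\wh R$. I would first invoke the concentration bounds of Proposition \ref{prop_sigma} (and Corollary \ref{cor_theta_R}), together with the signal-strength conditions (\ref{ass_signal}) that lower-bound the row and column means of $\M$, to show $\PP(\mathcal{E})\ge 1-8M^{-1}$; the remainder of the argument is purely deterministic on $\mathcal{E}$.

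Next I would record the population geometry of $R = \wt A\wt C\wt A^T$ established en route to Proposition \ref{prop_anchor}. Since every row of $\wt A$ is a probability vector, for an anchor word $i\in I_k$ one has $R_{ij}=\sum_{l}\wt C_{kl}\wt A_{jl}$, so the diagonal dominance $\wt C_{kk}-\wt C_{kl}\ge\nu$ (the equivalent form of Assumption \ref{ass_w}) yields the key gap identity
\[
T_i - R_{ij} \;=\; \sum_{l\ne k}(\wt C_{kk}-\wt C_{kl})\,\wt A_{jl}\;\ge\;\nu\,(1-\wt A_{jk}),
\]
with $T_i=\wt C_{kk}$ attained exactly on $S_i=I_k$. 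The same computation applied to a non-anchor word $i$ and $k^\ast:=\argmax_l(\wt A_{i\cdot}\wt C)_l$ shows that any anchor word $j\in I_{k^\ast}$ satisfies $R_{ij}=T_i$ yet $T_j - T_i\ge\nu(1-\wt A_{ik^\ast})$.

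On $\mathcal{E}$ I would then translate each clause of the theorem into margin algebra. For a true anchor $i\in I_k$: because $\wh R_{i\wh a_i}$ lies within $C_1\delta$ of $T_i$, the inclusion rule places into $\wh I^{(i)}$ exactly those $\ell$ for which $T_i-R_{i\ell}$ is below the combined margin; combined with the gap identity and the definition (\ref{J1}) of $J_1$ through the threshold $1-4\delta/\nu$, this sandwiches $I_k\subseteq\wh I^{(i)}\subseteq I_k\cup J_1^k$ and, since $T_j=T_i$ for every $j\in I_k$, guarantees that $i$ passes every verification test, so $I\subseteq\wh I$. For a word $i\in J\setminus J_1$: the anchor word $j\in I_{k^\ast}$ enters $\wh I^{(i)}$ but triggers the failing test $|\wh R_{ij}-\wh R_{j\wh a_j}|>Q[i,j]+Q[j,\wh a_j]$, because $T_j-T_i\ge\nu(1-\wt A_{ik^\ast})>4\delta$ dominates the error bars; hence $i$ is rejected and $\wh I\subseteq I\cup J_1$. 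Condition (\ref{ass_nu}), in both of its regimes $\nu>4\delta$ and $\nu>2\sqrt{2\|\wt C\|_\infty\delta}$, is what makes the population gaps strictly exceed the (entrywise) perturbation in these comparisons. Finally I would analyze the \textsc{Merge} step: two anchors of the same topic produce overlapping candidate sets whose intersection stays within the sandwich, while for distinct topics the sets $I_k\cup J_1^k$ are disjoint (a quasi-anchor cannot satisfy $\wt A_{jk}\ge1-4\delta/\nu$ for two topics, since the rows of $\wt A$ sum to one and $\nu$ is large), so \textsc{Merge} returns exactly $K$ groups obeying $I_{\pi(k)}\subseteq\wh I_k\subseteq I_{\pi(k)}\cup J_1^{\pi(k)}$ for a permutation $\pi$, whence $\wh K=K$.

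The main obstacle I anticipate is the constant bookkeeping in the margin comparisons: the error bars are entrywise ($C_1\delta_{j\ell}$, not a uniform $C_1\delta$), and the two terms in (\ref{ass_nu}) arise from two different scales of $\delta_{j\ell}$ (one linear, one governed by the multinomial variance $\propto\|\wt C\|_\infty$). Matching these against the $1-4\delta/\nu$ threshold tightly enough to (a) keep each true anchor group together, (b) admit only genuine quasi-anchors, and (c) reject every word outside $J_1$, while simultaneously ensuring the cross-topic disjointness needed for \textsc{Merge}, is the delicate part; the rest is the deterministic simplex geometry already established for Proposition \ref{prop_anchor}.
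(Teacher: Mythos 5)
Your overall architecture matches the paper's: work on the event $\{|\wh R_{j\ell}-R_{j\ell}|\le C_1\delta_{j\ell}\}$ of probability $1-8M^{-1}$ via Proposition \ref{prop_sigma}, use the population gap $T_i-R_{ij}\ge\nu(1-\wt A_{jk})$ for $i\in I_k$, show that true anchors produce candidate sets sandwiched as $I_k\subseteq\wh I^{(i)}\subseteq I_k\cup J_1^k$ and pass the verification test, show that words in $J\setminus J_1$ are rejected because some anchor of their dominant topic enters $\wh I^{(i)}$ and fails the test, and then analyze \textsc{Merge}. These are exactly the paper's claims (1) and (2) and its Lemma \ref{lem_anchor}, inequalities (\ref{ineq1})--(\ref{ineq3}).

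There is, however, one genuine gap: you never analyze the candidate set $\wh I^{(i)}$ of a \emph{quasi-anchor} word $i\in J_1^a$ that survives the verification test. The theorem permits such words to end up in $\wh I$, and when they do, \textsc{Merge} replaces the existing group $G\supseteq I_a$ by $G\cap\wh I^{(i)}$; unless $I_a\subseteq\wh I^{(i)}$, this intersection deletes true anchor words and destroys the claimed inclusion $I_{\pi(k)}\subseteq\wh I_k$. This is the paper's claim (3), proved via inequality (\ref{ineq4}) of Lemma \ref{lem_anchor}: one must show that for $i\in J_1^a$ the row maximum $\max_k R_{ik}$ is attained (up to the error bars) at column $a$, i.e.\ $\sum_b\wt A_{ib}\wt C_{ba}\ge\sum_b\wt A_{ib}\wt C_{bc}$ for all $c\ne a$, which reduces to $\wt A_{ia}\nu-(1-\wt A_{ia})\wt C_{cc}\ge 0$ and, after inserting $1-\wt A_{ia}\le 4\delta/\nu$, to $\nu^2\ge 8\delta\,\wt C_{cc}$. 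This is precisely where the second branch $\nu>2\sqrt{2\|\wt C\|_\infty\delta}$ of (\ref{ass_nu}) is consumed; your proposal cites both branches but does not locate the second one in any step, and your \textsc{Merge} paragraph tacitly assumes the lower inclusion for quasi-anchor candidate sets without proving it. (A smaller remark: your cross-topic disjointness of the sets $I_k\cup J_1^k$ needs $\nu>8\delta$ rather than $\nu>4\delta$; this does follow from (\ref{ass_nu}) since $\nu\le\|\wt C\|_\infty$ forces $\nu>8\delta$, but that deduction should be made explicit.)
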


If we further impose the signal strength assumption $J_1 = \emptyset$, the following corollary guarantees  exact recovery of all anchor words. 
\begin{cor}\label{cor_anchor}
	Under model (\ref{model}) and Assumption \ref{ass_sep}, assume $\nu > 4\delta$, (\ref{ass_signal}) and $J_1 = \emptyset$. With probability greater than $1-8M^{-1}$,  we have
	$\wh K = K$, $\wh I = I$ and $\wh I_{k} = I_{\pi(k)}$, for all $1\le k \le K$ and some permutation $\pi$.
\end{cor}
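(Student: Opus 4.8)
Corollary \ref{cor_anchor} is the exact-recovery specialization of Theorem \ref{thm_anchor} obtained by ruling out quasi-anchor words, so the plan is to rerun the high-probability argument behind Theorem \ref{thm_anchor} and check that the weaker separation $\nu > 4\delta$ already suffices once $J_1 = \emptyset$. First I would condition on the event
\[
\mathcal{E} := \left\{\, |\wh R_{j\ell} - R_{j\ell}| \le C_1\delta_{j\ell} \ \text{ for all } j,\ell \in [p] \,\right\},
\]
which, by Proposition \ref{prop_sigma} together with the signal-strength conditions (\ref{ass_signal}), has probability at least $1 - 8M^{-1}$; every deterministic claim below is made on $\mathcal{E}$, so the stated probability is automatic.

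The heart of the argument is a per-word analysis of Algorithm \ref{alg1} that mirrors the population characterization of Proposition \ref{prop_anchor}. Recall that each scaled row $\wt A_{j\cdot}$ is a probability vector, that $\wt A_{j\cdot}$ is a canonical vector exactly when $j$ is an anchor word, and that the row maximum of $R$ satisfies $T_j = \max_m \sum_k \wt A_{jk}\wt C_{km}$, attained on the anchor columns of the maximizing topic. Using the reformulation $\wt C_{mm} - \wt C_{km} \ge \nu$ of Assumption \ref{ass_w} recorded after Proposition \ref{prop_anchor}, for an anchor word $i \in I_{k_0}$ one gets $T_i = \wt C_{k_0k_0}$, while for any non-anchor word $j$ and any anchor word $i' \in S_j$ in the topic $m$ realizing $T_j$,
\[
T_{i'} - T_j = \sum_{k} \wt A_{jk}\bigl(\wt C_{mm} - \wt C_{km}\bigr) \ge \bigl(1 - \textstyle\max_k \wt A_{jk}\bigr)\,\nu .
\]
The assumption $J_1 = \emptyset$ is precisely the statement that $\max_k \wt A_{jk} < 1 - 4\delta/\nu$ for every $j \in J$, so this population gap is bounded below by a fixed multiple of $\delta$, uniformly over non-anchor words.

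On $\mathcal{E}$ every entry of $\wh R$ lies within $C_1\delta_{j\ell}$ of its population value, so all the two-sided margins $Q[\cdot,\cdot] + Q[\cdot,\cdot]$ used by Algorithm \ref{alg1} are of order $\delta$. For a genuine anchor word the separation $\nu > 4\delta$ guarantees that its candidate set collects exactly the anchor words of its own topic and that every inner comparison stays within the margin, so the word is accepted and merged into the correct group; for a non-anchor word the gap displayed above exceeds the accumulated margin, forcing an inner comparison to fail, so the word is rejected. Hence $\wh I = I$, the routine \textsc{Merge} recovers the partition up to a label permutation $\pi$, and $\wh K = |\wh{\mathcal I}| = K$. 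Equivalently, one may simply invoke the conclusion $I \subseteq \wh I \subseteq I \cup J_1$ and $I_{\pi(k)} \subseteq \wh I_k \subseteq I_{\pi(k)} \cup J_1^{\pi(k)}$ of Theorem \ref{thm_anchor} and collapse both inclusions using $J_1 = J_1^{\pi(k)} = \emptyset$.

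The one genuine subtlety --- and the step I expect to require the most care --- is that the corollary weakens the separation hypothesis from $\nu > 2\max\{2\delta, \sqrt{2\|\wt C\|_{\infty}\delta}\}$ to merely $\nu > 4\delta$, so it is \emph{not} a black-box instance of Theorem \ref{thm_anchor}. The task is to trace the theorem's proof and confirm that the term $\sqrt{2\|\wt C\|_{\infty}\delta}$ is invoked only to bound how close a surviving non-anchor word can be to a canonical row, i.e.\ only in establishing the upper inclusions $\wh I \subseteq I \cup J_1$ and $\wh I_k \subseteq I_{\pi(k)} \cup J_1^{\pi(k)}$. Once $J_1 = \emptyset$ renders those inclusions vacuous, the accept/reject dichotomy depends only on comparing the separations $\nu$ and $(1-\max_k\wt A_{jk})\nu$ against $O(\delta)$ margins, for which the calibrated condition $\nu > 4\delta$ is exactly what is needed; verifying that no other step silently relies on the stronger bound is the crux.
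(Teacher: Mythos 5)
Your proposal is correct and takes essentially the same route as the paper: the paper's proof consists of the remark following Lemma \ref{lem_anchor} that this lemma remains valid under the weaker hypothesis $\nu>4\delta$ once $J_1=\emptyset$, after which the proof of Theorem \ref{thm_anchor} runs unchanged and its inclusions collapse to the stated equalities. The only minor imprecision is your localization of where $\nu > 2\sqrt{2\|\wt C\|_{\infty}\delta}$ is consumed: in the paper it enters solely through inequality (\ref{ineq4}) of Lemma \ref{lem_anchor}, which governs quasi-anchor words $i\in J_1^a$ and is also invoked to keep such words from excluding true anchor words in the \textsc{Merge} step (not only for the upper inclusions), but every such use is vacuous when $J_1=\emptyset$, so your conclusion stands.
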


\begin{remark}\mbox{}\label{remark1}
	\begin{enumerate}
		\item[(1)]  
		Condition (\ref{ass_signal}) is assumed for the analysis only and 
		the implementation of our procedure only requires $N \ge 2$.  
		Furthermore, we emphasize that (\ref{ass_signal}) is assumed to simplify  our presentation.  In particular, we used it  to obtain the precise expressions of $\wh\delta_{j\ell}$ and $\wh\eta_{j\ell} $ given in (\ref{delta3}) -- (\ref{def_eta3}) of Section \ref{sec_sim}. In fact, (\ref{ass_signal}) can be relaxed to 
		\begin{equation}\label{ass_signal_weak}
		\min_{1\le j\le p} {1\over n}\sum_{i = 1}^n \M_{ji} \ge {c\log M \over nN}
		\end{equation}
		for some sufficiently large constant $c>0$, under which  more complicated expressions of $\delta_{j\ell}'$ and $\eta_{j\ell}'$ can be derived, see Corollary \ref{cor_delta} of Appendix \ref{sec_stat_sigma}.  Theorem \ref{thm_anchor} continues to hold, provided that (\ref{ass_nu}) holds for $\delta' = \max_{j,\ell}\delta_{j\ell}'$ in lieu of $\delta$, that is,
		\begin{eqnarray}
		\label{ass_nu_prime}\nu> 2\max \left\{ 2\delta', \sqrt{2 \| \wt C\|_\infty \delta'} \right\}.
		\end{eqnarray}
		Note that  condition (\ref{ass_signal_weak}) implies the restriction
		\begin{equation}\label{constr_0}
		nN \ge c\cdot p\log M, 
		\end{equation}
		by using 
		\begin{equation}\label{constr_1}
		\min_{1\le j\le p}{1\over n}\sum_{i =1}^n\M_{ji} \le {1\over p}\sum_{j=1}^p{1\over n}\sum_{i =1}^n\M_{ji} = {1\over p}.
		\end{equation}
		Intuitively, both (\ref{ass_signal}) and (\ref{ass_signal_weak}) preclude  the   average frequency of each word, over all documents,  from being very small. Otherwise, if a word  rarely occurs, one cannot reasonably expect to detect/sample it:  $\|X_{j\cdot}\|_1$ will be close to $0$, and the   estimation of $R$ in (\ref{def_R_hat}) becomes problematic. For this reason, removing rare words or grouping several rare words together to form a new word are commonly used strategies in data pre-processing \citep{arora2013practical, arora2012learning,  Bansal, BleiLDA},  which we also employ   in the data analyses presented in Section \ref{sec_sim}. 

		\item[(2)] 
		To interpret the requirement $J_1 = \emptyset$, by recalling that $\wt A = D_\M^{-1}AD_W$,
		\[
		\wt A_{jk} = {n^{-1}\|W_{k\cdot}\|_1A_{jk} \over n^{-1}\|\M_{j\cdot}\|_1} \]
		can be viewed as  \[ 
		\PP(\text{Topic } k\ |\ \text{Word } j) = { \PP(\text{Topic } k)  \times \PP(\text{Word } j\ |\ \text{Topic } k ) \over \PP(\text{Word }  j)} .
		\] 			
		If  $J_1 \neq  \emptyset$,  
		then $\PP(\text{Topic } k\ |\ \text{Word } j) \approx 1$,  for a {\it quasi-anchor} word $j$. Then, quasi-anchor words also 
		determine a topic, and it is hopeless to try to distinguish them {\it exactly} from the anchor words of the same topic. However, Theorem \ref{thm_anchor} shows that in this case our algorithm places quasi-anchor words and anchor words for the same topic in the same estimated group, as soon as (\ref{ass_nu}) of Theorem \ref{thm_anchor} holds.  When we have only anchor words, and no quasi-anchor words, $J_1 = \emptyset$,  there is no possibility for  confusion. Then, we can have less  separation between the rows of $W$, $\nu > 4\delta$, and exact anchor word recovery, as shown in Corollary \ref{cor_anchor}. 
	\end{enumerate}
\end{remark}

\begin{remark}[Assumption \ref{ass_w} and condition $\nu > 4\delta$]\label{remark_nu}\mbox{}
	\begin{enumerate}
		\item[(1)] The exact recovery of anchor words in the noiseless case (Proposition \ref{prop_anchor}) relies on Assumption \ref{ass_w}, which requires  the angle between two different rows of $W$ not be too small in the following sense:
		\begin{equation}\label{eq_ass_W}
		\cos\left( \angle( W_{i\cdot},   W_{j\cdot} ) \right)< \frac{\zeta_i}{\zeta_j} \wedge  \frac{\zeta_j}{\zeta_i},\quad \text{for all $1\le i \ne j\le K$},
		\end{equation}
		with $\zeta_i:= \| W_{i\cdot} \|_2 / \|W_{i\cdot} \|_1$. Therefore, the more balanced the rows of $W$ are, the less restrictive this assumption  becomes. The most ideal case is $\min_i\zeta_i / \max_i \zeta_i \to 1$ under which (\ref{eq_ass_W}) holds whenever two rows of $W$ are not parallel, whereas the least favorable case is  $\min_i\zeta_i / \max_i \zeta_i \to 0$, for which we need the rows of $W$ close to orthogonal (the topics are uncorrelated). 
		
		Although in this work the matrix $W$ has non-random entries, it is interesting to study when (\ref{eq_ass_W}) holds, with high probability, under appropriate distributional assumptions on $W$.
		A popular and widely used distribution of the columns of $W$ is the Dirichlet distribution \citep{BleiLDA}. Lemma \ref{lem_dir} in the supplement shows that, when the columns of $W$ are i.i.d. samples from the Dirichlet distribution, (\ref{eq_ass_W}) holds with high probability, under mild conditions on the hyper-parameter of the Dirichlet distribution.
		
		\item[(2)] We prove in Lemma \ref{lem_fact} that Assumption \ref{ass_w} is equivalent with $\nu>0$, where we recall that $\nu$ has been defined in (\ref{def_nu2}). For finding the anchor words from the noisy data, we need  that $\nu > 4\delta$, a strengthened version of Assumption \ref{ass_w}.  Furthermore, Lemmas \ref{lem_fact}  and \ref{lem_delta} in the supplement guarantee that there exists a sequence $\eps_n$ such that  $\nu > 4\delta$ is implied by 
		\begin{equation}\label{eq_ass_W'}
		\cos\left( \angle( W_{i\cdot},   W_{j\cdot} ) \right)< \left(\frac{\zeta_i}{\zeta_j} \wedge  \frac{\zeta_j}{\zeta_i}\right)\left(1 - \eps_n 
		\right),\quad \text{for all $1\le i \ne j\le K$}.
		\end{equation}
		Thus, we need $\eps_n$ more separation between any two different rows of $W$ than what we require in (\ref{eq_ass_W}). Under the following balance condition of words across documents
		\begin{equation}\label{cond_word_balance}
		\max_{1\le i\le n} \M_{ji} \bigg / \left({1\over n}\sum_{i = 1}^n\M_{ji}\right) = o(\sqrt{n}),\qquad \text{for }1\le j\le p,
		\end{equation}
		Lemma \ref{lem_delta} guarantees that $\eps_n \to0$ as $n\to\infty$. 
		The same interplay between the angle of rows of $W$ and their balance condition as described in part (1) above holds. 
		We view (\ref{cond_word_balance}) as a  reasonable, mild,  balance condition, as it effectively asks 
		the maximum frequency of each particular word, across documents not  be larger  than the average frequency of that word over the $n$ documents, multiplied by $\sqrt{n}$. 
		
		If the columns of $W$ follow the Dirichlet distribution, under mild conditions on the hyper-parameter, we directly prove, in Lemma \ref{lem_dir} in the supplement, that 
		$\nu >4\delta$ holds with probability greater than $1- O(M^{-1})$ with $M:= n\vee p\vee N$. 
	\end{enumerate}
\end{remark}

\section{Estimation of the word-topic membership matrix.}\label{sec_A} 
We derive minimax lower bounds for the estimation of $A$  in topic models, with respect to the $L_1$ and $L_{1, \infty}$ losses in Section  \ref{sec_lowerbound}.
We follow with a description of our estimator $\wh A$ of $A$, in Section \ref{sec_est_A}.  In Section \ref{sec_upperbound}, we establish upper bounds on $L_1(A, \wh A)$ and $L_{1, \infty}(A, \wh A)$, for the estimator $\wh A$ constructed in Section \ref{sec_est_A}, and provide conditions under which the bounds are minimax optimal.

\subsection{Minimax lower bounds}\label{sec_lowerbound}
In this section, we establish the lower bound of model (\ref{model}) based on $L_1(\wh A, A)$ and $L_\c(\wh A, A)$ for any estimator $\wh A$ of $A$ over the parameter space 
\begin{equation}\label{classA}
\A(K,|I|, |J|) := \left\{ A\in\RR_+^{p\times K}:\ A^T\bm{1}_p = \bm{1}_K,\quad \text{$A$ has $|I|$ anchor words}\right\}
\end{equation}
where $\bm{1}_d$ denotes the $d$-dimensional vector with all entries equal to 1. 
Let 
\begin{equation}\label{def_W0}
W=  W^0 + {1\over nN}\bm{1}_K\bm{1}_K^T - {K\over nN}\bm{I}_K,\quad 
W^0 = \{\underbrace{e_1, \ldots, e_1}_{n_1}, \underbrace{e_2, \ldots, e_2}_{n_2}, \ldots, \underbrace{e_K, \ldots, e_K}_{n_K}\}
\end{equation}
with $\sum_{k =1}^{K}n_k = n$ and $|n_i-n_j| \le 1$ for $1\le i, j \le K$. We use $e_k$ and $\bm{I}_d$ to denote, respectively, the canonical basis vectors in $\RR^K$ and the identity matrix in $\RR^{d\times d}$. It is easy to verify that $W$ defined above satisfies Assumptions \ref{ass_pd} and \ref{ass_w}.
Denote by $\PP_{A}$ the joint distribution of $(X_1,\ldots,X_n)$, under model (\ref{model}) for this choice of $W$. Let $|I_{\max}| = \max_k |I_k|$. 

\begin{thm}\label{thm_lb}
	Under model (\ref{model}), assume (\ref{model_multinomial}) and let 
	$
	|I|+ K|J| \le c(nN),
	$
	for some universal constant $c>1$. Then, there exist $c_0>0$ and $c_1\in (0, 1]$ such that
	\begin{align}\label{ondergrens}
	\inf_{\wh A} \sup_{A}\PP_{A}\left\{ L_{1}(\wh A, A) \ge c_0K\sqrt{|I| + K|J| \over nN} \right\} \ge c_1.
	\end{align}
	Moreover, if $K(|I_{\max}| +|J|)\le c(nN)$ holds, we further have
	$$
	\inf_{\wh A} \sup_{A}\PP_{A}\left\{ L_\c(\wh A, A) \ge c_0\sqrt{K(|I_{\max}|+|J|) \over nN} \right\} \ge c_1.
	$$
	The $\inf_{\wh A}$ is taken over all estimators $\wh A$ of $A$; the supremum  is taken over all $A\in \A(K, |I|, |J|)$. 
\end{thm}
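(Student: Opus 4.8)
The plan is to reduce estimation to a multiple--hypothesis testing problem and apply Fano's inequality against a carefully chosen finite family $\{A_\tau\}\subset\A(K,|I|,|J|)$, all paired with the fixed $W$ of (\ref{def_W0}). The first point to exploit is the structure of $W$: since $W^0$ assigns each document to a single topic and the $(nN)^{-1}$ correction in (\ref{def_W0}) is negligible, there are $n_k\approx n/K$ documents that are (nearly) pure for topic $k$, and a perturbation confined to the $k$-th column of $A$ only affects the multinomial law of those $\approx n/K$ documents. This makes the Kullback--Leibler divergence between two hypotheses decompose across columns, which is the feature that drives the rate.

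For the construction, fix a base matrix $A^0\in\A$ whose $k$-th column is supported on the $|I_k|$ anchor words of topic $k$ together with the $|J|$ non-anchor words, with all $m_k:=|I_k|+|J|$ nonzero entries set to $\approx 1/m_k$ (spread thin). Index the hypotheses by sign vectors and perturb each active entry by $\pm\eps_k$; to preserve the column--sum constraint in (\ref{col_sum_one}) I would pair up the entries of each column and flip $+\eps_k/-\eps_k$ within a pair, so that a single bit controls one pair and automatically keeps $\sum_j(A_\tau)_{jk}=1$. The anchor words are perturbed only in their own column, so the anchor/non-anchor partition, and hence membership in $\A(K,|I|,|J|)$, is preserved, while nonnegativity holds as long as $\eps_k\lesssim 1/m_k$. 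The reason the entries sit at scale $1/m_k$ is that this is exactly what converts an $m_k$-rate into the claimed $\sqrt{m_k}$-rate.

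The key computation is the per-bit divergence. Using the chi-square bound for the multinomial, flipping one pair in column $k$ changes two probabilities of each pure-$k$ document by $2\eps_k$ at a baseline of $\approx 1/m_k$, so it costs $N\cdot n_k\cdot O(\eps_k^2 m_k)\approx (nN/K)\eps_k^2 m_k$ in KL; the tiny mixing induced by the $(nN)^{-1}$ term contributes only an $O(K/(nN))$ fraction and is negligible. Taking $\eps_k\asymp\sqrt{K/(nN\,m_k)}$ keeps this cost bounded, and is compatible with nonnegativity precisely because the hypothesis $K(|I_{\max}|+|J|)\le c\,nN$ forces $\sqrt{K/(nN\,m_k)}\lesssim 1/m_k$. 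For the $L_\c$ bound I would perturb a single fat column with $m=|I_{\max}|+|J|$ entries and run Varshamov--Gilbert on its $m/2$ pair-bits: the surviving $\exp(\Omega(m))$ codewords are pairwise $\Omega(\eps\, m)=\Omega(\sqrt{K(|I_{\max}|+|J|)/(nN)})$ apart in $\|\cdot\|_\c$ while having pairwise KL $\lesssim m$, so Fano returns the stated probability bound. For the $L_1$ bound I would perturb all columns (with the $|I_k|$ taken balanced so that all $\eps_k=\eps$) and apply Varshamov--Gilbert globally to the $\sum_k m_k/2=(|I|+K|J|)/2$ pair-bits; the codewords are then $\Omega(\eps\,(|I|+K|J|))=\Omega(K\sqrt{(|I|+K|J|)/(nN)})$ apart in $L_1$ with total KL $\lesssim |I|+K|J|=\log|\text{packing}|$, and Fano closes the argument. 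An Assouad argument over the same pair-bits gives the matching bound in expectation.

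The main obstacle is not the divergence bookkeeping, which is routine once the $1/m_k$ scaling is in place, but handling the permutation inside $L_\c$ and $L_1$. Since $\min_{P}$ ranges over column relabelings, I must rule out an estimator using a nontrivial $P$ to escape the packing separation. I would arrange the base columns of $A^0$ to be $\Omega(1)$-separated in $L_1$, e.g. letting the topic-specific anchor blocks carry a constant fraction of each column's mass, so that distinct columns have essentially disjoint supports; then any $P\neq\mathrm{id}$ already costs $\Omega(1)\gg\sqrt{(|I|+K|J|)/(nN)}$ and the minimizing $P$ is pinned to the identity over the whole family, whence the loss reduces to $\|\wh A-A_\tau\|$ and the Fano reduction applies verbatim. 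Checking that this separation is uniform over the packing, together with verifying that the chosen $W$ meets Assumptions \ref{ass_pd} and \ref{ass_w} (immediate here, as the rows of $W^0$ are orthogonal), are the only remaining points needing care.
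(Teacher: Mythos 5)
Your proposal is correct and follows essentially the same route as the paper's proof: the same near-pure $W$ from (\ref{def_W0}), the same base matrix with each column spread uniformly over its $|I_k|+|J|$ active entries, the same paired $\pm\eps$ perturbations to preserve column sums, the same perturbation scale $\gamma\asymp K/\sqrt{nN(|I|+K|J|)}$, a Varshamov--Gilbert packing combined with the standard multiple-hypothesis-testing (Fano/Tsybakov) reduction, and a single fattest-column perturbation for the $L_\c$ bound. If anything, you are more careful than the paper on the one delicate point — ensuring the minimizing permutation in $L_1$ and $L_\c$ is pinned to the identity across the packing — which the paper's separation computation takes for granted.
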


\begin{remark}
	The product $nN$ is the total number of sampled words, while $|I| + K|J|$ is the number of unknown parameters in  $A\in \A(K, |I|, |J|)$. 
	Since we do not make any further structural assumptions on the parameter space, we studied minimax-optimality of estimation in topic models with anchor words in the regime  
	\[   nN > c(|I| + K|J|), \] in which one can expect to be able to develop procedures for the  consistent  estimation of the matrix $A$.	
	
	%
	%
	%

	In order to facilitate the interpretation of the lower bound of the $L_1$-loss, we can rewrite the second statement in (\ref{ondergrens}) as
	\[
	\inf_{\wh A} \sup_{A\in \A(K, |I|, |J|)}\PP_{A}\left\{ { L_1(\wh A , A)  \over \|A\|_1} \ge c_0\sqrt{|I| + K|J| \over nN}  \right\} \ge c_1,
	\]
	using the fact   $\|A\|_1 = K$. Thus, the right-hand-side becomes the square root of the ratio between number of parameters to estimate and overall sample size.

\end{remark}

\begin{remark} \label{minimax_Ke}	When $K$ is known and independent of $n$ or $p$, \cite{Tracy} derived the minimax rate (\ref{tr}) of $L_1(A, \wh A)$ in their Theorem 2.2: 
	\begin{equation}\label{tr}
	\inf_{\wh A}\sup_{A\in \A(p,K)}\PP\left\{L_1(A, \wh A) \ge c_1\sqrt{p\over nN}\right\} \ge c_2
	\end{equation}
	for some constants $c_1, c_2>0$. The parameter space considered in (\cite{Tracy}) for the derivation of the lower bound in (\ref{tr}) is 
	$$
	\A(p, K) = \{A \in \RR_+^{p\times K}: A^T\bm{1}_p = \bm{1}_K, \|A_{j\cdot}\|_1 \ge c_3/p, \forall j\in [p]\}
	$$
	for some constant $c_3>0$,  and the lower bound is independent of $K$.
	%
	In contrast, the lower bound in Theorem \ref{thm_lb} holds over $\A(K, |I|, |J|) \subseteq \A(p, K)$, and the dependency on $K$ in (\ref{ondergrens}) is explicit. The upper bounds derived for  $L_1(A, \wh A)$ in both this work and \cite{Tracy} correspond to $A \in \A(K, |I|, |J|)$, making the latter the appropriate space for discussing attainability of lower bounds.  
	
	Nevertheless, we notice that, when $K$ is treated as a fixed constant, and recalling that $|I| + |J| = p$, the lower bounds over both spaces have the same {\it order} of magnitude, $\sqrt{p/nN}$. From this perspective, when $K$ is fixed, the result in  \cite{Tracy} can be viewed as a minimax result over the smaller parameter space. 
	
	A non-trivial modification of  the  proof in \cite{Tracy}  allowed us 
	to recover the dependency on $K$ that was absent in their 
	original lower bound (\ref{tr}): the corresponding rate is  $\sqrt{pK/nN}$, and it is relative to estimation over 
	the larger parameter space $\A(p, K)$. For comparison purposes, we note that this space corresponds to $\A(K, |I|, |J|)$, with $I = \emptyset$ and $|J| = p$. In this case, our lower bound (\ref{ondergrens}) becomes $K\sqrt{pK/nN}$, larger by a factor of $K$ than the bound that can be derived by modifying arguments in \cite{Tracy}. Therefore, Theorem \ref{thm_lb} improves upon existing lower bounds for estimation in topic models without anchor words and with a growing number of topics, and offers the first minimax lower bound for estimation in topic models with anchor words and a growing $K$.


\end{remark}

\subsection{An estimation procedure for $A$}\label{sec_est_A}
Our estimation procedure follows the constructive proof of  Proposition \ref{prop_A}.
Given the set of estimated anchor words $\wh \I = \{\wh I_1, \ldots, \wh I_{\wh K} \}$, we begin by selecting a set of representative indices  of words per topic, by  choosing  $\wh i_k\in \wh I_k$ at random, to form $\wh L:= \{\wh i_{1}, \ldots, \wh i_{\wh K}\}$.
As we explained in the proof of Proposition  \ref{prop_A}, 
we first estimate a normalized version of $A$, the matrix $B= AA_L^{-1}$. We estimate separately $B_{I}$ and $B_{J}$. 
In light of (\ref{eq_iden_BI}), we estimate the $|I| \times K$ matrix $B_I$ by
\begin{equation}\label{est_BI}
\wh B_{ik} = \left\{\begin{array}{ll}
{\|X_{i\cdot}\|_1\ \big /\ \|X_{\wh i_k\cdot}\|_1}, &\text{ if } i \in \wh I_k \text{ and } 1\le k\le \wh K\\
0, &\text{ otherwise }.
\end{array} \right.
\end{equation}
Recall from (\ref{bj}) that  $B_J = \Theta_{JL}\Theta_{LL}^{-1}$ and that Assumption \ref{ass_pd} ensures that $\Theta_{LL} := A_LCA_L$ is invertible, with $\Theta$ defined in (\ref{teta}). Since we have already obtained 
$\wh I$,  we can estimate $J$ by $\wh J = \{1, \ldots, p\} \setminus \wh I$. We then use the estimator $\wh \Theta$ given in (\ref{est_Theta}), to estimate $\Theta_{JL}$ by $ \wh \Theta_{\wh J\wh L}$. 
It remains to estimate the $K \times K$ matrix  $\Omega:= \Theta_{LL}^{-1}$. For this, 
we solve the linear program
\begin{eqnarray}\label{obj_omega_prev}
(\hat t,\  \wh\Omega)&=\arg\min\limits_{t\in\RR^{+},\ \Omega \in \RR^{\wh K \times \wh K}}t&
\end{eqnarray}
subject to
\begin{equation}\label{constr_omega_prev}
\bigl\|\Omega \wh \Theta_{\wh L\wh L}-I\bigr\|_\r\le  \lambda t, \quad \|\Omega\|_{\infty,1}\le t,
\end{equation}
with $\lambda = C_0\max_{i\in \wh L}\sum_{j\in \wh L}\eta_{ij}$,  where $\eta_{ij}$ is defined such that $|\wh \Theta_{ij} - \Theta_{ij}|\le C_0\eta_{ij}$ for all $i,j\in [p]$, with high probability, and $C_0$ is a universal constant. The precise expression of $\eta_{ij}$ is given in Proposition \ref{prop_sigma} of Appendix \ref{sec_stat_sigma}, see also Remark \ref{etaij} below. To accelerate the computation, we can decouple the above optimization problem, and solve instead  $\wh K$ linear programs separately. We estimate $\Omega$ by $\wh \Omega = (\wh \omega_1, \ldots, \wh \omega_{\wh K})$ where, for any $k = 1,\ldots, \wh K$,
\begin{eqnarray}\label{obj_omega}
\wh\omega_k&:=\arg\min\limits_{\omega\in \RR^{\wh K }}\|\omega\|_1&
\end{eqnarray}
subject to
\begin{equation}\label{constr_omega}
\bigl\|\wh \Theta_{\wh L\wh L}\omega-e_k\bigr\|_1\le  \lambda \|\omega\|_1
\end{equation}
with $e_1, \ldots, e_{\wh K}$ denoting the canonical basis  in $\RR^{\wh K}$.
After constructing $\wh \Omega$ as above, we estimate $B_J$ by
\begin{equation}\label{est_B}
\wh B_{\wh J}=\left(\wh \Theta_{\wh{J}\wh{L}}\wh \Omega\right)_+,
\end{equation}
where the operation $(\cdot )_+=\max(0,\cdot)$ is applied entry-wise. Recalling that $A_L$ can be determined from $B$ via  (\ref{AL}), the combination of (\ref{est_B}) with (\ref{est_BI}) yields $\wh B$ and hence the desired estimator of $A$: 
\begin{equation}\label{est_A}
\wh A = \wh B\cdot \diag\left(\|\wh B_{\cdot 1}\|_1^{-1}, \ldots, \|\wh B_{\cdot \wh K}\|_1^{-1}\right).
\end{equation}

\begin{remark}
	The decoupled linear programs given by  (\ref{obj_omega}) and (\ref{constr_omega}) are computationally attractive and can be done in parallel. This improvement over (\ref{obj_omega_prev}) becomes significant when $K$ is large. 
\end{remark}
\begin{remark}\label{L}
	Since we can select all anchor words with high probability,  as shown in Theorem \ref{thm_anchor},  in practice we can repeat randomly selecting different sets of representatives $\wh L$ from $\wh I$ several times, and we can estimate $A$ via (\ref{est_BI}) -- (\ref{est_A}) for each $\wh L$. The entry-wise average of these estimates inherits, via Jensen's inequality,  the same theoretical guarantees shown in Section \ref{sec_upperbound}, while benefiting from an improved numerical performance.
\end{remark}

\begin{remark}\label{etaij}
	To preserve the flow of the  presentation we refer  to Proposition \ref{prop_sigma} of Appendix \ref{sec_stat_sigma} for the  precise expressions of  $\eta_{ij}$  used in constructing the tuning parameter $\lambda$.  The estimates of $\eta_{ij}$, recommended for practical implementation,  are shown in (\ref{def_eta3}) based on Corollary \ref{cor_theta_R} in Appendix \ref{sec_stat_sigma}.
	We also note that in precision matrix estimation, $\lambda$ is proportional, in our notation, to the norm  $\|\wh \Theta_{LL} - \Theta_{LL}\|_{\infty}$, see, for instance, \cite{LOVE} and the references therein for a similar construction, but devoted to general sub-Gaussian distributions. In this work, the data is  multinomial, and we  exploited this fact to propose  a more refined tuning parameter, based on entry-wise control. 
\end{remark} 

We summarize our procedure, called {\sc Top}, in the following algorithm. 
\begin{algorithm}[H]
	\caption{Estimate the word-topic matrix $A$}\label{alg2}
	\begin{algorithmic}[1]
		\Require frequency data matrix $X\in\RR^{p\times n}$ with document lengths $N_1, \ldots, N_n$; two positive constants $C_0, C_1$ and positive integer $T$
		\Procedure{Top}{$X, N_1,\ldots, N_n; C_0, C_1$}
		\State compute $\wh \Theta$ from (\ref{est_Theta}) and $\wh R$ from (\ref{def_R_hat})
		\State compute $\wh \eta_{ij}$ and $Q[i, j]:=C_1 \wh\delta_{ij}$ from (\ref{delta3}) and (\ref{def_eta3}), for $i, j \in [p]$
		\State estimate $\I$ via \textsc{FindAnchorWords}$(\wh R, Q)$
		\For {$i = 1,\ldots, T$}
		\State randomly select $\wh L$ and solve $\wh \Omega$ from (\ref{obj_omega}) by using $\lambda = C_0\max_{i\in \wh L}\sum_{j\in \wh L}\wh \eta_{ij}$ in (\ref{constr_omega})
		\State estimate $B$ from (\ref{est_BI}) and (\ref{est_B})
		\State compute $\wh A^i$ from (\ref{est_A})
		\EndFor
		\State \Return $\wh \I = \{ \wh I_1, \wh I_2, \ldots, \wh I_{\wh K}\}$ and $\wh A = T^{-1}\sum_{i =1}^T\wh A^i$
		\EndProcedure
	\end{algorithmic}
\end{algorithm}

\subsection{Upper bounds of the estimation rate of $\wh A$}\label{sec_upperbound}
In this section we derive upper bounds for estimators $\wh A$ constructed in Section \ref{sec_est_A}, under the matrix $\|\cdot \|_1$ and $\|\cdot \|_{\c}$ norms. 
$\wh A$ is obtained by choosing the tuning parameter $\lambda = C_0\max_{i\in \wh L}\sum_{j\in \wh L}\eta_{ij}$ in the optimization (\ref{obj_omega}). To simplify notation and properly adjust the scales, we define
\begin{equation}\label{def_alpha_gamma}
\alpha_j := p\max_{1\le k\le K} A_{jk}, \qquad {\g_k} := {K\over n}\sum_{i=1}^nW_{ki},\qquad \text{ for each }j\in[p], \ k\in[K],
\end{equation}
such that $\sum_{k=1}^K\g_k = K$ and $p\le \sum_{j=1}^p \alpha_j \le pK$ from (\ref{col_sum_one}). We further set 
\begin{equation}\label{def_oaua}
\oa_I =\max_{i\in I}\alpha_i, \quad  \ua_I =\min_{i\in I}\alpha_i,\quad \rho_j = \alpha_j / \oa_I,\quad \og = \max_{1\le k\le K} \g_k,\quad
\ug = \min_{1\le k\le K} \g_k. 
\end{equation}
For future reference, we note that \[ \og \geq 1\ge \ug.\] 	
\begin{thm}\label{thm_rate_Ahat}
	Under model (\ref{model}), Assumptions \ref{ass_sep} and \ref{ass_pd}, assume $\nu > 4\delta$, $J_1 = \emptyset$ and (\ref{ass_signal}).
	Then, with probability $1-8M^{-1}$, we have
	\begin{align*}
	&\min_{P\in \H_K}\left\|\wh A_{\cdot k} - (AP)_{\cdot k}\right\|_1~ \le  ~\text{Rem}(I, k)+ \text{Rem}(J, k),\qquad \text{for all $1\le k\le K$,}
	\end{align*}
	where $\H_K$ is the space of $K\times K$ permutation matrices,
	\begin{align*}
	\text{Rem}(I, k) &~\lesssim~ \sqrt{K\log M \over npN}\cdot \sum_{i\in I_k}{\alpha_i \over \sqrt{\ua_I \ug}},\\
	\text{Rem}(J, k) & ~\lesssim~  \sqrt{K\log M \over nN}\cdot {\og^{1/2}\|C^{-1}\|_\r  \over K}\cdot {\oa_I \over \ua_I} \left(\sqrt{|J| + \sum_{j\in J}\rho_j} +{\oa_I \over \ua_I}\sqrt{K\sum_{j\in J}\rho_j}\right).
	\end{align*}
	\noindent Moreover, summing over $1\le k\le K$,  yields 
	\begin{align*}
	L_1(A, \wh A)~&\lesssim ~\sum_{k = 1}^K \text{Rem}(I, k)+ \sum_{k = 1}^K \text{Rem}(J, k).
	\end{align*}
\end{thm}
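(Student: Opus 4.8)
The plan is to work on the high-probability event supplied by Corollary~\ref{cor_anchor}: under the standing hypotheses $\nu>4\delta$, $J_1=\emptyset$ and (\ref{ass_signal}), that corollary gives, with probability at least $1-8M^{-1}$, exact recovery $\wh K=K$, $\wh I=I$ and $\wh I_k=I_{\pi(k)}$ for a single permutation $\pi$. I would absorb $\pi$ into the permutation $P\in\H_K$ in the loss, so that henceforth the estimated partition may be taken to match the true one. The first reduction passes from $\wh A$ to $\wh B$: since $\wh A_{\cdot k}=\wh B_{\cdot k}/\|\wh B_{\cdot k}\|_1$ and $A_{\cdot k}=B_{\cdot k}/\|B_{\cdot k}\|_1$ are normalizations of nonnegative vectors, the elementary estimate $\|\wh A_{\cdot k}-A_{\cdot k}\|_1\lesssim\|\wh B_{\cdot k}-B_{\cdot k}\|_1/\|B_{\cdot k}\|_1$ holds on the good event, and by (\ref{AL}) one has $\|B_{\cdot k}\|_1=1/A_{i_kk}=p/\alpha_{i_k}$. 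Thus each column error is governed by $(\alpha_{i_k}/p)\|\wh B_{\cdot k}-B_{\cdot k}\|_1$, which splits along rows as $(\alpha_{i_k}/p)\bigl(\sum_{i\in I_k}|\wh B_{ik}-B_{ik}|+\|(\wh B-B)_{J,k}\|_1\bigr)$, the two pieces producing $\text{Rem}(I,k)$ and $\text{Rem}(J,k)$ respectively.

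For $\text{Rem}(I,k)$ I would use the closed form $B_{ik}=\|\Pi_{i\cdot}\|_1/\|\Pi_{i_k\cdot}\|_1=\alpha_i/\alpha_{i_k}$ for $i\in I_k$, together with the plug-in estimator $\wh B_{ik}=\|X_{i\cdot}\|_1/\|X_{\wh i_k\cdot}\|_1$ of (\ref{est_BI}). The key input is a Bernstein bound for the row sum $\|X_{i\cdot}\|_1=\sum_{t}X_{it}$: each $X_{it}$ is a scaled coordinate of an independent multinomial, so $\mathrm{Var}(\|X_{i\cdot}\|_1)\le\|\Pi_{i\cdot}\|_1/N$, giving $\bigl|\,\|X_{i\cdot}\|_1-\|\Pi_{i\cdot}\|_1\,\bigr|\lesssim\sqrt{\|\Pi_{i\cdot}\|_1\log M/N}$ on the good event. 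Propagating this through the ratio (numerator and denominator errors), using $\|\Pi_{i\cdot}\|_1=n\alpha_i\g_k/(pK)$ from (\ref{display2}), and summing over $i\in I_k$ yields, after the normalization factor $\alpha_{i_k}/p$, a bound of order $\sqrt{K\log M/(npN)}\sum_{i\in I_k}\bigl(\sqrt{\alpha_i}+\alpha_i/\sqrt{\alpha_{i_k}}\bigr)/\sqrt{\g_k}$; bounding $\alpha_i,\alpha_{i_k}\ge\ua_I$ and $\g_k\ge\ug$ collapses this to the stated $\text{Rem}(I,k)$.

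The genuinely delicate part is $\text{Rem}(J,k)$, resting on the linear program (\ref{obj_omega})--(\ref{constr_omega}) for $\wh\Omega\approx\Omega=\Theta_{LL}^{-1}$. Writing $\wh B_J-B_J=\wh\Theta_{JL}\wh\Omega-\Theta_{JL}\Omega$ (the $(\cdot)_+$ truncation only helps, as $B_J\ge0$) and splitting it as $(\wh\Theta_{JL}-\Theta_{JL})\Omega+\wh\Theta_{JL}(\wh\Omega-\Omega)$ separates a ``design-noise'' term from an ``inverse-estimation'' term. For the latter I would run the Dantzig-type argument: by Proposition~\ref{prop_sigma} the choice $\lambda=C_0\max_{i\in\wh L}\sum_{j\in\wh L}\eta_{ij}\gtrsim\|\wh\Theta_{LL}-\Theta_{LL}\|_\r$ makes each true column $\omega_k=\Omega_{\cdot k}$ feasible, so optimality gives $\|\wh\omega_k\|_1\le\|\omega_k\|_1$; then $\|\Theta_{LL}(\wh\omega_k-\omega_k)\|_1=\|\Theta_{LL}\wh\omega_k-e_k\|_1\le\|\wh\Theta_{LL}\wh\omega_k-e_k\|_1+\|(\wh\Theta_{LL}-\Theta_{LL})\wh\omega_k\|_1\le2\lambda\|\wh\omega_k\|_1\le2\lambda\|\omega_k\|_1$, and applying $\Theta_{LL}^{-1}=\Omega$ gives $\|\wh\omega_k-\omega_k\|_1\le2\|\Omega\|_\r\lambda\|\omega_k\|_1$. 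The main obstacle is then the bookkeeping that converts these abstract bounds into the explicit factors of the theorem: one uses the factorization $\Theta_{LL}=A_LCA_L$, hence $\Omega=A_L^{-1}C^{-1}A_L^{-1}$, to express $\|\Omega\|_\r$ and $\|\omega_k\|_1$ through $\|C^{-1}\|_\r$ and the ratios $\oa_I/\ua_I$ coming from $A_L^{-1}$, while simultaneously evaluating the multinomial entry-wise noise $\eta_{ij}$ on the $JL$ block (scaling with $\g$ through $\Theta_{ij}=A_{i\cdot}CA_{j\cdot}^T$, which is what surfaces the leading $\og^{1/2}$). Tracking the two blocks separately is precisely what produces the two summands $\sqrt{|J|+\sum_{j\in J}\rho_j}$ and $(\oa_I/\ua_I)\sqrt{K\sum_{j\in J}\rho_j}$, the first from $(\wh\Theta_{JL}-\Theta_{JL})\Omega$ and the second from $\wh\Theta_{JL}(\wh\Omega-\Omega)$; multiplying by $\alpha_{i_k}/p\le\oa_I/p$ and the overall $\sqrt{K\log M/(nN)}$ rate recovers the displayed $\text{Rem}(J,k)$. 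Finally, the $L_1$ bound follows by summing the per-column estimates over $k\in[K]$, since the same permutation $\pi$ aligns every column. I expect essentially all the difficulty to lie in this last constant-tracking through the two-block variance structure; the remainder is feasibility/optimality for the LP together with textbook binomial concentration.
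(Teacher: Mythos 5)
Your overall architecture matches the paper's proof: restrict to the event where Theorem \ref{thm_anchor} (with $J_1=\emptyset$) gives exact recovery of $I$, $\I$ and $K$; reduce $\|\wh A_{\cdot k}-A_{\cdot k}\|_1$ to $(\alpha_{i_k}/p)\|\wh B_{\cdot k}-B_{\cdot k}\|_1$ via the unit-column-sum normalization; handle the anchor block by Bernstein bounds on row sums of $X$ propagated through the ratio in (\ref{est_BI}); and handle the non-anchor block via the feasibility/optimality of the true $\omega_k=\Omega_{\cdot k}$ in the program (\ref{obj_omega})--(\ref{constr_omega}). The $\text{Rem}(I,k)$ argument and the LP step $\|\wh\omega_k\|_1\le\|\omega_k\|_1$, $\|\wh\Theta_{LL}\wh\omega_k-e_k\|_1\le\lambda\|\omega_k\|_1$ are exactly the paper's.

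The gap is in how you convert the LP guarantee into $\text{Rem}(J,k)$. You propose to bound the inverse-estimation term by first extracting $\|\wh\omega_k-\omega_k\|_1\le 2\|\Omega\|_\r\,\lambda\,\|\omega_k\|_1$ and then multiplying by $\|\wh\Theta_{JL}\|_{\c}$. This pays for $\|C^{-1}\|_\r$ twice: once through $\|\Omega\|_\r\le (p/\ua_I)^2\|C^{-1}\|_\r$ and once through $\|\omega_k\|_1\le p^2\|C^{-1}\|_\r/(\alpha_{i_k}\ua_I)$, while $\|\Theta_{JL}\|_{\c}\lesssim (\oa_I/p)\|A_J\|_{1,\infty}\og/K$ only partially cancels one of them. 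The net effect is an extra factor of order $(\oa_I/\ua_I)\cdot\og\|C^{-1}\|_\r/K$ on the second summand of $\text{Rem}(J,k)$, which is unbounded under the hypotheses of Theorem \ref{thm_rate_Ahat} alone ($\|C^{-1}\|_\r=O(K)$ is only guaranteed under the extra conditions of Corollary \ref{cor_opt_rate}). The paper never forms $\|\wh\omega_k-\omega_k\|_1$: it writes
\begin{equation*}
\Theta_{JL}\bigl(\wh\omega_k-\omega_k\bigr)=B_J\Theta_{LL}\bigl(\wh\omega_k-\omega_k\bigr)=B_J\bigl(\Theta_{LL}\wh\omega_k-e_k\bigr),
\end{equation*}
so that the LP constraint controls $\|\Theta_{LL}\wh\omega_k-e_k\|_1\le 2\lambda\|\omega_k\|_1$ directly and the only remaining multiplier is $\|B_J\|_{\c}\le p\|A_J\|_{1,\infty}/\ua_I$, leaving a single $\|C^{-1}\|_\r$ in the final bound. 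Without this factorization (or an equivalent device) your bookkeeping cannot reach the stated $\text{Rem}(J,k)$, so this is the one missing idea rather than mere constant-tracking.
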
	

In Theorem \ref{thm_rate_Ahat} we explicitly state bounds on Rem($I, k$) and Rem($J, k$), respectively, which allows us to separate out the error made in the estimation of the rows of $A$ that correspond to anchor words from that corresponding to non-anchor words.  This facilitates the statement and explanation of the quantities that play a key role in this rate, and of the conditions under which our estimator achieves near minimax optimal rate, up to a logarithmic factor of $M$. We summarize it in the following corollary and the remarks following it.  Recall that $C = n^{-1}WW^T$.

\begin{cor}[Attaining the optimal rate]\label{cor_opt_rate}
	In addition to the conditions in Theorem \ref{thm_rate_Ahat},
	suppose
	\begin{enumerate}
		\item[(i)] $\oa_I \asymp \ua_I$, \ $\sum_{j\in J}\rho_j \lesssim |J|
		$ 
		\item[(ii)] $\og \asymp \ug$ ,\ \ $\sum_{k'\ne k}\sqrt{C_{kk'}} = o(\sqrt{C_{kk}})$ for any $1\le k\le K$
	\end{enumerate}
	hold. Then with probability $1-8M^{-1}$, we have
	\begin{eqnarray}\label{habit}
	L_\c(A, \wh A) ~\lesssim~ \sqrt{K(|I_{\max}|+|J|)\log M \over nN}, \quad L_1(A, \wh A) ~ \lesssim~  K\sqrt{|I| + K|J|\log M \over nN}. 
	\end{eqnarray}			
\end{cor}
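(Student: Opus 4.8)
The plan is to substitute the order conditions (i) and (ii) into the two error terms $\text{Rem}(I,k)$ and $\text{Rem}(J,k)$ supplied by Theorem \ref{thm_rate_Ahat}, collapse every ``balance'' ratio appearing there to a constant, and then assemble the per-column estimates into the $L_\c$ and $L_1$ bounds. First I would record the consequences of (ii) for the row scaling $\g$: since $\sum_k\g_k=K$ and $\og\ge 1\ge\ug$, the requirement $\og\asymp\ug$ forces $\og\asymp\ug\asymp 1$, so every factor $\og^{1/2}$, $\ug^{1/2}$ is of constant order. Likewise (i) gives $\oa_I/\ua_I\asymp 1$ and $\alpha_i\asymp\ua_I\asymp\oa_I$ for all $i\in I$.

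For the anchor term these reductions give $\sum_{i\in I_k}\alpha_i/\sqrt{\ua_I\ug}\asymp|I_k|\sqrt{\ua_I}$. The key step is the column-sum constraint (\ref{col_sum_one}): for an anchor word $i\in I_k$ one has $\alpha_i=p\max_\ell A_{i\ell}=pA_{ik}$, and $\sum_{i\in I_k}A_{ik}\le 1$ yields $|I_k|\ua_I\lesssim\sum_{i\in I_k}\alpha_i\le p$. Hence $|I_k|\sqrt{\ua_I}\le\sqrt{|I_k|\cdot|I_k|\ua_I}\le\sqrt{|I_k|p}$, and the factor $p$ cancels the $1/p$ inside $\sqrt{K\log M/(npN)}$, producing $\text{Rem}(I,k)\lesssim\sqrt{K|I_k|\log M/(nN)}\le\sqrt{K|I_{\max}|\log M/(nN)}$.

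The non-anchor term also requires control of $\|C^{-1}\|_\r$, which I expect to be the \emph{main obstacle}. Note first that $\mathrm{tr}(C)=n^{-1}\sum_k\sum_i W_{ki}^2\le n^{-1}\sum_i\sum_k W_{ki}=1$, so the $K$ eigenvalues of $C$ sum to at most $1$, giving $\|C^{-1}\|_\r\ge\|C^{-1}\|_{\mathrm{op}}\ge K$; thus $\|C^{-1}\|_\r\asymp K$ is the best achievable and the target. To get the matching upper bound I would write $C=D^{1/2}(\bm{I}_K+E)D^{1/2}$ with $D=\diag(C_{11},\dots,C_{KK})$ and $E_{k\ell}=C_{k\ell}/\sqrt{C_{kk}C_{\ell\ell}}$ off the diagonal. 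The diagonal-dominance half of (ii) is the lever: squaring $\sum_{\ell\ne k}\sqrt{C_{k\ell}}=o(\sqrt{C_{kk}})$ and using $\sum_{\ell\ne k}C_{k\ell}\le(\sum_{\ell\ne k}\sqrt{C_{k\ell}})^2$ gives $\sum_{\ell\ne k}C_{k\ell}/C_{kk}=o(1)$, hence $\|E\|_\r=o(1)$. A Neumann-series expansion then yields $\|(\bm{I}_K+E)^{-1}\|_\r\lesssim 1$, and since $D^{1/2}$ is diagonal, $\|C^{-1}\|_\r\lesssim\max_k C_{kk}^{-1}\lesssim K$ in the well-conditioned regime that (ii) enforces. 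Consequently $\og^{1/2}\|C^{-1}\|_\r/K\lesssim 1$; combining this with $\oa_I/\ua_I\asymp 1$ and $\sum_{j\in J}\rho_j\lesssim|J|$ to reduce the bracketed factor to order $\sqrt{K|J|}$ delivers $\text{Rem}(J,k)\lesssim\sqrt{K|J|\log M/(nN)}$.

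Finally I assemble. For $L_\c=\max_k\min_{P\in\H_K}\|\wh A_{\cdot k}-(AP)_{\cdot k}\|_1$, Theorem \ref{thm_anchor} provides a single permutation aligning all columns on the high-probability event, so $L_\c\le\max_k[\text{Rem}(I,k)+\text{Rem}(J,k)]\lesssim\sqrt{K(|I_{\max}|+|J|)\log M/(nN)}$. For $L_1$ I sum over $k$: the anchor part is handled by Cauchy--Schwarz, $\sum_k\sqrt{|I_k|}\le\sqrt{K\sum_k|I_k|}=\sqrt{K|I|}$, giving $\sum_k\text{Rem}(I,k)\lesssim K\sqrt{|I|\log M/(nN)}$, while $\sum_k\text{Rem}(J,k)\lesssim K\sqrt{K|J|\log M/(nN)}$; together these give $L_1\lesssim K\sqrt{(|I|+K|J|)\log M/(nN)}$. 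The delicate points are precisely the sharp $\|C^{-1}\|_\r\asymp K$ estimate and the careful bookkeeping of the multiplicative scale factors $\oa_I/\ua_I$, $\og^{1/2}$ and $\rho_j$, which is exactly where conditions (i) and (ii) are consumed.
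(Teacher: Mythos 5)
Your overall strategy---substitute conditions (i) and (ii) into the bounds of Theorem \ref{thm_rate_Ahat}, prove $\|C^{-1}\|_\r\lesssim K$ from diagonal dominance, and assemble via Cauchy--Schwarz---matches the paper's, and your treatment of $\text{Rem}(I,k)$, of the $\|C^{-1}\|_\r$ bound (the paper lower-bounds $\inf_{\|v\|_1=1}\|Cv\|_1$ directly, which is equivalent to your Neumann-series argument once one notes that condition (ii) forces $C_{kk}\asymp 1/K$ uniformly, a fact you use implicitly when writing $\|E\|_\r=o(1)$), and of the final summation over $k$ are all sound.

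There is, however, one genuine gap, and it sits exactly where you declare victory on the non-anchor term. Plugging (i), (ii) and $\|C^{-1}\|_\r\lesssim K$ into the bound of Theorem \ref{thm_rate_Ahat} as stated gives a bracketed factor of order $\sqrt{|J|}+\sqrt{K|J|}\asymp\sqrt{K|J|}$ and a prefactor $\sqrt{K\log M/(nN)}\cdot\og^{1/2}\|C^{-1}\|_\r K^{-1}\cdot(\oa_I/\ua_I)\asymp\sqrt{K\log M/(nN)}$, whose product is
\[
K\sqrt{\frac{|J|\log M}{nN}}\;=\;\sqrt{K}\cdot\sqrt{\frac{K|J|\log M}{nN}},
\]
i.e.\ a factor $\sqrt{K}$ \emph{larger} than the $\text{Rem}(J,k)\lesssim\sqrt{K|J|\log M/(nN)}$ you assert. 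Pure substitution into the theorem cannot close this gap. The paper's proof does not treat $\text{Rem}(J,k)$ as a black box: it first uses the condition $\sum_{k'\ne k}\sqrt{C_{kk'}}=o(\sqrt{C_{kk}})$ a \emph{second} time, in Lemma \ref{lem_rate_lbd}, to sharpen the bound on $\max_{i\in L}\sum_{j\in L}\eta_{ij}$ (hence on the tuning parameter $\lambda$) by a factor $\sqrt{1/K}$; rerunning the proof of Theorem \ref{thm_rate_Ahat} with this sharper $\lambda$ replaces the term $(\oa_I/\ua_I)\sqrt{K\sum_{j\in J}\rho_j}$ in the bracket by $(\oa_I/\ua_I)\sqrt{\sum_{j\in J}\rho_j}$, so the bracket becomes $O(\sqrt{|J|})$ rather than $O(\sqrt{K|J|})$, and only then does the product come out as $\sqrt{K|J|\log M/(nN)}$. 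In short, condition (ii) is consumed twice---once for $\|C^{-1}\|_\r$ and once for the stochastic error $\lambda$---and your proposal captures only the first use; without the second, both displayed rates in (\ref{habit}) are missed by a factor of $\sqrt{K}$.
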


\begin{remark}\label{rem-noise-control}
	The optimal estimation rate depends  on the bounds for  $\wh \Theta_{j\ell}-\Theta_{j\ell}$ and $\wh R_{j\ell}-R_{j\ell}$ derived via a careful analysis  in Proposition \ref{prop_sigma} in Appendix  \ref{sec_stat_sigma}. We rule out quasi-anchor words ($J_1 = \emptyset$, see Remark \ref{remark1} as well) since otherwise, the presentation, analysis and proofs   will become much more cumbersome.
\end{remark}

\begin{remark}[Relation between document length $N$ and dictionary size $p$]\label{rem-cond-p,N}
	Our procedure can be implemented for any $N\ge2$. However, Theorem \ref{thm_rate_Ahat} and Corollary \ref{cor_opt_rate}   indirectly impose some restrictions on $N$ and $p$. Indeed, the restriction
	\begin{equation}\label{constr}
	N \ge (2p\log M / 3) \vee (9p\log^2 M / K) 
	\end{equation}
	is subsumed by  (\ref{ass_signal}), via 
	(\ref{constr_1})
	and 
	\[
	\min_{1\le j\le p}\max_{1\le i\le n}\M_{ji} = \min_{1\le j\le p}\sum_{k=1}^K A_{jk}\max_{1\le i\le n}W_{ki} \le{1\over p}\sum_{j=1}^p\sum_{k=1}^K A_{jk} = {K\over p}.
	\]
	Inequality (\ref{constr}) describes the regime for  which we establish the upper bound results in this section, and are able to show minimax optimality, as the lower bound restriction 
	$cnN\ge |I|+|J|K$ for some $c>1$ in Theorem \ref{thm_lb} implies $N\ge p /(cn)$.

	We can extend the range (\ref{constr}) of $N$ at the cost of a stronger condition  than (\ref{ass_nu}) on $\nu$.
	Assume (\ref{ass_nu_prime}) holds with  $\delta' = \max_{j,\ell}\delta_{j\ell}'$ and with $\delta_{j,\ell}'$   defined in Corollary \ref{cor_delta} of Appendix \ref{sec_stat_sigma}.
	In that case,   as in Remark \ref{remark1},
	condition (\ref{ass_signal}) can be relaxed to (\ref{ass_signal_weak}).  Provided
	\begin{equation}\label{ass_signal_weak_mu}
	\min_{j\in I}{1\over n}\sum_{i =1}^n\M_{ji} \ge {c\log M \over N},\qquad \min_{j\in I}\max_{1\le i\le n}\M_{ji} \ge {c'(\log M)^2 \over N}
	\end{equation}
	for some constant $c, c'>0$,  we prove in Appendix \ref{sec_proof_rem_noise_control} of the supplement that Theorem \ref{thm_rate_Ahat} and Corollary \ref{cor_opt_rate} still hold. 
	As discussed in  Remark \ref{remark1}, condition (\ref{ass_signal_weak})  implies 
	(\ref{constr_0}), \[ N \ge c\cdot (p\log M)/n, \] which is a
	much weaker restriction  on $N$ and $p$ than (\ref{constr}). Condition (\ref{ass_signal_weak_mu}) in turn is  weaker than (\ref{ass_signal}) as it only   restricts  the smallest (averaged over documents) frequency of anchor  words. As a result, (\ref{ass_signal_weak_mu}) does not necessarily imply  the constraint   (\ref{constr}). For instance, if 
	$\min_{j\in I}n^{-1}\sum_{i=1}^n \M_{ji} \gtrsim 1/{ |I|},$ then (\ref{ass_signal_weak_mu}) is implied by 
	$
	N \gtrsim |I|(\log M)^2.
	$
	The problem of developing a procedure that can be shown to be minimax-rate optimal in the absence of  condition (\ref{ass_signal_weak_mu}) is left open for future research.

\end{remark}

\begin{remark}[Interpretation  of the conditions of Corollary \ref{cor_opt_rate}]
	\mbox{}
	\vspace{1.2mm}
	
	\noindent(1)  {\it Conditions regarding anchor words}.          
	Condition $\oa_I \asymp \ua_I$ implies that all anchor words, across  topics,  have the same order of frequency. The second condition $\sum_{j\in J}\rho_j \lesssim |J|$ is equivalent with $|J|^{-1}\sum_{j\in J}\|A_{j\cdot}\|_\i \lesssim \max_{i\in I}\|A_{i\cdot}\|_\i$. Thus it holds when the averaged frequency of non-anchor words is no greater, in order, than the largest frequency among all anchor words.
	\vspace{1.2mm}\\
	\noindent (2) {\it Conditions regarding the topic matrix $W$.}     Condition (ii) implies that  the topics are balanced, through $\og \asymp \ug$, and prevents too strong a linear dependency between the rows in $W$, via  $\sum_{k'\ne k}\sqrt{C_{kk'}} = o(\sqrt{C_{kk}})$. As a result, we can show $\|C^{-1}\|_\r = O(K)$ (see Lemma \ref{lem_C_inv} in the supplement) and the rate of Rem($J, k$) in Theorem \ref{thm_rate_Ahat} can be improved by a factor of $1/\sqrt{K}$. 
	The  most favorable situation under which condition (ii) holds corresponds to the extreme case when each document  contains a prevalent topic $k$, in that the corresponding $W_{ki} \approx 1$, and the topics are approximately balanced across documents, so that approximately $n/K$ documents cover the same prevalent topic.  The minimax lower bound is also derived based on this ideal structure of $C$. At the other extreme, all topics are equally likely to be covered in each document, so that $W_{ki} \approx 1/K$, for all $i$ and $k$.  In the latter case, $\og \asymp \ug \approx 1$, but $\|C^{-1}\|_\r$ may be larger, in order, than $K$ and the rates in  Theorem \ref{thm_rate_Ahat} are slower than the optimal rates by at least a factor of $\sqrt{K}$.  When $K$ is fixed or comparatively small, this loss is ignorable. Nevertheless, our condition (ii) rules out this extreme case, as in general we do not expect any of the given documents to cover, in the same proportion, all of the $K$ topics we consider. 
\end{remark}  

\begin{remark}[Extensions]
	Both our procedure and the outline of our analysis can be naturally extended to the  more general Nonnegative Matrix Factorization (NMF) setting, and to different data generating distributions, as long as Assumptions \ref{ass_sep}, \ref{ass_pd} and \ref{ass_w} hold, by adapting the  control of the stochastic error terms $\eps$. 
\end{remark}

\subsubsection{	Comparison with the rates of other existing estimators} \label{sec_compare_rate}
As mentioned in the Introduction, 
the rate analysis of estimators in topic models received very little attention, with the two exceptions discussed below, both assuming that $K$ is known in advance. 

An  upper bound on $L_1(\wh A, A)$ has been established in \cite{arora2012learning, arora2013practical}, for the estimators $\wh A$ considered in these works, and different than ours. Since the estimator of \cite{arora2013practical} inherits the rate of \cite{arora2012learning}, we only discuss the latter rate, given below: 
\[
L_1(A, \wh A) ~\lesssim ~{a^2K^3\over \Gamma\delta_p^3}\cdot \sqrt{\log p \over nN}.
\]
Here $a$ can be viewed as $\og/\ug$, $\Gamma$ can be treated as the $\ell_1$-condition number of $C = n^{-1} WW^T$ and $\delta_p$ is the smallest non-zero entry among all the anchor words,  and corresponds to   $\ua_I / p$,  in our notation.  To understand the order of magnitude of this bound, we evaluate it in the most favorable scenario, that of $W=W^0$ in (\ref{def_W0}). Then $\Gamma \le \sqrt{K}\sigma_{\min}(C) \lesssim 1/\sqrt{K}$, where $\sigma_{\min}(C)$ is the smallest eigenvalue of $C$, and  $\ug\asymp \og$.  Since $\sum_{j\in J}\rho_j \lesssim |J|$ implies $\ua_I \gtrsim p^{-1}\sum_{j=1}^p\alpha_j $ and $p\le \sum_{j=1}^p\alpha_j \le pK$, suppose  also $\ua_I \ge K$. Then, the above rate becomes
\[
L_1(A, \wh A) ~\lesssim ~ p^3\cdot \sqrt{K\log p \over nN},
\]
which is slower than what we obtained in (\ref{habit}) by at least a factor of $ (p^5\log p)^{1/2}/K$.

The upper bound on $L_{1}(\wh A, A)$ in  \cite{Tracy} is derived for $K$ fixed, under a number of non-trivial assumptions on $\Pi$, $A$ and $W$ given in their work.  Their rate analysis does not assume all anchor words have the same order of frequency but requires that  the number of anchor words in each topic grows as $p^2\log^2(n) / (nN)$ at the estimation level. With an abundance of anchor words, the estimation problem becomes easier, as there will be fewer parameters to estimate. If this  assumption does not hold, the error upper bound established in Theorem 2.1 of \cite{Tracy}, for fixed $K$,  may become sub-optimal by factors in $p$. 
In contrast, although in our work we allow for the existence of more anchor words per topic, we only {\it require} a minimum of one anchor word per topic. 

To further understand how the number of anchor words  per topic  affects the estimation rate, we consider the extreme example, used for illustration purposes only, of $I = \{1, \ldots, p\} := [p]$, when all words are anchor words. Our Theorem \ref{thm_lb} immediately shows that in this case the minimax lower bound for $L_1(\wh A ,A)$  becomes
\[
\inf_{\wh A}\sup_{A\in \A(K, p, 0)} \PP_A\left\{ L_1( \wh A , A)\ge c_0K\sqrt{p \over nN} \right\} \ge c_1
\]
for two universal constant $c_0, c_1>0$, where the infimum is taken over all estimators $\wh A$. 
Theorem \ref{thm_rate_Ahat} shows that our estimator does indeed attain this rate when when $\ug \asymp 1$ and $\min_{i\in I}\|A_{i\cdot}\|_1 \gtrsim K/p$. This rate becomes  faster (by a factor $\sqrt{K}$), as expected,  since there is only one non-zero entry of each row of $A$ to estimate. These considerations show that when we return to the realistic case in which an unknown  subset of the words are anchor words, the bounds $L_1(A, \wh A)$, for our estimator $\wh A$,  only increase at most by an optimal factor of  $\sqrt{K}$, and not by factors depending on $p$. 

\section{Experimental results}\label{sec_sim}

\paragraph{Notation:}
Recall that $n$ denotes the number of documents, $N$ denotes the number of words drawn from each document, $p$ denotes the dictionary size, $K$ denotes the number of topics, and $|I_k|$ denotes the cardinality of anchor words for topic $k$. We write $\xi:= \min_{i\in I} K^{-1} \sum_{k=1}^KA_{ik}$ for the minimal  average frequencies of anchor words $i$. The quantity $\xi$ plays the same role in our work as $\delta_p$ defined in the \emph{separability assumption} of \cite{arora2013practical}.
Larger values are more favorable for estimation. 

\paragraph{Data generating mechanism:}
For each document $i\in [n]$, we randomly generate the topic vector $W_i\in\RR^{K}$ according to the following principle. We first randomly choose the cardinality $s_i$    of $W_i$ from the integer set  $\{1,\ldots\lfloor K/3\rfloor\}$. Then we randomly choose its support  of  cardinality  $s_i$ from $[K]$. 
Each entry of the chosen support is then generated from Uniform$(0, 1)$. Finally, we normalize $W_i$ such that it sums to $1$. In this way, each document 
contains a (small) subset of topics instead of all possible topics. 

Regarding the word-topic matrix $A$, we first generate its anchor words by putting $A_{ik}:=K\xi$ for any $i\in I_k$ and $k\in [K]$. Then, each entry of non-anchor words is sampled from a Uniform$(0,1)$ distribution. Finally, we normalize each sub-column  $A_{Jk}\subset A_{\cdot k}$ to have sum  $1- \sum_{i\in I}A_{ik}$.

Given the matrix $A$ and $W_i$, we  generate the $p$-dimensional column $NX_i$ by independently drawing $N$ samples from a Multinomial$_p(N, AW_i)$ distribution. 

We consider the setting $N = 1500$, $n=1500$, $p = 1000$, $K = 30$, $|I_k| = p/100$ and $\xi = 1/p$ as our benchmark setting. 

\paragraph{Specification of the tuning parameters in our algorithm.}
In practice,  based on Corollary \ref{cor_theta_R} in Appendix \ref{sec_stat_sigma}, we recommend the choices  
\begin{equation}\label{delta3}
\wh \delta_{j\ell} =  {n^2 \over\| X_{j\cdot}\|_1 \|  X_{\ell \cdot}\|_1 }
\!\left\{\wh \eta_{j\ell}  +2 \wh\Theta_{j\ell}   \sqrt{\log M \over n }
\!\left[\! \frac{n}{\| X_{j\cdot}\|_1} \!\left( \frac{1}{n} \sum_{i=1}^n \frac{X_{ji} }{ N_i} \right)^{\rs{1\over 2}}\!\!\!+  \!\frac{n}{\| X_{\ell\cdot}\|_1}  \!\left( \frac{1}{n} \sum_{i=1}^n \frac{X_{\ell i} }{ N_i} \right)^{\rs{1\over 2}}
\right]\right\}
\end{equation}
and
\begin{align}\label{def_eta3}
\wh \eta_{j\ell} =  &~3\sqrt{6}\left( \left\| X_{j\cdot}\right\|_\infty^{1\over 2} +\left\| X_{\ell\cdot}\right\|_\infty^{1\over 2}\right) \sqrt{ \log M\over n}\left(\frac{1}{n} \sum_{i=1}^n \frac{ X_{ji} X_{\ell i} }{N_i}\right)^{1\over 2}+
\\\nonumber &+  {2 \log M \over n}\left( \| X_{j\cdot}\|_\infty + \| X_{\ell\cdot}\|_\infty \right) \frac1n \sum_{i=1}^n {1\over N_i}
+ 31 \sqrt{(\log M)^4 \over n}\left({1\over n}\sum_{i =1}^n{X_{ji} + X_{\ell i} \over N_i^3} \right)^{\rs \frac12} 
\end{align}
and 
set 
$C_0 = 0.01$ and $C_1 = 1.1$ in Algorithm \ref{alg2}.
We found that these  choices for $C_0$ and $C_1$ not only give good overall performance, but are robust as well. To verify this claim,
we generated $50$ datasets  under a benchmark setting of $N = 1500$, $n=1500$, $p = 1000$, $K = 30$, $|I_k| = p/100$ and $\xi = 1/p$. 
We first applied our Algorithm \ref{alg2} with $T=1$
to each dataset by setting $C_1=1.1$ and varying $C_0$ within the grid  $\{0.001, 0.003, 0.005, \ldots, 0.097, 0.099\}$. The  estimation error  $L_1(\wh A,A)/ K$, averaged over the 50 datasets, is shown in Figure \ref{fig_robust_C_0} and clearly demonstrates that our algorithm is   robust to the choice of $C_0$ in terms of  {overall estimation error}. In addition, we applied Algorithm \ref{alg2} by keeping $C_0 = 0.01$ and varying $C_1$ from $\{0.1, 0.2,\ldots,11.9, 12\}$. Since $C_1$ mainly controls the selection of anchor words in Algorithm \ref{alg1}, we averaged  the estimated topics number $\wh K$, \emph{sensitivity} $|\wh I \cap I|/ \ |I|$ and \emph{specificity} $|\wh I^c \cap I^c|/ \ |I^c|$ of the selected anchor words  over the $50$ datasets. Figure \ref{fig_robust_C_1} shows that Algorithm \ref{alg1} recovers all anchor words by choosing any $C_1$ from the whole range of $[1, 10]$ and consistently estimates the number of topics for all $0.2\le C_1 \le 10$, which strongly supports the robustness of Algorithm \ref{alg1}   relative to the choice of the tuning parameter $C_1$.\\

{	\begin{figure}[ht]
		\centering
		\begin{tabular}{cc}
			\includegraphics[width =0.34\textwidth]{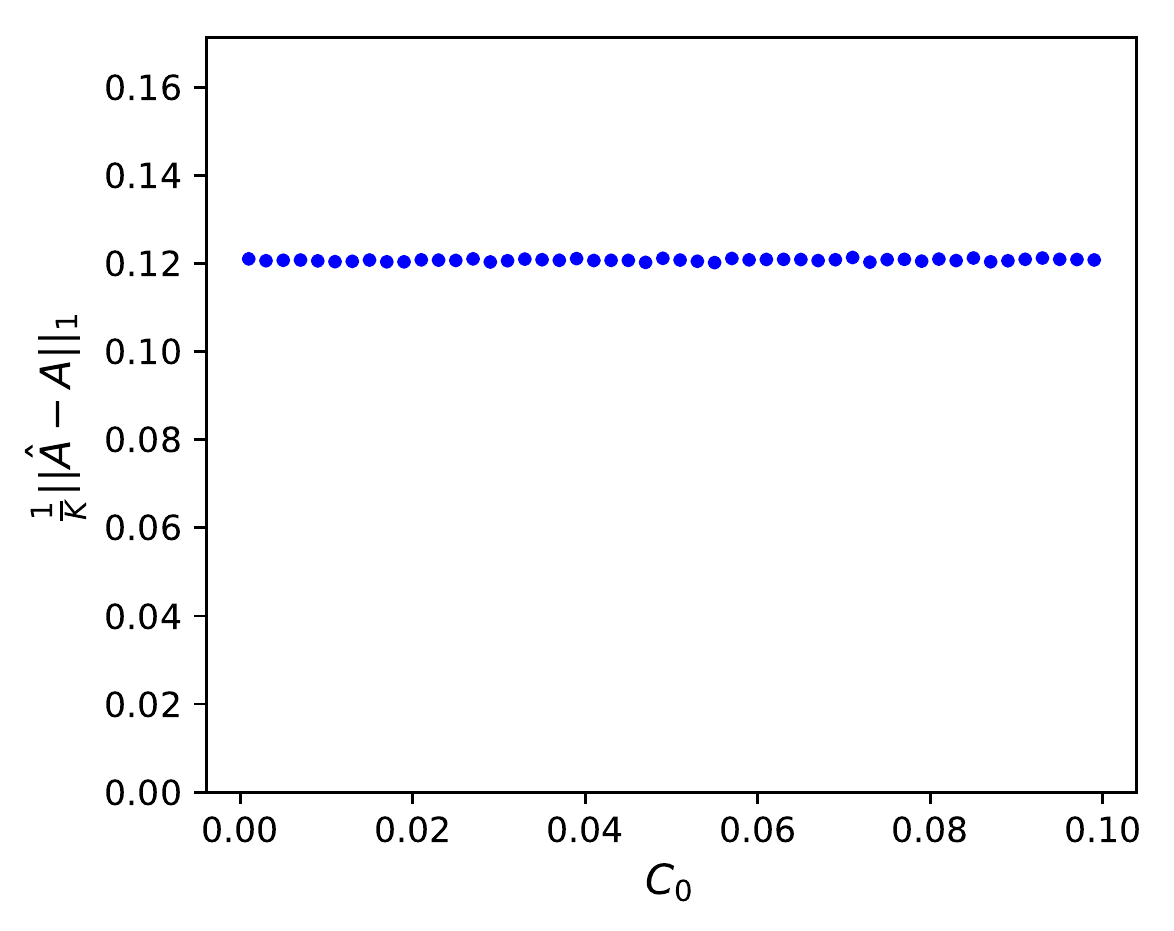} &\hspace{4pt}
			\includegraphics[width =0.35\textwidth]{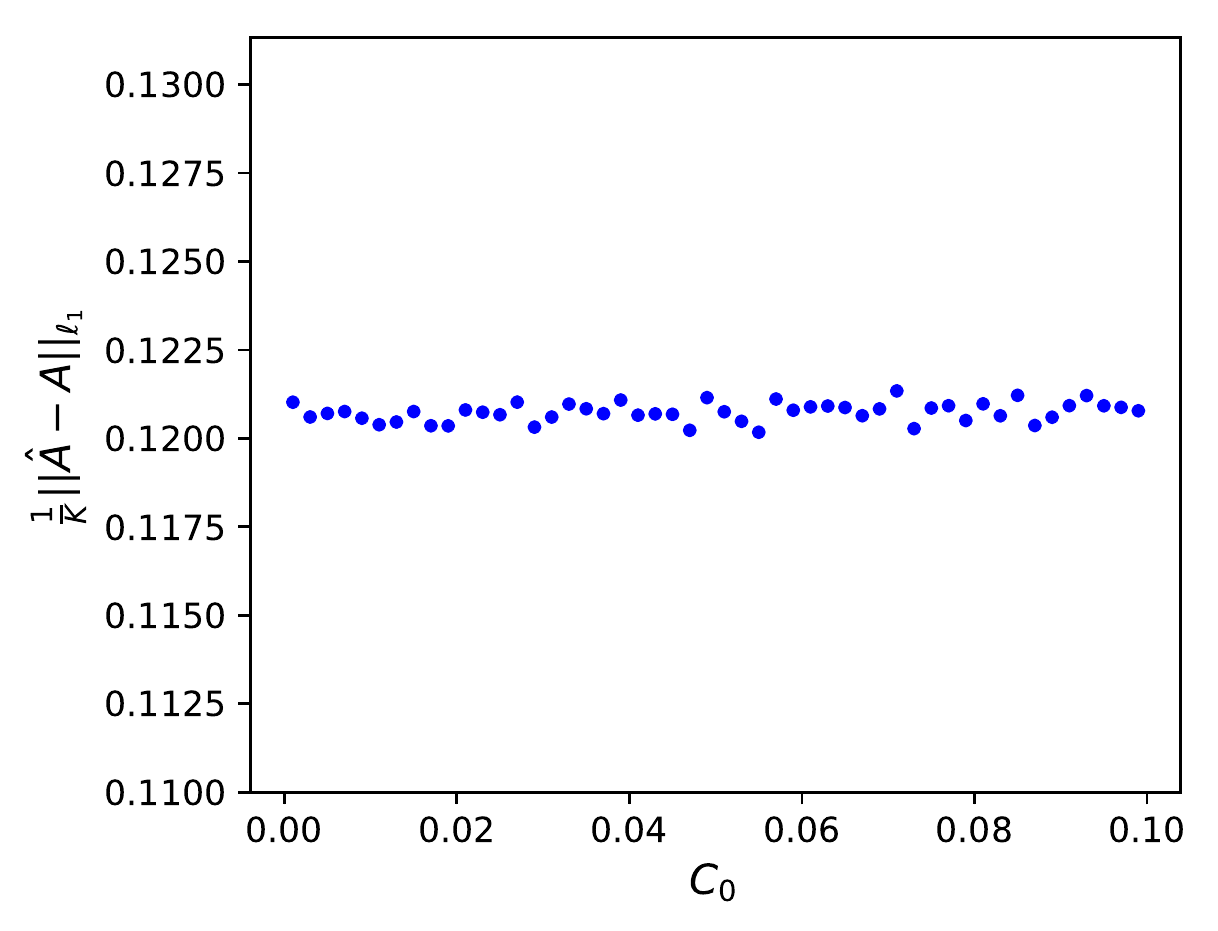}
		\end{tabular}
		\vspace{-2mm}
		\caption{Plots of \emph{overall estimation error} vs $C_0$. The right plot is zoomed in.}
		\label{fig_robust_C_0}
	\end{figure}
	\vspace{-5mm}
	\begin{figure}[ht]
		\centering
		\begin{tabular}{cc}\hspace{4pt}
			\includegraphics[width =0.33\textwidth]{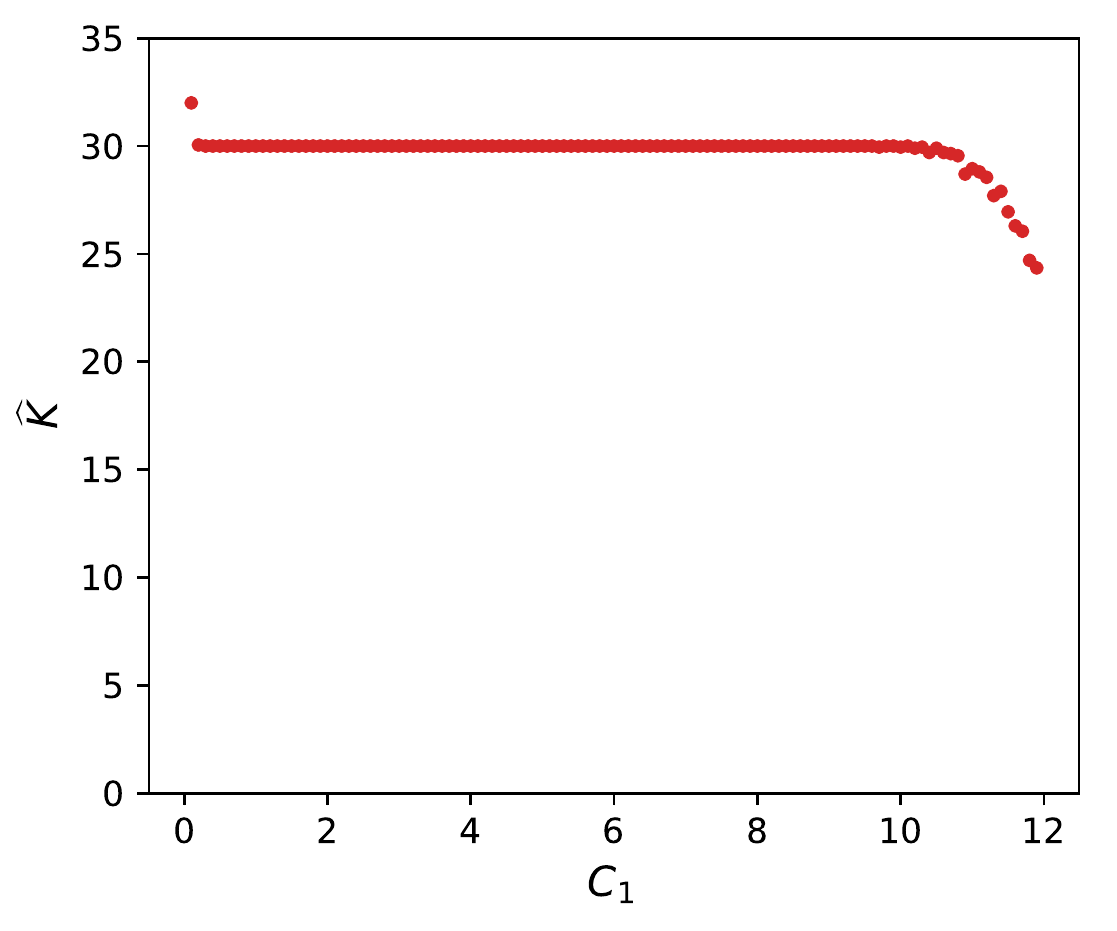} & \hspace{15pt}
			\includegraphics[width =0.32\textwidth]{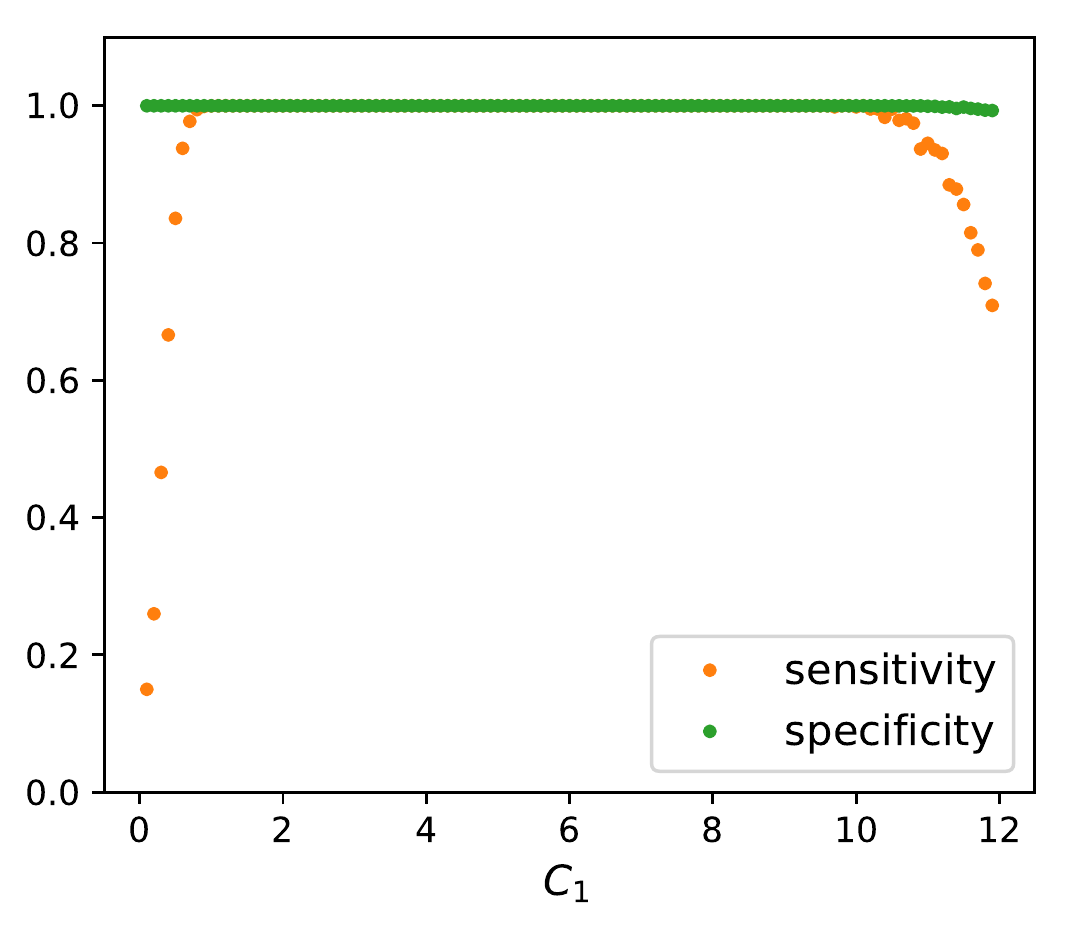}
		\end{tabular}
		\vspace{-2mm}
		\caption{Plots of $\wh K$, \emph{sensitivity} and \emph{specificity} vs $C_1$ when the true $K_0 = 30$.}
		\label{fig_robust_C_1}
	\end{figure}
}

\noindent
Throughout, we consider two versions of our algorithm: {\sc Top1} and {\sc Top10} described in Algorithm \ref{alg2} with $T=1$ and  $T = 10$, respectively. 
We compare {\sc Top} with best performing  algorithm available, that of   \cite{arora2013practical}.  We denote this algorithm by \textsc{Recover-L2} and \textsc{Recover-KL} depending on which 
loss function is used for estimating non-anchor rows in their Algorithm 3. 
In Appendix \ref{app_sim} we conducted  a small simulation study  to compare  these  two methods, and ours, with the recent procedure  of \cite{Tracy}, using the implementation the authors kindly made available to us.  Their method is tailored to topic models with a  known, small, number of topics. Our study revealed that, in the ``small $K$" regime, their procedure is comparable or outperformed by existing methods. 
Latent Dirichlet Allocation (LDA) \citep{BleiLDA} is a popular Bayesian approach to topic models,  but is computationally demanding.


The procedures from 
\cite{arora2013practical} 
have better performance than LDA  in terms of overall loss and computational cost, as evidenced by their simulations.
For this reason, 
we only focus on the comparison of our method  with \textsc{Recover-L2} and \textsc{Recover-KL} for the synthetic data. The comparison with \textsc{LDA} is considered in the semi-synthetic data.

We report the findings of our simulation studies in this section by showing that our algorithms estimate both the number of topics  and anchor words consistently, and have superior performance in terms of estimation error as well as computational time in various settings over the existing algorithms.

{\it We re-emphasize that in  all the comparisons presented below, the  existing methods have as input the true $K$ used to simulate the data, while we also estimate $K$. In Appendix \ref{sec_sim_K}, we show that these   algorithms  are  very sensitive to the choice of $K$. This demonstrates that correct estimation of $K$ is indeed highly critical for the  estimation of the entire  matrix $A$.  }

\subsection*{Topics and anchor words recovery}
{\sc Top10} and {\sc Top1} use the same procedure (Algorithm \ref{alg1}) to  select the anchor words, likewise for \textsc{Recover-L2} and \textsc{Recover-KL}. 
We present in Table \ref{table_anchor} the observed \emph{sensitivity} $ {|\wh I\cap I| / |I|}$ and \emph{specificity} ${| \wh I^c \cap I^c |/ |I^c|}$ of selected anchor words  in the benchmark setting with $|I_k|$ varying. It is clear that {\sc Top} recovers all anchor words and estimates the topics number $K$ consistently. All algorithms are performing perfectly for not selecting non-anchor words. We emphasize that the correct $K$ is given for procedure \textsc{Recover}.

\begin{table}[H]
	\centering
	\caption{Table of anchor recovery and topic recovery for varying $|I_k|$.}
	\label{table_anchor}
	\resizebox{\textwidth}{!}{
		\begin{tabular}{|c|ccccc|ccccc|}
			\hline
			Measures & \multicolumn{5}{c|}{{\sc Top}} & \multicolumn{5}{c|}{\textsc{Recover}}\\
			\hline
			$|I_k|$ & 2 & 4 & 6 & 8 & 10 & 2 & 4 & 6 & 8 & 10\\
			\hline
			\emph{sensitivity} & $100\%$&$100\%$ & $100\%$& $100\%$& $100\%$&$50\%$ & $25\%$& $16.7\%$& $12.5\%$ & $10\%$\\
			\hline
			\emph{specificity} & $100\%$&$100\%$ & $100\%$& $100\%$& $100\%$& $100\%$&$100\%$ & $100\%$& $100\%$& $100\%$\\
			\hline
			Number of topics & \multicolumn{5}{c|}{$100\%$}& \multicolumn{5}{c|}{N/A}\\
			\hline
		\end{tabular}
	}
\end{table}

\subsection*{Estimation error}

In the benchmark setting, we varied $N$ and $n$ over  $\{500, 1000, 1500, 2000, 2500\}$, $p$ over $\{500, 800, 1000, 1200, 1500\}$, $K$ over $\{20, 25, 30, 35, 40\}$ and $|I_k|$ over $\{2, 4, 6, 8, 10\}$, one at a time. For each case, the averaged \emph{overall estimation error} $\|\wh A - AP\|_1/K$ and \emph{topic-wise estimation error} $\|\wh A-AP\|_{1,\infty}$ over $50$ generated datasets for each dimensional setting were recorded. We used a simple linear program to find the best permutation matrix $P$ which aligns $\wh A$ with $A$. Since the two measures had   similar patterns for all settings, we only present \emph{overall estimation error} in Figure \ref{fig_ell1_error}, which can be summarized as follows:

\begin{itemize}
	\item[-]  The estimation error of all four algorithms decreases as $n$ or $N$ increases, while it increases as $p$ or $K$ increases. This confirms our  theoretical findings and   indicates that  $A$ is harder to estimate when not only $p$, but $K$ as well, is allowed to grow. 
	\item[-] In all settings, {\sc Top10} has the smallest \emph{estimation error}. Meanwhile, {\sc Top1} has better performance than \textsc{Recover-L2} and \textsc{Recover-KL} except for $N = 500$ and $|I_k| = 2$. The difference between {\sc Top10} and {\sc Top1} decreases as  the length $N$ of each sampled document increases. This is to be expected since the larger the $N$, the better each column of $X$ approximates the corresponding column of $\M$, which lessens the benefit of selecting different representative sets $\wh L$ of anchor words. 
	\item[-] \textsc{Recover-KL} is more sensitive to the specification of  $K$ and $|I_k|$ than the other approaches. Its performance increasingly worsens  compared to the other procedures for increasing values of  $K$. On the other hand, when the sizes $|I_k|$ are small, it performs almost as well as {\sc Top10}. However, its performance does not improve as much as the performances of the other algorithms in the presence of more anchor words.  
\end{itemize}

\begin{figure}[ht]
	\centering
	\begin{tabular}{ccc}
		\includegraphics[width=0.33\textwidth
		]{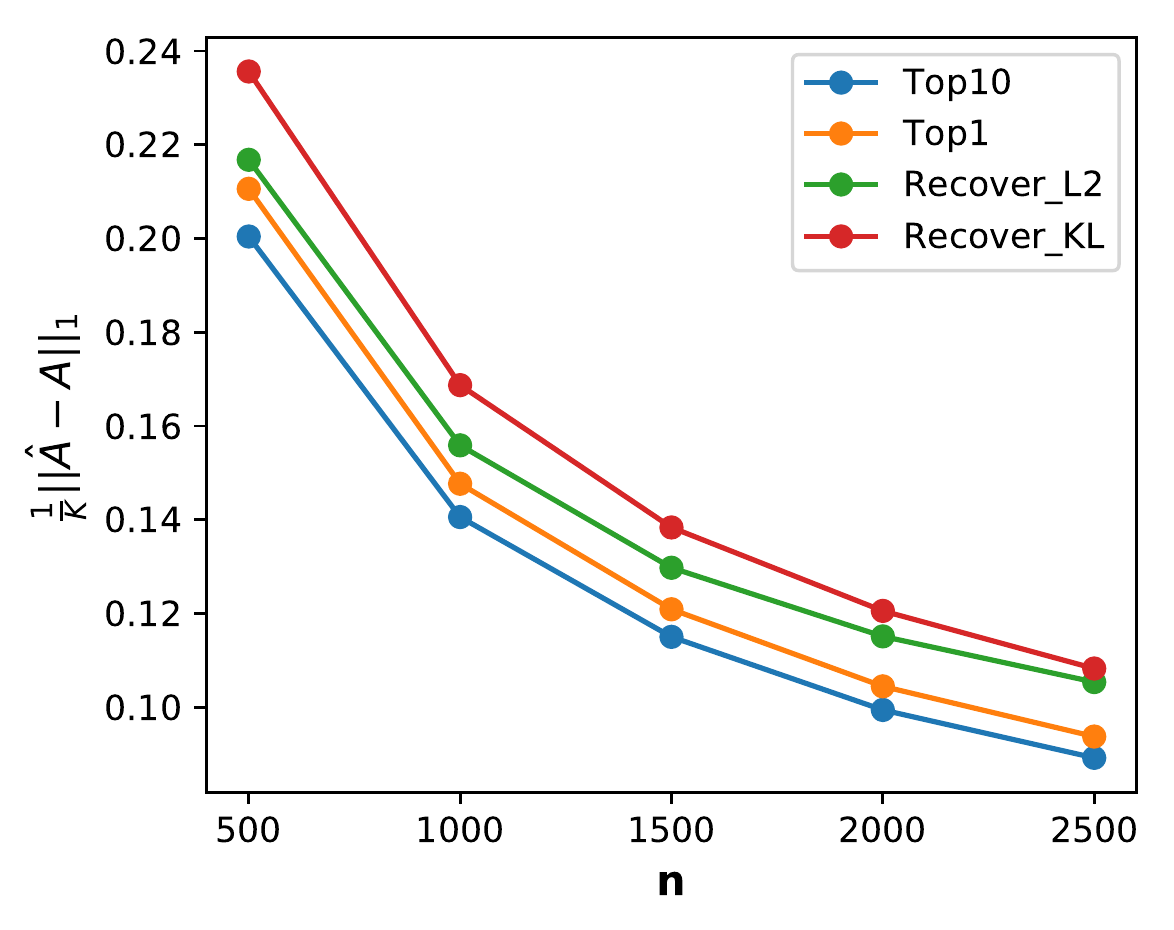}& \hskip-15pt
		\includegraphics[width=0.33\textwidth
		]{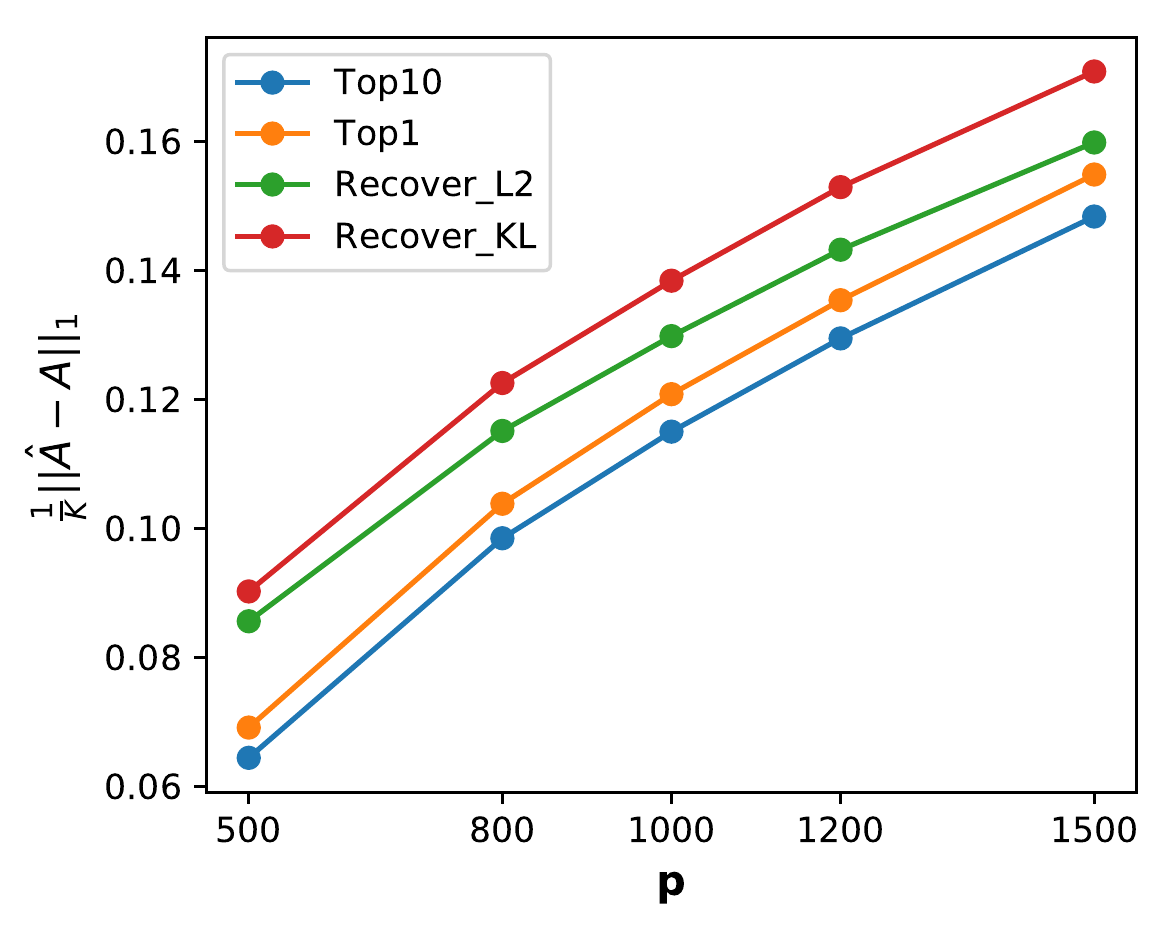}& \hskip-15pt
		\includegraphics[width=0.33\textwidth
		]{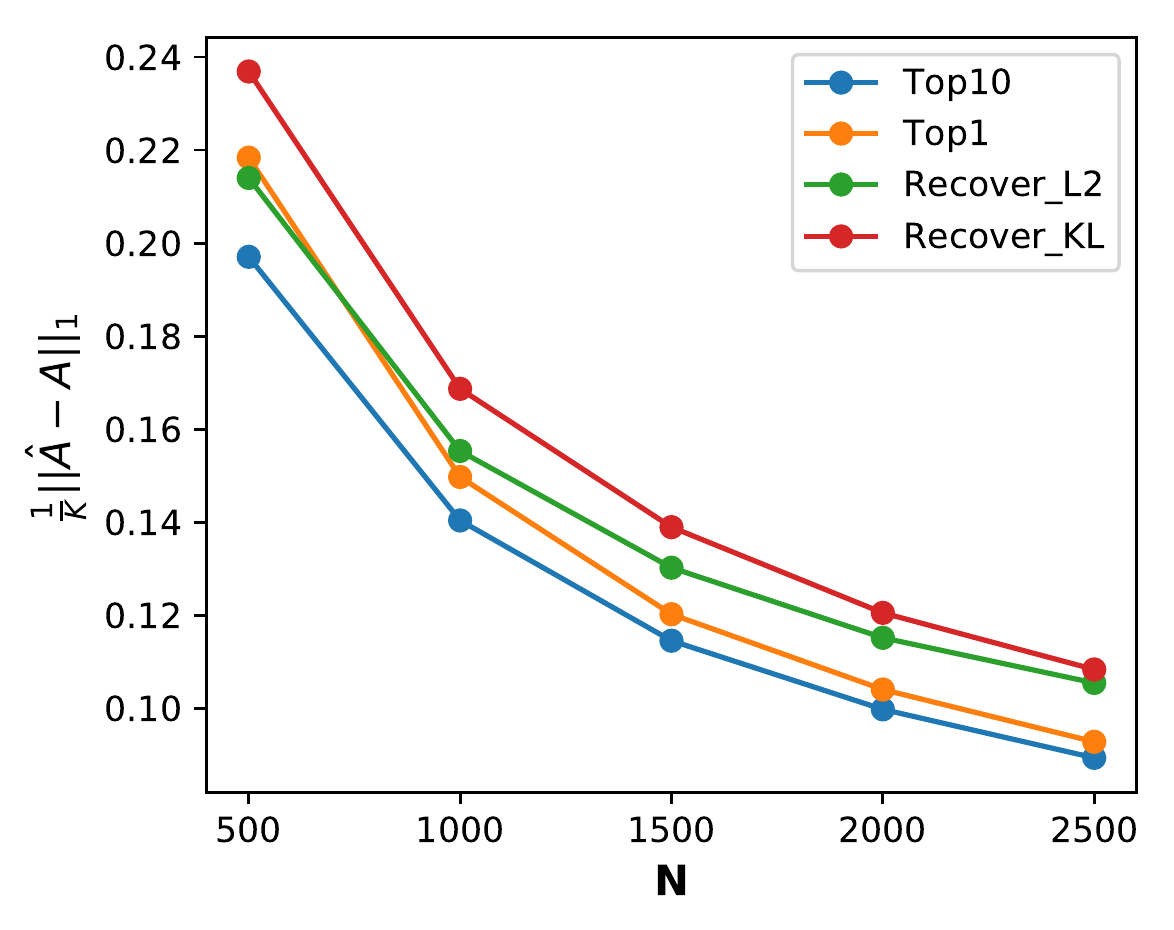}
	\end{tabular}
	\begin{tabular}{cc}
		\includegraphics[width=0.32\textwidth
		]{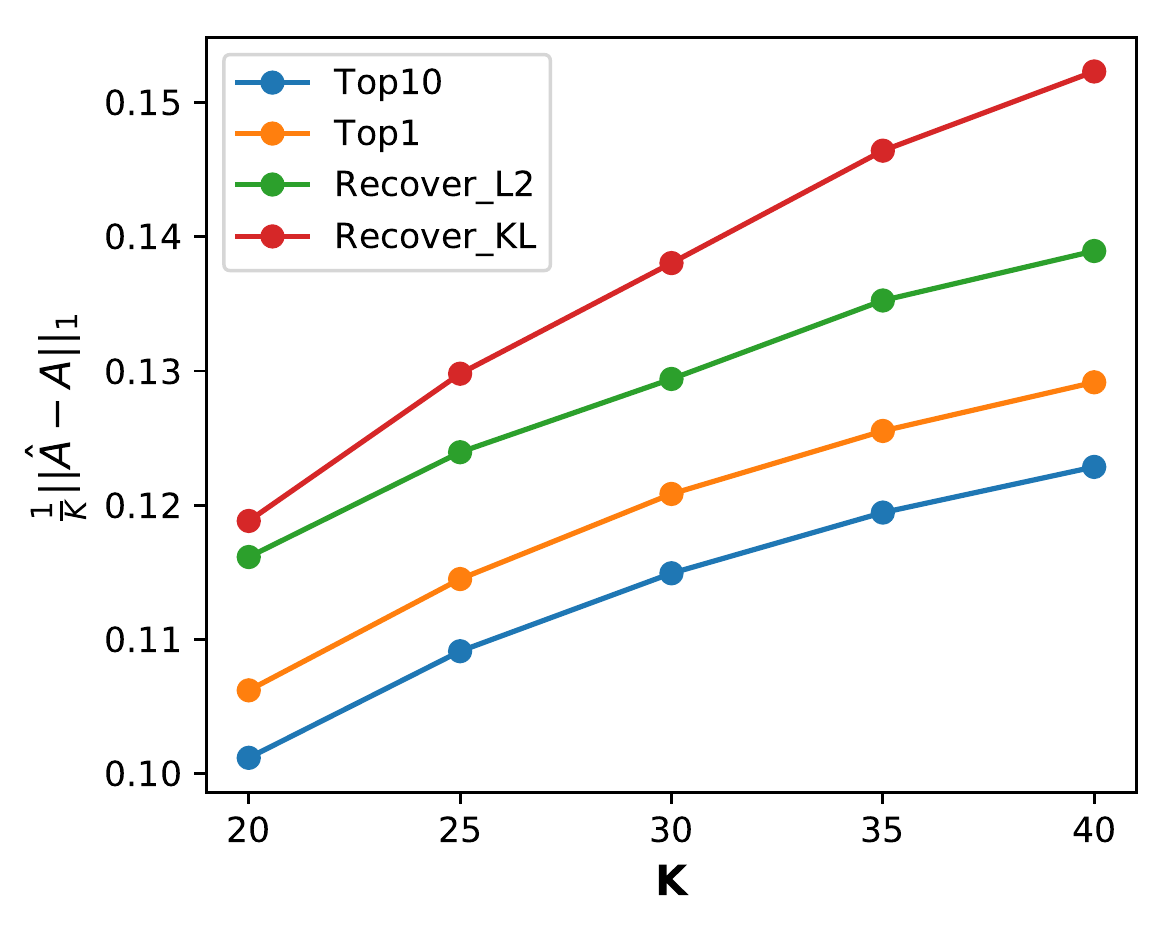}&
		\includegraphics[width=0.32\textwidth
		]{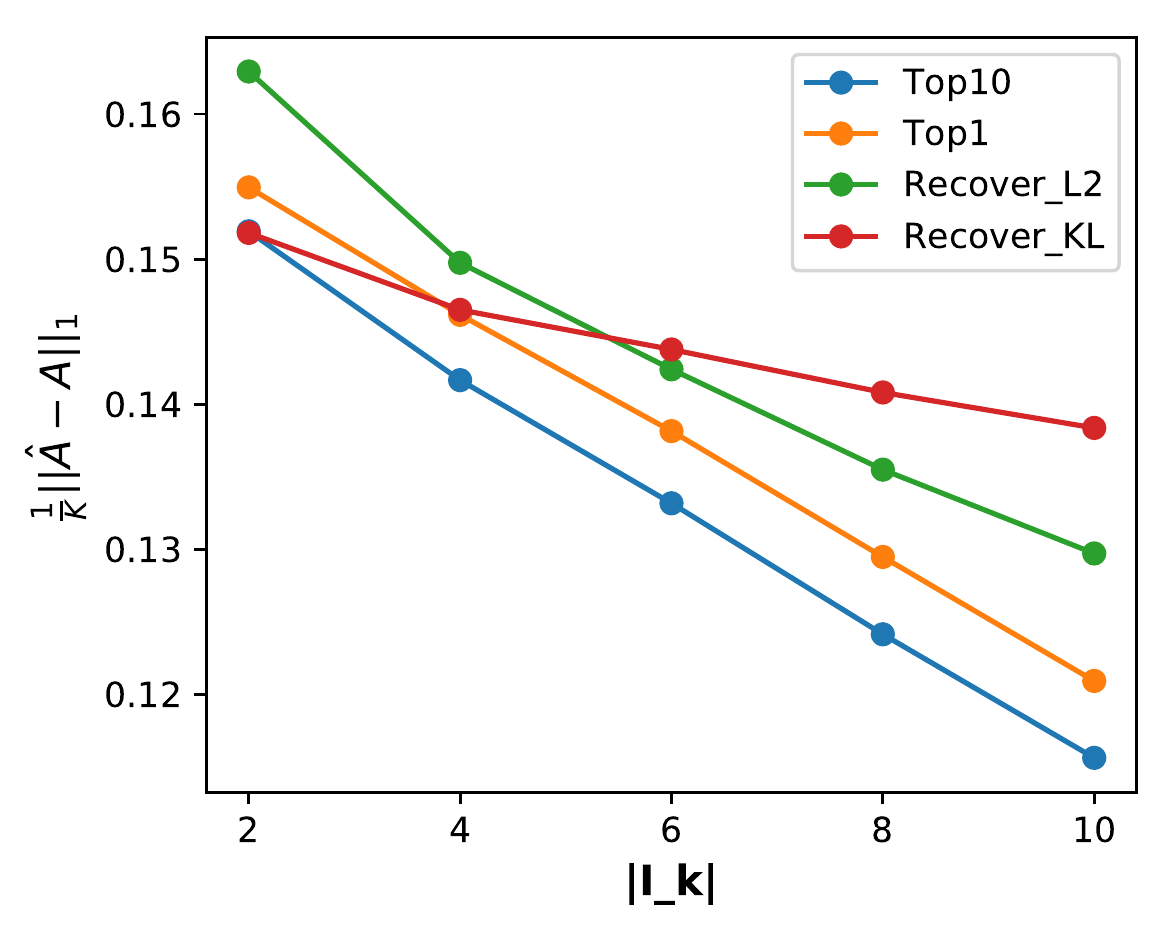} 
	\end{tabular}
	\caption{Plots of averaged \emph{overall estimation error} for varying parameter one at a time.}
	\label{fig_ell1_error}
	\vspace{-5mm}
\end{figure}

\subsection*{Running time}
The running time of all four algorithms is shown in Figure \ref{fig_time_elapse}. As expected, {\sc Top1}   dominates in terms of  computational efficiency. Its computational cost only slightly increases in $p$ or $K$.  Meanwhile, the running times of {\sc Top10} is better than \textsc{Recover-L2} in most of the settings and becomes comparable to it when $K$ is large or $p$ is small. \textsc{Recover-KL} is overall much more computationally demanding than the others. 
We see that  {\sc Top1} and {\sc Top10} are nearly independent of $n$, the number of documents, and $N$,  the document length, as these parameters   only appear in the computations of the matrix $\wh R$ and the tuning parameters $\wh\delta_{ij}$ and $\wh \eta_{ij}$. 
More importantly, as the dictionary size $p$ increases, the two  {\sc Recover} algorithms become much more computationally expensive than {\sc Top}. This difference stems from the fact that our procedure of estimating $A$   is almost independent of $p$ computationally. {\sc Top} solves $K$ linear programs in $K$ dimensional space, while  {\sc Recover} must solve $p$ convex optimization problems over in $K$ dimensional spaces.

We emphasize again that our {\sc Top} procedure  accurately estimates $K$ in the reported times, whereas we provide the two {\sc Recover} versions with the true values of $K$.  In practice, one needs to resort to various cross-validation schemes to select a value of $K$ for the {\sc Recover} algorithms, see 
\cite{arora2013practical}. This would dramatically increase the actual running time for these procedures. 

\begin{figure}[ht]
	\centering
	\begin{tabular}{ccc}
		\includegraphics[width=0.32\textwidth
		]{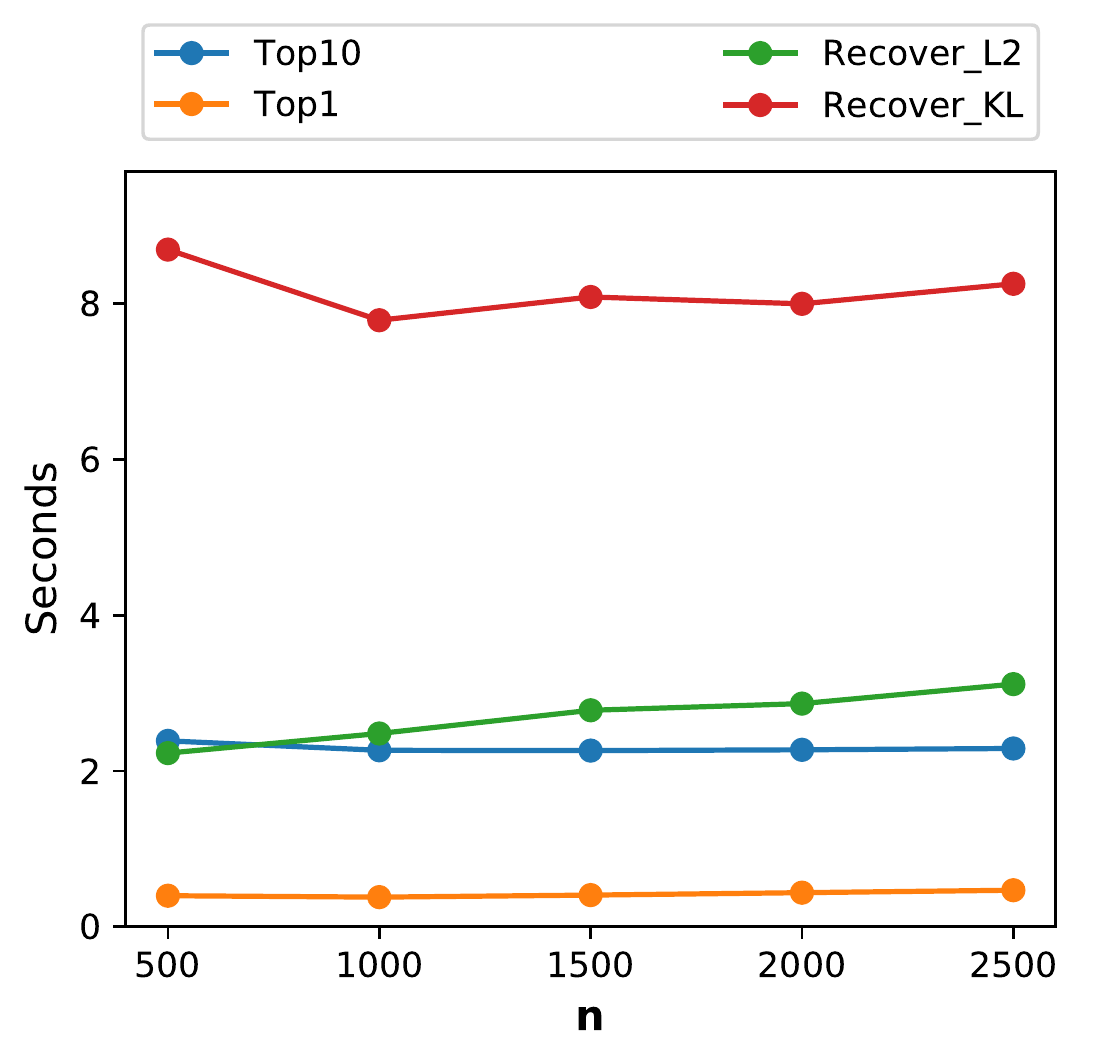} & 
		\includegraphics[width=0.32\textwidth
		]{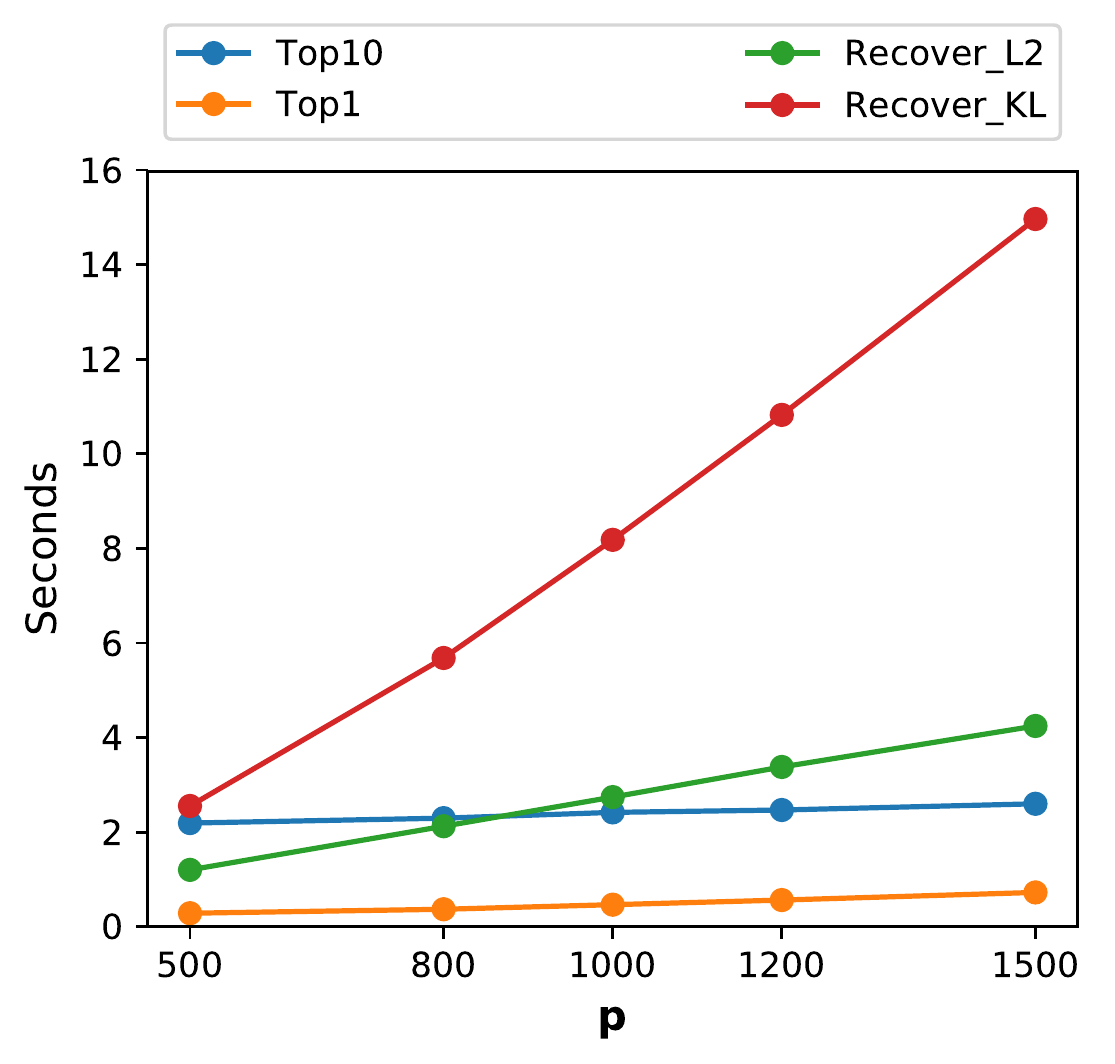}&
		\includegraphics[width=0.32\textwidth
		]{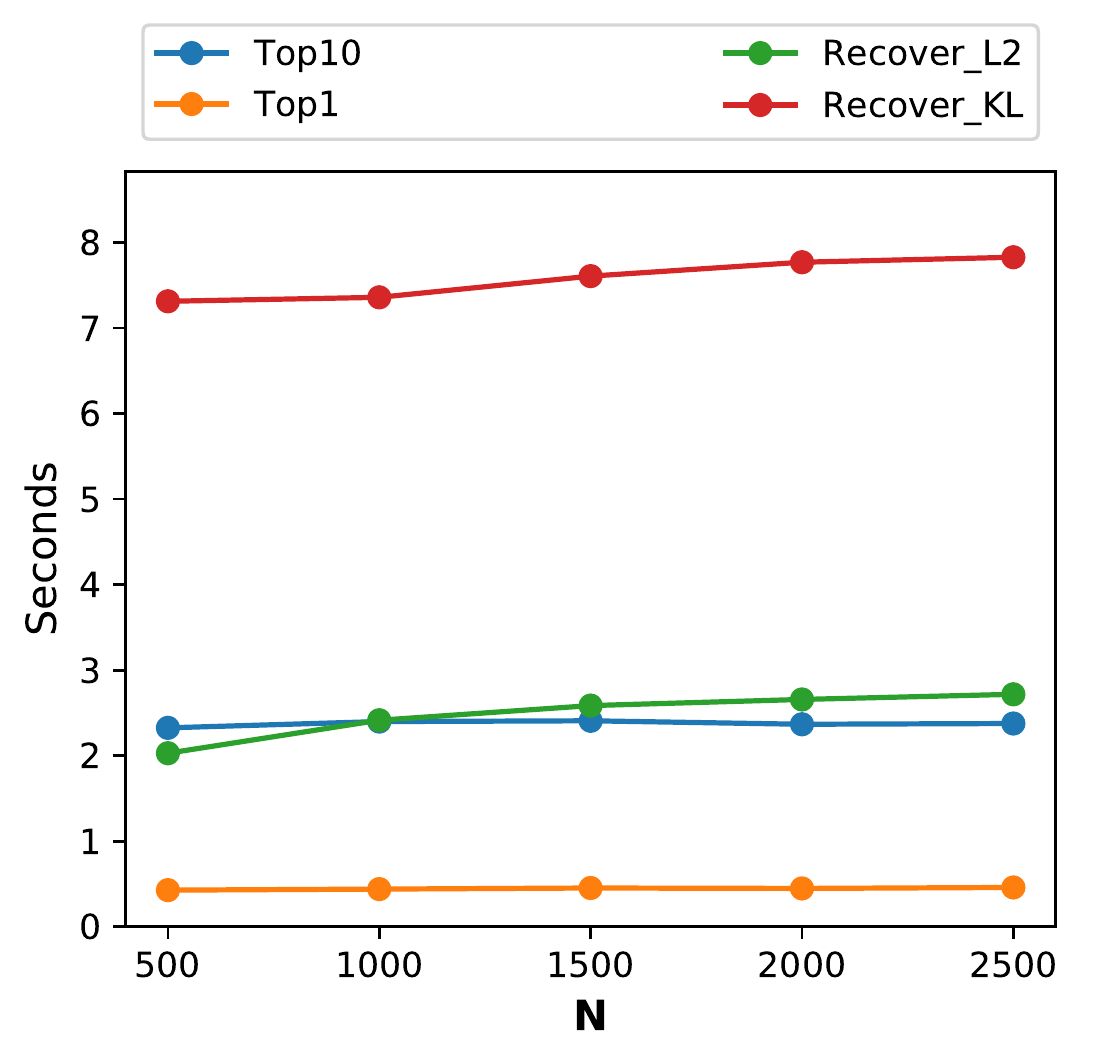}
	\end{tabular}
	\begin{tabular}{cc}
		\includegraphics[width=0.32\textwidth
		]{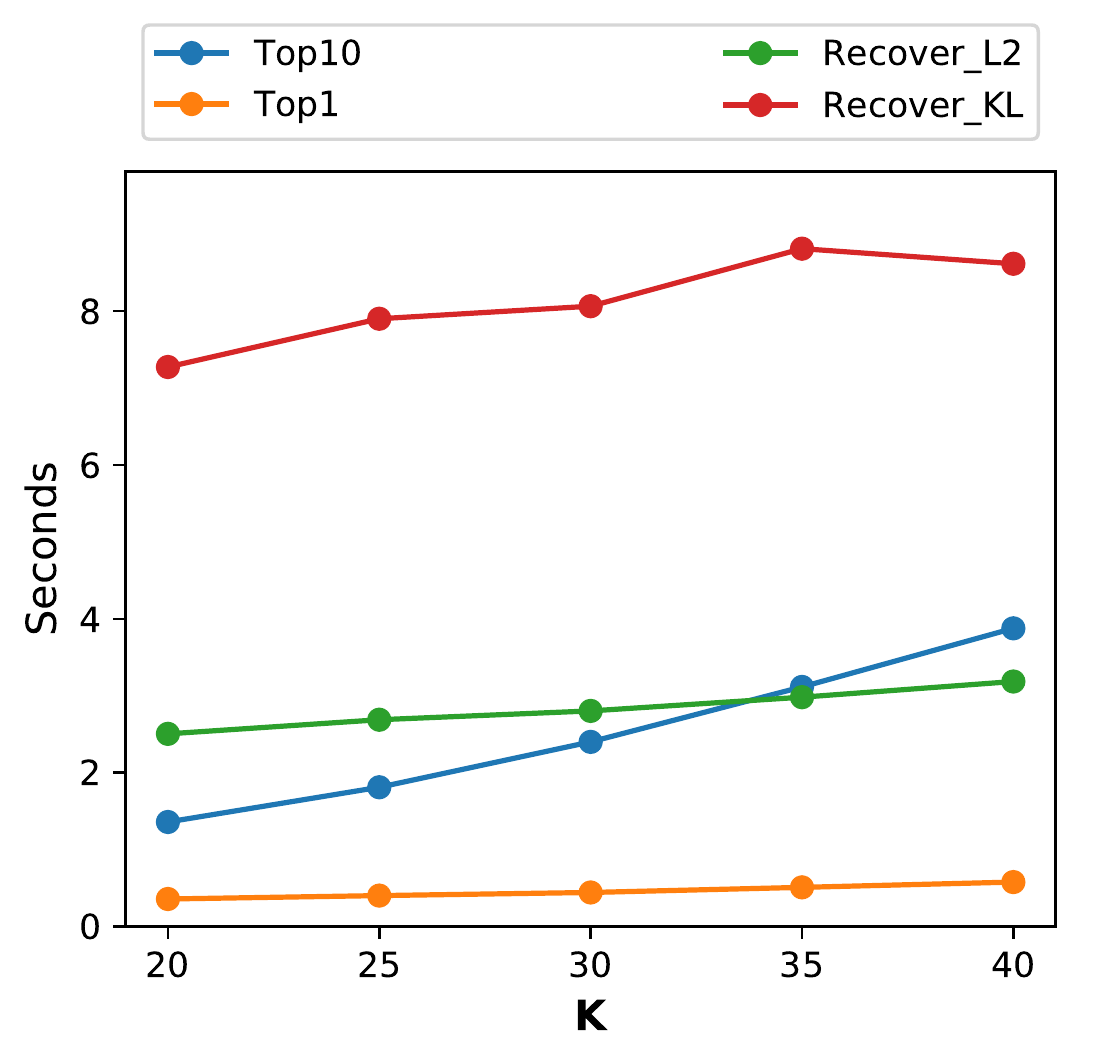}&
		\includegraphics[width=0.32\textwidth
		]{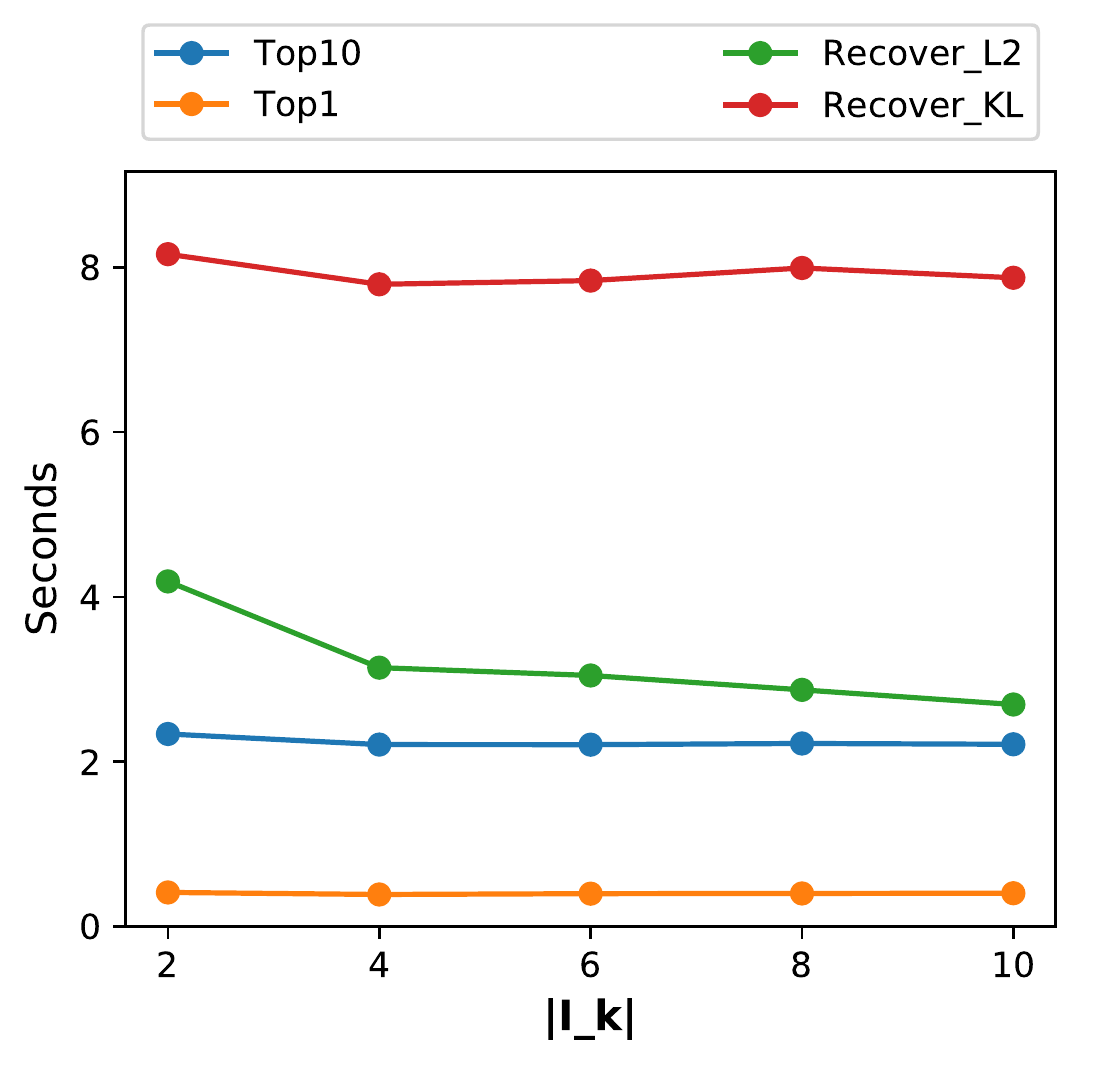} 
	\end{tabular}
	\caption{Plots of running time for varying parameter one at a time.}
	\label{fig_time_elapse}
\end{figure}

\subsection*{Semi-synthetic data from NIPs corpus}
In this section, we compare our algorithm with existing competitors on semi-synthetic data, generated as follows.

We begin with one real-world dataset\footnote{More comparison based on the New York Times dataset is relegated to the supplement.}, a corpus of NIPs articles \citep{ucldata} to benchmark our algorithm and compare {\sc Top1} with \textsc{LDA} \citep{BleiLDA}, \textsc{Recover-L2} and \textsc{Recover-KL}. We use the code of LDA from \cite{lda} implemented via the fast collapsed Gibbs sampling with the default 1000 iterations. To preprocess the data, following \cite{arora2013practical}, we removed common stopping words and rare words occurring in less than 150 documents, and cut off the documents with less than $150$ words. The resultant dataset has $n = 1480$ documents with dictionary size $ p = 1253$ and mean document length $858$.

To generate semi-synthetic data,  we first apply {\sc Top} to this real data set, in order  to obtain the estimated word-topic matrix $A$, which we then use as the ground truth in our simulation experiments, performed as follows.\footnote{ \cite{arora2013practical} uses the posterior estimate of $A$ from LDA with $K=100$. Since we do not have prior information of $K$, we instead use our {\sc Top} to estimate it. Moreover, the posterior from LDA does not satisfy the anchor word assumptions and to evaluate the effect of anchor words, one has to manually add additional anchor words \citep{arora2013practical}. In contrast, the estimated $A$ from {\sc Top} automatically gives anchor words.}   For each document  $i\in [n]$, we sample $W_i$ from a specific distribution (see below) and we sample $X_i$ from $\text{Multinomial}_p(N_i, AW_i)$. The estimated $A$ from {\sc Top} (with $C_1 = 4.5$ chosen via cross-validation and $C_0 = 0.01$) contains 178 anchor words and $120$ topics. We consider three distributions of $W$, chosen  as in  \citep{arora2013practical}:
\begin{enumerate}
	\item[(a)] symmetric Dirichlet 
	distribution with parameter $0.03$;
	\item[(b)] logistic-normal distribution with block diagonal covariance matrix and $\rho = 0.02$;
	\item[(c)] logistic-normal distribution with block diagonal covariance matrix and $\rho = 0.2$. 
\end{enumerate}
Cases (b) and (c)  are designed to investigate how the correlation among topics affects the estimation error. 
To construct the block diagonal covariance structure,  we divide the 120 topics into 10 groups. For each group, the off-diagonal elements of the covariance matrix of topics is set to $\rho$ while the diagonal entries are set to $1$. The parameter $\rho = \{0.02, 0.2\}$ reflects the magnitude of correlation among topics. 

The number of documents $n$ is varied as  $\{2000, 3000, 4000, 5000, 6000\}$ and the document length is set to $N_i = 850$ for $1\le i\le n$. In each setting, we repeat generating 20 datasets and report the averaged \emph{overall estimation error} $\|\wh A - AP\|_1/K$ and \emph{topic-wise estimation error} $\|\wh A-AP\|_{1,\infty}$ of different algorithms  in Figure \ref{fig_nips}. The running time of each   algorithm is reported in Table \ref{table_time_nips}. 

Overall, {\sc LDA} is outperformed by the other three methods, though its performance might be improved by increasing the number of iterations. {\sc Top}, {\sc Recover-KL} and {\sc Recover-L2} are comparable when columns of $W$ are sampled from a symmetric Dirichlet with parameter 0.03, whereas {\sc Top} has better performance when the correlation among topics increases. Moreover, {\sc Top} has the best control of {\it topic-wise estimation error} as expected, while the comparison between {\sc Recover-KL} and {\sc Recover-L2} depends on the error metric. From the  running-time perspective, {\sc Top} runs significantly faster than the other three methods.

Finally, we emphasize that we provide {\sc LDA} and the two {\sc Recover} algorithms with the true $K$, whereas {\sc Top} estimates it.

\begin{figure}[ht]
	\centering
	\includegraphics[width = \textwidth]{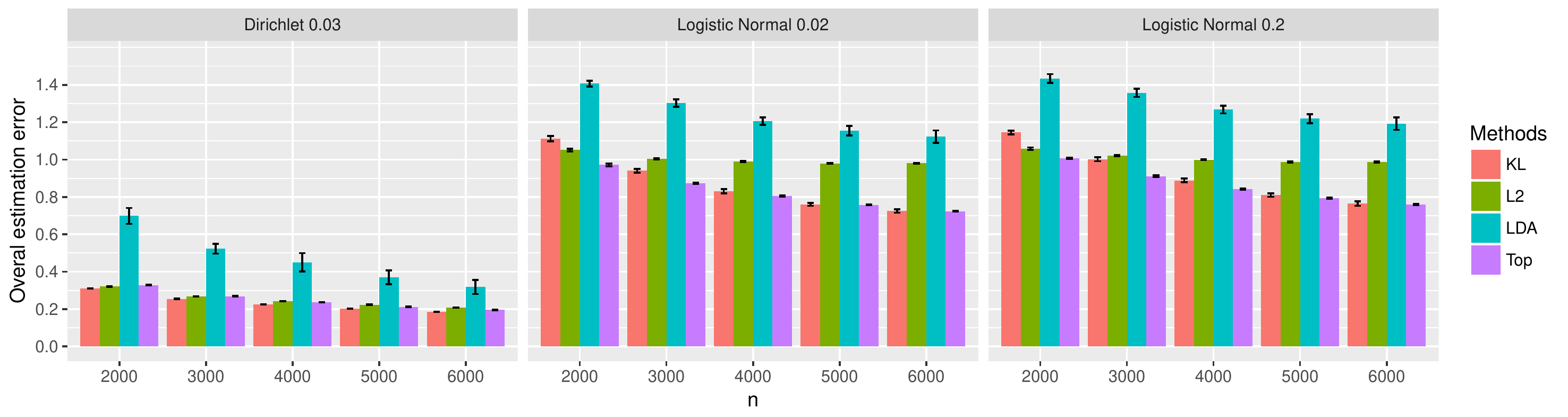}
	\includegraphics[width = \textwidth]{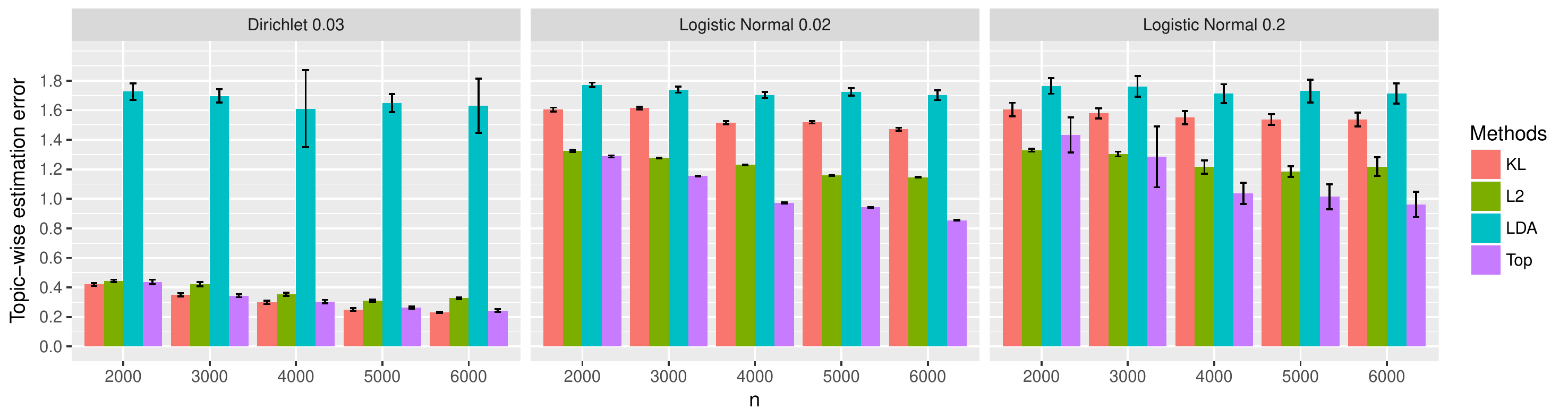}
	\caption{Plots of averaged \emph{overall estimation error} and \emph{topic-wise estimation error} of {\sc Top}, {\sc Recover-L2} (L2), {\sc Recover-KL} (KL) and LDA. TOP estimates $K$, the other methods use the true $K$ as input. The bars denote one standard deviation.}
	\label{fig_nips}
\end{figure}

\begin{table}[ht]
	\centering
	\caption{Running time (seconds) of different algorithms}
	\label{table_time_nips}
	\begin{tabular}{ccccc}
		\hline
		& {\sc Top} & {\sc Recover-L2} & {\sc Recover-KL} & {\sc LDA} \\ 
		\hline
		$n = 2000$ & 21.4 & 428.2 & 2404.5 & 3052.3 \\ 
		$n = 3000$ & 22.3 & 348.2 & 1561.8 & 4649.5 \\ 
		$n = 4000$ & 25.3 & 353.5 & 1764.8 & 6051.1 \\ 
		$n = 5000$ & 28.5 & 349.0 & 1800.4 & 7113.0 \\ 
		$n = 6000$ & 29.5 & 346.6 & 1848.1 & 7318.4 \\ 
		\hline
	\end{tabular}
\end{table}

\section*{Acknowledgements} 
Bunea and Wegkamp are supported in part by NSF grant DMS-1712709. 
We thank  the Editor,  Associate Editor and three referees for their constructive remarks.
	\setlength{\bibsep}{0.85pt}{
    	\bibliographystyle{ims}
		\bibliography{ref}
	}

    \newpage
	\appendix
	\section*{Supplement}\label{supp}
From the topic model specifications, the matrices $\M$, $A$ and $W$ are all scaled as 
\begin{equation}\label{orig_sum_to_one}
\sum_{j =1}^p\M_{ji} = 1,\quad\sum_{j =1}^pA_{jk} = 1,\quad 
\sum_{k=1}^KW_{ki} = 1	
\end{equation}
for any $1\le j\le p$, $1\le i\le n$ and $1\le k\le K$. 
In order to adjust  their scales properly, we denote 
\begin{equation}\label{def_uag}
m_{j} = p\max_{1\le i\le n}\M_{ji},~~ \u_j = {p\over n} \sum_{i=1}^n\M_{ji},~~ \alpha_j = p\max_{1\le k\le K} A_{jk}, ~~ \g_k = {K\over n}\sum_{i=1}^nW_{ki},
\end{equation}
so that 
\begin{equation}\label{sum_to_1}
\sum_{k =1}^K\g_k =K, \qquad \sum_{j =1}^p\u_j = p.
\end{equation}
We further denote
$m_{\min} = \min_{1\le j\le p}m_j$ and  $\u_{\min} = \min_{1\le j\le p} \u_j.$

\section{Proofs of Section \ref{sec_iden}}
\begin{proof}[Proof of Proposition \ref{prop_orth_AW}]
	Recall that $Y_i := N_iX_i \sim \text{Multinomial}_p(N_i; \M_i)$ for any $i\in [n]$. The joint log-likelihood of $(Y_1, \ldots, Y_n)$ is 
	\begin{align*}
	\ell(Y_1,\ldots, Y_n) &= \sum_{i=1}^n\log(N_i!) - \sum_{i =1}^n\sum_{j =1}^p\log(Y_{ji})+\sum_{i =1}^n\sum_{j =1}^pY_{ji}\log \M_{ji}\\
	&=\sum_{i=1}^n\log(N_i!) - \sum_{i =1}^n\sum_{j =1}^p\log(Y_{ji})+\sum_{i =1}^n\sum_{j =1}^pY_{ji}\log \left( \sum_{k = 1}^KA_{jk}W_{ki}\right).
	\end{align*}
	Fix any $j\in [p]$, $k\in [K]$ and $i\in [n]$. It follows that
	\[	
	{\partial \ell(Y_1,\ldots, Y_n) \over \partial A_{jk}} = 
	\left\{
	\begin{array}{ll}
	\sum_{i =1}^nY_{ji}W_{ki}\Big/\left(\sum_{t = 1}^K A_{jt}W_{ti}\right), & \text{if }A_{jk}\ne 0, W_{ki}\ne 0\\
	0, & \text{otherwise}
	\end{array}\right.
	\]
	from which we further deduce
	\begin{align*}
	&{\partial^2 \ell(Y_1,\ldots, Y_n) \over \partial A_{jk}\partial W_{ki}}\\
	&= 
	\left\{
	\begin{array}{ll}
	\sum_{i =1}^nY_{ji}\left(\sum_{t=1}^KA_{jt}W_{ti} - A_{jk}W_{ki}\right)\Big/\left(\sum_{t = 1}^K A_{jt}W_{ti}\right)^2, & \text{if }A_{jk}\ne 0, W_{ki}\ne 0\\
	0, & \text{otherwise}
	\end{array}\right..
	\end{align*}
	Since $\EE[Y_{ji}] = N_i\M_{ji}$, taking expectation yields 
	\begin{align}\label{eq_llk_1}\nonumber
	&\EE\left[-{\partial^2 \ell(Y_1,\ldots, Y_n) \over \partial A_{jk}\partial W_{ki}}\right]\\
	&=
	\left\{
	\begin{array}{ll}
	\sum_{i =1}^nN_i\left(\sum_{t\ne k}A_{jt}W_{ti}\right)\Big/\left(\sum_{t = 1}^K A_{jt}W_{ti}\right), & \text{if }A_{jk}\ne 0, W_{ki}\ne 0\\
	0, & \text{otherwise}
	\end{array}\right..
	\end{align}
	Similarly, for this $j$, $k$ and $i$ but with any $k'\ne k$, we have 
	\begin{align*}
	&{\partial^2 \ell(Y_1,\ldots, Y_n) \over \partial A_{jk}\partial W_{k'i}}\\
	&= 
	\left\{
	\begin{array}{ll}
	-\sum_{i =1}^nY_{ji} A_{jk'}W_{ki}\Big/ \left(\sum_{t = 1}^K A_{jt}W_{ti}\right)^2, & \text{if }A_{jk}\ne 0, A_{jk'}\ne 0, W_{k'i}\ne 0, W_{ki}\ne 0\\
	0, & \text{otherwise}
	\end{array}\right.
	\end{align*}
	and 
	\begin{align}\label{eq_llk_2}\nonumber
	&\EE\left[-{\partial^2 \ell(Y_1,\ldots, Y_n) \over \partial A_{jk}\partial W_{k'i}}\right]\\&=
	\left\{
	\begin{array}{ll}
	\sum_{i =1}^n N_iA_{jk'}W_{ki}\Big / \sum_{t = 1}^K A_{jt}W_{ti}, & \text{if }A_{jk}\ne 0, A_{jk'}\ne 0, W_{k'i}\ne 0, W_{ki}\ne 0\\
	0, & \text{otherwise}
	\end{array}\right..
	\end{align}
	From (\ref{eq_llk_1}) and (\ref{eq_llk_2}), it is easy to see that condition (\ref{cond_orth}) implies
	\begin{equation}\label{eq_crit_orth}
	\EE\left[-{\partial^2 \ell(Y_1,\ldots, Y_n) \over \partial A_{jk}\partial W_{k'i}}\right] = 0
	\end{equation}
	for any $j\in[p]$, $k, k'\in [K]$ and $i\in [n]$. This proves the sufficiency. To show the necessity, we use contradiction. If (\ref{eq_crit_orth}) holds for any $j\in[p]$, $k, k'\in [K]$ and $i\in [n]$, suppose there exist at least one $j\in[p]$ and $i\in [n]$ such that 
	$\textrm{supp}(A_{j\cdot}) \cap \textrm{supp}(W_{\cdot i}) = \{k_1, k_2\}$ and $k_1 \ne k_2$. Then, (\ref{eq_llk_2}) implies 
	\[
	\EE\left[-{\partial^2 \ell(Y_1,\ldots, Y_n) \over \partial A_{jk_1}\partial W_{k_2i}}\right] \ge {N_iA_{jk_1}W_{k_2i}\over A_{jk_1}W_{k_1i} + A_{jk_2}W_{k_2i}} \ne 0.
	\]
	This contradicts (\ref{cond_orth}) and concludes the proof.
\end{proof}
\smallskip
\begin{proof}[Proof of Proposition \ref{prop_anchor}]
	Since the columns of $A$ sum up to 1, and Assumption \ref{ass_sep} holds, then the matrix  $\wt A$ satisfies:
	\begin{equation}\label{spec_A_tilde}
	\wt A_{jk}\ge 0,\quad \bigl\|\wt A_{j\cdot}\bigr\|_1 = 1, \qquad \text{ for each }j = 1,\ldots, p, \text{ and } K = 1,\ldots, K.
	\end{equation}
	Additionally, $\wt A$ has the same sparsity pattern as $A$, and thus $\wt A$ satisfies Assumption \ref{ass_sep}, with the same $I$ and $\I$. We further notice that Assumption \ref{ass_w} is equivalent to 
	\[|\langle \wt W_{i\cdot}, \wt W_{j\cdot}\rangle| ~<~ \|\wt W_{i\cdot}\|^2\wedge \|\wt W_{j\cdot}\|^2,\qquad \text{for all $1\le i< j\le K$,}
	\] 
	which is further equivalent with 	$\nu>0$, defined in (\ref{def_nu2}).
	
	To finish the proof we invoke  Theorem 1 in \cite{LOVE}, slightly adapted to our situation. Specifically, we consider any matrix $R$ defined in (\ref{def_R}) that factorizes as $R = \wt A \wt C \wt A^T$ where $\wt{A}$ satisfies Assumption \ref{ass_sep}
	and $\wt C$ satisfies (\ref{def_nu2}). Note that the quantities $M_i$ and $S_i$ defined in page 9 of \cite{LOVE} are replaced by, respectively, $T_i$ and $S_i$ in (\ref{comp}). 
	We proceed to prove $(a)$ and $(b)$ of Proposition \ref{prop_anchor}.

	\textbf{Proof of $(a)$.}  We first show the sufficiency part.	Consider any  $i\in [p]$  with  $T_i = T_j$ for all $j\in S_i$. 
	Part (a) of Lemma \ref{lem1}, stated after the proof, states that  there exists a $j\in I_a\cap S_i$ for some $a\in[K]$. For  this $j\in I_a$, 
	we have $T_j=\wt C_{aa}$ from part (b) of Lemma \ref{lem1}.
	Invoking our premise  $T_j= T_i$ as $j\in S_i$, we conclude  that $T_i=\wt C_{aa}$, that is, $\max_{k\in p} R_{ik}=\wt C_{aa}$. By   Lemma \ref{lem1a}, stated after the proof, the maximum is achieved for any $i\in I_a$. However, if $i\not\in I_a$, we have that $R_{ik}< \wt C_{aa}$ for all $k\in [p]$. Hence $i\in I_a$ and this concludes the proof of the sufficiency part. 
	
	It remains to  prove
	the necessity part.
	Let $i\in I_a$ for some $a\in[K]$ and $j\in S_i$.  Lemma \ref{lem1} implies that $j\in I_a$ and $T_i=C_{aa}$.
	Since $j\in S_i$, we have $R_{ij}=T_i=\wt C_{aa}$, while  $j\in I_a$ yields $R_{jk} \le \wt C_{aa}$ for all $k\in [p]$,  and $R_jk = \wt C_{aa}$ for $k\in I_a$, as a result of Lemma \ref{lem1a}. Hence, $T_j =\max_{k\in [p]} R_{jk}= \wt C_{aa}= T_i$ for any $j\in S_i$, which proves our claim. 
	
	\textbf{Proof of $(b)$.} The proof of $(b)$ follows by the same arguments for proving part ${\bf (b)}$ of Theorem 1 in \cite{LOVE}. The proof is then complete. \end{proof}

The following two lemmas are  used in the above proof. They are adapted from Lemmas 1 and 2 of the supplement of \cite{LOVE}.	

\begin{lemma}\label{lem1a} For any $a\in[K]$ and $i\in I_{a}$, we have
	\begin{itemize}
		\item[(a)]  $R_{ij} = \wt C_{aa}$ for all $j\in I_a$,
		\item[(b)]  $R_{ij} < \wt C_{aa}$ for all $j\not\in I_a$.
	\end{itemize}
\end{lemma}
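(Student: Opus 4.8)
The plan is to work directly from the factorization $R = \wt A \wt C \wt A^T$ in (\ref{def_R}), exploiting the fact recorded in (\ref{spec_A_tilde}) that each row of $\wt A$ is nonnegative with unit $\ell_1$ norm and that $\wt A$ inherits the anchor-word structure of $A$. Writing $R_{ij} = \wt A_{i\cdot}\, \wt C\, \wt A_{j\cdot}^T$, the key simplification is that for an anchor word $i \in I_a$ the row $\wt A_{i\cdot}$ is exactly the canonical vector $e_a^T$: indeed $\wt A_{ia} = 1$ and $\wt A_{i\ell} = 0$ for $\ell \neq a$. Hence $R_{ij} = e_a^T \wt C \wt A_{j\cdot}^T = \sum_{\ell=1}^K \wt C_{a\ell}\, \wt A_{j\ell}$, which I would read as a convex combination of the entries $\{\wt C_{a\ell}\}_{\ell\in[K]}$ of row $a$ of $\wt C$, with weights given by row $j$ of $\wt A$.

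For part (a), when $j \in I_a$ the same reasoning gives $\wt A_{j\cdot} = e_a^T$, so only the $\ell = a$ term survives and $R_{ij} = \wt C_{aa}$, as claimed.

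For part (b), I would first observe that $j \notin I_a$ forces $\wt A_{ja} < 1$: since the row $\wt A_{j\cdot}$ is nonnegative and sums to one, $\wt A_{ja} = 1$ would make $j$ an anchor word of topic $a$. Consequently the off-diagonal mass satisfies $\sum_{\ell \neq a} \wt A_{j\ell} = 1 - \wt A_{ja} > 0$. Now I invoke the equivalent form of Assumption \ref{ass_w}, namely $\nu > 0$ with $\nu$ as in (\ref{def_nu2}), which yields the strict separation $\wt C_{a\ell} < \wt C_{aa} \wedge \wt C_{\ell\ell} \le \wt C_{aa}$ for every $\ell \neq a$. Splitting off the diagonal term,
\[
R_{ij} = \wt C_{aa}\wt A_{ja} + \sum_{\ell \neq a} \wt C_{a\ell}\wt A_{j\ell} < \wt C_{aa}\wt A_{ja} + \wt C_{aa}\sum_{\ell \neq a}\wt A_{j\ell} = \wt C_{aa},
\]
where the strict inequality uses that at least one weight $\wt A_{j\ell}$ with $\ell \neq a$ is positive.

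The argument is short, and the only point needing care is the strict inequality in (b): it hinges on combining the strict separation $\wt C_{a\ell} < \wt C_{aa}$ (which is exactly where $\nu > 0$, i.e. Assumption \ref{ass_w}, enters) with the observation that a non-anchor word, or a word anchored to a different topic, places strictly positive weight off coordinate $a$. I expect no genuine obstacle here; the result is essentially the statement that anchor rows of $\wt A$ select individual rows of $\wt C$, whose diagonal strictly dominates its off-diagonal entries under Assumption \ref{ass_w}.
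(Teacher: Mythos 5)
Your proof is correct and is essentially the argument the paper intends: the paper's own ``proof'' of Lemma \ref{lem1a} simply defers to Lemma 1 of \cite{LOVE}, and the underlying computation there is exactly yours --- an anchor row $\wt A_{i\cdot}=e_a^T$ selects row $a$ of $\wt C$, so $R_{ij}$ is a convex combination of $\{\wt C_{a\ell}\}_\ell$ with weights $\wt A_{j\cdot}$, and the strict inequality in (b) follows from $\nu>0$ (the equivalent form of Assumption \ref{ass_w}) together with the observation that $j\notin I_a$ forces strictly positive weight off coordinate $a$. You have correctly handled the one delicate point (strictness), so nothing is missing.
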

\begin{proof}
	The proof follows the same argument for proving Lemma 1 in \cite{LOVE}.
\end{proof}

\begin{lemma}\label{lem1}
	Let $T_i$ and $S_i$ be defined in (\ref{comp}). We have
	\begin{itemize}
		\item[(a)]  $S_i \cap I \ne \emptyset$, for any $i\in[p]$,
		\item[(b)]   $S_i\cup \{i\} = I_a$ and
		$ T_i = \wt C_{aa}$, for any $i\in I_a$ and $a\in[K]$.
	\end{itemize}
\end{lemma}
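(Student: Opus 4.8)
The plan is to treat, for each fixed row index $i$, the quantity $R_{ij}$ as a linear functional evaluated on the probability vector $\wt A_{j\cdot}$. Since $R = \wt A\wt C\wt A^T$ by (\ref{def_R}), setting $w := \wt A_{i\cdot}\wt C \in \RR^K$ gives $R_{ij} = w\,\wt A_{j\cdot}^T = \sum_{\ell = 1}^K w_\ell\,\wt A_{j\ell}$ for every $j\in[p]$. Because each row $\wt A_{j\cdot}$ is nonnegative with $\|\wt A_{j\cdot}\|_1 = 1$ by (\ref{spec_A_tilde}), this is a convex combination of the entries of $w$, so $R_{ij}\le \max_{1\le \ell\le K}w_\ell$, and the bound is attained exactly by those rows $\wt A_{j\cdot}$ supported on the maximizing coordinates of $w$. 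Both parts follow from analyzing this maximization.

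For part (a), let $\ell^\ast$ be an index maximizing $w_\ell$. Assumption \ref{ass_sep} guarantees $I_{\ell^\ast}\ne\emptyset$, and for any anchor word $j\in I_{\ell^\ast}$ the row $\wt A_{j\cdot}$ is the canonical vector $e_{\ell^\ast}^T$, so $R_{ij} = w_{\ell^\ast} = \max_\ell w_\ell$. Hence the row maximum $T_i$ from (\ref{comp}) equals $\max_\ell w_\ell$, is actually attained, and every $j\in I_{\ell^\ast}$ belongs to $S_i$. This yields $\emptyset\ne I_{\ell^\ast}\subseteq S_i\cap I$, proving (a) for every $i\in[p]$, whether or not $i$ is itself an anchor word.

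For part (b) I would invoke the already-established Lemma \ref{lem1a}. Fixing $a\in[K]$ and $i\in I_a$, its part (b) gives $R_{ij} < \wt C_{aa}$ for all $j\notin I_a$, while its part (a) gives $R_{ij} = \wt C_{aa}$ for all $j\in I_a$; taking $j = i\in I_a$ shows $i\in S_i$. Consequently $T_i = \wt C_{aa}$ and $S_i = \{j: R_{ij} = T_i\} = I_a$, so $S_i\cup\{i\} = I_a$. Alternatively, this follows directly from the display above: when $i\in I_a$ one has $w = \wt C_{a\cdot}$, and the equivalent form $\nu>0$ of Assumption \ref{ass_w} in (\ref{def_nu2}) forces $\wt C_{aa} - \wt C_{a\ell}\ge \nu > 0$ for every $\ell\ne a$, so $w$ is \emph{uniquely} maximized at $\ell = a$; writing $\wt C_{aa} - R_{ij} = \sum_{\ell\ne a}(\wt C_{aa} - \wt C_{a\ell})\wt A_{j\ell}$ with strictly positive coefficients then shows $R_{ij} = \wt C_{aa}$ iff $\wt A_{j\cdot} = e_a^T$, i.e. iff $j\in I_a$.

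The only genuinely delicate point is that the upper bound $R_{ij}\le\max_\ell w_\ell$ is \emph{attained} rather than merely approached: this is exactly where Assumption \ref{ass_sep} enters, since the presence of the canonical rows $e_1^T,\dots,e_K^T$ among the rows of $\wt A$ is what makes the simplex maximum achievable by an honest index $j\in[p]$. For part (b) the corresponding subtlety is the \emph{uniqueness} of the maximizing coordinate of $w$, which is precisely what $\nu>0$ provides; without it several entries of $w$ could tie and $S_i$ could contain non-anchor words outside $I_a$.
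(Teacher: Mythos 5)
Your proof is correct and follows essentially the same route as the paper: part (b) is deduced from Lemma \ref{lem1a} exactly as in the paper, and your convex-combination argument for part (a) (writing $R_{ij}=\sum_{\ell}w_\ell \wt A_{j\ell}$ with $w=\wt A_{i\cdot}\wt C$ and noting the maximum is attained at anchor words of the maximizing topic, which exist by Assumption \ref{ass_sep}) is precisely the argument the paper delegates to the cited Lemma 2 of \cite{LOVE}. The only difference is that you spell this out self-containedly rather than by citation, which is a virtue, not a gap.
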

\begin{proof}
	Lemma \ref{lem1a} implies
	that, for any $i\in I_a$, 
	$T_i= \wt C_{aa}$ and $S_i= I_a$, which proves part (b). Part (a) follows from the result of part (b) and the same arguments of the proof of Lemma 2 in \cite{LOVE} by replacing $M_i$ by $T_i$, $|\Sigma_{ij}|$ by $R_{ij}$, $A$ by $\wt A$ and $C$ by $\wt C$.
\end{proof}

\section{Error bounds of stochastic errors}\label{app_error}
We use this section to present tight bounds on the error terms which are critical to our later estimation rate. We recall that $\eps_{ji} := X_{ji} - \M_{ji}$, for $1\le i\le n$ and $1\le j\le p$ and assume $N_1 =\ldots = N_n = N$ for ease of presentation since similar results for different $N$ can be derived by using the same arguments. The following results, Lemmata \ref{lem_t1} - \ref{lem_t4} control several terms related with $\eps_{ji}$ under the multinomial assumption (\ref{model_multinomial}). We start by stating the well-known Bernstein inequality and Hoeffding inequality for bounded random variables which are used in the sequel.

\begin{lemma}[Bernstein's inequality for bounded random variable]\label{lem_bernstein}
	For independent random variables $Y_1, \ldots, Y_n$ with bounded ranges $[-B, B]$ and zero means, 
	\[
	\PP\left\{ {1\over n}\left|\sum_{i =1}^nY_i \right| > x \right\} \le 2\exp\left(-{n^2x^2 /2 \over v + nBx/3} \right),\qquad \text{for any }x\ge 0,
	\]
	where $v \ge var(Y_1 + \ldots + Y_n)$.
\end{lemma}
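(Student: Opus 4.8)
The plan is to prove this via the standard exponential-moment (Chernoff) method, since the statement is the classical Bernstein inequality. Write $S := \sum_{i=1}^n Y_i$, so that the event $\{n^{-1}|\sum_i Y_i| > x\}$ coincides with $\{|S| > nx\}$. First I would reduce to a one-sided bound: it suffices to show $\PP\{S > nx\} \le \exp(-(n^2x^2/2)/(v + nBx/3))$, because applying the identical argument to $-Y_1,\ldots,-Y_n$ (which have the same ranges $[-B,B]$, the same zero means, and the same variances) controls $\PP\{S < -nx\}$, and a union bound over the two tails then produces the factor $2$.

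For the one-sided tail, the core step is to bound the moment generating function of each summand. Fix $\lambda \in (0, 3/B)$ and set $\sigma_i^2 := \EE[Y_i^2]$, which equals $\mathrm{var}(Y_i)$ since $\EE[Y_i] = 0$. Using the crude moment bound $\EE[Y_i^k] \le B^{k-2}\EE[Y_i^2] = B^{k-2}\sigma_i^2$ for $k \ge 2$ (valid because $|Y_i| \le B$), together with the elementary factorial estimate $k! \ge 2\cdot 3^{k-2}$, I would expand the exponential series and sum the resulting geometric series to obtain
\[
\EE[e^{\lambda Y_i}] \le 1 + \sigma_i^2 \sum_{k\ge 2}\frac{\lambda^k B^{k-2}}{k!} \le 1 + \frac{\lambda^2\sigma_i^2/2}{1 - \lambda B/3} \le \exp\!\left(\frac{\lambda^2\sigma_i^2/2}{1-\lambda B/3}\right),
\]
where convergence requires exactly $\lambda B < 3$, and the first-order term in the series vanishes thanks to $\EE[Y_i]=0$.

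Multiplying over $i$ using independence, and bounding $\sum_i \sigma_i^2 = \mathrm{var}(S) \le v$, gives $\EE[e^{\lambda S}] \le \exp((\lambda^2 v/2)/(1 - \lambda B/3))$. Markov's inequality applied to $e^{\lambda S}$ then yields $\PP\{S > nx\} \le \exp(-\lambda nx + (\lambda^2 v/2)/(1-\lambda B/3))$ for every admissible $\lambda$. The final step is to optimize the exponent: choosing $\lambda = nx/(v + nBx/3)$, a direct substitution shows $1 - \lambda B/3 = v/(v+nBx/3)$, whence the bracket collapses to $-(n^2x^2/2)/(v + nBx/3)$, giving the one-sided bound and, combined with the symmetrization above, the full inequality. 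Since this is a textbook result, there is no genuine obstacle; the only places requiring care are the estimate $k! \ge 2\cdot 3^{k-2}$ that is responsible for the constant $1/3$ in the denominator, and checking that the optimizing $\lambda$ indeed satisfies $\lambda B < 3$ (which holds whenever $v>0$, the degenerate case $v=0$ being trivial).
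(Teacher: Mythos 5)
Your proof is correct: the Chernoff--Cram\'er argument with the moment bound $\EE[|Y_i|^k]\le B^{k-2}\sigma_i^2$, the estimate $k!\ge 2\cdot 3^{k-2}$, and the optimizing choice $\lambda = nx/(v+nBx/3)$ (which indeed satisfies $\lambda B<3$ whenever $v>0$) reproduces exactly the stated constant. The paper itself gives no proof of this lemma --- it is quoted as the well-known Bernstein inequality for bounded variables --- so there is nothing to compare against; your write-up is the standard textbook derivation and it checks out.
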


\begin{lemma}[Hoeffding's inequality] \label{hoeff}
	Let $Y_1,\ldots,Y_n$ be independent random variables with $\EE[Y_i]=0$ and bounded by $[a_i, b_i]$: 
	For any $t\ge0$, we have
	\[ \PP\left\{ \left| \sum_{i=1}^n Y_i \right| > t \right\} \le 2\exp\left( -\frac{2t^2}{\sum_{i =1}^n(b_i-a_i)^2}  \right).\]
\end{lemma}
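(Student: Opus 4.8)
The plan is to use the classical Chernoff bounding argument combined with Hoeffding's lemma. First I would fix $s>0$ and apply Markov's inequality to the exponentiated sum: since $\PP\{\sum_{i=1}^n Y_i > t\} = \PP\{e^{s\sum_i Y_i} > e^{st}\} \le e^{-st}\,\EE[e^{s\sum_i Y_i}]$, and the $Y_i$ are independent, the moment generating function factorizes, $\EE[e^{s\sum_i Y_i}] = \prod_{i=1}^n \EE[e^{sY_i}]$. This reduces the problem to controlling each individual exponential moment $\EE[e^{sY_i}]$.

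The key technical step is the sub-Gaussian bound $\EE[e^{sY_i}] \le \exp\bigl(s^2(b_i-a_i)^2/8\bigr)$, which is Hoeffding's lemma. I would prove it by setting $\psi_i(s) := \log\EE[e^{sY_i}]$ and observing that $\psi_i(0)=0$ and $\psi_i'(0)=\EE[Y_i]=0$. The second derivative $\psi_i''(s)$ equals the variance of $Y_i$ under the exponentially tilted measure $d\mathbb{Q} \propto e^{sY_i}\,d\PP$; since this measure is still supported on $[a_i,b_i]$, that variance is at most $(b_i-a_i)^2/4$. A second-order Taylor expansion of $\psi_i$ about $0$ then gives $\psi_i(s)\le s^2(b_i-a_i)^2/8$, as claimed.

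Combining the two steps yields $\PP\{\sum_i Y_i > t\} \le \exp\bigl(-st + \tfrac{s^2}{8}\sum_{i}(b_i-a_i)^2\bigr)$ for every $s>0$. I would then minimize the exponent over $s$; the optimizer is $s = 4t / \sum_i(b_i-a_i)^2$, which produces the one-sided tail bound $\exp\bigl(-2t^2/\sum_i(b_i-a_i)^2\bigr)$. Applying the identical argument to the variables $-Y_i$ controls the lower tail, and a union bound over the two one-sided events delivers the stated two-sided inequality, accounting for the leading factor of $2$.

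The only genuinely nontrivial point, and hence the main obstacle, is Hoeffding's lemma itself; the exponential Markov step, the factorization by independence, and the optimization over $s$ are all routine. An alternative proof of the lemma avoids the tilted-measure interpretation of $\psi_i''$ and instead uses convexity of $y\mapsto e^{sy}$ on $[a_i,b_i]$ directly, bounding $e^{sY_i}$ by its chord, taking expectations, and then estimating the resulting explicit function of $s$. I would treat the two routes as interchangeable and present whichever is cleaner.
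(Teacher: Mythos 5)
The paper states this lemma as a classical, off-the-shelf tool and gives no proof of it, so there is nothing internal to compare your argument against. Your proposal is the standard and correct Chernoff--Hoeffding argument: exponential Markov, factorization of the moment generating function by independence, Hoeffding's lemma $\EE[e^{sY_i}]\le \exp\bigl(s^2(b_i-a_i)^2/8\bigr)$ (via the tilted-measure variance bound $(b_i-a_i)^2/4$ or, equivalently, the convexity/chord argument), optimization at $s = 4t/\sum_i (b_i-a_i)^2$, and a union bound over the two tails to get the factor $2$. All steps check out, so your proof is complete and consistent with how the lemma is used in the paper.
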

\begin{lemma}\label{lem_t1}
	Assume $\u_{\min}/p\ge 2\log M/(3N)$. With probability $1-2M^{-1}$,
	\[
	{1\over n}\left|\sum_{i =1}^n \eps_{ji} \right| \le 2\sqrt{\u_j\log M\over npN}\left(1+\sqrt{6}n^{-{1\over 2}}\right),	\quad \text{uniformly in $1\le j\le p$.}
	\]
\end{lemma}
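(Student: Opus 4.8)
The plan is to apply Bernstein's inequality (Lemma~\ref{lem_bernstein}) at the level of the individual sampled words rather than at the level of the document frequencies, and then to union bound over the $p$ words. The decisive preliminary step is a reduction to a sum of independent bounded indicators. Since $Y_i = NX_i \sim \text{Multinomial}_p(N, \M_i)$ is generated by $N$ i.i.d.\ categorical draws within document $i$, for a fixed word $j$ I would write $NX_{ji} = \sum_{\ell=1}^N \1_{i\ell j}$, where $\1_{i\ell j}$ indicates that the $\ell$-th draw in document $i$ equals word $j$. The variables $\{\1_{i\ell j}: i\in[n], \ell\in[N]\}$ are independent, bounded in $[0,1]$, with mean $\M_{ji}$, so that
\[
\frac1n\sum_{i=1}^n \eps_{ji} = \frac{1}{nN}\sum_{i=1}^n\sum_{\ell=1}^N \left(\1_{i\ell j} - \M_{ji}\right).
\]
The centered summands have absolute value at most $1$ and total variance $\sum_{i,\ell}\M_{ji}(1-\M_{ji}) \le N\sum_{i=1}^n \M_{ji} = nN\u_j/p$.

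Applying Lemma~\ref{lem_bernstein} to these $nN$ summands, with $B = 1$ and $v = nN\u_j/p$, the event $\{\tfrac1n|\sum_i \eps_{ji}| > x\}$ has probability at most $2\exp\{-(nNx^2/2)/(\u_j/p + x/3)\}$ after cancelling a common factor $nN$. Setting this bound equal to $2M^{-2}$ and solving the resulting quadratic in $x$, then using $\sqrt{a^2+b^2}\le a+b$, yields the admissible threshold
\[
x \;\le\; \frac{4\log M}{3nN} + 2\sqrt{\frac{\u_j\log M}{npN}},
\]
whose leading term already matches the claimed rate $2\sqrt{\u_j\log M/(npN)}$.

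The crux is to show that the range-driven correction $4\log M/(3nN)$ is dominated by the factor $\sqrt6\,n^{-1/2}$ multiplying the leading term. Factoring it out, the relevant ratio equals $\tfrac{2}{3}\sqrt{p\log M/(\u_j nN)}$, which, by the assumption $\u_{\min}/p \ge 2\log M/(3N)$ (so that $p/\u_j \le 3N/(2\log M)$), is at most $\tfrac{2}{3}\sqrt{3/(2n)} = \sqrt6/(3\sqrt n) \le \sqrt6\, n^{-1/2}$. Since this ratio is largest for the smallest admissible $\u_j$, the inequality $x \le 2\sqrt{\u_j\log M/(npN)}\,(1+\sqrt6\,n^{-1/2})$ holds for every $j$. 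A union bound over $j\in[p]$ with $p\le M$ then converts the per-word failure probability $2M^{-2}$ into $2pM^{-2}\le 2M^{-1}$, which is the asserted uniform statement.

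The main obstacle, and the one place where a naive argument fails, is precisely the reduction to the $nN$ word-draw indicators. Applying Bernstein directly to the $n$ document-level errors $\eps_{ji}$, which are bounded only by $1$, produces a correction of order $\log M/n$; this is too large to be absorbed into $(1+\sqrt6\,n^{-1/2})$ unless $\u_j/p \gtrsim N\log M$, a requirement incompatible with $\u_j/p\le 1$ once $N$ grows. Passing to the finer scale replaces this by the harmless $\log M/(nN)$, after which the signal-strength assumption closes the argument and the constant $\sqrt6$ follows, with room to spare, from routine bookkeeping.
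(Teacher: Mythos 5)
Your proposal is correct and follows essentially the same route as the paper: both express $NX_{ji}$ as a sum of $N$ i.i.d.\ Bernoulli indicators, apply Bernstein's inequality to the $nN$ centered summands with $B=1$ and $v = nN\u_j/p$, take $t=4\log M$, union bound over $j\in[p]$ using $p\le M$, and absorb the range-driven correction via $\u_{\min}/p\ge 2\log M/(3N)$. Your intermediate correction term $4\log M/(3nN)$ is marginally sharper than the paper's $4\log M/(nN)$, but this makes no difference to the final statement.
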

\begin{proof}
	Fix $1\le j\le p$. From model (\ref{model}), we know $NX_{ji}\sim$ Binomial$(N; \M_{ji})$. We  express the binomial random variable as a sum of i.i.d. Bernoulli random variables:
	$$
	\eps_{ji} = X_{ji} - \M_{ji} = {1\over N}\sum_{k=1}^{N}\left(B_{ik}^j - \M_{ji}\right) :={1\over N}\sum_{\ell=1}^{N}Z^{j}_{ik}$$ with $B_{i\ell}^j \sim$ Bernoulli$(\M_{ji})$, such that
	$
	N\sum_{i =1}^n\eps_{ji} = \sum_{i =1}^n\sum_{\ell = 1}^{N}Z^{j}_{i\ell}.$
	Note that $|Z_{ik}^j| \le 1$, $\EE[ Z_{ik}^j ]= 0$ and $\EE[(Z^j_{ik})^2] = \M_{ji}(1-\M_{ji})\le \M_{ji}$, for all $i\in [n]$ and $k \in [N]$.  An application of Bernstein's inequality, see Lemma \ref{lem_bernstein}, to $Z_{i\ell}^j$ with $v = N\sum_{i =1}^n\M_{ji}=Nn \mu_j/p$ and $B = 1$ gives
	\[
	\PP\left\{ {1\over n}\left|\sum_{i =1}^n \eps_{ji} \right| > t \right\} \le 2\exp\left(-{ n^2N^2t^2/2 \over  N\sum_{i =1}^n\M_{ji}  + nNt/3}\right),\quad \text{for any $t>0$.}
	\]
	This implies, for all $t>0$
	\[
	\PP\left\{{1\over n}\left|\sum_{i =1}^n \eps_{ji} \right| > {\sqrt{\u_jt\over npN}} + {t\over nN}\right\} \le 2e^{-t/2}.
	\]
	Choosing $t = 4\log M$ and recalling that $M=n\vee N\vee p\ge p$, we find by the union bound
	\begin{equation}\label{eq_lem_t1}
	\sum_{j=1}^{p}\PP\left\{ {1\over n}\left|\sum_{i =1}^n \eps_{ji} \right| > 2{\sqrt{\u_j\log M\over npN}} + {4\log M \over nN} \right\} \le 2p M^{-2} \le 2M^{-1} .
	\end{equation}
	Using $\u_{\min}/p \ge 2\log M/(3N)$ 
	concludes the proof.
\end{proof}

\begin{remark}\label{rem_t1}
	By inspection of the proof of Lemma \ref{lem_t1}, if instead 
	of the bound $\EE[(Z_{ik}^j)^2] \le \Pi_{ji}$, we had used the overall bound $\EE[(Z_{ik}^j)^2] \le 1$, and application of Bernstein's inequality would yield, 
	\[
	{1\over n}\left|\sum_{i =1}^n \eps_{ji} \right| \le c{\sqrt{\log M\over nN}},	\quad \text{uniformly in $1\le j\le p$.}
	\] 
	Summing  over $1\le j\le p$ in the above display would further give  
	\[
	{1\over n}\sum_{j =1}^p\left|\sum_{i =1}^n \eps_{ji} \right| \le c\cdot p{\sqrt{\log M\over nN}},
	\]
	and the right hand side would be slower by an important  $\sqrt{p}$ factor that what we would obtain by summing the bound in Lemma \ref{lem_t1}  over $1\le j\le p$, since 
	\[
	2\sum_{j =1}^p\sqrt{\u_j\log M\over npN} \le 2\sqrt{\log M \over nN}\sqrt{\sum_{j =1}^p\u_j} = 2\sqrt{p\log M \over nN}.
	\]
	In this last display, we used the Cauchy-Schwarz inequality in the first inequality,  and the constraint (\ref{sum_to_1}) in the last equality. The  bound of Lemma \ref{lem_t1} is an important intermediate step for deriving the final bounds of Theorem \ref{thm_rate_Ahat}, and the simple 
	calculations presented above that the constraints of unit column sums induced by model (\ref{model}) permit a more refined control of the stochastic errors than those previously considered. A larger impact of this refinement on the final rate of convergence is illustrated in Remark \ref{rem_t3}, following the proof of Lemma \ref{lem_t4}, in which we control sums of quadratic, dependent terms  $\eps_{ji}\eps_{\ell i}$.
\end{remark}

\begin{lemma}\label{lem_t2}
	With probability $1-2M^{-1}$,
	\[
	{1\over n}\left|\sum_{i =1}^n \M_{\ell i}\eps_{ji} \right| \le {\sqrt{6m_\ell\Theta_{j\ell}\log M\over npN}}+{2m_\ell \log M \over  npN},	\quad \text{uniformly in  $1\le j, \ell \le p$.}
	\]
\end{lemma}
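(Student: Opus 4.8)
The plan is to mirror the proof of Lemma~\ref{lem_t1}, now applied to the \emph{weighted} sum $\sum_{i}\M_{\ell i}\eps_{ji}$, and to close with Bernstein's inequality (Lemma~\ref{lem_bernstein}). First I would fix $j,\ell\in[p]$ and reuse the Bernoulli decomposition from Lemma~\ref{lem_t1}: writing $\eps_{ji}=N^{-1}\sum_{k=1}^N Z_{ik}^j$ with $Z_{ik}^j=B_{ik}^j-\M_{ji}$ and $B_{ik}^j\sim\mathrm{Bernoulli}(\M_{ji})$ independent over $i\in[n]$ and $k\in[N]$, so that
\[
\frac1n\sum_{i=1}^n\M_{\ell i}\eps_{ji}=\frac{1}{nN}\sum_{i=1}^n\sum_{k=1}^N Y_{ik},\qquad Y_{ik}:=\M_{\ell i}\,Z_{ik}^j,
\]
a normalized sum of $nN$ independent, mean-zero variables to which Bernstein applies directly.

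Next I would assemble the two Bernstein ingredients. The envelope is $|Y_{ik}|\le\max_{1\le i\le n}\M_{\ell i}=m_\ell/p$, since $|Z_{ik}^j|\le1$ and $m_\ell=p\max_i\M_{\ell i}$. For the variance I use $\mathrm{var}(Y_{ik})=\M_{\ell i}^2\,\M_{ji}(1-\M_{ji})\le\M_{\ell i}^2\M_{ji}$. The one genuinely delicate point is to turn the total variance into a quantity proportional to $\Theta_{j\ell}$: pulling a single factor $\M_{\ell i}$ out at its maximum gives
\[
\sum_{i=1}^n\sum_{k=1}^N\mathrm{var}(Y_{ik})\le N\sum_{i=1}^n\M_{\ell i}^2\M_{ji}\le N\,\frac{m_\ell}{p}\sum_{i=1}^n\M_{\ell i}\M_{ji}=\frac{nN\,m_\ell\,\Theta_{j\ell}}{p},
\]
where the last equality uses $\Theta_{j\ell}=n^{-1}\sum_i\M_{ji}\M_{\ell i}$ from (\ref{teta}). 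This is exactly what makes $m_\ell\Theta_{j\ell}$, rather than a looser $m_\ell^2$ bound, appear under the square root of the target estimate.

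I would then invoke Bernstein in its inverted (sub-gamma) form with $B=m_\ell/p$ and $v=nN m_\ell\Theta_{j\ell}/p$: for every $u>0$, $S:=\sum_{i,k}Y_{ik}$ obeys $|S|\le\sqrt{2vu}+\tfrac23 Bu$ with probability at least $1-2e^{-u}$. Dividing by $nN$ converts $\sqrt{2vu}$ into $\sqrt{2m_\ell\Theta_{j\ell}u/(npN)}$ and $\tfrac23 Bu/(nN)$ into $\tfrac23 m_\ell u/(npN)$. Choosing $u=3\log M$ then gives per-pair failure probability $2M^{-3}$ and reproduces exactly the two terms $\sqrt{6m_\ell\Theta_{j\ell}\log M/(npN)}$ and $2m_\ell\log M/(npN)$ claimed. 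A union bound over all $p^2$ ordered pairs $(j,\ell)$ controls the total failure probability by $2p^2M^{-3}\le 2M^{-1}$, the last inequality holding because $M\ge p$.

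I do not anticipate a real obstacle: the argument is a direct adaptation of Lemma~\ref{lem_t1}. The only two points demanding attention are the variance reduction to $\Theta_{j\ell}$ described above, and the need to take $u=3\log M$ rather than the $u=2\log M$ effectively used in Lemma~\ref{lem_t1}, since the union is now over $p^2$ index pairs instead of $p$ indices; the extra logarithmic slack is absorbed precisely by $M\ge p$.
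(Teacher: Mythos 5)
Your proposal is correct and follows essentially the same route as the paper: the identical Bernoulli decomposition $\eps_{ji}=N^{-1}\sum_k Z_{ik}^j$, the same envelope $B=m_\ell/p$ and variance bound $v=nNm_\ell\Theta_{j\ell}/p$ obtained by pulling out $\max_i\M_{\ell i}$, and the same Bernstein-plus-union-bound conclusion (your $u=3\log M$ is just the paper's $t=6\log M$ under the reparametrization $t=2u$).
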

\begin{proof}
	Similar as in the proof of Lemma \ref{lem_t1}, we write 
	$$ \M_{\ell i}\eps_{ji} = {1\over N}\sum_{k=1}^N \M_{\ell i}Z_{ik}^{j}$$ such that $| \M_{\ell i}Z_{ik}^j| \le \M_{\ell i}\le m_{\ell}/p$ by (\ref{def_uag}), $\EE [\M_{\ell i}Z_{ik}^j] = 0$ and $\EE[\M_{\ell i}^2Z_{ik}^2] = \M_{\ell i}^2\M_{ji}(1-\M_{ji})\le m_{\ell}\M_{\ell i}\M_{ji}/p$, for all $i\in [n]$ and $k \in [N]$. Fix $1\le j, \ell\le p$ and recall that $\Theta_{j\ell} = n^{-1} \sum_{i=1}^n\M_{ji}\M_{\ell i}$. Applying Bernstein's inequality to $\M_{\ell i}Z_{ik}^j$ with $v = nNm_\ell \Theta_{j\ell}/p$ and $B = m_\ell / p$ gives
	\[
	\PP\left\{ {1\over n}\left|\sum_{i=1}^n \M_{\ell i}\eps_{ji}\right| \ge t \right\} \le 2\exp\left(-{n^2N^2t^2/2 \over nNm_\ell \Theta_{j\ell}/p+ nNm_{\ell}t/(3p)}\right)
	\]
	which further implies 
	\[
	\PP\left\{ {1\over n}\left|\sum_{i =1}^n\M_{\ell i}\eps_{ji}\right| > \sqrt{m_{\ell}\Theta_{j\ell}t \over  npN} +  {m_\ell t \over  3npN} \right\} \le 2e^{-t/2}.
	\]
	Taking the union bound over $1\le j,\ell \le p$, and choosing $t = 6\log M$, concludes the proof.
\end{proof}

\begin{lemma}\label{lem_t4}
	If $\u_{\min}/p \ge 2\log M / (3N)$, then with probability $1-4M^{-1}$,
	\[
	{1\over n}\left|\sum_{i =1}^n\left(\eps_{ji}\eps_{\ell i} - \EE[\eps_{ji}\eps_{\ell i}]\right) \right| \le 12\sqrt{6}\sqrt{\Theta_{j\ell}+{(\u_j + \u_{\ell})\log M \over pN}}\sqrt{(\log M)^3 \over nN^2} + 4M^{-3},
	\]
	holds, uniformly in  $1\le j,\ell\le p$.
\end{lemma}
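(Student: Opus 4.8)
The starting point is that the $n$ documents are independent, so the products $\eps_{ji}\eps_{\ell i}$, $i=1,\ldots,n$, are independent, and the quantity to control is a normalized sum of independent centered terms $U_i:=\eps_{ji}\eps_{\ell i}-\EE[\eps_{ji}\eps_{\ell i}]$. The natural tool is Bernstein's inequality (Lemma \ref{lem_bernstein}), which needs (i) a bound on $\sum_i\mathrm{Var}(U_i)$ and (ii) a bound on the range $\max_i|U_i|$. The reason this is not a one-line application is the range: $|\eps_{ji}\eps_{\ell i}|$ can be of order $1$, whereas the target scales like $N^{-1}$, so the crude range $B=2$ would be off by powers of $N$. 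The plan is a truncation argument combined with a separate deterministic treatment of the truncated bias.

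First I would record the variance estimate. Since $NX_{ji}\sim\text{Binomial}(N,\M_{ji})$, the central moments satisfy $\EE[\eps_{ji}^2]\le \M_{ji}/N$ and $\EE[\eps_{ji}^4]\lesssim \M_{ji}^2/N^2+\M_{ji}/N^3$. Bounding $\mathrm{Var}(U_i)\le \EE[\eps_{ji}^2\eps_{\ell i}^2]\le(\EE[\eps_{ji}^4]\,\EE[\eps_{\ell i}^4])^{1/2}$ by Cauchy--Schwarz sidesteps the within-document negative correlation between $\eps_{ji}$ and $\eps_{\ell i}$, and summing over $i$ produces the leading term $\Theta_{j\ell}=\tfrac1n\sum_i\M_{ji}\M_{\ell i}$ together with lower-order pieces controlled by $\tfrac1n\sum_i\M_{ji}=\u_j/p$. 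This is the source of the factor $\Theta_{j\ell}+(\u_j+\u_\ell)\log M/(pN)$ appearing under the square root.

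For the range I would introduce, for each document, the good event $\mathcal{G}_i=\{|\eps_{ji}|\le\tau_j,\ |\eps_{\ell i}|\le\tau_\ell\}$, where $\tau_j$ is the single-document Bernstein bound on one coordinate $\eps_{ji}$ (the one-index analogue of Lemma \ref{lem_t1}, obtained by applying Lemma \ref{lem_bernstein} to a single document), calibrated at level $c\log M$ so that $\PP(\mathcal{G}_i^c)$ beats any fixed power of $M$ after a union bound over the at most $np\le M^2$ pairs $(i,j)$. Splitting
\[
\frac1n\sum_i U_i=\frac1n\sum_i\bigl(\eps_{ji}\eps_{\ell i}\1_{\mathcal{G}_i}-\EE[\eps_{ji}\eps_{\ell i}\1_{\mathcal{G}_i}]\bigr)-\frac1n\sum_i\EE[\eps_{ji}\eps_{\ell i}\1_{\mathcal{G}_i^c}],
\]
on the global event $\mathcal{G}=\cap_i\mathcal{G}_i$ the first sum is the normalized sum of truncated centered variables, which now have range $B=\tau_j\tau_\ell+O(1/N)$, and Bernstein applies cleanly. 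The second sum is deterministic and bounded by $\tfrac1n\sum_i\EE[|\eps_{ji}\eps_{\ell i}|\1_{\mathcal{G}_i^c}]\le \PP(\mathcal{G}_i^c)^{1/2}$ via Cauchy--Schwarz and $|\eps_{ji}\eps_{\ell i}|\le1$; tuning the truncation level makes this exactly the additive residual $4M^{-3}$.

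Finally I would assemble the pieces: a union bound over the at most $p^2\le M^2$ pairs $(j,\ell)$, taking $t\asymp\log M$ in Bernstein, promotes the per-pair tail bound to the uniform statement, the variance term contributing $\sqrt{(\Theta_{j\ell}+\cdots)\log M/(nN^2)}$ and the post-truncation range term $Bt/n$ the remaining logarithmic factors, which I would absorb conservatively into the single square-root expression carrying the stated $(\log M)^{3}$ power; the failure probabilities of $\mathcal{G}$ and of Bernstein combine into $4M^{-1}$. The main obstacle is genuinely the range control: without truncation the product of two multinomial fluctuations is too heavy-tailed for Bernstein to deliver the $N^{-1}$ scaling, and the care lies in simultaneously keeping the truncated range $\tau_j\tau_\ell$ small and bounding the truncation bias so that it contributes only the harmless additive $M^{-3}$ term rather than polluting the leading rate.
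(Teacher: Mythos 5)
Your proposal follows essentially the same route as the paper's proof: truncate each $\eps_{ji}$ at its per-document Bernstein level $T_{ji}\asymp\sqrt{\M_{ji}\log M/N}+\log M/N$, bound the truncation bias deterministically by the tail probabilities (which is where the additive $4M^{-3}$ comes from — note you should use $|\eps_{ji}\eps_{\ell i}|\le 1$ directly to get $\PP(\S_{ji}^c)+\PP(\S_{\ell i}^c)$ rather than Cauchy--Schwarz, which would only give $M^{-3/2}$), and then apply a concentration inequality for bounded variables to the truncated products. The only cosmetic difference is that the paper uses Hoeffding with the summand-dependent ranges $2T_{ji}T_{\ell i}$, so that $\sum_i T_{ji}^2T_{\ell i}^2$ produces the $\Theta_{j\ell}$ factor without any fourth-moment computation, whereas your Bernstein-with-variance route needs $\EE[\eps_{ji}^4]$ and carries a uniform range term $\max_i T_{ji}T_{\ell i}\log M/n$ that is slightly cruder than the paper's $(\tfrac1n\sum_i T_{ji}^2T_{\ell i}^2)^{1/2}\sqrt{\log M/n}$ in the bookkeeping of log factors.
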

\begin{proof}
	For any $1\le j\le p$, recall that $\eps_{j i} = X_{ji} - \M_{ji}$ and 
	\[
	\eps_{j i} = {1\over N}\sum_{k =1}^NZ_{ik}^j
	\]
	where $Z_{ik}^j:= B_{ik}^j - \M_{ji}$ and $B_{ik}^j \sim$ Bernoulli$(\M_{ji})$. By using the arguments of Lemma \ref{lem_t1}, application of Bernstein's inequality to $Z_{ik}^j$ with $v = N\M_{ji}$ and $B = 1$ gives
	\[
	\PP\left\{  \left|\eps_{ji} \right| > t \right\} \le 2\exp\left(-{ Nt^2/2\over  \M_{ji} + t/3}\right),\quad \text{ for any $t>0$,}
	\]
	which yields
	\[
	\left| \eps_{ji} \right| \le {\sqrt{6\M_{ji}\log M\over N}} + {2\log M\over N} := T_{ji}
	\]
	with probability greater than $1-2M^{-3}$. We define 
	$
	Y_{ji} = \eps_{j i}\1_{\S_{ji}}
	$ with
	$
	\S_{ji} := \left\{|\eps_{j i}| \le T_{ji} \right\}
	$
	for each $1\le j\le p$ and  $1\le i\le n$, and
	$
	\S := \cap_{j=1}^p\cap_{i=1}^n\S_{ji}.
	$
	It follows that 
	$
	\PP(\S_{ji}) \ge 1-2M^{-3}
	$ for all $1\le i\le n$ and $1\le j\le p$, so that
	$\PP(\S ) \ge 1- 2M^{-1}$ as  $M := n\vee p\vee N$. 
	On the event $\S$, we have
	\[
	{1\over n}\left|\sum_{i =1}^n\left(\eps_{ji}\eps_{\ell i} - \EE[\eps_{ji}\eps_{\ell i}]\right) \right|\le 
	\underbrace{{1\over n}\left|\sum_{i =1}^n\left(Y_{ji}Y_{\ell i}- \EE[Y_{ji}Y_{\ell i}]\right) \right|}_{T_1}+ \underbrace{{1\over n}\left|\sum_{i =1}^n\left(\EE[\eps_{ji}\eps_{\ell i}]- \EE[Y_{ji}Y_{\ell i}]\right) \right|}_{T_2} 
	\]
	We first study $T_2$. By writing 
	\begin{align*}
	\EE[\eps_{ji}\eps_{\ell i}] &= \EE[Y_{ji}Y_{\ell i}]+\EE\left[Y_{ji}\eps_{\ell i}\1_{\S_{\ell i}^c}\right]+\EE\left[\eps_{ji}\1_{\S_{ji}^c}\eps_{\ell i}\right],
	\end{align*}
	we have
	\begin{align}\nonumber
	T_2 ={1\over n}\left|\sum_{i =1}^n\left(\EE[\eps_{ji}\eps_{\ell i}]- \EE[Y_{ji}Y_{\ell i}]\right) \right| & \le {1\over n}\left|\sum_{i =1}^n\left(\EE\left[Y_{ji}\eps_{\ell i}\1_{\S_{\ell i}^c}\right]+\EE\left[\eps_{ji}\1_{\S_{ji}^c}\eps_{\ell i}\right]\right) \right|\\\label{eq_T2}
	&\le {1\over n}\left|\sum_{i =1}^n\left(\PP(\S_{ji}^c)+\PP(\S_{\ell i}^c)\right) \right|\\ &\le 4M^{-3}\nonumber
	\end{align}
	by using $|Y_{ji}| \le |\eps_{ji}| \le 1$ in the second inequality. 
	
	Next we bound $T_1$. Since $|Y_{ji}| \le T_{ji}$, we know $-2T_{ji}T_{\ell i}\le Y_{ji}Y_{\ell i} - \EE[Y_{ji}Y_{\ell i}]\le 2T_{ji}T_{\ell i}$ for all $1\le i\le n$. Applying the Hoeffding inequality Lemma \ref{hoeff} with $a_i = -2T_{ji}T_{\ell i}$ and $b_i = 2T_{ji}T_{\ell i}$ gives 
	\[
	\PP\left\{ \left|\sum_{i =1}^n\left(Y_{ji}Y_{\ell i}- \EE[Y_{ji}Y_{\ell i}]\right) \right| \ge t \right\} \le 2\exp\left(-{t^2 \over 8\sum_{i =1}^nT_{ji}^2T_{\ell i}^2}\right).
	\] 
	Taking $t = \sqrt{24\sum_{i =1}^nT_{ji}^2T_{\ell i}^2\log M}$ yields
	\begin{equation}\label{eq_T1}
	T_1 = {1\over n}\left|\sum_{i =1}^n\left(Y_{ji}Y_{\ell i}- \EE[Y_{ji}Y_{\ell i}]\right) \right| \le  2\sqrt{6}\left({1\over n}\sum_{i =1}^nT_{ji}^2T_{\ell i}^2\cdot {\log M \over n}\right)^{1/2}
	\end{equation}
	with probability greater than $1- 2M^{-3}$. Finally, note that 
	\begin{align}\label{eq_TjiTelli}\nonumber
	{1\over n}\sum_{i =1}^nT_{ji}^2T_{\ell i}^2 & = {1\over n}\sum_{i =1}^n\left(\M_{ji}\M_{\ell i}{36(\log M)^2 \over N^2} + \left(2\log M \over N\right)^4 + (\M_{ji}+\M_{\ell i}){24 (\log M )^3 \over N^3}\right)\\
	&= 36\Theta_{j\ell}\left(\log M \over N\right)^2 + 16\left({\log M \over N}\right)^4 + 24 (\u_j+\u_{\ell }){(\log M)^3\over pN^3}
	\end{align}
	by using (\ref{def_uag}) and $\Theta_{j\ell} =  n^{-1} \M\M^T$ in the second equality. Finally, combining (\ref{eq_T2}) - (\ref{eq_TjiTelli}) and using $\u_{\min}/p \ge 2\log M / (3N)$ conclude the proof.
\end{proof}

\begin{remark}\label{rem_t3}
	We illustrate the improvement of our result over a simple application of Hanson-Wright inequality. Write 
	\[
	4\eps_{ji}\eps_{\ell i} = \left(\eps_{ji}+\eps_{\ell i} \right)^2 - \left(\eps_{ji}-\eps_{\ell i} \right)^2 
	\]
	for each $i\in [n]$. Since $\eps_{ji} = N^{-1}\sum_{k = 1}^NZ_{ik}^j$ and $\|\eps_{ji}\pm \eps_{\ell i}\|_{\phi_2} \le {2/ \sqrt{N}}$, a direct application of the Hanson-Wright inequality to the two terms in the right hand side will give
	\[
	{1\over n}\left|\sum_{i =1}^n\left(\eps_{ji}\eps_{\ell i} - \EE[\eps_{ji}\eps_{\ell i}]\right) \right| \le c\sqrt{\log M \over nN}
	\]
	with high probability. Summing over $1\le j\le p$ and $1\le \ell \le p$ further yields
	\begin{equation}\label{eq_hw}
	\sum_{j,\ell =1}^p{1\over n}\left|\sum_{i =1}^n\left(\eps_{ji}\eps_{\ell i} - \EE[\eps_{ji}\eps_{\ell i}]\right) \right| \le c\cdot p^2\sqrt{\log M \over nN}.
	\end{equation}
	By contrast, summing the first term in Lemma \ref{lem_t4} yields
	\begin{align*}
	\sum_{j,\ell =1}^p{1\over n}\left|\sum_{i =1}^n\left(\eps_{ji}\eps_{\ell i} - \EE[\eps_{ji}\eps_{\ell i}]\right) \right| &\le c\cdot \sqrt{p^2(\log M)^3 \over nN^2}\sqrt{\sum_{1\le j,\ell \le p} \Theta_{j\ell}} + c\cdot \sqrt{p^3(\log M)^4 \over nN^3}\\ 
	&= c\cdot \sqrt{p^2(\log M)^3 \over nN^2}+c\cdot \sqrt{p^3(\log M)^4 \over nN^3}
	\end{align*}
	by using Cauchy-Schwarz in the first inequality and (\ref{orig_sum_to_one}) in the last equality which is $(p\sqrt{N})\wedge (N\sqrt{p})$ faster than the result in (\ref{eq_hw}) after ignoring the logarithmic term.
\end{remark}

\section{Proofs of Section \ref{sec_est_I}}\label{app_sec_A}
Throughout this section, we define the event $\E :=  \E_1 \cap \E_2 \cap \E_3$ by 
\begin{align}\nonumber
\E_1 &= \bigcap_{j=1}^p \left\{{1\over n}\left|\sum_{i =1}^n \eps_{ji} \right| \le 2(1+\sqrt{6/n})\sqrt{\u_j\log M \over npN} \right\},\\\nonumber
\E_2 &= \bigcap_{j,\ell=1}^p\left\{{1\over n}\left|\sum_{i =1}^n \M_{\ell i}\eps_{ji} \right| \le {\sqrt{6m_\ell\Theta_{j\ell}\log M\over npN}} + {2m_{\ell} \log M \over npN}\right\},\\\nonumber
\E_3 &=  \bigcap_{j,\ell=1}^p \left\{
{1\over n}\left|\sum_{i =1}^n\left(\eps_{ji}\eps_{\ell i} - \EE[\eps_{ji}\eps_{\ell i}]\right) \right| \le 12\sqrt{6}\sqrt{\Theta_{j\ell}+{(\u_j + \u_{\ell})\log M \over pN}}\sqrt{(\log M)^3 \over nN^2}+4M^{-3}
\right\}
\end{align}
Recall (\ref{def_uag}) and (\ref{sum_to_1}), if 
\[
\min_{1\le j\le p}{1\over n}\sum_{i=1}^n\M_{ji} \ge {2\log M \over 3N}
\]
holds, we have 
\begin{equation}\label{cond_rate_pN}
{1\over p}\ge {\u_{\min} \over p} = \min_{1\le j\le p}{1\over n}\sum_{i =1}^n \M_{ji} \ge {2\log M \over 3N},
\end{equation}
Therefore, invoking Lemmas \ref{lem_t1} -- \ref{lem_t4} yields $\PP(\E) \ge 1-8M^{-1}$. 

\subsection{Preliminaries}\label{sec_pre}
From model specifications (\ref{orig_sum_to_one}) and (\ref{def_uag}), we first give some useful expressions which are repeatedly invoked later. 
\begin{itemize}
	\item[(a)] For any $j\in [p]$, by using (\ref{def_uag}),
	\begin{equation}\label{eq_mu}
	\u_j = {p\over n}\sum_{i =1}^n\M_{ji} = {p\over n}\sum_{i =1}^n\sum_{k =1}^KA_{jk}W_{ki} = {p\over K }\sum_{k =1}^KA_{jk}\g_k ~~\Rightarrow~~{p\over  K} \sum_{k = 1}^KA_{jk}\cdot \ug\le \u_j\le \alpha_j.
	\end{equation}
	In particular, for any $j\in I_k$ with any $k\in [K]$, 
	\begin{equation}\label{eq_mu_I}
	\u_j = {p\over n}\sum_{i =1}^n\sum_{k =1}^KA_{jk}W_{ki} = {p\over K }A_{jk}\g_k\overset{(\ref{def_uag})}{=}{\alpha_j\g_k \over K}.
	\end{equation}
	\item[(b)] For any $j\in [p]$,
	\begin{equation}\label{eq_m}
	m_j \overset{(\ref{def_uag})}{=} {p}\max_{1\le i\le n}\M_{ji} = {p}\max_{1\le i\le n}\sum_{k =1}^KA_{jk}W_{ki}\le  p\max_{1\le k\le K}A_{jk} \overset{(\ref{def_uag})}{=} \alpha_j ~~ \Rightarrow ~~ \u_j \le m_j \le \alpha_j,
	\end{equation}
	by using $0\le W_{ki}\le 1$ and $\sum_k W_{ki} = 1$ for any $k\in[K]$ and $i\in [n]$. 
\end{itemize}

\subsection{Control of $\wh \Theta - \Theta$ and $\wh R-R$}\label{sec_stat_sigma}

\begin{prop}\label{prop_sigma}
	Under model (\ref{model}), assume (\ref{ass_signal}).
	Let $\wh \Theta$ and $\wh R$ be defined in (\ref{est_Theta}) and (\ref{def_R_hat}), respectively. Then $\wh \Theta$ is an unbiased estimator of $\Theta$. Moreover, with probability greater than $1- 8M^{-1}$,
	\[
	|\wh \Theta_{j\ell} - \Theta_{j\ell}| \le \eta_{j\ell},\qquad |\wh R_{j\ell} - R_{j\ell}| \le \delta_{j\ell}
	\]
	for all $1\le  j,\ell \le p$, where
	\begin{align}\label{def_eta}\nonumber
	\eta_{j\ell} &:=  3\sqrt{6}\left(\sqrt{m_j\over p}+\sqrt{m_\ell\over p} \right)\sqrt{\Theta_{j\ell}\log M\over nN}+{2(m_j+m_\ell)\over p}{ \log M \over nN}\\
	&\qquad + 31(1+\kappa_1)\sqrt{{\u_j + \u_{\ell}\over p}{(\log M)^4 \over nN^3}}+\kappa_2
	\end{align}
	and 
	\begin{equation}\label{delta}
	\delta_{j\ell} := (1+\kappa_1\kappa_3){p^2 \over \u_j \u_{\ell}}\eta_{j\ell}  +\kappa_3{p^2\Theta_{j\ell}\over \u_j\u_{\ell}}\left(\sqrt{p \over \mu_j} + \sqrt{p\over \mu_\ell}\right) \sqrt{\log M \over nN},
	\end{equation}
	with $\kappa_1 = \sqrt{6/n}$, $\kappa_2 = 4/M^{3}$ and 
	\[
	\kappa_3 = {2(1+\kappa_1) \over (1-\kappa_1-\kappa_1^2)^2}.
	\]
\end{prop}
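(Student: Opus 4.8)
The plan is to first record the relevant second moments of the multinomial columns, use them to prove that $\wh\Theta$ is unbiased and, simultaneously, to rewrite $\wh\Theta_{j\ell}-\Theta_{j\ell}$ as a clean sum of three mean-zero stochastic fluctuations that are exactly the quantities controlled by Lemmas \ref{lem_t1}, \ref{lem_t2} and \ref{lem_t4}; then to transfer this entry-wise bound to $\wh R$ through a perturbation analysis of the rescaling $D_X^{-1}$ around $D_\M^{-1}$. Writing $\eps_{ji}=X_{ji}-\M_{ji}$ and using $Y_i=NX_i\sim\text{Multinomial}_p(N,\M_i)$, I would first compute $\EE[\eps_{ji}\eps_{\ell i}]=-\M_{ji}\M_{\ell i}/N$ for $j\neq\ell$ and $\EE[\eps_{ji}^2]=\M_{ji}(1-\M_{ji})/N$. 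For $j\neq\ell$, expanding $X_{ji}X_{\ell i}=\M_{ji}\M_{\ell i}+\M_{ji}\eps_{\ell i}+\M_{\ell i}\eps_{ji}+\eps_{ji}\eps_{\ell i}$ and substituting $\frac1n\sum_i\eps_{ji}\eps_{\ell i}=\frac1n\sum_i(\eps_{ji}\eps_{\ell i}-\EE[\eps_{ji}\eps_{\ell i}])-\frac1N\Theta_{j\ell}$ shows that the factor $\frac{N}{N-1}$ in (\ref{est_Theta}) precisely cancels the deterministic $\Theta_{j\ell}$ contribution, leaving
\[
\wh\Theta_{j\ell}-\Theta_{j\ell}=\frac{N}{N-1}\left[\frac1n\sum_{i}\M_{ji}\eps_{\ell i}+\frac1n\sum_{i}\M_{\ell i}\eps_{ji}+\frac1n\sum_{i}\left(\eps_{ji}\eps_{\ell i}-\EE[\eps_{ji}\eps_{\ell i}]\right)\right].
\]
The diagonal case $j=\ell$ is analogous: the $-\frac{1}{N-1}\diag(X_i)$ correction in (\ref{est_Theta}) together with $\EE[\eps_{ji}^2]=\M_{ji}(1-\M_{ji})/N$ cancels the bias of $\frac1n\sum_i\eps_{ji}^2$, and leaves a residual $-\frac{1}{N-1}\frac1n\sum_i\eps_{ji}$ that is of lower order. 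These computations also establish the claimed unbiasedness.

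\textbf{The bound $\eta_{j\ell}$.} On the event $\E=\E_1\cap\E_2\cap\E_3$, which holds with probability at least $1-8M^{-1}$ once (\ref{ass_signal}) supplies the hypothesis $\u_{\min}/p\ge 2\log M/(3N)$ of Lemmas \ref{lem_t1} and \ref{lem_t4}, I would bound the two linear terms by Lemma \ref{lem_t2} (producing the $\sqrt{m_j/p}+\sqrt{m_\ell/p}$ and $(m_j+m_\ell)/p$ contributions) and the quadratic term by Lemma \ref{lem_t4} (producing the $\sqrt{(\u_j+\u_\ell)/p}$ contribution and the $\kappa_2=4M^{-3}$ remainder), using $\frac{N}{N-1}\le 2$ to absorb the prefactor into the constants $3\sqrt6$ and $31(1+\kappa_1)$. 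For $j=\ell$ the residual Lemma \ref{lem_t1} term, of order $N^{-1}\sqrt{\u_j\log M/(npN)}$, is dominated by the third, $(\log M)^4$ summand of $\eta_{jj}$ since $\log M\ge1$. Collecting terms yields $|\wh\Theta_{j\ell}-\Theta_{j\ell}|\le\eta_{j\ell}$ for all $j,\ell$.

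\textbf{Transfer to $\wh R$.} Since $\wh R_{j\ell}=\frac{p^2}{\u_j\u_\ell}\,\wh\Theta_{j\ell}\big/\bigl((1+r_j)(1+r_\ell)\bigr)$ and $R_{j\ell}=\frac{p^2}{\u_j\u_\ell}\Theta_{j\ell}$, where $r_j:=(\|X_{j\cdot}\|_1-\|\M_{j\cdot}\|_1)/\|\M_{j\cdot}\|_1$ and $\|\M_{j\cdot}\|_1=n\u_j/p$, I would first control $r_j$ on $\E_1$ via Lemma \ref{lem_t1}: the signal bound $\u_j/p\ge 2\log M/(3N)$ turns $|r_j|\le 2(1+\kappa_1)\sqrt{p\log M/(n\u_j N)}$ into $|r_j|\le\kappa_1(1+\kappa_1)=\kappa_1+\kappa_1^2<1$, keeping the denominators bounded away from $0$. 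Writing $\wh\Theta_{j\ell}=\Theta_{j\ell}+(\wh\Theta_{j\ell}-\Theta_{j\ell})$ and expanding the ratio,
\[
\wh R_{j\ell}-R_{j\ell}=\frac{p^2}{\u_j\u_\ell}\cdot\frac{(\wh\Theta_{j\ell}-\Theta_{j\ell})-\Theta_{j\ell}(r_j+r_\ell+r_jr_\ell)}{(1+r_j)(1+r_\ell)}.
\]
Applying $\frac{1}{(1-|r_j|)(1-|r_\ell|)}\le(1-\kappa_1-\kappa_1^2)^{-2}\le 1+\kappa_1\kappa_3$ to the $\eta_{j\ell}$ piece gives the first summand $(1+\kappa_1\kappa_3)\frac{p^2}{\u_j\u_\ell}\eta_{j\ell}$ of $\delta_{j\ell}$, while inserting the sharp estimate $|r_j|\le 2(1+\kappa_1)\sqrt{p/\u_j}\sqrt{\log M/(nN)}$ into $\Theta_{j\ell}(r_j+r_\ell+r_jr_\ell)$ and reusing the denominator bound (which supplies the prefactor $\kappa_3=2(1+\kappa_1)/(1-\kappa_1-\kappa_1^2)^2$) gives the second summand, the higher-order cross term $r_jr_\ell$ being absorbed.

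\textbf{Main obstacle.} Invoking Lemmas \ref{lem_t1}--\ref{lem_t4} is routine; the delicate step is the transfer to $\wh R$, where the constants $\kappa_1,\kappa_3$ must be propagated precisely through the ratio perturbation and where the signal condition (\ref{ass_signal}) is indispensable for forcing $|r_j|$ strictly below $1$, so that $(1+r_j)(1+r_\ell)$ stays controlled. Producing the exact closed form of $\delta_{j\ell}$ in (\ref{delta}), rather than a crude order bound, is the most bookkeeping-intensive part of the argument.
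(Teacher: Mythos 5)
Your proposal is correct and follows essentially the same route as the paper's proof: the same decomposition of $\wh\Theta_{j\ell}-\Theta_{j\ell}$ into two linear terms and a centered quadratic term (with the $N/(N-1)$ and $\diag$ corrections cancelling the bias), the same invocation of Lemmas \ref{lem_t1}--\ref{lem_t4} on the event $\E$, and the same perturbation of the row-sum normalization, with your multiplicative expansion in $r_j$ being algebraically equivalent to the paper's additive telescoping of $n(D_X^{-1}-D_\M^{-1})$ into the terms $T_4,T_5,T_6$; your constant bookkeeping, including $(1-\kappa_1-\kappa_1^2)^{-2}\le 1+\kappa_1\kappa_3$, checks out. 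The one detail you omit is that Lemma \ref{lem_t4} also produces a term of order $\sqrt{\Theta_{j\ell}(\log M)^3/(nN^2)}$, which must be absorbed into the first summand of $\eta_{j\ell}$ using the \emph{second} inequality of (\ref{ass_signal}), namely $m_{\min}/p\ge (3\log M)^2/N$; without that step the stated closed form of $\eta_{j\ell}$ is not reached, though the fix is immediate from the hypotheses you already assume.
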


\begin{remark}
	For ease of presentation, we assumed that the document lengths are equal, that is,  $N_i=N$ for all $i\in [n]$. Inspection of the proofs of Lemmas \ref{lem_t1} - \ref{lem_t4} and Proposition \ref{prop_sigma}, we may allow for unequal  document lengths $N_i$ by adjusting the quantities
	$\eta_{j\ell}$ and $\delta_{j\ell}$ with
	\begin{align}\label{est_eta}\nonumber
	\eta_{j\ell} &:=  3\sqrt{6}\left( \sqrt{m_j}+\sqrt{m_\ell} \right)\sqrt{\log M\over np}\left({1\over n}\sum_{i =1}^n{\M_{ji}\M_{\ell i} \over N_i}\right)^{1/2}+{2(m_j+m_\ell) \log M \over np}\left({1\over n}\sum_{i =1}^n{1\over N_i}\right)\\
	&\quad + 31(1+\kappa_1)\sqrt{(\log M)^4 \over n}\left({1\over n}\sum_{i =1}^n{\M_{ji} + \M_{\ell i} \over N_i^3} \right)^{1/2} + \kappa_2\\\label{est_delta}
	\delta_{j\ell} &:=  (1+\kappa_1\kappa_3){p^2\eta_{j\ell} \over \u_j \u_{\ell}} +\kappa_3{p^3\Theta_{j\ell} \over \u_j\u_{\ell }}\sqrt{\log M \over n}\left[\left({1\over n}\sum_{i =1}^n{\M_{ji} \over \u_j^2N_i}\right)^{1/2} + \left({1\over n}\sum_{i =1}^n{\M_{\ell i} \over \u_\ell^2N_i}\right)^{1/2}\right] .
	\end{align}
\end{remark}
\medskip

The quantities $m_j$ and $\u_j$ appearing in the above rates are related with $\Pi$ and can be directly estimated from $X$. Let 	
\begin{equation}\label{est_m_u}
{\wh m_j \over p} = \max_{1\le i\le n}X_{ji}, \qquad {\wh \u_j \over p} = {1\over n}\sum_{i=1}^nX_{ji}.
\end{equation}
The following corollary gives the data dependent bounds of $\wh \Theta-\Theta$ and $\wh R-R$.
\begin{cor}\label{cor_theta_R}
	Under the same conditions as Proposition \ref{prop_sigma}, with probability greater than $1-8M^{-1}$, we have 
	\[
	|\wh \Theta_{j\ell} - \Theta_{j\ell}| \le \wh \eta_{j\ell},\qquad |\wh R_{j\ell}-R_{j\ell}|\le \wh\delta_{j\ell},\qquad \text{for all $1\le  j,\ell \le p$.}
	\]
	The quantities $\wh \eta_{j\ell}$ have the same form as (\ref{est_eta}) and (\ref{est_delta}) except for replacing $\Theta_{j\ell}$, $m_j/p$ and $\u_j/p$ by $\wh \Theta_{j\ell}+\kappa_5$, $\wh m_j/p + \kappa_4$ and $\wh \u_j/p + \kappa_5$, respectively, with $\kappa_4 = O(\sqrt{\log M / N})$ and $\kappa_5= O(\sqrt{\log M / (nN)})$. Similarly, $\wh \delta_{j\ell}$ can be estimated in the same way by replacing $\eta_{j\ell}$, $(\u_j/p)^{-1}$ and $\Theta_{j\ell}$ by $\wh \eta_{j\ell}$, $(\wh\u_j/p - \kappa_5)^{-1}$ and $\wh \Theta_{j\ell}+\kappa_5$, respectively.\\
\end{cor}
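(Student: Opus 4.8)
The plan is to show that the data-driven quantities $\wh\eta_{j\ell}$ and $\wh\delta_{j\ell}$ dominate the population bounds $\eta_{j\ell}$ and $\delta_{j\ell}$ of Proposition \ref{prop_sigma}, on the very same event $\E = \E_1\cap\E_2\cap\E_3$ that already carries probability at least $1-8M^{-1}$. Once this domination is in place the conclusion is immediate: on $\E$ one has $|\wh\Theta_{j\ell}-\Theta_{j\ell}|\le\eta_{j\ell}\le\wh\eta_{j\ell}$ and $|\wh R_{j\ell}-R_{j\ell}|\le\delta_{j\ell}\le\wh\delta_{j\ell}$, with no additional union bound and hence no loss in the probability $1-8M^{-1}$. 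The whole argument therefore reduces to (i) three one-sided comparison inequalities relating the empirical quantities $\wh m_j$, $\wh\u_j$, $\wh\Theta_{j\ell}$ in (\ref{est_m_u}) and (\ref{est_Theta}) to their population counterparts $m_j$, $\u_j$, $\Theta_{j\ell}$, and (ii) a monotonicity check that lets these comparisons be substituted into the formulas (\ref{def_eta}) and (\ref{delta}).

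First I would establish the comparison inequalities on $\E$. For $\u_j$, Lemma \ref{lem_t1} (the event $\E_1$) gives $|\wh\u_j/p - \u_j/p| = n^{-1}|\sum_i\eps_{ji}| \le 2(1+\sqrt{6/n})\sqrt{(\u_j/p)\log M/(nN)}$, and since $\u_j/p = n^{-1}\sum_i\M_{ji}\le 1$ this is of order $\kappa_5 = O(\sqrt{\log M/(nN)})$, yielding both $\u_j/p\le\wh\u_j/p+\kappa_5$ and $\u_j/p\ge\wh\u_j/p-\kappa_5$. For $m_j$, I would use the pointwise bound $|\eps_{ji}|\le\sqrt{6\M_{ji}\log M/N}+2\log M/N$ established uniformly in $i,j$ inside the proof of Lemma \ref{lem_t4} through the event $\S\subseteq\E_3$; combined with $|\max_i X_{ji}-\max_i\M_{ji}|\le\max_i|\eps_{ji}|$ and $\M_{ji}\le 1$, this gives $|\wh m_j/p - m_j/p|\le\sqrt{6\log M/N}+2\log M/N = \kappa_4 = O(\sqrt{\log M/N})$, the absence of a $1/\sqrt n$ factor reflecting that $m_j$ is a maximum over documents rather than an average. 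Finally, $|\wh\Theta_{j\ell}-\Theta_{j\ell}|\le\eta_{j\ell}$ is exactly Proposition \ref{prop_sigma}; to read off $\Theta_{j\ell}\le\wh\Theta_{j\ell}+\kappa_5$ I would bound $\eta_{j\ell}$ itself by $O(\sqrt{\log M/(nN)})$, using $m_j/p\le 1$, $\Theta_{j\ell}\le\u_j/p\le 1$, and the signal condition (\ref{ass_signal}), which forces $N\gtrsim(\log M)^2$ and thereby demotes the $\sqrt{(\u_j+\u_\ell)p^{-1}(\log M)^4/(nN^3)}$ term in (\ref{def_eta}) to the order of the leading term.

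With these in hand I would finish by monotonicity. The bound $\eta_{j\ell}$ is nondecreasing in each of $m_j/p$, $m_\ell/p$, $\u_j/p$, $\u_\ell/p$ and $\Theta_{j\ell}$, so replacing each by its upper bound $\wh m_j/p+\kappa_4$, $\wh\u_j/p+\kappa_5$, $\wh\Theta_{j\ell}+\kappa_5$ produces a quantity $\wh\eta_{j\ell}\ge\eta_{j\ell}$. For $\delta_{j\ell}$ in (\ref{delta}) the factors $p^2/(\u_j\u_\ell)$ and $\sqrt{p/\u_j}$ are instead \emph{decreasing} in $\u_j,\u_\ell$, so these must be inflated using the lower bound $\u_j/p\ge\wh\u_j/p-\kappa_5$, i.e. $(\u_j/p)^{-1}\le(\wh\u_j/p-\kappa_5)^{-1}$, while $\eta_{j\ell}$ and $\Theta_{j\ell}$ are replaced by $\wh\eta_{j\ell}$ and $\wh\Theta_{j\ell}+\kappa_5$; this is exactly the substitution rule stated in the corollary.

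The main obstacle, and really the only place requiring care, lives in this last step and is twofold. First, one must justify that the replacement $(\u_j/p)^{-1}\mapsto(\wh\u_j/p-\kappa_5)^{-1}$ is legitimate, i.e. that $\wh\u_j/p-\kappa_5$ stays strictly positive; this is where the lower signal bound $\u_j/p\ge 2\log M/(3N)$ from (\ref{cond_rate_pN}) is indispensable, keeping $\u_j/p$ (and hence $\wh\u_j/p-\kappa_5$) bounded away from $0$ and the data-driven bound finite. Second, the orders of $\kappa_4$ and $\kappa_5$ must be tracked consistently through the mixed square-root and linear terms of (\ref{def_eta}) and the inverse-frequency terms of (\ref{delta}); the delicate contribution is the higher-order term carrying $(\log M)^4/(nN^3)$, which is kept subordinate only by invoking $N\gtrsim(\log M)^2$ from (\ref{ass_signal}). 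The analogous argument for the unequal-length bounds (\ref{est_eta})--(\ref{est_delta}) is identical term by term, with the per-document factors $1/N_i$ replacing $1/N$.
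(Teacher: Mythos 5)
Your proposal is correct and follows essentially the same route as the paper: on the single event $\E$ (so no extra union bound), you establish the three order estimates $|\wh\u_j-\u_j|/p=O(\sqrt{\log M/(nN)})$ via Lemma \ref{lem_t1}, $|\wh m_j-m_j|/p=O(\sqrt{\log M/N})$ via the truncation event $\S$ from the proof of Lemma \ref{lem_t4}, and $|\wh\Theta_{j\ell}-\Theta_{j\ell}|\le\eta_{j\ell}=O(\sqrt{\log M/(nN)})$ from Proposition \ref{prop_sigma} together with $\Theta_{j\ell}\le m_j/p\le 1$ and the constraint $N\gtrsim(\log M)^2$ implied by (\ref{ass_signal}), and then conclude by the monotone/antitone substitution into (\ref{def_eta}) and (\ref{delta}). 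The only cosmetic difference is that you use $|\max_i X_{ji}-\max_i\M_{ji}|\le\max_i|\eps_{ji}|$ where the paper bounds $|X_{ji'}-\M_{ji^*}|\le 2|\eps_{ji'}|+|\eps_{ji^*}|$; both give the same order, and your explicit discussion of the monotonicity step and of why $\wh\u_j/p-\kappa_5$ stays positive is, if anything, more complete than the paper's.
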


\begin{proof}[Proof of Proposition \ref{prop_sigma}]
	Throughout the proof, we work on the event $\E$. Write $X = (X_1, \ldots, X_n)\in \RR^{p\times n}$ and similarly, for $\eps = (\eps_1, \ldots, \eps_n)$ and $W = (W_1,\ldots, W_n)$. We first show that  $\EE[\wh\Theta]= \Theta$.
	Recall that $X_i = AW_i+\eps_i$ satisfying 
	\[
	\EE X_i = AW_i, \quad \text{Cov}(X_i) = {1\over N_i} \diag (AW_i) - {1\over N_i} AW_iW_i^TA^T. 
	\]
	This gives
	\[
	\EE [\wh\Theta] = {1\over n}\sum_{i =1}^n\left[
	{N_i \over N_i - 1}\EE[X_iX_i^T] - {1\over N_i-1} \textrm{diag}(\EE X_i)
	\right] = {1\over n}\sum_{i =1}^n AW_iW_i^TA^T=\Theta.
	\]
	Next   we bound  the  entry-wise error rate of $\wh\Theta-\Theta$. Observe that
	\begin{align*}
	\wh\Theta &= {1\over n}\sum_{i =1}^n\left[
	{N_i \over N_i - 1}(AW_i+\eps_i)(AW_i+\eps_i)^T - {1\over N_i-1} \textrm{diag}(AW_i+\eps_i)
	\right]\\
	&= {1\over n}\sum_{i =1}^n\left[
	{N_i \over N_i - 1}(AW_iW_i^TA^T + AW_i\eps_i^T+\eps_i(AW_i)^T + \eps_i\eps_i^T)- {1\over N_i-1} \textrm{diag}(AW_i+\eps_i)
	\right]\\
	&= {1\over n}\sum_{i =1}^n\Bigg[
	AW_iW_i^TA^T +{N_i \over N_i - 1}(AW_i\eps_i^T+\eps_i(AW_i)^T ) - {\diag(\eps_i)\over N_i-1}\\
	&\qquad + {N_i \over N_i - 1}\left(\eps_i\eps_i^T - \EE[\eps_i\eps_i^T]\right)
	\Bigg].
	\end{align*}
	The third equality comes from the fact that
	\[
	\EE[\eps_i\eps_i^T] = {1\over N_i} \diag(AW_i) - {1\over N_i} AW_iW_i^TA^T.
	\]
	Recall that $\Theta = n^{-1} \sum_{i =1}^nAW_iW_i^TA^T$. We have
	\begin{align}\label{def_t1t2t3}\nonumber
	\left|\wh \Theta_{j\ell}- \Theta_{j\ell}\right| &\le  \underbrace{\left|{1\over n}\sum_{i =1}^n (AW_i\eps_i^T+\eps_i(AW_i)^T)_{j\ell}\right|}_{T_1} + \underbrace{\left|{1\over n}\sum_{i =1}^n {1\over N_i}\left(\diag(\eps_i)\right)_{j\ell}\right|}_{T_2}\\
	&\qquad +  \underbrace{\left|{1\over n}\sum_{i =1}^n \left(\eps_{ji}\eps_{\ell i} - \EE[\eps_{ji}\eps_{\ell i}]\right)\right|}_{T_3}.
	\end{align}
	It remains to bound $T_1, T_2$ and $T_3$.  Fix $1\le j, \ell \le p$. To bound $T_1$, we have
	\begin{equation}\label{eq_t1}
	T_1\ \le \ {1\over n}\left|\sum_{i =1}^n \M_{ji}\eps_{\ell i}\right|+\ {1\over n}\left|\sum_{i =1}^n \M_{\ell i}\eps_{j i}\right|
	\ \overset{\E}{\le}\ (\sqrt{m_j}+\sqrt{m_\ell})\sqrt{6\Theta_{j\ell}\log M\over npN}+{2(m_j+m_\ell) \log M \over npN}.
	\end{equation}
	For $T_2$, we have
	\begin{equation}\label{eq_t2}
	T_2\  =\  {1\over nN}\left|\sum_{i =1}^n \eps_{ji} \right|\ \overset{\E}{\le} \ 2(1+\kappa_1)\sqrt{\u_{j}\log M \over npN^3},
	\end{equation}
	if $j = \ell$. Note that $(T_2)_{j\ell}=0$ if $j\ne \ell$.
	Finally, to bound $T_3$, we obtain
	\begin{align}\label{eq_t3}\nonumber
	T_3 &\ \le\ {1\over n}\left|\sum_{i =1}^n\left(\eps_{ji}\eps_{\ell i} - \EE[\eps_{ji}\eps_{\ell i}]\right)\right|\\\nonumber 
	&\ \overset{\E}{\le}\ 
	12\sqrt{6}\sqrt{\Theta_{j\ell}+{(\u_j + \u_{\ell})\log M \over pN}}\sqrt{(\log M)^3 \over nN^2}+\kappa_2\\
	&\ \le \ 12\sqrt{6\Theta_{j\ell}(\log M)^3 \over nN^2}+12\sqrt{6(\u_j + \u_{\ell})(\log M)^4 \over npN^3}+\kappa_2.
	\end{align}
	Since (\ref{ass_signal}) implies 
	\begin{equation}\label{cond_rate_m_pN}
	{m_{\min} \over p} = \min_{1\le j\le p}\max_{1\le i\le n}\M_{ji} \ge {(3\log M)^2 \over N}
	\end{equation}
	by recalling (\ref{def_uag}), we have 
	\[
	12\sqrt{6\Theta_{j\ell}(\log M)^3 \over nN^2} + (\sqrt{m_j}+\sqrt{m_\ell})\sqrt{6\Theta_{j\ell}\log M\over npN} \le 3\sqrt{6} (\sqrt{m_j}+\sqrt{m_\ell})\sqrt{\Theta_{j\ell}\log M\over npN}.
	\] 
	In addition,
	\[
	2(1+\kappa_1)\sqrt{\u_{j}\log M \over npN^3}\1_{\{j=\ell\}} + 12\sqrt{6 (\u_j + \u_{\ell})(\log M)^4 \over npN^3}\le 31(1+\kappa_1)\sqrt{(\u_j + \u_{\ell})(\log M)^4 \over npN^3}.
	\] 
	Combining (\ref{eq_t1}) - (\ref{eq_t3}) concludes the desired rate of $\wh \Theta - \Theta$.
	
	To prove the rate of $\wh R - R$, recall that $R = (n D_\M^{-1})\Theta(nD_\M^{-1})$. Fix $1\le j,\ell \le p$. By using the diagonal structure of $D_X$ and $D_\M$, it follows
	\begin{align*}
	&\left[\left(nD_X^{-1}\right) \wh\Theta \left(nD_X^{-1}\right) - R\right]_{j\ell}\\ & =  \underbrace{n^2\left(D_X^{-1}-D_\M^{-1}\right)_{jj}\wh\Theta_{j\ell}\left(D_X^{-1}\right)_{\ell\ell}}_{T_4} + \underbrace{n^2\left(D_\M^{-1}\right)_{jj}\wh\Theta_{j\ell}\left(D_X^{-1}-D_\M^{-1}\right)_{\ell\ell}}_{T_5} \\
	& \quad + \underbrace{n^2\left(D_\M^{-1}\right)_{jj}\left(\wh\Theta-\Theta\right)_{j\ell}\left(D_\M^{-1}\right)_{\ell\ell}}_{T_6}.
	\end{align*}
	We first quantify the term $n(D_X^{-1}-D_\M^{-1})$. From their definitions, 
	\begin{eqnarray}\label{eq_wtdd}\nonumber
	n\left|\left(D_X^{-1}-D_\M^{-1}\right)_{jj}\right| &=&  n\left|{1 \over \sum_{i =1}^n \M_{ji} + \sum_{i =1}^n\eps_{ji}} - {1\over \sum_{i =1}^n \M_{ji} }\right|\\\nonumber
	&\overset{(\ref{def_uag})}{\le}& {1\over n}\left| \sum_{i =1}^n\eps_{ji} \right|  \bigg / \left({\u_j \over p}\left|{\u_j \over p}+ {1\over n}\sum_{i=1}^n\eps_{ji}\right|\right)\\
	& \overset{\E}{\le} & {2(1+\kappa_1) \over 1-\kappa_1(1+\kappa_1)}\cdot {p \over \u_j}\sqrt{ p\log M \over \u_jnN},
	\end{eqnarray}
	where the last inequality uses
	\begin{align*}	
	\left|{\u_j \over p}+{1\over n} \sum_{i =1}^n\eps_{ji} \right|  &~\overset{\E}{\ge}~ {\u_j \over p}-2(1+\kappa_1)\sqrt{ \u_j\log M \over npN}\\
	&~ =~ {\u_j\over p}\left(1 - 2(1+\kappa_1)\sqrt{p\log M \over \u_j nN} \right)\\
	& \overset{(\ref{cond_rate_pN})}{\ge}{\u_j\over p}\left( 1 -  2(1+\kappa_1)\sqrt{3 \over 2n}\right) = {\u_j \over p}(1- \kappa_1(1+\kappa_1))
	\end{align*}
	by recalling that $\kappa_1 = \sqrt{6/n}$.
	Since
	\begin{equation}\label{eq_du}
	\left(D_\M^{-1}\right)_{jj} = {1\over \sum_{i =1}^n (AW)_{ji} } = {1\over \sum_{i =1}^n\M_{ji}} ={p \over n\u_{j}},
	\end{equation}
	combined with  (\ref{eq_wtdd}), we find
	\begin{align}\label{eq_dx}\nonumber
	n\left|\left(D_X^{-1}\right)_{jj}\right| & ~\le ~ {p \over \u_{j}} \left( 1 +{2(1+\kappa_1) \over 1-\kappa_1(1+\kappa_1)}\sqrt{p\log M \over\u_{j} nN} \right)\\\nonumber &\overset{(\ref{cond_rate_pN})}{\le} \left( 1 +{\kappa_1(1+\kappa_1) \over 1-\kappa_1(1+\kappa_1)} \right){p\over \u_j}\\
	&~=~ {1\over 1-\kappa_1(1+\kappa_1)}\cdot {p\over \u_j}
	\end{align}
	Finally, since $
	|\wh\Theta_{j\ell}| \le  \Theta_{j\ell} + \left|\wh\Theta_{j\ell}- \Theta_{j\ell} \right|
	$, combining (\ref{eq_du}) and (\ref{eq_dx}) gives
	\begin{align*}
	\left|T_4\right| +\left|T_5\right|  & \le   {2(1+\kappa_1) \over (1-\kappa_1(1+\kappa_1))^2}\cdot {p^2\over \u_j\u_{\ell}}\left(\sqrt{p\over \u_j}+\sqrt{p\over \u_{\ell}}\right)\sqrt{\log M \over nN}\left(\Theta_{j\ell} + |\wh \Theta_{j\ell} - \Theta_{j\ell}|\right),
	\\
	\left|T_6\right| & =  {p^2\over \u_j\u_{\ell}}|\wh \Theta_{j\ell} - \Theta_{j\ell}|.
	\end{align*}
	Collecting these bounds for  $T_4$, $T_5$ and $T_6$ and using (\ref{cond_rate_pN}) again yield
	\[
	|(\wh R - R)_{j\ell}| \le\left(1+\kappa_1\kappa_3\right){p^2 \over \u_j \u_{\ell}}|\wh \Theta_{j\ell} - \Theta_{j\ell}| +\kappa_3 {p^2\Theta_{j\ell} \over \u_j \u_{\ell}}\left(\sqrt{p\over \u_j}+\sqrt{p\over \u_{\ell}}\right)\sqrt{\log M \over nN}.
	\]	
	with 
	\[
	\kappa_3 = {2(1+\kappa_1) \over (1-\kappa_1-\kappa_1^2)^2}.
	\]
	This completes the proof of Proposition \ref{prop_sigma}.
\end{proof}

\smallskip

\begin{proof}[Proof of Corollary \ref{cor_theta_R}]
	It suffices to show the following on the event $\E$,
	\[
	|\wh \Theta_{j\ell} - \Theta_{j\ell}| = O\left(\sqrt{\log M \over nN}\right),~ {|\wh m_j -m_j|\over p} = O\left(\sqrt{\log M\over N}\right),~ {|\wh \u_j - \u_j| \over p}= O\left(\sqrt{\log M\over nN}\right),
	\]
	for all $j,\ell \in [p]$.
	Recall that the definitions (\ref{def_uag}). Since
	\begin{equation}\label{display1}
	{\u_j\over p} = {1\over n}\sum_{i =1}^n\M_{ji} \le \max_{1\le i\le n}\M_{ji} = {m_j\over p}   \le 1, \qquad \Theta_{j\ell} = {1\over n}\sum_{i =1}^n\M_{ji}\M_{\ell i}\le 1,
	\end{equation}
	for any $j,\ell \in [p]$, display (\ref{def_eta}) implies 
	$$
	\eta_{j\ell} \le 3\sqrt{6\log M \over nN} + {2\log M \over nN} + 31(1+\kappa_1)\sqrt{2(\log M)^4 \over nN^3} + \kappa_2  = O(\sqrt{\log M/ (nN)}).
	$$
	In addition, 
	\begin{align*}
	{|\wh \u_j - \u_j| \over p} &= {1\over n}\left|\sum_{i =1}^n(X_{ji}-\M_{ji})\right| = {1\over n}\left|\sum_{i =1}^n\eps_{ji}\right|\\ &\overset{\E}{\le}  2(1+\kappa_1)\sqrt{\u_j \log M \over npN}\\
	& \overset{(\ref{display1})}{\le} 2(1+\kappa_1)\sqrt{ \log M \over nN} = O\left(\sqrt{\log M\over nN}\right).
	\end{align*}
	Finally, we show ${|\wh m_j -m_j|/ p} = O(\sqrt{\log M / N})$. Fix any $j\in [p]$ and define 
	\[
	i^* := \argmax_{1\le i\le n}\M_{ji},\qquad i':= \argmax_{1\le i\le n} X_{ji}.
	\]
	Thus, from the definitions (\ref{def_uag}) and (\ref{est_m_u}), we have
	\begin{align*}
	{|\wh m_j -m_j| \over p}  &= |X_{ji'} - \M_{ji^*}| \\
	&\le  |X_{ji'} - \M_{ji'}| + \M_{ji^*} - \M_{ji'} \\
	&\le 2|X_{ji'} - \M_{ji'}| + |X_{ji^*} - M_{ji^*}| +X_{ji^*}- X_{ji'}  \\
	&\le 2|\eps_{ji'}|+ |\eps_{ji^*}|,
	\end{align*}
	from the definition of $i'$.
	
	From the proof of Lemma \ref{lem_t4}, we conclude, on the event $\S_{ji}$, that holds with probability at least $1-2M^{-3}$,
	\[
	{|\wh m_j -m_j| \over p} \le 2|\eps_{ji'}| + |\eps_{ji^*}| \overset{\S_{ji}}{\le} 3\sqrt{6\M_{ji^*} \log M \over N} + {6\log M \over N} \overset{(\ref{display1})}{=} O\left(\sqrt{\log M\over N}\right).
	\]
	This completes the proof.
\end{proof}
\medskip

The following corollary provides the expressions of $\delta_{j\ell}$ and $\eta_{j\ell}$ under condition (\ref{ass_signal_weak}). 

\begin{cor}\label{cor_delta}
	Under model (\ref{model}) and (\ref{ass_signal_weak}), with probability greater than $1- O(M^{-1})$,
	\[
	|\wh \Theta_{j\ell} - \Theta_{j\ell}| \le c_0\eta_{j\ell},\qquad |\wh R_{j\ell} - R_{j\ell}| \le c_1\delta_{j\ell},\quad \text{	for all $1\le  j,\ell \le p$}
	\]
	for some constant $c_0, c_1>0$, where
	\begin{align}\label{def_eta_wt}\nonumber
	\eta_{j\ell} & = \sqrt{\Theta_{j\ell}\log M\over nN}\sqrt{{m_j+m_\ell \over p} \vee {\log^2M \over N} }+{2(m_j+m_\ell)\over p}{ \log M \over nN}\\
	&\qquad + \sqrt{\log^4 M \over nN^3}\sqrt{{\u_j + \u_{\ell}\over p} \vee {\log M \over N}}
	\end{align}
	and 
	\begin{equation}\label{delta_wt}
	\delta_{j\ell} := {p^2\eta_{j\ell} \over \u_j \u_{\ell}} +{p^2\Theta_{j\ell}\over \u_j\u_{\ell}}\left(\sqrt{p \over \mu_j} + \sqrt{p\over \mu_\ell}\right) \sqrt{\log M \over nN}.
	\end{equation}
\end{cor}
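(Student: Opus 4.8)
The plan is to re-run the proof of Proposition \ref{prop_sigma} essentially verbatim, but to replace each invocation of the strong signal condition (\ref{ass_signal}) by the weaker (\ref{ass_signal_weak}) and to track precisely which of the term-absorption steps survive. The guiding observation is that (\ref{ass_signal}) enters that proof only through three separate absorptions, and that exactly two of them rely on the full strength of (\ref{ass_signal}); the two maxima appearing in (\ref{def_eta_wt}) are the residue left behind when those two absorptions are no longer available. Throughout I would work on the event $\E = \E_1\cap\E_2\cap\E_3$ of Appendix \ref{app_sec_A}, and first check it still has probability $1-O(M^{-1})$.

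First I would establish the weak-condition analogues of Lemmas \ref{lem_t1}--\ref{lem_t4}. The hypothesis of Lemma \ref{lem_t1} is $\u_{\min}/p\ge 2\log M/(3N)$, but its proof in fact yields $n^{-1}|\sum_i\eps_{ji}|\lesssim \sqrt{\u_j\log M/(npN)}+\log M/(nN)$ after Bernstein, and the second summand is absorbed into the first as soon as $\u_j/p\gtrsim\log M/(nN)$ --- call this absorption (A). Since (\ref{ass_signal_weak}) supplies exactly $\u_{\min}/p\ge c\log M/(nN)$ with $c$ large, (A) survives up to a constant, so Lemmas \ref{lem_t1} and \ref{lem_t2} hold unchanged (with new absolute constants). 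The genuinely new behaviour is in Lemma \ref{lem_t4}: its display (\ref{eq_TjiTelli}) produces the three pieces $36\Theta_{j\ell}(\log M/N)^2$, $16(\log M/N)^4$ and $24(\u_j+\u_\ell)(\log M)^3/(pN^3)$ inside $n^{-1}\sum_i T_{ji}^2T_{\ell i}^2$. Under (\ref{ass_signal}) the middle $(\log M/N)^4$ piece was discarded because $\u_{\min}/p\ge 2\log M/(3N)$ made it dominated by the third piece --- absorption (C), which requires $\u_j/p\gtrsim\log M/N$, a factor $n$ stronger than what (\ref{ass_signal_weak}) gives. I would therefore retain it, keeping an extra additive term of order $\sqrt{(\log M)^5/(nN^4)}$ in the bound for $T_3$.

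Feeding the resulting $T_1,T_2,T_3$ bounds into the decomposition (\ref{def_t1t2t3}) of $\wh\Theta_{j\ell}-\Theta_{j\ell}$, the only remaining use of (\ref{ass_signal}) was absorption (B): folding the $T_3$ term $\sqrt{\Theta_{j\ell}(\log M)^3/(nN^2)}$ into the $T_1$ term $(\sqrt{m_j/p}+\sqrt{m_\ell/p})\sqrt{\Theta_{j\ell}\log M/(nN)}$ by means of $m_{\min}/p\ge (3\log M)^2/N$ in (\ref{cond_rate_m_pN}), which is dropped entirely in (\ref{ass_signal_weak}). Without (B) I would instead combine these two terms through $\sqrt a+\sqrt b\asymp\sqrt{a\vee b}$, producing the factor $\sqrt{(m_j+m_\ell)/p\vee\log^2 M/N}$ --- the first maximum in (\ref{def_eta_wt}). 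Combining the retained middle term $\sqrt{(\log M)^5/(nN^4)}$ with the $T_3$ term $\sqrt{(\u_j+\u_\ell)(\log M)^4/(npN^3)}$ (the diagonal $T_2$ contribution being dominated by the latter) yields in the same way $\sqrt{(\u_j+\u_\ell)/p\vee\log M/N}$, the third-term maximum. The fixed remainder $\kappa_2=4M^{-3}$ is dominated by the always-present $\sqrt{(\log M)^5/(nN^4)}$ piece and is swallowed into the constant $c_0$, giving (\ref{def_eta_wt}).

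For $\wh R-R$ I would repeat the $T_4,T_5,T_6$ computation of Proposition \ref{prop_sigma} unchanged; its only input is $D_X^{-1}\asymp D_\M^{-1}$ entrywise, and the chain (\ref{eq_wtdd})--(\ref{eq_dx}) shows this holds whenever $\sqrt{p\log M/(\u_j nN)}$ is bounded by a small constant. Under (\ref{ass_signal_weak}) this quantity is at most $1/\sqrt c$, so for $c$ large one again obtains $n|(D_X^{-1})_{jj}|\le C\,p/\u_j$, and the argument closes with the new $\eta_{j\ell}$ substituted for the old, delivering (\ref{delta_wt}). The main obstacle, and the only real work, is the bookkeeping of the preceding paragraph: one must verify that absorption (A) at the scale $\log M/(nN)$ survives while absorptions (B) and (C), which need the full-strength scales $\log^2 M/N$ and $\log M/N$, both fail, and confirm that these two failures generate precisely the two $\vee$'s in (\ref{def_eta_wt}) with no further surviving terms.
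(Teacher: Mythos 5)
Your proposal is correct and follows essentially the same route as the paper: the paper's proof likewise redefines the high-probability event with the unabsorbed forms of the Lemma \ref{lem_t1} and Lemma \ref{lem_t4} bounds (retaining the $\sqrt{\log^5 M/(nN^4)}$ piece), reassembles $T_1,T_2,T_3$ so that the two failed absorptions surface as the two maxima in (\ref{def_eta_wt}), and closes the $\wh R-R$ bound by noting that (\ref{ass_signal_weak}) still makes $\sqrt{p\log M/(\u_j nN)}$ a small constant in (\ref{eq_wtdd})--(\ref{eq_dx}). Your bookkeeping of exactly which absorptions need the scales $\log M/(nN)$, $\log^2 M/N$ and $\log M/N$ is accurate and in fact more explicit than the paper's ``after a bit algebra.''
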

\begin{proof}
	We define the event $\E' :=  \E_1' \cap \E_2 \cap \E_3'$ with
	\begin{align*}
	\E_1' &= \bigcap_{j=1}^p \left\{{1\over n}\left|\sum_{i =1}^n \eps_{ji} \right| \lesssim \sqrt{\u_j\log M \over npN} \right\},\\\nonumber
	\E_2 &= \bigcap_{j,\ell=1}^p\left\{{1\over n}\left|\sum_{i =1}^n \M_{\ell i}\eps_{ji} \right| \le {\sqrt{6m_\ell\Theta_{j\ell}\log M\over npN}} + {2m_{\ell} \log M \over npN}\right\},\\\nonumber
	\E_3' &=  \bigcap_{j,\ell=1}^p \Bigg\{
	{1\over n}\left|\sum_{i =1}^n\left(\eps_{ji}\eps_{\ell i} - \EE[\eps_{ji}\eps_{\ell i}]\right) \right| \lesssim \sqrt{\Theta_{j\ell}\log^3 M \over nN^2} + \sqrt{\log^5 M \over nN^4} \\
	&\hspace{7cm}+\sqrt{(\mu_j + \mu_\ell)\log^4M \over npN^3}
	\Bigg\}.
	\end{align*}
	From  display (\ref{eq_lem_t1}) in the proof of Lemma \ref{lem_t1} and condition (\ref{ass_signal_weak}), one has $\PP(\E_1') \ge 1-O(M^{-1})$. Further invoking Lemma \ref{lem_t2} and (\ref{eq_T2}) -- (\ref{eq_TjiTelli})  in Lemma \ref{lem_t4} yields $\PP(\E') \ge  1-O(M^{-1})$. 
	
	We proceed to work on $\E'$. The bound of $|\wh \Theta_{j\ell} - \Theta_{j\ell}|$ requires upper bounds of $T_1$, $T_2$ and $T_3$ defined in (\ref{def_t1t2t3}). From (\ref{eq_t1}) -- (\ref{eq_t3}) and by invoking $\E'$, after a bit algebra, we have 
	\begin{align*}
	|\wh \Theta_{j\ell} - \Theta_{j\ell}| &\lesssim  \sqrt{\Theta_{j\ell}\log M\over nN}\sqrt{{\log^2 M \over N} \vee {m_j + m_\ell \over p}} +{(m_j+m_\ell) \log M \over npN} \\
	&\qquad + \sqrt{\log^4M \over nN^3}\sqrt{{\log M \over N} \vee {\mu_j + \mu_{\ell} \over p}}.
	\end{align*}
	Finally, the bound of $|\wh R_{j\ell} - R_{j\ell}|$ follows from the same arguments in the proof of Proposition \ref{prop_sigma} by invoking condition (\ref{ass_signal_weak}) instead of (\ref{cond_rate_pN}).

\end{proof}

\subsection{Proofs of Theorem \ref{thm_anchor}}
We start by stating and proving the following lemma which is crucial for the proof of Theorem \ref{thm_anchor}.  Recall 
\[
J_1^a:=\{j\in [p]: \wt A_{ja} \ge 1-4\delta/\nu\},\qquad \text{for all }a\in [K].
\]
Let 
\[
\wh a_i = \argmax_{1\le j\le p}\wh R_{ij},\qquad \text{for any $i\in [p]$.}
\]
\begin{lemma}\label{lem_anchor}
	Under the conditions in Theorem \ref{thm_anchor}, for any $i\in I_a$ with some $a\in [K]$, the following inequalities hold on the event $\E$:
	\begin{eqnarray}
	\label{ineq1}
	\Bigl|\wh R_{ij} - \wh R_{ik}\Bigr| &\le& \delta_{ij}+\delta_{ik}, \qquad\text{ for all }j, k\in I_a;\\
	\label{ineq2}
	\wh R_{ij} - \wh R_{ik} &>& \delta_{ij}+\delta_{ik} , \qquad\text{ for all }j\in I_a, k\notin I_a\cup J_1^a;\\\label{ineq3}
	\wh R_{ij}- \wh R_{ik}&<& \delta_{ij}+\delta_{ik}, \qquad\text{ for all }j\in J_1^a \text{ and }k\in I_a.
	\end{eqnarray}
	For any $i\in J_1^a$, we have
	\begin{equation}\label{ineq4}
	\wh R_{i\wh a_i} - \wh R_{ij} \le \delta_{i\wh a_i} + \delta_{ij},\qquad \text{for any }j\in I_a.
	\end{equation} 
\end{lemma}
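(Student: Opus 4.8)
The plan is to prove each of the four inequalities by first reducing it to a comparison between the corresponding \emph{population} entries of $R = \wt A \wt C \wt A^T$, and then absorbing the stochastic fluctuation through the event $\E$, on which $|\wh R_{j\ell} - R_{j\ell}| \le \delta_{j\ell}$ for all $j,\ell$. Two structural facts drive everything. First, for an anchor word $j \in I_a$ the row $\wt A_{j\cdot}$ is the canonical vector $e_a$, so for any $i$,
\[
R_{ij} = \wt A_{i\cdot}\, \wt C\, e_a = \sum_{\ell=1}^K \wt A_{i\ell}\, \wt C_{\ell a},
\]
and in particular $R_{ij} = \wt C_{aa}$ whenever $i,j \in I_a$ (Lemma \ref{lem1a}). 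Second, the separation property equivalent to Assumption \ref{ass_w}, namely $\wt C_{aa} - \wt C_{a\ell} \ge \nu > 0$ for every $\ell \ne a$, quantifies how strongly an anchor column of topic $a$ dominates.

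With these in hand, (\ref{ineq1})--(\ref{ineq3}) are short. For (\ref{ineq1}), $i,j,k \in I_a$ give $R_{ij} = R_{ik} = \wt C_{aa}$, so $|R_{ij}-R_{ik}| = 0$ and the triangle inequality on $\E$ yields the bound. For (\ref{ineq2}), using $R_{ij} = \wt C_{aa}$ and $R_{ik} = \sum_\ell \wt C_{a\ell}\wt A_{k\ell}$, I would write
\[
R_{ij} - R_{ik} = \sum_{\ell \ne a}\bigl(\wt C_{aa} - \wt C_{a\ell}\bigr)\wt A_{k\ell} \ \ge\ \nu\bigl(1 - \wt A_{ka}\bigr) \ >\ 4\delta,
\]
where the last step uses $k \notin I_a \cup J_1^a$, hence $\wt A_{ka} < 1 - 4\delta/\nu$; on $\E$ this forces $\wh R_{ij} - \wh R_{ik} > 4\delta - (\delta_{ij}+\delta_{ik}) \ge 2\delta \ge \delta_{ij}+\delta_{ik}$. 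For (\ref{ineq3}), the same identity gives $R_{ij} - R_{ik} = -\sum_{\ell \ne a}(\wt C_{aa}-\wt C_{a\ell})\wt A_{j\ell} \le 0$, and strictly negative because $j \notin I$ makes $\sum_{\ell \ne a}\wt A_{j\ell} > 0$; on $\E$ this gives $\wh R_{ij} - \wh R_{ik} < \delta_{ij}+\delta_{ik}$.

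The real work is (\ref{ineq4}). Writing $g_m := \sum_\ell \wt A_{i\ell}\wt C_{\ell m} = (\wt C \wt A_{i\cdot}^{T})_m$, I would observe that for any column $k$, $R_{ik} = \sum_m \wt A_{km} g_m$ is a convex combination of the $g_m$ (the rows of $\wt A$ are probability vectors), so $\max_k R_{ik} = \max_m g_m$, attained at the anchor words of topic $\arg\max_m g_m$. Since $R_{ij} = g_a$ for $j \in I_a$, it suffices to prove $g_a = \max_m g_m$: then $R_{i\wh a_i} \le \max_k R_{ik} = g_a = R_{ij}$, and on $\E$ one gets $\wh R_{i\wh a_i} - \wh R_{ij} \le R_{i\wh a_i} - R_{ij} + \delta_{i\wh a_i} + \delta_{ij} \le \delta_{i\wh a_i}+\delta_{ij}$. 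To compare $g_a$ with any $g_b$, I would split off the topic-$a$ coordinate,
\[
g_a - g_b = \bigl(\wt C_{aa} - \wt C_{ab}\bigr)\wt A_{ia} + \sum_{\ell \ne a}\bigl(\wt C_{a\ell} - \wt C_{b\ell}\bigr)\wt A_{i\ell} \ \ge\ \nu\,\wt A_{ia} - \|\wt C\|_{\infty}\bigl(1 - \wt A_{ia}\bigr),
\]
using $\wt C_{aa}-\wt C_{ab} \ge \nu$ and $0 \le \wt C_{b\ell} \le \|\wt C\|_{\infty}$. Since $i \in J_1^a$ means $\wt A_{ia} \ge 1 - 4\delta/\nu$, the right-hand side is at least $\nu - 4\delta - 4\delta\|\wt C\|_{\infty}/\nu$.

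The hard part is showing this last quantity is positive, which at first glance looks too weak, since the first branch of (\ref{ass_nu}) gives only $\nu > 4\delta$. The observation that closes the gap is that $\nu \le \|\wt C\|_{\infty}$ always holds, because $\nu \le \wt C_{ii} \wedge \wt C_{jj} \le \max_k \wt C_{kk} \le \|\wt C\|_{\infty}$ (the entries of the Gram matrix $\wt C$ are nonnegative). Combined with the second branch $\nu > 2\sqrt{2\|\wt C\|_{\infty}\delta}$ of (\ref{ass_nu}), i.e. $\nu^2 > 8\delta\|\wt C\|_{\infty}$, this yields $\nu^2 > 8\delta\|\wt C\|_{\infty} \ge 8\delta\nu$, hence $\nu > 8\delta$. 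Then both $4\delta\nu < \nu^2/2$ and $4\delta\|\wt C\|_{\infty} < \nu^2/2$, so $\nu^2 - 4\delta\nu - 4\delta\|\wt C\|_{\infty} > 0$ and therefore $g_a > g_b$ for every $b \ne a$, completing (\ref{ineq4}). I expect the interplay between $\nu$ and $\|\wt C\|_{\infty}$ here---rather than any single branch of (\ref{ass_nu}) used in isolation---to be the one genuinely delicate point of the argument.
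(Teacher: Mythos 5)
Your proposal is correct and follows essentially the same route as the paper's proof: (\ref{ineq1})--(\ref{ineq3}) via $R_{ij}=\wt C_{aa}$ for anchor pairs, the expansion $R_{ij}-R_{ik}=\sum_{\ell\ne a}(\wt C_{aa}-\wt C_{a\ell})\wt A_{k\ell}\ge\nu(1-\wt A_{ka})$ together with the definition of $J_1^a$, and (\ref{ineq4}) by splitting off the topic-$a$ coordinate and invoking the second branch of (\ref{ass_nu}) combined with $\nu\le\|\wt C\|_\infty$ (the paper uses the equivalent bound $\nu\le\wt C_{cc}$ and $\nu^2\ge 8\delta\wt C_{cc}$ in its inequality $(\ast\ast)$). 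Your packaging of (\ref{ineq4}) through $g_m=(\wt C\wt A_{i\cdot}^T)_m$ and the convex-combination observation is only a cosmetic variant of the paper's comparison of $\sum_b\wt A_{ib}\wt C_{ba}$ with $\sum_b\wt A_{ib}\wt C_{bc}$.
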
 
\begin{proof}
	We work on the event $\E$ so that, in particular,
	$|\wh R_{j\ell} - R_{j\ell }| \le \delta_{j\ell}$ for all $1\le j,\ell \le p$.\\ To prove (\ref{ineq1}), fix $i\in I_a$ and $j,k\in I_a$ with some $a\in [K]$. Since  $R = \wt A \wt C\wt A^T$, we have $R_{ij} = R_{ik}= \wt C_{aa}$ and 
	\[
	|\wh R_{ij} - \wh R_{ik}| \le |\wh R_{ij}-R_{ij}| + |\wh R_{ik}-R_{ik}|\overset{\E}{\le} \delta_{ij}+\delta_{ik}.
	\]
	To prove (\ref{ineq2}), fix $i, j\in I_a$ and $k\in [p]\setminus I_a$. On the one hand, 
	\begin{equation}\label{left}
	\wh R_{ik} \overset{\E}{\le} 
	\sum_{b=1}^K\wt A_{kb}\wt C_{ab}+ \delta_{ik} \overset{(\ref{def_nu2})}{\le} \wt A_{ka}\wt C_{aa}+(1-\wt A_{ka})(\wt C_{aa}-\nu)+\delta_{ik} =\wt C_{aa}-(1-\wt A_{ka})\nu+\delta_{ik}.
	\end{equation}
	On the other hand, $i,j\in I_a$ implies $R_{ij} = \wt C_{aa}$. Thus, on the event $\E$, (\ref{left}) gives
	\[
	\wh R_{ij} - \wh R_{ik} \overset{\E}{\ge} (1-\wt A_{ka})\nu - \delta_{ij}-\delta_{ik}.
	\]
	If $\wt A_{ka} = 0$, using (\ref{def_delta}) and $\nu > 4\delta$ gives
	the desired result. If $\wt A_{ka} >0$, from the definition of $J_1^a$, we have $\wt A_{ka} < 1-4\delta/\nu$ which finishes the proof by using (\ref{def_delta}) again.\\ To prove (\ref{ineq3}), observe that, for any $j\in J_1^a$ and $k\in I_a$, 
	\begin{equation*}
	\wh R_{ij} \overset{(\ref{left})}{\le} \wt C_{aa} - (1-\wt A_{ja})\nu + \delta_{ij} < \wt C_{aa}+\delta_{ij} =  R_{ik} +\delta_{ij}\overset{\E}{\le} \wh R_{ik}+\delta_{ij}+\delta_{ik}.
	\end{equation*}
	It remains to
	show (\ref{ineq4}). For any $i\in J_1^a$ and $j\in I_a$, we have, for some $c\in[K]$,
	\begin{align*}
	\wh R_{i\wh a_i} \overset{\E}{\le}  \max_{k\in[p]}R_{ik}+\delta_{i\wh a_i}
	&\overset{(\ast)}{ \le } \sum_{b=1}^K\wt A_{ib}\wt C_{bc}+\delta_{i\wh a_i}\\
	&\overset{(\ast\ast)}{\le} \sum_{b=1}^K\wt A_{ib}\wt C_{ba}+\delta_{i\wh a_i} \\
	&= R_{ij}+\delta_{i\wh a_i}  \overset{\E}{\le} \wh R_{ij}+\delta_{ij}+\delta_{i\wh a_i}.
	\end{align*}
	Inequality $(\ast)$ holds since 
	\[
	\max_{k\in[p]} R_{ik} = \max_{k\in[p]}\sum_{b =1}^K\wt A_{kb}\left( \sum_{a=1}^K\wt A_{ia}\wt C_{ab}\right) \le 
	\max_{k\in[p]}\max_{b\in[K]}\sum_{a=1}^K\wt A_{ia}\wt C_{ab} =  \sum_{a=1}^K\wt A_{ia}\wt C_{ac}.
	\]
	Inequality $(\ast\ast)$ holds, since, 
	for any $c\ne a$,  we have
	\begin{equation*}
	\sum_{b=1}^K\wt A_{ib}\wt C_{bc} \le \wt A_{ia}\wt C_{ac}+(1-\wt A_{ia})\wt C_{cc}\overset{(\ref{def_nu2})}{\le} \wt A_{ia} (\wt C_{aa}-\nu)+(1-\wt A_{ia})\wt C_{cc},
	\end{equation*} and
	\begin{equation*}
	\sum_{b=1}^K \wt A_{ib} \wt C_{ab} \overset{(\ref{def_nu2})}{\ge}\wt A_{ia}\wt C_{aa}.
	\end{equation*}
	
	\begin{equation*}
	\sum_{b=1}^K\wt A_{ib}\wt C_{ab} - \sum_{b=1}^K\wt A_{ib}\wt C_{bc} \ge \wt A_{ia}\nu - (1-\wt A_{ia})\wt C_{cc} > \nu - 2(1-\wt A_{ia})\wt C_{cc}.	\end{equation*}
	The term on the right is positive, since condition (\ref{ass_nu}) guarantees that
	\begin{equation*}
	\nu \ge \frac{8\delta}{\nu}\wt C_{cc}\ge 2(1-\wt A_{ia})\wt C_{cc},
	\end{equation*}
	where the last inequality is due to the definition of $J_1$. This concludes the proof.  
\end{proof}
Lemma \ref{lem_anchor} remains valid under the conditions of Corollary \ref{cor_anchor} in which case $J_1 = \emptyset$
and we only need $\nu > 4\delta$ to prove (\ref{ineq2}).

\begin{proof}[Proof of Theorem \ref{thm_anchor}]
	We work on the event $\E$ throughout the proof.
	Without loss of generality, we assume that the label permutation $\pi$ is the identity. We start by presenting three claims which are sufficient to prove the result.  Let $\wh I^{(i)}$ be defined in step 5 of Algorithm \ref{alg1} for any $i\in [p]$.
	\begin{enumerate}
		\item[(1)] For any $i\in J\setminus J_1$, we have $\wh I^{(i)} \notin \wh \I$.
		\item[(2)] For any $i \in I_a$ and $a\in [K]$, we have $i\in \wh I^{(i)}$, $I_a\subseteq \wh I^{(i)}$ and $\wh I^{(i)}\bl I_a \subseteq J_1^a$.
		\item[(3)] For any $i\in J_1^a$ and $a\in [K]$, 
		we have $I_a \subseteq \wh I^{(i)}$.
	\end{enumerate}
	If we can prove these claims, then (1) and the \textsc{Merge} step in Algorithm \ref{alg1} guarantees that $\wh I\cap(J\setminus J_1)    =\emptyset$ and thus enable us to focus on $i\in I\cap J_1$. For any $a\in [K]$, (2) implies that there exists $i\in I_a$ such that $I_a\subseteq \wh I_{a}$ and $\wh I_{a} \bl I_a \subseteq J_1^a$ with $\wh I^{(i)} := \wh I_a$. Finally, (3) guarantees that none of anchor words will be excluded by any $i\in J_1$ in the \textsc{Merge} step. Thus, $\wh K = K$ and $\wh \I = \{\wh I_1,\ldots,\wh I_K\}$ is the desired partition. Therefore, we proceed to prove (1) - (3) in the following. Recall that $\wh a_i := \argmax_{1\le j\le p} \wh R_{ij}$ for all $1\le i\le p$.
	
	To prove (1), let $i\in J\setminus J_1$ be fixed. We first prove that $\wh I^{(i)} \notin \wh \I$ when $\wh I^{(i)} \cap I \ne \emptyset$.  From steps 8 - 10 of Algorithm \ref{alg1}, it suffices to show that, there exists $j\in \wh I^{(i)}$ such that the following does not hold for any $k\in \wh a_j$: 
	\begin{equation}\label{keyeq2}
	\left|\wh R_{ij} - \wh R_{jk}\right|\le  \delta_{ij}+\delta_{jk}.
	\end{equation}
	Let $\wh I^{(i)} \cap I \ne \emptyset$ such that there exists $j\in I_b\cap \wh I^{(i)}$ for some $b\in [K]$. For this $j$, we have $R_{ij}=\sum_{a} \wt A_{ia} \wt C_{ab}$ and
	\begin{equation}\label{left2}
	\wh R_{ij} \overset{(\ref{left})}{\le} \wt C_{bb}-(1-\wt A_{ib})\nu+\delta_{ij}.
	\end{equation}
	On the other hand, for any $k\in \wh a_j$ and $k'\in I_b$, using the definition of $\wh a_j$ gives
	\begin{equation}\label{right}
	\wh R_{jk} \ge \wh R_{jk'} \overset{\E}{\ge} R_{jk'}+\delta_{jk'} = \wt C_{bb} - \delta_{jk'}.
	\end{equation}
	Combining (\ref{left2}) with (\ref{right}) gives
	$$
	\wh R_{jk} - \wh R_{ij} \ge (1- \wt A_{ib})\nu - \delta_{ij} - \delta_{jk'}.
	$$ 
	The definition of $J_1$ and (\ref{def_delta}) with $\nu> 4\delta$ give
	\[
	\wh R_{jk} - \wh R_{ij} > \delta_{jk} + \delta_{ij}.
	\]
	This shows that for any $i\in J\setminus J_1$, if $\wh I^{(i)}\cap I \ne \emptyset$, $\wh I^{(i)} \notin \wh \I$. Therefore, to complete the proof of (1), we show that $\wh I^{(i)} \cap I = \emptyset$ is impossible if $i\in J\setminus J_1$.
	For fixed $i\in J\setminus J_1$ and $j\in \wh a_i$, we have
	\begin{equation*}
	R_{ij}  = \sum_{b} \sum_{a}\wt A_{ia}\wt A_{jb}\wt C_{ab}
	\le  \max_{1\le b\le K} \sum_{a}\wt A_{ia}\wt C_{ab}  = \sum_{a}\wt A_{ia}\wt C_{ab^*}= R_{ik}
	\end{equation*}
	for some $b^*$ and any $k\in I_{b^*}$. 
	Therefore,  
	\begin{equation*}
	\wh R_{ij} - \wh R_{ik }  \overset{\E}{ \le } R_{ij}- R_{ik} +\delta_{ij}+\delta_{ik} \le  \delta_{ij}+\delta_{ik}
	\end{equation*}
	On the other hand, assume $\wh I^{(i)} \cap I=\emptyset$. Since $k \in I_{b^*}$, we know $k\notin \wh I^{(i)}$, which implies 
	\begin{equation*}
	\wh R_{ij} - \wh R_{ik} > \delta_{ij}+\delta_{ik},
	\end{equation*}
	from step 5 of Algorithm \ref{alg1}. The last two displays contradict each other, and
	we conclude that, for any $i\in J\setminus J_1$, $\wh I^{(i)} \cap I \ne \emptyset$. This completes the proof of (1).
	
	From (\ref{ineq2}) in Lemma \ref{lem_anchor}, given step 5 of Algorithm \ref{alg1}, we know that, for any $j\in \wh I^{(i)}$, $j\in I_a\cup J_1^a$. Thus, we write	$\wh I^{(i)} = (\wh I^{(i)} \cap I_a) \cup (\wh I^{(i)}\cap J_1^a)$. For any $j \in \wh I^{(i)} \cap I_a$, by the same reasoning, $\wh a_j$ is either $I_a$ or $J_1^a$. For both cases, since $i,j\in I_a$ and $i\ne j$, (\ref{ineq1}) and (\ref{ineq4}) in Lemma \ref{lem_anchor} guarantee that (\ref{keyeq2}) holds. On the other hand, for any $j\in \wh I^{(i)} \cap J_1^a$, (\ref{ineq3}) in Lemma \ref{lem_anchor} implies that (\ref{keyeq2}) still holds. Thus, we have shown that, for any $i\in I_a$, $i\in \wh I^{(i)}$. To show $I_a\subseteq \wh I^{(i)}$, let any $j\in I_a$ and observe that  $\wh a_i$ can only be in $I_a\cup J_1^a$. In both cases, (\ref{ineq1}) and (\ref{ineq4}) imply $j\in \wh I^{(i)}$. Thus, $I_a\subseteq \wh I^{(i)}$. Finally, $\wh I^{(i)}\bl I_a\subseteq J_1^a$ follows immediately from (\ref{ineq2}).
	
	We conclude the proof by noting that (3) directly follows from (\ref{ineq3}). 
\end{proof}

\section{Proofs of Section \ref{sec_A}}
\subsection{Proofs of Lower bounds in Section \ref{sec_lowerbound}}
\begin{proof}[Proof of Theorem \ref{thm_lb}]
	We first show the result of the matrix $\ell_1$ norm.  Let 
	\begin{equation}\label{def_A0}
	A^{(0)} = \begin{bmatrix}
	e_1 \otimes \bm{1}_{g_1} &  e_2 \otimes  \bm{1}_{g_2} & \cdots & e_K \otimes  \bm{1}_{g_K}\\
	\bm{1}_{|J|} & \bm{1}_{|J|} & \cdots & \bm{1}_{|J|}
	\end{bmatrix}\times \begin{bmatrix}
	{1\over g_1 + |J|} & &\\
	& \ddots & \\
	& &  {1\over g_K + |J|} 
	\end{bmatrix}
	\end{equation}
	with $g_k := |I_k|$ for any $1\le k \le K$ and $|g_k - g_{k+1}|\le 1$. We use $e_k$ to denote the canonical basis vectors in $\RR^K$ and use $\bm{1}_d$ and $\otimes$  to denote, respectively, the $d$-dimensional vector with entries equal to $1$ and the kronecker product.  
	We start by constructing a set of ``hypotheses'' of $A$. Assume $g_k + |J|$ is even for $1\le k\le K$. Let 
	\[
	M := \{0,1\}^{(|I|+|J|K)/2}.
	\]
	Following the Varshamov-Gilbert bound in Lemma 2.9 in \cite{np_sasha}, there exists $w^{(j)}\in M$ for $j=0,1,\ldots, T$, such that 
	\begin{equation}\label{eq_w}
	\left\|w^{(i)} - w^{(j)}\right\|_1 \ge {|I|+K|J|\over 16},\quad \text{for any } 0\le i\ne j\le T,
	\end{equation}
	with $w^{(0)} = 0$ and	
	\begin{equation}\label{eq_T}
	\log (T) \ge {\log (2) \over 16}(|I|+K|J|).
	\end{equation}
	For each $w^{(j)} \in \RR^{(|I|+K|J|)/2}$, we divide it into $K$ chunks as
	$
	w^{(j)} = \left(w^{(j)}_1,w^{(j)}_2, \ldots, w^{(j)}_K \right)
	$ 
	with $w^{(j)}_k \in \RR^{(g_k + |J|)/2}$. 
	For each $w^{(j)}_k$, we write $\wt w^{(j)}_k \in \RR^{p}$ as its augumented counterpart such that 
	$\wt w^{(j)}_k(S_k) = [w^{(j)}_k, - w^{(j)}_k]$ and $\wt w^{(j)}_k(\ell) = 0$ for any $\ell \notin S_k$, where $S_k := \textrm{supp}(A^{(0)}_k)$.
	For $1\le j\le T$, we choose $A^{(j)}$ as 
	\begin{equation}\label{eq_Aj}
	A^{(j)} = A^{(0)}+\gamma\begin{bmatrix}
	\wt w^{(j)}_1 & \cdots & \wt w^{(j)}_K
	\end{bmatrix}
	\end{equation}
	with 
	\begin{equation}\label{eq_gamma}
	\gamma = \sqrt{\log (2)\over 16^2c^2(1+c_0)}\sqrt{ K^2 \over nN(|I|+ K|J|)}
	\end{equation}
	for some constant $c_0, c>0$. Under $|I| + K|J| \le c(nN)$, it is easy to verify that $A^{(j)} \in \A(K, |I|, |J|)$ for all $0\le j\le T$.
	
	In order to apply Theorem 2.5 in \cite{np_sasha},  we need to check the following  conditions:
	\begin{itemize}
		\itemsep0.5em
		\item[(a)]  $\KL(\PP_{A^{(j)}},\PP_{A^{(0)}}) \le \log(T)/16$, for each $i= 1,\ldots, T$.
		\item[(b)] $L_{1}\left(A^{(i)},A^{(j)}\right) \ge c_1K\sqrt{(|I|+K|J|)/(nN)}$, for $0\le i<j\le T$ and some positive constant $c_1$.
		\item[(c)] $L_{1}(\ \cdot\ )$ satisfies the triangle inequality. 
	\end{itemize}
	The expression of Kullback-Leibler divergence between two multinomial distributions is shown in  \cite[Lemma 6.7]{Tracy}. For   completeness, we include it here.
	\begin{lemma}[Lemma 6.7 \cite{Tracy}]\label{lem_KL}
		Let $D$ and $D'$ be two $p\times n$ matrices such that each column of them is a weight vector. Under model (\ref{model}), let $\PP$ and $\PP'$ be the probability measures associated with $D$ and $D'$, respectively. Suppose $D$ is a positive matrix. Let 
		$$\eta = \max_{1\le j\le p, 1\le i\le n}{|D'_{ji} - D_{ji}| \over D_{ji}}$$ and assume $\eta <1$. There exists a universal constant $c_0>0$ such that 
		\[
		\KL(\PP',\PP) \le (1+c_0\eta)N\sum_{i =1}^n\sum_{j =1}^p{|D'_{ji} - D_{ji}|^2 \over D_{ji}}.
		\]
	\end{lemma}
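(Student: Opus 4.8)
The plan is to exploit the product structure of $\PP$ and $\PP'$ to reduce the bound to a statement about individual weight vectors, and then to control the resulting Kullback--Leibler divergence by a second-order Taylor expansion. Since the $n$ documents are drawn independently, both measures factorize across $i\in[n]$, so that $\KL(\PP',\PP)=\sum_{i=1}^n\KL\bigl(\mathrm{Mult}_p(N,D'_{\cdot i}),\,\mathrm{Mult}_p(N,D_{\cdot i})\bigr)$. For a single document I would write the log-likelihood ratio of two multinomials, in which the multinomial coefficients cancel, as $\sum_{j=1}^p Y_{ji}\log(D'_{ji}/D_{ji})$, and take expectation under $\PP'$, using $\EE_{\PP'}[Y_{ji}]=ND'_{ji}$, to obtain the exact identity $\KL\bigl(\mathrm{Mult}_p(N,D'_{\cdot i}),\mathrm{Mult}_p(N,D_{\cdot i})\bigr)=N\sum_{j=1}^p D'_{ji}\log(D'_{ji}/D_{ji})$. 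It then suffices to bound each inner sum by $(1+c_0\eta)\sum_{j}(D'_{ji}-D_{ji})^2/D_{ji}$ and sum over $i$.

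Next, fixing a column $i$ and setting $x_j:=(D'_{ji}-D_{ji})/D_{ji}$, so that $D'_{ji}=D_{ji}(1+x_j)$ with $|x_j|\le\eta<1$ (positivity of $D$ together with $\eta<1$ guaranteeing $1+x_j>0$), the quantity of interest becomes $\sum_j D_{ji}\,g(x_j)$ where $g(x):=(1+x)\log(1+x)$. A second-order Taylor expansion about $0$, using $g(0)=0$, $g'(0)=1$ and $g''(t)=1/(1+t)$, gives $g(x)=x+\tfrac12 x^2/(1+\xi)$ for some $\xi$ between $0$ and $x$; since $|\xi|\le\eta$, the remainder is at most $x^2/\bigl(2(1-\eta)\bigr)$.

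The decisive step is the cancellation of the linear term: because $D_{\cdot i}$ and $D'_{\cdot i}$ are both weight vectors, $\sum_j D_{ji}x_j=\sum_j(D'_{ji}-D_{ji})=1-1=0$, which is precisely where the unit-column-sum constraint (\ref{col_sum_one}) enters. Hence $\sum_j D_{ji}g(x_j)\le \tfrac{1}{2(1-\eta)}\sum_j D_{ji}x_j^2=\tfrac{1}{2(1-\eta)}\sum_j(D'_{ji}-D_{ji})^2/D_{ji}$. Finally I would absorb the prefactor into the stated form: since $\eta$ stays bounded below $1$ in the regime of interest (indeed $\eta\to0$ in the application to Theorem \ref{thm_lb}), one has $\tfrac{1}{2(1-\eta)}\le 1+c_0\eta$ for a universal $c_0$, and multiplying by $N$ and summing over $i\in[n]$ completes the argument.

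I expect the main obstacle to be the careful handling of the Taylor remainder together with the bookkeeping of the constant $c_0$: the naive pointwise bound would leave an uncontrolled first-order contribution, and it is only the vanishing of $\sum_j D_{ji}x_j$ that upgrades the estimate to the sharp $\chi^2$-type expression. One must also keep track of the hypothesis $\eta<1$, which is used twice---to ensure $D'_{ji}>0$ and to bound $1/(1+\xi)$ away from zero in the remainder.
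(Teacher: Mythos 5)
First, note that the paper does not actually prove this statement: it is imported verbatim as Lemma 6.7 of \cite{Tracy} and used as a black box in the proof of Theorem \ref{thm_lb}, so there is no in-paper argument to compare yours against. Your overall strategy is the standard (and surely the intended) one: tensorize over the independent documents, use the exact identity $\KL(\mathrm{Mult}_p(N,D'_{\cdot i}),\mathrm{Mult}_p(N,D_{\cdot i}))=N\sum_j D'_{ji}\log(D'_{ji}/D_{ji})$, substitute $D'_{ji}=D_{ji}(1+x_j)$, and exploit $\sum_j D_{ji}x_j=0$ (the unit-column-sum constraint) to kill the linear term, leaving only a second-order quantity. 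All of that is correct, and you are right that the cancellation of the linear term is the decisive step.

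There is, however, a genuine gap in the last step. The inequality $\tfrac{1}{2(1-\eta)}\le 1+c_0\eta$ with a \emph{universal} $c_0$ is false: as $\eta\uparrow 1$ the left side diverges while the right side stays bounded by $1+c_0$. The lemma is stated under the sole hypothesis $\eta<1$, so appealing to ``$\eta$ bounded below $1$ in the regime of interest'' proves only a weaker statement (valid on $\eta\le 1-\epsilon$ for fixed $\epsilon$), not the lemma as written. The source of the problem is the Lagrange form of the remainder, which bounds $g''(\xi)=1/(1+\xi)$ crudely by $1/(1-\eta)$. The fix is to bound $h(x):=(1+x)\log(1+x)-x$ directly and uniformly on $(-1,\infty)$: for $x>0$, $h(x)=\int_0^x\log(1+t)\,dt\le x^2/2$, and for $x=-s$ with $s\in(0,1)$, $h(-s)=\sum_{k\ge 2}\tfrac{s^k}{k(k-1)}\le \tfrac{s^2}{2}+\tfrac{s^3}{2}\le\tfrac{1+s}{2}\,s^2$ (one can also check $h(x)\le x^2$ outright, since $\phi(s)=s^2-h(-s)$ vanishes at $s=0$ and $s=1$ and is concave--convex in between). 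Hence $\sum_j D_{ji}h(x_j)\le \tfrac{1+\eta}{2}\sum_j D_{ji}x_j^2\le(1+c_0\eta)\sum_j (D'_{ji}-D_{ji})^2/D_{ji}$ uniformly over $\eta<1$, which yields the stated bound after multiplying by $N$ and summing over $i$. With this replacement your proof is complete; everything else, including the two uses of $\eta<1$ that you flag, is in order.
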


	Fix $1\le j\le T$ and choose $D^{(j)} = A^{(j)}W$
	with 
	\begin{equation*}
	W \in\RR_+^{K\times n}=  W^0 + {1\over nN}\bm{1}_K\bm{1}_K^T - {K\over nN}\bm{I}_K
	\end{equation*}
	where $W^0$ is defined in (\ref{def_W0}).
	The above choice of $W$ perturbs $W^0$ to avoid $D_{ji} \ne 0$ for any $j\in I$ and $i\in [n]$, in the presence of anchor words. Since there exists some large enough constant $c>0$ such that $g_k \le c|I|/K$, it then follows that
	\begin{equation}\label{eq_d0}
	D^{(0)}_{\ell i} = \sum_{k =1}^KA^{(0)}_{\ell k} W_{ki} = \left\{\begin{array}{ll}
	W_{ki} / (g_k+|J|) \ge cW_{ki}/ (|I|/K + |J|)& \text{ if }\ell \in I_k, k\in [K]\\
	\sum_k W_{ki} / (g_k+|J|)\ge c/(|I| / K+|J|) & \text{ if }\ell \in J
	\end{array}\right.
	\end{equation}
	for any $1\le i\le n$, where we also use  $\sum_k W_{ki} = 1$.
	Similarly, we have 
	\begin{equation}\label{eq_djd0}
	\left|D^{(j)}_{\ell i}- D^{(0)}_{\ell i}\right| =   \gamma\left|\sum_{k =1}^K \wt w^{(j)}_k(\ell)W_{ki}\right| \le
	\left\{\begin{array}{ll}
	W_{ki} \gamma& \text{ if }\ell \in I_k, k\in [K]\\
	\gamma& \text{ if }\ell \in J
	\end{array}\right.
	\end{equation}
	for all $1\le i\le n$, 
	where $\wt w^{(j)}_k(\ell)$ denotes the $\ell$th element of $\wt w^{(j)}_k$. Thus, by choosing proper $c_0$ in (\ref{eq_gamma}) and using $|I| + K|J| \le c(nN)$, we have
	$$
	\max_{1\le \ell\le p, 1\le i\le n}{|D^{(j)}_{\ell i} - D_{\ell i}| \over D_{\ell i}} \le c\gamma (|I|/K + |J|) < 1, 
	$$
	for any $1\le j\le T$. Invoking Lemma \ref{lem_KL} gives 
	\begin{align*}
	\KL(\PP_{A^{(j)}}, \PP_{A^{(0)}}) &~\le~ \left(1+c_0\right)N\sum_{i =1}^n\sum_{\ell =1}^p{|D^{(j)}_{\ell i} - D^{(0)}_{\ell i}|^2 \over D^{(0)}_{\ell i}}\\
	&~\le ~
	c\left(1+c_0\right)N\left({|I|\over K}+|J|\right)\sum_{i =1}^n\left\{\gamma^2|J| +\sum_{k=1}^K\gamma^2g_kW_{ki}\right\}.\\
	&~\le~ c\left(1+c_0\right)nN\left(|I|+K|J| \right)^2\gamma^2/K^2\\
	&\overset{(\ref{eq_T})}{\le} {1\over 16}\log T, 
	\end{align*}
	where the second inequality uses (\ref{eq_d0}) and (\ref{eq_djd0}) and the third inequality uses $g_k \le c|I|/K$ and $\sum_k W_{ki} = 1$. This verifies (a). 
	
	To show (b), from (\ref{eq_Aj}), it gives
	\begin{align*}
	L_{1}(A^{(j)}, A^{(\ell)}) &= \sum_{k =1}^K\left\|A^{(j)}_{\cdot k}-A^{(\ell)}_{\cdot k}\right\|_1\\
	&= 2\gamma \sum_{k =1}^K\left\|w^{(j)}_k-w^{(\ell)}_k\right\|_1\\
	&= 2\gamma \left\|w^{(j)}-w^{(\ell)}\right\|_1\\
	&\overset{(\ref{eq_w})}{\ge} {\gamma \over 8}(|I| + K|J|).
	\end{align*}
	Plugging into the expression of $\gamma$ verifies (b). Since (c) is already verified in \cite[page 31]{Tracy}, invoking  \cite[Theorem 2.5]{np_sasha} concludes the proof for even $g_k + |J|$. The complementary case is easy to derive with slight modifications. Specifically, denote by $\S_{odd} := \{1\le k\le K: g_k + |J| \text{ is odd}\}$. Then we change 
	$
	M:= \{0, 1\}^{Card}
	$
	with 
	$$
	Card = \sum_{k \in \S_{odd}} {g_k + |J| - 1\over 2} + \sum_{k \in \S_{odd}^c} {g_k + |J|\over 2}.
	$$
	For each $w^{(j)}$, we write it as $w^{(j)} = (w^{(j)}_1, \ldots, w^{(j)}_K)$ and each $w^{(j)}_k$ has length $(g_k + |J|- 1) / 2$ if $k\in \S_{odd}$ and $(g_k + |J|) / 2$ otherwise. 	We then construct $
	A^{(j)}_k = A_k^{(0)} + \gamma \wt w^{(j)}_k
	$
	where $\wt w^{(j)}_k\in \RR^{p}$ is the same augumented ccounterpart of $w^{(j)}_k$. The result follows from the same arguments.\\
	
	To conclude the proof, we show the lower bound of $\|\wh A-A\|_\c$. Let $k^* := \arg\max_{k} g_k$. Then we can repeat the above arguments by only changing the $k^*$th  column of $A^{(0)}$. Specifically, assume $|J| + g_{k^*}$ is even and we draw $w^{(j)}$ from $M = \{0,1\}^{(|J|+g_{k^*})/2}$ for $1\le j\le T$ with 
	$$
	\log(T) \ge \log(2) (|J| + g_{k^*})/16.
	$$
	Let $\wt w^{(j)}$ be its augumented counterpart and choose $A^{(j)}_k = A^{(0)}_k + \gamma \wt w^{(j)}$ with 
	$$
	\gamma = \sqrt{\log (2)\over 16^2c^2(1+c_0)}\sqrt{K \over nN(g_{k^*}+ |J|)}
	$$
	and  $\|w^{(j)} - w^{(\ell)}\|_1 \ge (g_{k^*}+|J|)/16$ for any $0\le j\ne \ell \le T$.
	Then we need to check
	\begin{itemize}
		\itemsep0.5em
		\item[(a')]  $\KL(\PP_{A^{(j)}},\PP_{A^{(0)}}) \le \log(T)/16$, for each $i= 1,\ldots, T$.
		\item[(b')] $L_\c\left(A^{(i)},A^{(j)}\right) \ge c_1\sqrt{K(g_{k^*}+|J|) /( nN)}$, for $0\le i<j\le T$ and some positive constant $c_1$.
		\item[(c')] $L_\c(\ \cdot\ )$ satisfies the triangle inequality. 
	\end{itemize}
	Different from (\ref{eq_d0}) and (\ref{eq_djd0}), we have, for all $1\le i\le n$,
	\begin{equation}\label{eq_d0_1}
	D^{(0)}_{\ell i} = \sum_{k =1}^KA^{(0)}_{\ell k} W_{ki} = \left\{\begin{array}{ll}
	W_{ki} / (g_k+|J|) \ge W_{ki}/ (g_{k^*} + |J|)& \text{ if }\ell \in I_k, k\in [K]\\
	\sum_k W_{ki} / (g_k+|J|)\ge 1/(g_{k^*}+|J|) & \text{ if }\ell \in J
	\end{array}\right.
	\end{equation}
	and
	\begin{equation}\label{eq_djd0_1}
	\left|D^{(j)}_{\ell i}- D^{(0)}_{\ell i}\right|  =
	\left\{\begin{array}{ll}
	\gamma W_{k^*i}  \wt w^{(j)}_\ell\le \gamma W_{k^*i}, &  \text{ if }\ell \in I_{k^*}\cup J\\
	0, &\text{ otherwise}.
	\end{array}\right..
	\end{equation}
	Using the same arguments, one can derive
	\begin{align*}
	\KL(\PP_{A^{(j)}}, \PP_{A^{(0)}}) &~\le~ \left(1+c_0\right)N\gamma^2(g_{k^*}+|J|) \sum_{i =1}^n\left(
	W_{k^*i}^2|J|+W_{k^*i}g_{k^*}
	\right).
	\end{align*}
	Since there exists some constant $c>0$ such that 
	\[
	\sum_{i =1}^n \left(
	W_{k^*i}^2|J|+W_{k^*i}g_{k^*}
	\right) = \sum_{a=1}^n \sum_{i \in n_a} \left(
	W_{k^*i}^2|J|+W_{k^*i}g_{k^*}
	\right) \le c (|J| + g_{k^*})n / K,
	\]
	we conclude that 
	\[
	\KL(\PP_{A^{(j)}}, \PP_{A^{(0)}}) \le c(1+c_0)\gamma^2 Nn (g_{k^*}+|J|)^2 / K \le \log(T)/16.
	\]
	To show (b'), observe that
	\[
	L_\c(A^{(j)}, A^{(\ell)}) = 2\gamma  \max_{1\le k\le K}\left\|w^{(j)}_k-w^{(\ell)}_k\right\|_1 \ge {\gamma (g_{k^*} + |J|)\over 8}.
	\]
	Finally, we verify (c') by showing that $L_\c(\cdot)$ satisfies the triangle inequality. Consider $(A, \wt A, \wh A)$ and observe that
	\begin{eqnarray*}
		L_\c(A,\wt A) &=& \min_{P\in \H_K}\|AP-\wt A\|_\c~= ~\min_{P,Q\in \H_K}\|AP-\wt AQ\|_\c \\
		&\le &  \min_{P,Q\in \H_K}\left(\|AP-\wh A\|_\c+\|\wh A-\wt AQ\|_\c\right)\\
		& = & \min_{P\in \H_K}\|AP-\wh A\|_\c + \min_{Q\in \H_K}\|\wh A-\wt AQ\|_\c\\
		& = & L_\c(A,\wh A)  + L_\c(\wt A,\wh A).
	\end{eqnarray*}
	The proof is complete.
\end{proof}

\subsection{Proofs of upper bounds in Section \ref{sec_upperbound}}\label{sec_proof_upper}
We work on the event $\E$. Recall that under (\ref{ass_signal}), we have $P(\E) \ge 1-M^{-3}$.  From Theorem \ref{thm_anchor}, we have $\wh K = K$ and $\wh I = I$. Without loss of generality, we assume that the label permutation matrix $\Pi$ is the identity matrix that aligns the topic words with the estimates ($\wh \I=\I$). In particular,  this implies that  any chosen $\wh L \subset I$  has the correct partition, and  we have  $\wh L =  L$. We first give a crucial lemma to the proof of Theorem \ref{thm_rate_Ahat}. From (\ref{def_eta}), recall that
\begin{align}\label{def_eta2}
\eta_{j\ell} &\asymp  \left( \sqrt{m_j}+\sqrt{m_\ell} \right)\sqrt{\Theta_{j\ell}\log M\over npN}+{(m_j+m_\ell) \log M \over npN} + \sqrt{(\u_j + \u_{\ell})(\log M)^4 \over npN^3},
\end{align}
for all $j,\ell \in [p]$.

\begin{lemma}\label{lem_rate_lbd}
	Under the conditions of Theorem \ref{thm_rate_Ahat}, we have
	\begin{eqnarray}\label{rate_lbd}
	\max_{i\in L}\sum_{j\in L}\eta_{ij} &\lesssim &\sqrt{\oa_I^3\og} \sqrt{\log M \over np^3N},\\\label{rate_eta1}
	\max_{i\in L}\sum_{j\in J}\eta_{ij} &\lesssim  & \sqrt{\oa_I \og S_J}\sqrt{\log M\over Knp^2N},
	\end{eqnarray}
	with $S_J = |J|\oa_I + \sum_{j\in J}\alpha_j$, for any $i\in I_k$ and $k\in [K]$.
	In addition, if $\sum_{k'\ne k}\sqrt{C_{kk'}} = o(\sqrt{C_{kk}})$ for any $1\le k\le K$, then 
	\begin{equation}\label{rate_lbd_fast}
	\max_{i\in L}\sum_{j\in L}\eta_{ij} \lesssim \sqrt{\oa_I^{3}\og}\sqrt{\log M\over Knp^3N}.
	\end{equation}
\end{lemma}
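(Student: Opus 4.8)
The plan is to bound $\eta_{ij}$ term by term using its three-term expansion (\ref{def_eta2}) and the explicit form that the anchor-word structure forces on $m_i$, $\u_i$ and $\Theta_{ij}$. First I would record the population identities available under Assumption \ref{ass_sep}: for $i\in I_k$ one has $\Pi_{it}=A_{ik}W_{kt}$, hence $m_i\le\alpha_i$, $\u_i=\alpha_i\g_k/K$ by (\ref{eq_mu_I}), and $\Theta_{ij}=A_{ik}\sum_b A_{jb}C_{kb}$ for every $j$. In particular, for a second anchor word $j\in I_{k'}\cap L$ this collapses to $\Theta_{ij}=\alpha_i\alpha_j C_{kk'}/p^2$, while for a general $j$ it is bounded by $\Theta_{ij}\le\alpha_i\alpha_j\g_k/(Kp^2)$ upon using $A_{jb}\le\alpha_j/p$ and $\sum_b C_{kb}=\g_k/K$. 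These identities convert every occurrence of $\Theta$, $m$ and $\u$ into the scale parameters $\alpha$, $\g$ and the Gram matrix $C$.

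Next I would isolate the leading (square-root-of-$\Theta$) term of $\eta_{ij}$, which carries the stated rate. Summing it over $j\in L$ and using $m_i,\alpha_j\le\oa_I$ gives a bound proportional to $\oa_I^{3/2}\,p^{-1}\sqrt{\log M/(npN)}\sum_{k'=1}^K\sqrt{C_{kk'}}$, so the whole problem reduces to controlling $\sum_{k'}\sqrt{C_{kk'}}$. Here the key fact is $\sum_{k'}C_{kk'}=\g_k/K$: a plain Cauchy--Schwarz step yields $\sum_{k'}\sqrt{C_{kk'}}\le\sqrt{K\cdot\g_k/K}=\sqrt{\g_k}\le\sqrt{\og}$, which produces (\ref{rate_lbd}); under the additional near-diagonal condition $\sum_{k'\ne k}\sqrt{C_{kk'}}=o(\sqrt{C_{kk}})$ the same sum is $\lesssim\sqrt{C_{kk}}\le\sqrt{\g_k/K}\le\sqrt{\og/K}$ (using $C_{kk}\le n^{-1}\sum_t W_{kt}=\g_k/K$), saving the advertised factor $\sqrt K$ and giving (\ref{rate_lbd_fast}). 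For (\ref{rate_eta1}) I would run the same computation with $j$ ranging over $J$, inserting $\Theta_{ij}\le\oa_I\alpha_j\og/(Kp^2)$ and applying Cauchy--Schwarz to $\sum_{j\in J}(\sqrt{\oa_I}+\sqrt{\alpha_j})\sqrt{\alpha_j}$; this is precisely where the quantity $S_J=|J|\oa_I+\sum_{j\in J}\alpha_j$ is produced.

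It then remains to show the two lower-order terms of $\eta_{ij}$ --- the term $(m_i+m_j)\log M/(npN)$ and the $N^{-3}$ term $\sqrt{(\u_i+\u_j)(\log M)^4/(npN^3)}$ --- are dominated by the leading term, and this is where the signal condition (\ref{ass_signal}) enters. For the $(m_i+m_j)$ term, summing over $L$ and using $\u_i=\alpha_i\g_k/K$ together with $\u_{\min}\ge 2p\log M/(3N)$ (so that $\oa_I\og\gtrsim Kp\log M/N$) bounds its ratio to the leading term by a multiple of $K/n\le 1$, invoking $K\le n$ from Assumption \ref{ass_pd}. For the $N^{-3}$ term I would first bound $\sum_{j\in L}\sqrt{\u_i+\u_j}\lesssim\sqrt{K\oa_I\og}$ (using $\u_i\le\oa_I\og/K$ and $\sum_{j\in L}\u_j\le\oa_I$), and then control the resulting ratio via two complementary lower bounds on the anchor frequencies extracted from (\ref{ass_signal}): $\ua_I\gtrsim p(\log M)^2/N$ from the $m_{\min}$ hypothesis, and $\ua_I\gtrsim Kp\log M/(N\og)$ from the $\u_{\min}$ hypothesis. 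I expect \textbf{this last step to be the main obstacle}: the $N^{-3}$ term is not driven below the leading rate by either bound on its own, so the argument must patch the small-$K$ regime (where the $m_{\min}$ bound suffices, giving a ratio $\lesssim K/\log M$) to the large-$K$ regime (where the $\u_{\min}$ bound suffices, giving a ratio $\lesssim \og^2\log M/K$). Verifying that these two regimes overlap for all admissible $K$, $N$, $p$ under (\ref{ass_signal}) is the delicate part of the proof; by contrast, the Cauchy--Schwarz reduction of $\sum_{k'}\sqrt{C_{kk'}}$ that produces the leading rate and the $\sqrt K$ improvement is routine.
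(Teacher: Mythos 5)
Your treatment of the leading term of (\ref{rate_lbd}) and (\ref{rate_lbd_fast}) coincides with the paper's: the identity $\Theta_{ij}=\alpha_i\alpha_jC_{kk'}/p^2$ for $i,j\in L$, Cauchy--Schwarz against the total mass $\sum_{k'}C_{kk'}=\g_k/K$, and $C_{kk}\le\g_k/K$ for the extra $\sqrt K$; that part is sound, as is your domination argument for the $(m_i+m_j)$ term. The genuine gap is in (\ref{rate_eta1}). By inserting the entrywise bound $\Theta_{ij}\le\oa_I\alpha_j\og/(Kp^2)$ \emph{before} summing over $j\in J$, you throw away the row-sum constraint on $\Theta_{i\cdot}$: since $\sum_{j\in J}A_{jb}\le 1$ for every $b$, one has $\sum_{j\in J}\Theta_{ij}\le(\alpha_i/p)\,\|A_J\|_{1,\infty}\sum_b C_{kb}\le\oa_I\og/(pK)$, whereas your entrywise bound only controls this sum by $\oa_I\og\sum_{j\in J}\alpha_j/(Kp^2)$, i.e.\ it replaces $\max_b\sum_jA_{jb}$ by $\sum_j\max_bA_{jb}$, losing up to a factor $K$. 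Concretely, your route yields $\sum_{j\in J}(\sqrt{\oa_I}+\sqrt{\alpha_j})\sqrt{\alpha_j}\cdot\sqrt{\oa_I\og\log M/(Knp^3N)}\asymp S_J\sqrt{\oa_I\og\log M/(Knp^3N)}$, which exceeds the claimed rate $\sqrt{\oa_I\og S_J\log M/(Knp^2N)}$ by the factor $\sqrt{S_J/p}$; this is unbounded in general (e.g.\ $\sqrt{S_J/p}\gtrsim\sqrt{|J|\oa_I/p}$ whenever $\oa_I\gg p/|J|$). The paper instead keeps $\Theta_{ij}=(\alpha_i/p)\,n^{-1}\langle W_{k\cdot},\M_{j\cdot}\rangle$ intact and applies Cauchy--Schwarz to the product, namely $\sum_{j\in J}\sqrt{\alpha_i(\alpha_i+\alpha_j)}\,\sqrt{n^{-1}\langle W_{k\cdot},\M_{j\cdot}\rangle}\le\sqrt{2\oa_IS_J}\sqrt{\g_k/K}$; this pairing is where $\sqrt{S_J}$, rather than $S_J$, comes from, and your argument cannot recover it without reorganizing the sum in this way.

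A secondary point on the $N^{-3}$ term: the inequality you need is $\oa_I^2/(Kp^2)\gtrsim(\log M)^3/N^2$, and no regime-patching is required. The paper multiplies the two halves of (\ref{ass_signal}) directly: choosing the anchor word in the topic with $\g_k=\ug\le1$ gives $\oa_I/(Kp)\ge\min_{i\in I}\u_i/(p\ug)\ge 2\log M/(3N)$, while $\oa_I\ge m_{\min}$ gives $\oa_I/p\ge 9(\log M)^2/N$, and the product is exactly what is needed, uniformly in $K$. Note that your stated bound $\ua_I\gtrsim Kp\log M/(N\og)$ carries a spurious $1/\og$, so even combining your two regimes would leave a residual factor of $\og$; the sharper bound $\oa_I\ge 2Kp\log M/(3N)$ closes this without case analysis.
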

\begin{proof}
	We first simplify the expression of $\eta$ defined in (\ref{def_eta}).  To show (\ref{rate_lbd}), observe that, for any $i\in I_a$, $j\in I_b$  and $a,b\in [K]$,  we have
	\begin{equation*}\label{eq_sigma_ii}
	\Theta_{ij} = {1\over n}\sum_{\ell = 1}^n\M_{i\ell}\M_{j\ell} = A_{ia}A_{jb}{1\over n}\sum_{\ell = 1}^nW_{a\ell}W_{b\ell } \overset{(\ref{def_uag})}{=} {\alpha_i\alpha_j\over p^2}\cdot{1\over n}\langle W_{a\cdot}, W_{b\cdot}\rangle.
	\end{equation*}  
	As a result, plugging (\ref{eq_mu}) - (\ref{eq_m}) and the above display into (\ref{def_eta}) yields 
	\begin{align}\label{eq_eta_pure}
	\eta_{ij} &~ \asymp ~\sqrt{{1\over n}\langle W_{a\cdot}, W_{b\cdot}\rangle}\sqrt{\oa_I^{3}\log M\over np^3N} + {\oa_I\log M\over npN}+ \sqrt{\oa_I\og(\log M)^4\over KnpN^3}.
	\end{align}
	Since
	\[
	\sum_{b=1}^K{1\over n}\langle W_{a\cdot}, W_{b\cdot}\rangle = {1\over n}\sum_{\ell=1}^{n}W_{a\ell}\sum_{b =1}^KW_{b\ell} \overset{(\ref{orig_sum_to_one})}{=} {1\over n}\sum_{\ell=1}^{n}W_{a\ell} \overset{(\ref{def_uag})}{=} {\g_a \over K},
	\]
	summing (\ref{eq_eta_pure}) over $j \in L$, and using the Cauchy-Schwarz inequality to obtain the first term, gives
	\begin{align}\label{drie}
	\max_{i\in L}\sum_{j\in L}\eta_{ij} &\ \lesssim\  \sqrt{\oa_I  \log M \over npN}\left[ \sqrt{\oa_I^2 \og \over p^2} +\sqrt{ \oa_I K^2\log M \over npN}+\sqrt{ K\og(\log M)^3\over N^2}\right].
	\end{align}
	Note that Assumption \ref{ass_pd} implies $n \ge K$. Since  (\ref{ass_signal}) and (\ref{eq_mu_I}) together with the fact $\og \ge 1\ge \ug$ give
	\begin{equation}\label{cond_rate_1}
	{\oa_I \og \over K}\ge	{\oa_I \over K}\ge {\oa_I \ug \over K} \ge \min_{i\in I}\mu_i \ge {2p\log M \over 3N},
	\end{equation}
	these two bounds imply the first term on the right in (\ref{drie}) is greater than the second term on the right of (\ref{drie}). Moreover,  (\ref{eq_m}), (\ref{cond_rate_m_pN}) and (\ref{cond_rate_1}) imply
	\[
	{\oa_I^2 \over 
		Kp^2}\ge {2\log M\over 3N}{\oa_I \over p}\ge   {2\log M\over 3N} {m_{\min} \over p} \ge {6(\log M)^3 \over N^2}.
	\]
	Thus, the first term on the right of (\ref{drie}) is also greater than the third term on the right of (\ref{drie}). This finishes the proof of (\ref{rate_lbd}). 
	
	We proceed to show (\ref{rate_eta1}). For any $i\in I_a$ and $j\in [p]$,  observe that
	\begin{equation*}\label{eq_sigma_ij}
	\Theta_{ij} = {1\over n}\sum_{\ell = 1}^n\M_{i\ell}\M_{j\ell} = A_{ia}{1\over n}\sum_{\ell = 1}^nW_{a\ell}\M_{j\ell} \overset{(\ref{def_uag})}{=} {\alpha_i\over p}\cdot{1\over n}\langle W_{a\cdot}, \M_{j\cdot}\rangle.
	\end{equation*}
	Plugging (\ref{eq_mu}) -- (\ref{eq_m}) and the above display into (\ref{def_eta}) yields
	\begin{align}\label{eq_eta_ij}
	\eta_{ij} ~ \lesssim ~\sqrt{{1\over n}\langle W_{a\cdot}, \M_{j\cdot}\rangle}\sqrt{\alpha_i (\alpha_i+\alpha_j)\log M\over np^2N} + {(\alpha_i + \alpha_j)\log M\over npN} +  \sqrt{(\alpha_i+\alpha_j)(\log M)^4\over npN^3}.
	\end{align}
	Since $\sum_{k}W_{k\ell} = 1$ and (\ref{def_uag}) yield
	\begin{align*}
	\sum_{j\in J}{1\over n}\langle W_{a\cdot}, \M_{j\cdot}\rangle & =
	\sum_{k=1}^K {1\over n}\sum_{\ell =1}^nW_{a\ell}W_{k\ell}\sum_{j\in J}A_{jk} \le {\g_a\over K}  \|A_{J}\|_{1,\i} \le {
		\g_a \over K},
	\end{align*}  
	summing (\ref{eq_eta_ij}) over $j\in J$ and using Cauchy-Schwarz yield
	\begin{align*}
	\max_{i\in L}\sum_{j\in J} \eta_{ij} & \lesssim  \sqrt{S_J\log M \over npN}\left(\sqrt{\oa_I\og\over Kp} + \sqrt{S_J\log M \over npN}+\sqrt{|J|(\log M)^3\over N^2}\right).
	\end{align*}
	To conclude the proof of (\ref{rate_eta1}), it suffices to show the first term on the right in the display above dominates  the other two. This is true by noting that
	\[
	{\oa_I \og   N^2 
		\over K|J|p(\log M)^3}  \ge   {\mu_{\min} N^2 \over |J|p (\log M)^3} \gtrsim 1
	\]
	and 
	\[
	{ \og nN \over K |J|\log M} \gtrsim 1, \qquad {\oa_I \og n N \over K \sum_{j\in J}\alpha_j \log M} \gtrsim 1
	\]
	from the same arguments to prove (\ref{rate_lbd}) together with $|J| \le p$ and $\sum_{j\in J}\alpha_j \le Kp$.

	Finally, to show (\ref{rate_lbd_fast}), invoking the additional condition and using (\ref{eq_eta_pure}) yield
	\begin{align*}
	\max_{i\in L}\sum_{j\in L}\eta_{ij}  &\lesssim \max_{a\in [K]}\sqrt{C_{aa}}\sqrt{\oa_I^{3}\log M\over np^3N} + {K\oa_I\log M\over npN}+ \sqrt{K\oa_I\og(\log M)^4\over npN^3}\\
	&\le \sqrt{\oa_I^{3}\og \log M\over Knp^3N} + {K\oa_I\log M\over npN}+ \sqrt{K\oa_I\og(\log M)^4\over npN^3}
	\end{align*}
	where we use $C_{aa}\le n^{-1}\|W_{a\cdot}\|_1 \lesssim \og/K$ to derive the second inequality. Repeating the same arguments, one can show that on the right-hand-side the first term dominates the second and third terms. This finishes the proof.
\end{proof}

\subsubsection{Proof of Theorem \ref{thm_rate_Ahat}}\label{sec_proof_thm_rate}
On the event $\E$, Theorem \ref{thm_anchor} guarantees that $\wh I = I$, $\wh J= J$ and $\wh L = L$. We work on
this event for the remainder of the proof.
Our proof consists of three parts for any $k\in [K]$: To obtain the error rate of (1)   $\|\wh B_{I k}-B_{I k}\|_1$,
(2)  $\|\wh B_{J k}-B_{J k}\|_1$ and (3)   $\|\wh A_{\cdot k} - A_{\cdot k}\|_1$.\\

For step (1), recall (\ref{eq_iden_BI}) and (\ref{est_BI}). Then, for any $1\le k\le K$, it follows 
\begin{align*}
\|\wh B_{Ik} - B_{Ik}\|_1 = \sum_{i \in I_k} \left|{\bigl\|X_{i\cdot}\bigr\|_1 \over \bigl\|X_{ i_k\cdot}\bigr\|_1} - {\bigl\|\M_{i\cdot}\bigr\|_1 \over \bigl\|\M_{ i_k\cdot}\bigr\|_1}\right|
\end{align*}
and we obtain
\begin{align*}
&\|\wh B_{Ik} - B_{Ik}\|_1\\
&~ = ~ {1\over \|X_{i_k\cdot}\|_1\|\M_{i_k\cdot}\|_1} \sum_{i\in  I_k}\Bigl|\|\M_{i_k\cdot}\|_1\|X_{i\cdot}\|_1-\|X_{i_k\cdot}\|_1\|\M_{i\cdot}\|_1\Bigr|\\
&~\le ~ {1\over \|X_{i_k\cdot}\|_1\|\M_{i_k\cdot}\|_1} \left[\|\M_{i_k\cdot}\|_1\sum_{i\in  I_k}\Bigl|\|X_{i\cdot}\|_1-\|\M_{i\cdot}\|_1\Bigr|+ \Bigl|\|X_{i_k\cdot}\|_1-\|\M_{i_k\cdot}\|_1\Bigr|\sum_{i\in  I_k}\|\M_{i\cdot}\|_1\right]\\
&~=~ {1\over \|X_{i_k\cdot}\|_1}\sum_{i\in  I_k}
\frac1n \left| \sum_{j=1}^n \eps_{ij} \right| 
+ {\sum_{i\in  I_k}\|\M_{i\cdot}\|_1\over \|X_{i_k\cdot}\|_1\|\M_{i_k\cdot}\|_1}\frac1n \left| \sum_{j=1}^n \eps_{i_kj} \right|.
\end{align*}
In the last step we 		used  that $X_i$ and $\M_i$ have nonnegative entries only so that indeed
\begin{align*}\Bigl|\|X_{i\cdot}\|_1-\|\M_{i\cdot}\|_1\Bigr| &= \frac{1}{n} \left| \sum_{j=1}^n| X_{ij} | - \sum_{j=1}^n |\M_{ij}| \right|\\
&= \frac{1}{n} \left| \sum_{j=1}^nX_{ij}  -\sum_{j=1}^n \M_{ij} \right|\\
&= \frac{1}{n} \left| \sum_{j=1}^n \eps_{ij} \right|.
\end{align*}
Invoking
Lemma \ref{lem_t1}, $ n^{-1} \left| \sum_{j=1}^n \eps_{ij} \right|  
\lesssim \sqrt{\u_ip\log M / (nN)}$ on the event $\E_1\supset \E$.
Note further that, for any $i\in [p]$,  
$${1 \over \|X_{i\cdot}\|_1} =  (D_X)^{-1}_{ii} \overset{(\ref{eq_dx})}{\lesssim} {p \over n\u_i}$$
and $\|\M_{i\cdot}\|_1 = n\u_i / p$ by the definition in (\ref{def_uag}). Now deduce that
\begin{align*}
\|\wh B_{Ik} - B_{Ik}\|_1	&~\lesssim ~ {1\over \u_{i_k}} \sum_{i\in  I_k}\sqrt{\u_ip\log M \over nN}+{\sum_{i\in  I_k}\u_i \over \u_{i_k}^2}\sqrt{\u_{i_k}p\log M \over nN}.
\end{align*}
Recall that $\u_i = \alpha_i\g_k / K$ for any $i\in I_k$ in (\ref{eq_mu_I}). We further have
\begin{equation*}
{1\over \u_{i_k}} \sum_{i\in  I_k}\sqrt{\u_ip\log M \over nN}+{\sum_{i\in  I_k}\u_i \over \u_{i_k}^2}\sqrt{\u_{i_k}p\log M \over nN} =  {1\over \alpha_{i_k}}\sqrt{pK\log M \over \g_knN}\left[\sum_{i\in  I_k}\sqrt{\alpha_i}+{\sum_{i\in  I_k}\alpha_i \over \sqrt{\alpha_{i_k}}}\right]
\end{equation*}
and we conclude that, on the event $\E$, 
\begin{align}\label{rate_Delta1}
\|\wh B_{Ik} - B_{Ik}\|_1~ \lesssim  ~ {\sum_{i\in I_k}\alpha_i \over \alpha_{i_k}\sqrt{\ua_I\g_k}}\sqrt{pK\log M \over nN}.
\end{align}

We proceed to show   step (2). Recall that $\wh \Omega = (\wh \omega_1, \ldots,  \wh \omega_K)$ is the optimal solution from (\ref{obj_omega}), $\wh B_J = \bigl(\wh \Theta_{JL}\wh \Omega\bigr)_+$ and $B_J =  \Theta_{JL}\Omega$. Fix any $1\le k\le K$ and denote the canonical basis vectors in $\RR^K$ by $e_1,\ldots,e_K$. Since $B$ has only non-negative entries,
we have
\begin{align*}
&\|\wh B_{J k} - B_{J k}\|_1\\ &= \|\bigl(\wh \Theta_{JL}\wh\Omega_{\cdot k} \bigr)_+ -   \Theta_{JL}\Omega_{\cdot k}\|_1\\
&\le \|\wh \Theta_{JL}\wh\Omega_{\cdot k} - \Theta_{JL}\Omega_{\cdot k}\|_1\\ 
&\le \|(\wh \Theta_{JL} - \Theta_{JL})\wh\Omega_{\cdot k}\|_1 +  \|\Theta_{JL}\wh\Omega_{\cdot k}- B_{J k}\|_1\\
&\le \|\wh\Omega_{\cdot k}\|_1\|\wh \Theta_{JL} - \Theta_{JL}\|_\c + \|B_J\Theta_{LL}\wh\Omega_{\cdot k}- B_Je_k\|_1\\ 
&\le  \|\wh\Omega_{\cdot k}\|_1\|\wh \Theta_{JL} - \Theta_{JL}\|_\c + \left[\|\wh \Theta_{LL}\wh \Omega_{\cdot k}- e_k\|_1+ \|(\wh \Theta_{LL}- \Theta_{LL})\wh \Omega_{\cdot k}\|_1\right]\|B_J\|_\c\\
&\le\|\wh\omega_k\|_1\|\wh \Theta_{JL} - \Theta_{JL}\|_\c + \left[\|\wh \Theta_{LL}\wh \Omega_{\cdot k}- e_k\|_1+ \|\wh \omega_k\|_1\|\wh \Theta_{LL}- \Theta_{LL}\|_\r\right]\|B_J\|_\c.
\end{align*}
We first study the property of $\wh \Omega$. Notice that $\omega_k := \Omega_{\cdot k}$ is feasible of (\ref{obj_omega}) since, on the event $\E$,
\[
\|\wh \Theta_{LL}\omega_k- e_k\|_1 \le \|\omega_k\|_1 \|\wh \Theta_{LL} - \Theta_{LL}\|_\r \overset{\E}{\le}  C_0\|\omega_k\|_1 \max_{i\in L}\sum_{j\in L}\eta_{ij} = \|\omega_k\|_1 \lambda,
\]
by Proposition \ref{prop_sigma}.
The optimality and feasibility of $\wh\omega_k$ imply
\begin{equation*}
\|\wh\omega_k\|_1 \le \|\omega_k\|_1,\qquad
\| \wh \Theta_{LL}\wh \omega_k - e_k\|_1 \le \|\wh \omega_k\|_1 \lambda \le \|\omega_k\|_1 \lambda.
\end{equation*} 
Hence, on the event $\E$, we have
\[
\|\wh B_{J k} - B_{J k}\|_1 \le C_0\|\omega_k\|_1 \max_{i\in L}\sum_{j\in J}\eta_{ij} + 2\|\omega_k\|_1\lambda \|B_J\|_\c.
\]
Recall that $B_J = A_JA_L^{-1}$ and $A_L = \diag(\alpha_{i_1}/p, \ldots, \alpha_{i_K}/p)$, and deduce
\begin{align}\label{eq_B_l1}
\|B_J\|_\c = \max_{1\le k\le K}{p \over \alpha_{i_k}}\sum_{j\in J}A_{jk} \le {p \over \ua_I}\|A_J\|_{1,\i}
\end{align}
Recall that $\lambda = \max_{i\in L}\sum_{j\in L}\eta_{ij}$ and notice that
\begin{equation}\label{eq_omega_inv}
\|\omega_k\|_1 = \|\Omega_{\cdot k}\|_1 = \|\Theta_{LL}^{-1}e_k\|_1  = \|A_L^{-1}C^{-1}A_L^{-1}e_k\|_1 \le {p^2 \over  \alpha_{i_k}\ua_I}\|C^{-1}\|_\r.
\end{equation}
Collecting (\ref{eq_B_l1}) -- (\ref{eq_omega_inv}) gives
\begin{equation*}
\|\wh B_{Jk}-B_{Jk}\|_1
~\le~ C_0{p^2 \over  \alpha_{i_k}\ua_I}\|C^{-1}\|_\r\left[  \max_{i\in L}\sum_{j\in J}\eta_{ij} + 2p{\|A_J\|_{1,\i} \over \ua_I}\max_{i\in L}\sum_{j\in L}\eta_{ij}\right].
\end{equation*}
Invoking (\ref{rate_lbd}) -- (\ref{rate_eta1}) in Lemma \ref{lem_rate_lbd} yields, on the event $\E$,
\begin{align}\label{rate_B_1_inf}\nonumber
\|\wh B_{Jk}-B_{Jk}\|_1
&\lesssim {p^2\|C^{-1}\|_\r \over  \alpha_{i_k}\ua_I}\left[\sqrt{\oa_I \og S_J}\sqrt{\log M\over Knp^2N}+{\|A_J\|_{1,\i}\over \ua_I}\sqrt{\oa_I^3\og} \sqrt{\log M \over npN} \right]\\
& \le  {p\over \alpha_{i_k}}\|C^{-1}\|_\r {\oa_I \over \ua_I} \sqrt{\og\log M\over KnN}\left[\sqrt{|J| + \sum_{j\in J}{\alpha_j  \over \oa}} +  {\oa_I \over \ua_I}\sqrt{K\sum_{j\in J}{\alpha_j \over \oa_I}}\right]
\end{align}
where we recall that $S_J = |J|\oa_I + \sum_{j\in J}\alpha_j$ and use $\|A_J\|_{1,\i} \le 1$, $\|A_J\|_{1,\i}\le \sum_{j\in J}\alpha_j / p$ in the second inequality.
Combining (\ref{rate_Delta1})  and (\ref{rate_B_1_inf}) yields, on the event $\E$, 
\begin{align}\label{eq_rate_B}\nonumber
\|\wh B_{\cdot k} - B_{\cdot k}\|_1 &\lesssim {p\over\alpha_{i_k}} {\sum_{i\in I_k}\alpha_i \over \sqrt{\ua_I\g_k}}\sqrt{K\log M \over npN}.\\
&\quad +{p\over \alpha_{i_k}}\|C^{-1}\|_\r {\oa_I \over \ua_I} \sqrt{\og\log M\over KnN}\left[\sqrt{|J| + \sum_{j\in J}{\alpha_j  \over \oa}} +{\oa_I \over \ua_I}\sqrt{K\sum_{j\in J}{\alpha_j \over \oa_I}}\right].
\end{align}

It remains to prove step (3). For any $k\in[ K]$, from (\ref{est_A}), we have
\begin{align*}
|\wh A_{jk} - A_{jk}|  &= \left|{\wh B_{jk} \over \|\wh B_{\cdot k}\|_1} -  { B_{jk} \over \| B_{\cdot k}\|_1}\right|
\\
& \le  {\wh B_{jk} \over \|\wh B_{\cdot k}\|_1 \|B_{\cdot k}\|_1}\left| \|\wh B_{\cdot k}\|_1 -  \|B_{\cdot k}\|_1\right| + {|\wh B_{jk} - B_{jk}| \over \|B_{\cdot k}\|_1}\\
& = {\left| \|\wh B_{\cdot k}\|_1 -  \|B_{\cdot k}\|_1\right|\over \|B_{\cdot k}\|_1}\wh A_{jk} + {|\wh B_{jk} - B_{jk}| \over \|B_{\cdot k}\|_1}
\end{align*}
by using $\wh A_{jk} =\wh B_{jk} / \|\wh B_{\cdot k}\|_1$ in the last equality.
Since $\|\wh A_{\cdot k}\|_1 = 1$, summing over $j\in[p]$ yields 
\begin{align*}
\|\wh A_{\cdot k} - A_{\cdot k}\|_1 &\le   {2\|\wh B_{\cdot k} - B_{\cdot k}\|_1 \over \|B_{\cdot k}\|_1} = {2\alpha_{i_k} \over p} \|\wh B_{\cdot k} - B_{\cdot k}\|_1.
\end{align*}
The  equality uses $\|B_{\cdot k}\|_1 = p/\alpha_{i_k}$ from $B = AA_L^{-1}$. Invoking (\ref{eq_rate_B}) concludes the proof.\qed

\subsubsection{Proof of Corollary \ref{cor_opt_rate}}
To prove Corollary \ref{cor_opt_rate}, we need the following lemma which controls $\|C^{-1}\|_\r$. 
\begin{lemma}\label{lem_C_inv}
	Let $\og$ and $\ug$ be defined in (\ref{def_alpha_gamma}).
	Assume $\og \asymp \ug$ and $\sum_{k'\ne k}\sqrt{C_{kk'}} = o(\sqrt{C_{kk}})$ for any $1\le k\le K$. Then $\|C^{-1}\|_\r = O(K)$.
\end{lemma}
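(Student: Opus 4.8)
The plan is to show that, once rescaled by its diagonal, $C$ is a small perturbation of the identity, and then to bound $C^{-1}$ by a Neumann-series argument carried out in the $\|\cdot\|_\r$ norm. Two facts about this norm will be used repeatedly: it is the operator norm induced by $\ell_\infty$, hence submultiplicative, $\|XY\|_\r\le\|X\|_\r\|Y\|_\r$, and for a diagonal matrix $D=\diag(C_{11},\dots,C_{KK})$ one has $\|D^{-1}\|_\r=\max_k C_{kk}^{-1}$.

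The first step is to compute the row sums of $C$ exactly, exploiting the probability constraint $\sum_{k'=1}^K W_{k'i}=1$ from (\ref{orig_sum_to_one}). Since $C_{kk'}=n^{-1}\sum_{i=1}^n W_{ki}W_{k'i}\ge 0$,
\[
\sum_{k'=1}^K C_{kk'}=\frac1n\sum_{i=1}^n W_{ki}\sum_{k'=1}^K W_{k'i}=\frac1n\sum_{i=1}^n W_{ki}=\frac{\g_k}{K},
\]
using the definition of $\g_k$ in (\ref{def_alpha_gamma}). Because $\sum_k\g_k=K$ and $\og\asymp\ug$, each $\g_k\asymp 1$, so every row sum is of order $1/K$.

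The crux is to turn the angle-type hypothesis $\sum_{k'\ne k}\sqrt{C_{kk'}}=o(\sqrt{C_{kk}})$ into a bound on the off-diagonal mass $\sum_{k'\ne k}C_{kk'}$. For nonnegative reals the elementary inequality $\sum_{k'}b_{k'}^2\le(\sum_{k'}b_{k'})^2$ with $b_{k'}=\sqrt{C_{kk'}}$ gives
\[
\sum_{k'\ne k}C_{kk'}\le\Big(\sum_{k'\ne k}\sqrt{C_{kk'}}\Big)^2=o\!\left(C_{kk}\right).
\]
Combining this with the row-sum identity yields $C_{kk}=\g_k/K-\sum_{k'\ne k}C_{kk'}=\g_k/K-o(C_{kk})$, hence $C_{kk}(1+o(1))=\g_k/K$ and therefore $C_{kk}\asymp\g_k/K\asymp 1/K$. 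This is the step I expect to be the main obstacle: one needs the \emph{lower} bound $C_{kk}\gtrsim 1/K$, whereas the only trivial estimate, $C_{kk}=n^{-1}\sum_i W_{ki}^2\ge(\g_k/K)^2$, gives the useless order $1/K^2$; the sharper bound is recovered precisely by pairing the exact row-sum identity with the squared-sum reduction above.

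Finally, write $C=D+B$ with $D=\diag(C_{11},\dots,C_{KK})$ and $B$ the off-diagonal part. The $k$-th absolute row sum of $D^{-1}B$ equals $C_{kk}^{-1}\sum_{k'\ne k}C_{kk'}=o(1)$ uniformly in $k$ (interpreting the hypothesis uniformly, so that $\max_k\sum_{k'\ne k}\sqrt{C_{kk'}}/\sqrt{C_{kk}}\le\tfrac12$ in the regime considered), so $\|D^{-1}B\|_\r\le\tfrac14<1$. A Neumann expansion of $C^{-1}=(I+D^{-1}B)^{-1}D^{-1}$ then gives $\|(I+D^{-1}B)^{-1}\|_\r\le(1-\|D^{-1}B\|_\r)^{-1}=O(1)$, and by submultiplicativity
\[
\|C^{-1}\|_\r\le\|(I+D^{-1}B)^{-1}\|_\r\,\|D^{-1}\|_\r\lesssim\max_{1\le k\le K}C_{kk}^{-1}=O(K),
\]
which is the claimed bound.
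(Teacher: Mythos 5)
Your proof is correct, and it rests on exactly the two ingredients that drive the paper's argument: the reduction $\sum_{k'\ne k}C_{kk'}\le\bigl(\sum_{k'\ne k}\sqrt{C_{kk'}}\bigr)^2=o(C_{kk})$, and the row-sum identity $\sum_{k'}C_{kk'}=\g_k/K$ combined with $\og\asymp\ug$ to pin down $C_{kk}\asymp\g_k/K\asymp 1/K$ --- the ``main obstacle'' you flag is precisely the step the paper resolves the same way. The only divergence is the final inversion: the paper exploits the symmetry of $C$ to write $\|C^{-1}\|_\r=\bigl(\inf_{\|v\|_1=1}\|Cv\|_1\bigr)^{-1}$ and lower-bounds $\|Cv\|_1\ge\sum_a C_{aa}|v_a|-\sum_b|v_b|\sum_{a\ne b}C_{ab}\gtrsim\min_a C_{aa}$ by the same diagonal dominance, whereas you run a Neumann expansion of $(I+D^{-1}B)^{-1}$ directly in the $\|\cdot\|_\r$ norm. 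Your route avoids the duality step (and would survive without symmetry of $C$), while the paper's avoids invoking submultiplicativity and a series; both are routine once diagonal dominance is established, and both implicitly need the hypothesis $\sum_{k'\ne k}\sqrt{C_{kk'}}=o(\sqrt{C_{kk}})$ to hold uniformly over $k$, a caveat you correctly make explicit.
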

\begin{proof}
	From the definition of $(\r)$-norm and the symmetry of $C$, one has 
	\[
	\|C^{-1}\|_\r = \sup_{v\ne 0}{\|C^{-1}v\|_1 \over \|v\|_1} = \sup_{v\ne 0}{\|v\|_1 \over \|Cv\|_1}  = \left(
	\inf_{ \|v\|_1 = 1} \|Cv\|_1
	\right)^{-1}.
	\]
	It suffices to lower bound $\inf_{ \|v\|_1 = 1} \|Cv\|_1$. We have 
	\begin{align*}
	\inf_{ \|v\|_1 = 1} \|Cv\|_1 &= \inf_{ \|v\|_1 = 1} \sum_{a=1}^K\left|C_{aa}v_a + \sum_{b\ne a}C_{ab}v_b\right|\\
	&\ge  \inf_{ \|v\|_1 = 1}\sum_{a=1}^K\left(C_{aa}|v_a| - \sum_{b\ne a}C_{ab}|v_b|\right)\\
	&= \inf_{ \|v\|_1 = 1}\sum_{a=1}^KC_{aa}|v_a| - \sum_{b=1}^K|v_b|\sum_{a\ne b}C_{ab}.
	\end{align*}
	Note that 
	$\sum_{k'\ne k}\sqrt{C_{kk'}} = o(\sqrt{C_{kk}})$ implies 
	\begin{equation}\label{disp_C_offdiag}
	\sum_{k'\ne k}{C_{kk'}}\le \left(\sum_{k'\ne k}\sqrt{C_{kk'}}\right)^2 = o(C_{kk}) ,\quad \text{for any }k\in [K]
	\end{equation}
	and $C_{kk} \le n^{-1}\|W_{k\cdot}\|_1 = \g_k / K = O(1/K)$ by using $\og\asymp \ug\asymp1$. We further obtain
	\[
	\inf_{ \|v\|_1 = 1} \|Cv\|_1 \gtrsim \inf_{ \|v\|_1 = 1}\sum_{a=1}^K\left(C_{aa}|v_a| - |v_b|o(1/K)\right) \gtrsim  \min_a C_{aa}.
	\]
	Observe that 
	\[
	{1\over K} - C_{kk}\asymp {\g_k \over K} - C_{kk} = \sum_{k' = 1}^KC_{kk'} - C_{kk}=\sum_{k'\ne k}C_{kk'}\le \sum_{k'\ne k}\sqrt{C_{kk'}} = o(C_{kk}).
	\]
	This implies $C_{kk} \asymp 1/K$ which concludes $\inf_{ \|v\|_1 = 1} \|Cv\|_1\gtrsim 1/K$ and completes the proof.
\end{proof}

\begin{proof}[Proof of Corollary \ref{cor_opt_rate}]
	From (\ref{rate_lbd_fast}) in Lemma \ref{lem_rate_lbd}, under  $\sum_{k'\ne k}\sqrt{C_{kk'}} = o(\sqrt{C_{kk}})$ for any $1\le k\le K$, one has
	\[
	\max_{i\in L}\sum_{j\in L}\eta_{ij} \lesssim \sqrt{\oa_I^{3}\og \log M\over Knp^3N}
	\]
	which improves the rate in (\ref{rate_lbd}) by a factor of $\sqrt{1/K}$. Repeating the previous arguments for proving Theorem \ref{thm_rate_Ahat}, one can show that 
	\begin{align*}
	\text{Rem}(J, k) & ~\lesssim~  \sqrt{K\log M \over nN}\cdot {\og^{1/2}\|C^{-1}\|_\r  \over K}\cdot {\oa_I \over \ua_I} \left(\sqrt{|J| + \sum_{j\in J}\rho_j} +{\oa_I \over \ua_I}\sqrt{\sum_{j\in J}\rho_j}\right).
	\end{align*}
	Invoking conditions (i) -- (ii) of Corollary \ref{cor_opt_rate} together with $\|C^{-1}\|_\r = O(K)$ from Lemma \ref{lem_C_inv}, one can obtain
	\[
	\sum_{k=1}^K Rem(I,k) \lesssim \sum_{i\in I}\sqrt{\alpha_i K\log M \over npN} \le K\sqrt{|I| \log M \over nN}
	\]
	by using Cauchy-Schwarz inequality with $\sum_{i\in I}\alpha_i \le \sum_{i=1}^p\alpha_i \le pK$,
	and 
	\[
	Rem(J,k) \lesssim \sqrt{|J|K \log M \over nN}.
	\]
	This yields the optimal rate of $L_1(A, \wh A)$. For $L_{\c}(A, \wh A)$, observe that $\sum_{i\in I_k}\alpha_i / p = \sum_{i\in I_k}A_{ik} \le 1$. This yields
	\[
	Rem(I, k) \lesssim \sum_{i\in I_k}\sqrt{\alpha_i K\log M \over npN}\le \sqrt{|I_k|K\log M \over nN},
	\]
	which, in conjunction with the rate of $Rem(J,k)$, concludes the result of $L_{\c}(A, \wh A)$. 
\end{proof}

\subsubsection{Proof of the statment in Remark \ref{rem-noise-control}}\label{sec_proof_rem_noise_control}
We prove the result of Theorem \ref{thm_rate_Ahat} when condition (\ref{ass_signal}) is replaced by (\ref{ass_signal_weak}) and (\ref{ass_signal_weak_mu}). First note that, similar as (\ref{cond_rate_pN}) and (\ref{cond_rate_m_pN}), condition (\ref{ass_signal_weak_mu}) implies 
\begin{equation}\label{cond_rate_pN_I}
\min_{j\in I}{\u_j \over p} = \min_{j\in I}{1\over n}\sum_{i =1}^n \M_{ji} \ge {c\log M \over N},\quad 
\min_{j\in I}{m_j \over p} = \min_{j\in I}\max_{1\le i\le n}\M_{ji} \ge {c'(\log M)^2 \over N}.
\end{equation}	
We will work on the event $\E'$ defined in the proof of Corollary \ref{cor_delta} which holds with probability greater than $1-O(M^{-1})$ under condition (\ref{ass_signal_weak}). To prove Theorem \ref{thm_rate_Ahat}, observe that its proof in Section \ref{sec_proof_thm_rate} only uses the result of  Lemma \ref{lem_t1}, Lemma \ref{lem_rate_lbd} and (\ref{eq_dx}). Since Lemma \ref{lem_t1} and (\ref{eq_dx}) is guaranteed by $\E'$, it suffices to prove that Lemma \ref{lem_rate_lbd} holds for $\eta_{j\ell}$ defined in (\ref{def_eta_wt}). This is indeed true since the only different terms in the expression of $\eta_{j\ell}$ in (\ref{def_eta_wt}) than (\ref{def_eta2}) are 
\[
{m_j + m_{\ell} \over p} \vee {\log^2 M\over N},\qquad 	{\u_j + \u_{\ell} \over p} \vee {\log M\over N}
\]
for any $j\in I_k$ and $\ell \in [p]$, and hence invoking  (\ref{cond_rate_pN_I}) yields 
\[
{m_j + m_{\ell} \over p} \vee {\log^2 M\over N}\lesssim {m_j + m_{\ell} \over p},\quad 	{\u_j + \u_{\ell} \over p} \vee {\log M\over N}\lesssim 	{\u_j + \u_{\ell} \over p}.
\]
Therefore, Lemma \ref{lem_rate_lbd} follows and so does Theorem \ref{thm_rate_Ahat}. \qed 

\section{Discussions of Assumptions \ref{ass_pd} and \ref{ass_w}}\label{app_assumps}
\subsection{Examples}\label{app_ex}
The following example shows that  Assumption \ref{ass_pd} does not imply Assumption \ref{ass_w}.

\begin{example}
	
	In the first two instances (\ref{app_eg_1}) and (\ref{app_eg_2}) below, Assumptions \ref{ass_pd} and \ref{ass_w} both hold. These instances give insight into 
	why  Assumption \ref{ass_w} may fail, while Assumption \ref{ass_pd} holds, which is shown in the third instance (\ref{app_eg_3}).

	We consider the following matrix
	\begin{equation}\label{app_eg_1}
	W = \begin{bmatrix}
	0.5 & 0.4 & 0 & 0 &  \bm{0.4} \\ 
	0.2 & 0.6 & 0.5 & 0.5 &  \bm{0} \\ 
	0.3 & 0 & 0.5 & 0.5 &  \bm{0.1}\\ 
	\bm{0} & \bm{0} & \bm{0} & \bm{0} & \bm{0.5} \\ 
	\end{bmatrix}\quad \Longrightarrow \quad 
	\wt C = \begin{bmatrix}
	0.34 & 0.15 & 0.1 & 0.31 \\ 
	0.15 & 0.28 & 0.22 & 0 \\ 
	0.1 & 0.22 & 0.31 & 0.07 \\ 
	0.31 & 0 & 0.07 & 1 \\ 
	\end{bmatrix}.
	\end{equation}
	Clearly, each diagonal entry of $\wt C$ dominates the non-diagonal ones, row-wise.  From (\ref{def_nu2}), Assumption \ref{ass_w} holds.  Assumption \ref{ass_pd} can be verified as well.  
	We see that topic 4 is {\em rare}, in that it only occurs in document 5. However, the probability that the rare topic occurs is not small ($W_{45} = 0.5$). Since each column sums to $1$, the closer  $W_{45}$ is to $1$, the smaller the other entries in the $5$th column must be, and the more uncorrelated topic $4$ will be with other topics. Hence, the larger $W_{45}$, the more likely it is that Assumption \ref{ass_w} holds.  Suppose the rare topic $4$ also has a small probability, for instance
	\begin{equation}\label{app_eg_2}
	W = \begin{bmatrix}
	0.5 & 0.4 & 0 & 0 & \bm{0.3} \\ 
	0.2 & 0.6 & 0.5 & 0.5 & \bm{0.2} \\ 
	0.3 & 0 & 0.5 & 0.5 & \bm{0.4} \\ 
	\bm{0} & \bm{0} & \bm{0} & \bm{0} & \bm{0.1} \\ 
	\end{bmatrix}\quad \Longrightarrow\quad 
	\wt C = \begin{bmatrix}
	0.35 & 0.17 & 0.13 & 0.25 \\ 
	0.17 & 0.24 & 0.19 & 0.1 \\ 
	0.13 & 0.19 & 0.26 & 0.24 \\ 
	0.25 & 0.1 & 0.24 & 1 \\ 
	\end{bmatrix}.
	\end{equation}
	While the rare topic $4$ has small non-zero entry $W_{45} = 0.1$,   none of $W_{i5}$ for $i = 1, 2, 3$,  is dominating and  Assumption \ref{ass_w} is still met. However, it fails in the following scenario:
	\begin{equation}\label{app_eg_3}
	W = \begin{bmatrix}
	0.5 & 0.4 & 0 & 0 & \bm{0.9} \\ 
	0.2 & 0.6 & 0.5 & 0.5 & \bm{0} \\ 
	0.3 & 0 & 0.5 & 0.5 & \bm{0} \\ 
	\bm{0} & \bm{0} & \bm{0} & \bm{0} & \bm{0.1}
	\end{bmatrix}\quad \Longrightarrow \quad 
	\wt C = \begin{bmatrix}
	\bm{0.38} & 0.1 & 0.06 & \bm{0.5} \\ 
	0.1 & 0.28 & 0.24 & 0 \\ 
	0.06 & 0.24 & 0.35 & 0 \\ 
	\bm{0.5} & 0 & 0 & 1 \\ 
	\end{bmatrix}.
	\end{equation}
	Here, a frequent topic (topic $1$) has probability  $W_{15}=0.9$, that dominates all other entries in the fifth column. Topic $4$ is hard to detect since it occurs with probability 0.1, while the frequent topic $1$ occurs with probability 0.9, in the single   document that may reveal  topic 4.
	We emphasize that $W$ in all three cases above satisfy Assumption \ref{ass_pd}.\\
\end{example} 

We give an example below to show that Assumption \ref{ass_w} does not imply Assumption \ref{ass_pd}, that is, $W$ satisfies Assumption \ref{ass_w} but does not have rank $K$. 
\paragraph{Example 3} Let $K = n = 5$ and consider 
\[
W = \begin{bmatrix}
0.3 & 0.05 & 0.3 & 0 & 0 \\ 
0.25 & 0.04 & 0.25 & 0.32 & 0.1 \\ 
0 & 0.5 & 0.15 & 0.1 & 0.3 \\ 
0.055 & 0.257 & 0.13 & 0.466 & 0.28 \\ 
0.395 & 0.153 & 0.17 & 0.114 & 0.32 \\ 
\end{bmatrix}.
\]
Note that $W_{4\cdot} = -0.9 W_{1\cdot} + 1.3 W_{2\cdot} + 0.5 W_{3\cdot}$ which implies $\textrm{rank}(W) < 5$. However, it follows that
\[
\wt C = n\wt W\wt W^T = \begin{bmatrix}
\bm{2.16} & 1.22 & 0.51 & 0.44 & 1.18 \\ 
1.22 & \bm{1.3} & 0.59 & 1.02 & 0.98 \\ 
0.51 & 0.59 & \bm{1.69} & 1.12 & 0.87 \\ 
0.44 & 1.02 & 1.12 & \bm{1.35} & 0.83 \\ 
1.18 & 0.98 & 0.87 & 0.83 & \bm{1.22} \\ 
\end{bmatrix}.
\]
Therefore, Assumption 3 is satisfied as it is equivalent with 
$\wt C_{ii} \wedge \wt C_{jj} > \wt C_{ij}$ for any $i, j\in [K]$, whereas  $W$ does not have  full rank.

\subsection{Statements and proofs of results invoked in  Remark \ref{remark_nu}}

We first   show  the equivalence between $\nu > 4\delta$ and 
\begin{equation}\label{eq_ass_W^prime}
\cos\left( \angle( W_{i\cdot},   W_{j\cdot} ) \right)< \left(\frac{\zeta_i}{\zeta_j} \wedge  \frac{\zeta_j}{\zeta_i}\right)\left(1 -  {4\delta \over n\zeta_i  \zeta_j}
\right),\quad \text{for all $1\le i \ne j\le K$}.
\end{equation}
\begin{lemma}\label{lem_fact}
	Let $\nu$ be defined in (\ref{def_nu2}) with $\wt C = n\wt W\wt W^T$ and $\wt W = D_W^{-1} W$. Then,
	the inequality  $\nu > 4\delta$ is equivalent with (\ref{eq_ass_W^prime}).
	In particular, $\nu > 0$ is equivalent with Assumption \ref{ass_w}.
\end{lemma}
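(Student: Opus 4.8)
The plan is to make every entry of $\wt C$ completely explicit in terms of the row $\ell_1$-norms of $W$ and the pairwise angles, after which the asserted equivalence becomes a one-line rearrangement carried out separately for each pair $\{i,j\}$. First I would record the two identities that drive everything. Since $\wt W = D_W^{-1}W$ has $i$th row $\wt W_{i\cdot} = W_{i\cdot}/\|W_{i\cdot}\|_1$, with nonnegative entries, the entries of $\wt C = n\wt W\wt W^T$ satisfy
\begin{equation*}
\wt C_{ii} = n\,\|\wt W_{i\cdot}\|_2^2 = n\,\zeta_i^2,\qquad
\wt C_{ij} = n\,\langle \wt W_{i\cdot}, \wt W_{j\cdot}\rangle = n\,\zeta_i\zeta_j\cos\!\left(\angle(W_{i\cdot},W_{j\cdot})\right),
\end{equation*}
where the second identity uses $\langle W_{i\cdot},W_{j\cdot}\rangle = \|W_{i\cdot}\|_2\,\|W_{j\cdot}\|_2\cos(\angle(W_{i\cdot},W_{j\cdot}))$ together with $\zeta_i = \|W_{i\cdot}\|_2/\|W_{i\cdot}\|_1$. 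Each $\zeta_i$ lies in $(0,1]$ and is strictly positive (under the standing assumption $W_{i\cdot}\neq 0$, i.e.\ every topic is covered by some document), which is precisely what lets me divide freely below.

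Next I would reduce $\nu>4\delta$ to a pairwise statement. Because $\nu$ is defined in (\ref{def_nu2}) as a minimum over the $\binom{K}{2}$ pairs, $\nu>4\delta$ holds if and only if $\wt C_{ii}\wedge \wt C_{jj}-\wt C_{ij}>4\delta$ for every $1\le i\neq j\le K$. Substituting the two identities and using $\wt C_{ii}\wedge\wt C_{jj}=n(\zeta_i\wedge\zeta_j)^2$, this pairwise inequality reads $n(\zeta_i\wedge\zeta_j)^2-n\zeta_i\zeta_j\cos(\angle(W_{i\cdot},W_{j\cdot}))>4\delta$. Dividing through by the strictly positive quantity $n\zeta_i\zeta_j$, noting the elementary identity $(\zeta_i\wedge\zeta_j)^2/(\zeta_i\zeta_j)=\zeta_i/\zeta_j\wedge\zeta_j/\zeta_i$, and rearranging to isolate the cosine, yields exactly the condition (\ref{eq_ass_W^prime}). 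Every step is reversible, so the two conditions are equivalent.

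The ``in particular'' claim is the special case $\delta=0$: setting $\delta=0$ turns $\nu>4\delta$ into $\nu>0$ and collapses the right-hand side of (\ref{eq_ass_W^prime}) to $\zeta_i/\zeta_j\wedge\zeta_j/\zeta_i$, which is precisely Assumption \ref{ass_w}. I do not expect a genuine obstacle here; the proof is essentially the bookkeeping of the two identities above. The only points requiring care are (i) justifying $\zeta_i>0$ so that division by $n\zeta_i\zeta_j$ is legitimate, and (ii) verifying the minimum identity $(\zeta_i\wedge\zeta_j)^2/(\zeta_i\zeta_j)=\zeta_i/\zeta_j\wedge\zeta_j/\zeta_i$ and confirming that the reduction of the outer minimum defining $\nu$ to a family of pairwise inequalities is an equivalence rather than merely an implication.
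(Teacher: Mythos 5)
Your argument is correct and essentially identical to the paper's proof: both compute $\wt C_{ii}=n\zeta_i^2$ and $\wt C_{ij}=n\zeta_i\zeta_j\cos(\angle(W_{i\cdot},W_{j\cdot}))$, reduce $\nu>4\delta$ to the family of pairwise inequalities $n(\zeta_i^2\wedge\zeta_j^2)-n\zeta_i\zeta_j\cos(\angle(W_{i\cdot},W_{j\cdot}))>4\delta$, and rearrange. One caveat you share with the paper's own ``rearranging terms'' step: dividing by $n\zeta_i\zeta_j$ gives $\cos(\angle(W_{i\cdot},W_{j\cdot}))<\frac{\zeta_i}{\zeta_j}\wedge\frac{\zeta_j}{\zeta_i}-\frac{4\delta}{n\zeta_i\zeta_j}$, whereas the multiplicative form in (\ref{eq_ass_W^prime}) subtracts only $\bigl(\frac{\zeta_i}{\zeta_j}\wedge\frac{\zeta_j}{\zeta_i}\bigr)\frac{4\delta}{n\zeta_i\zeta_j}=\frac{4\delta}{n(\zeta_i\vee\zeta_j)^2}$, so the claimed ``exact'' equivalence holds only when $\zeta_i=\zeta_j$ or $\delta=0$ (the $\delta=0$ case, i.e.\ the second assertion, is unaffected).
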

\begin{proof}
	Starting with the definition of $\nu$, we have 
	\begin{align*}
	& \nu > 4\delta\\
	& \iff  {n\|W_{i\cdot}\|_2^2 \over \|W_{i\cdot}\|_1^2} \wedge {n\|W_{j\cdot}\|_2^2 \over \|W_{j\cdot}\|_1^2}  -{n \langle W_{i\cdot}, W_{j\cdot}\rangle \over \|W_{i\cdot}\|_1\|W_{j\cdot}\|_1} > 4\delta, \quad \textrm{for all }i\ne j \\
	&\iff n\zeta_i^2  \wedge n\zeta_j^2  - n\zeta_i \zeta_j \cos(\angle (W_{i\cdot}, W_{j\cdot})) > 4\delta, \qquad \textrm{for all }i\ne j
	\end{align*}
	by using $\zeta_i = \|W_{i\cdot}\|_2 / \|W_{i\cdot}\|_1$.
	The result follows after rearranging terms, and taking $\delta=0$ we obtain the second assertion as well. 
\end{proof}

The following lemma provides an upper bound of $\delta$ defined in (\ref{def_delta}). 
\begin{lemma}\label{lem_delta}
	Assume condition (\ref{ass_signal}) holds. Then, there exists some constant $c>0$ such that
	\[
	\delta  ~\le~ c\left\{\max_{\ell \in [p]}{m_\ell \over \u_\ell }\sqrt{1\over n} + \sqrt{\log M \over n}\right\} := \eps_n.
	\]
	If, in addition, $\log M = o(n)$ and condition (\ref{cond_word_balance}) hold, we have 
	\[
	{\delta \over n \min_i \zeta_i^2} = o(1)
	\]
	with $\zeta_i = \|W_{i\cdot}\|_2 / \|W_{i\cdot}\|_1$.
\end{lemma}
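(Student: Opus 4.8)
The plan is to prove Lemma~\ref{lem_delta} in two stages, mirroring its two assertions. First I would establish the uniform bound $\delta \le \eps_n$, and then deduce $\delta/(n\min_i\zeta_i^2) = o(1)$ from it under the additional hypotheses.

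For the first stage, recall from~(\ref{delta}) that
\[
\delta_{j\ell} = (1+\kappa_1\kappa_3){p^2 \over \u_j \u_{\ell}}\eta_{j\ell}  +\kappa_3{p^2\Theta_{j\ell}\over \u_j\u_{\ell}}\left(\sqrt{p \over \mu_j} + \sqrt{p\over \mu_\ell}\right) \sqrt{\log M \over nN},
\]
so I need to bound each factor. Since $\kappa_1 = \sqrt{6/n}$ and $\kappa_3$ is bounded by an absolute constant once $n$ is large enough, the prefactors $(1+\kappa_1\kappa_3)$ and $\kappa_3$ are $O(1)$. The main work is to control $p^2\eta_{j\ell}/(\u_j\u_\ell)$ and the second term. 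I would substitute the expression~(\ref{def_eta}) for $\eta_{j\ell}$ and divide each of its three summands by $\u_j\u_\ell/p^2$, using the elementary bounds $\Theta_{j\ell}\le m_jm_\ell/p^2$ (which follows since $\Theta_{j\ell}=n^{-1}\sum_i \Pi_{ji}\Pi_{\ell i}\le \max_i\Pi_{ji}\max_i\Pi_{\ell i}$), together with $\u_j\le m_j$ from~(\ref{eq_m}) and condition~(\ref{ass_signal}) rewritten via~(\ref{cond_rate_pN}) and~(\ref{cond_rate_m_pN}) as $\u_{\min}/p\ge 2\log M/(3N)$ and $m_{\min}/p\ge (3\log M)^2/N$. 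These frequency lower bounds are exactly what let me convert the $N$-dependence (factors of $\log M/N$) into the stated $\sqrt{\log M/n}$ and $\max_\ell (m_\ell/\u_\ell)\sqrt{1/n}$ terms, absorbing all powers of $p$ and $N$. After a careful but routine term-by-term comparison, every summand is dominated by a constant multiple of $\max_\ell(m_\ell/\u_\ell)n^{-1/2}+\sqrt{\log M/n}$, and taking the maximum over $j,\ell$ recovers the claimed $\eps_n$.

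For the second stage, I would combine the first bound with a lower bound on $n\min_i\zeta_i^2$. The key observation is that $n\zeta_i^2 = n\|W_{i\cdot}\|_2^2/\|W_{i\cdot}\|_1^2 = \wt C_{ii}\ge \|W_{i\cdot}\|_1^2/(n\|W_{i\cdot}\|_1^2)\cdot(\cdots)$; more directly, $\zeta_i^2 \ge 1/n$ always by Cauchy--Schwarz, but I need a sharper comparison with $\max_\ell m_\ell/\u_\ell$. Writing $\u_\ell/p = n^{-1}\sum_i\Pi_{\ell i}$ and $m_\ell/p = \max_i\Pi_{\ell i}$, the ratio $m_\ell/\u_\ell = \max_i\Pi_{\ell i}/(n^{-1}\sum_i\Pi_{\ell i})$ is precisely the word-balance quantity controlled by condition~(\ref{cond_word_balance}), which asserts this ratio is $o(\sqrt n)$. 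Hence $\max_\ell(m_\ell/\u_\ell)n^{-1/2} = o(1)$, and under $\log M = o(n)$ the second piece $\sqrt{\log M/n}=o(1)$ as well, so $\eps_n = o(1)$. It then remains to show $n\min_i\zeta_i^2$ is bounded below by a positive constant; this follows because $\zeta_i$ measures the concentration of the row $W_{i\cdot}$, and under a balance condition analogous to~(\ref{cond_word_balance}) applied to the rows of $W$, we have $n\zeta_i^2\gtrsim 1$. Dividing, $\delta/(n\min_i\zeta_i^2)\le \eps_n/(n\min_i\zeta_i^2) = o(1)$.

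I expect the main obstacle to be the bookkeeping in the first stage: matching each of the three terms of $\eta_{j\ell}$, after scaling by $p^2/(\u_j\u_\ell)$ and including the extra $\Theta$-term, against the two simple summands of $\eps_n$, while correctly using the two frequency lower bounds to cancel the powers of $N$ and $p$. The subtle point is that the second term of~(\ref{delta}) contributes a factor $\sqrt{p/\mu_j}$ which must be controlled by $m_{\min}/p\ge(3\log M)^2/N$ rather than by the weaker average-frequency bound; getting the right power of $\log M$ here is where care is needed. The lower bound $n\min_i\zeta_i^2\gtrsim 1$ is comparatively soft, but I would state precisely which balance hypothesis on the rows of $W$ it requires, since~(\ref{cond_word_balance}) as written concerns $\Pi$ rather than $W$ directly, and the translation between them uses $\Pi = AW$ together with the column-sum constraints~(\ref{col_sum_one}).
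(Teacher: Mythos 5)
Your overall strategy is the same as the paper's: substitute (\ref{def_eta}) into (\ref{delta}), bound each summand using the lower bounds on $\u_{\min}/p$ and $m_{\min}/p$ implied by (\ref{ass_signal}) to absorb the powers of $p$ and $N$, and take the maximum. (The paper reduces to the diagonal entries, asserting $\delta=\max_j\delta_{jj}$, while you treat general $(j,\ell)$ directly; that is fine in principle.) The one step that would fail as written is your choice of bound $\Theta_{j\ell}\le m_jm_\ell/p^2$. With it, the first summand of $p^2\eta_{j\ell}/(\u_j\u_\ell)$ becomes, e.g.,
\[
{p^2\over\u_j\u_\ell}\sqrt{m_j\over p}\,{\sqrt{m_jm_\ell}\over p}\sqrt{\log M\over nN}
={m_j\over\u_j}\sqrt{m_\ell\over\u_\ell}\sqrt{p\over\u_\ell}\sqrt{\log M\over nN}
\lesssim{m_j\over\u_j}\sqrt{m_\ell\over\u_\ell}\,{1\over\sqrt n},
\]
and the second summand of (\ref{delta}) similarly produces $(m_j/\u_j)(m_\ell/\u_\ell)n^{-1/2}$: since $m_\ell/\u_\ell\ge 1$, you lose at least a factor $(\max_\ell m_\ell/\u_\ell)^{1/2}$ relative to the claimed $\eps_n$. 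This is not cosmetic: under (\ref{cond_word_balance}) alone one only has $\max_\ell m_\ell/\u_\ell=o(\sqrt n)$, so the resulting bound is $o(n^{1/4})$ (or worse) rather than $o(1)$, and the second assertion of the lemma would not follow. The fix is the mixed bound $\Theta_{j\ell}\le (\max_t\M_{jt})\cdot n^{-1}\sum_i\M_{\ell i}=(m_j/p)(\u_\ell/p)$ and its symmetric counterpart, paired so that the ``max'' index matches the $\sqrt{m/p}$ prefactor in $\eta_{j\ell}$; on the diagonal this is exactly the inequality $\Theta_{jj}\le \u_jm_j/p^2$ that the paper uses in (\ref{eq_Theta_jj}).

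For the second assertion you make the argument harder than it is. As you note in passing, Cauchy--Schwarz gives $\|W_{i\cdot}\|_1^2\le n\|W_{i\cdot}\|_2^2$ unconditionally, so $n\min_i\zeta_i^2\ge 1$ with no balance hypothesis on the rows of $W$ whatsoever; this is all the paper uses, and you should simply commit to it rather than search for a ``sharper comparison.'' Conditions (\ref{cond_word_balance}) and $\log M=o(n)$ enter only to show $\eps_n=o(1)$ --- the former being precisely $\max_\ell m_\ell/\u_\ell=o(\sqrt n)$, as you correctly identify --- after which $\delta/(n\min_i\zeta_i^2)\le\delta\le\eps_n=o(1)$.
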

\begin{proof}
	First, we notice that $\delta = \max_{1\le j\le p}\delta_{jj}$. For any $1\le j\le p$, the expressions in (\ref{def_eta}) -- (\ref{delta}) yield
	$$
	\delta_{jj} \asymp  {p^2\over \mu_j^2}\left\{ \sqrt{m_j\Theta_{jj}\log M\over npN}+{m_j\log M \over npN} + \sqrt{\u_j(\log M)^4 \over npN^3} + \Theta_{jj}\sqrt{p\log M \over \mu_j n N}\right\}.
	$$
	Using
	\begin{equation}\label{eq_Theta_jj}
	\Theta_{jj} = {1\over n}\sum_{i = 1}^n\M_{ji}^2\le {1\over n}\sum_{i = 1}^n\M_{ji} \max_{1\le i\le n}\M_{ji} \overset{(\ref{def_uag})}{=} {\mu_j m_j \over p^2},
	\end{equation}
	together with (\ref{cond_rate_pN}), we obtain 
	\begin{align}\label{eq_delta_jj}\nonumber{}
	\delta_{jj} &\lesssim   {m_j \over \u_j}\sqrt{p\log M\over \mu_jnN}+{m_jp\log M \over \mu_j^2 nN} + \sqrt{p^3(\log M)^4 \over \mu_j^3nN^3}\\
	&\lesssim {m_j \over \u_j}\sqrt{1\over n} + \sqrt{\log M \over n}
	\end{align}
	Taking the maximum over $ j\in [p]$ concludes the proof of the upper bound of $\delta$. The second part follows by observing that 
	$$
	\min_{1\le i \le K}n\zeta_i^2  = \min_{1\le i \le  K}{n\|W_{i\cdot}\|_2^2
		\over  \|W_{i\cdot}\|_1^2}\ge 1,
	$$
	by the Cauchy-Schwarz inequality.
\end{proof}

\subsection{The Dirichlet distribution of $W$}\label{app_dir}
Suppose the columns of $W = (W_1, \ldots, W_n)$ are i.i.d. samples from the Dirichlet distribution parametrized by the vector $d = (d_1, \ldots, d_K)\in \RR^K$. Let $d_0 := \sum_k d_k$. It is well known that, for any $1\le k\le K$,
\begin{equation}\label{dir_mean_var}
a_k := \EE[W_{ki}] = {d_k \over d_0}, \qquad \sigma_{kk}^2 := Var(W_{ki}) = {d_k (d_0 - d_k) \over d_0^2 (d_0 + 1)}
\end{equation}
and, for any $1\le k\ne \ell \le K$,
\begin{equation}\label{dir_sec_moment}
s_{kk} := \EE[W_{ki}^2] = {d_k(1+d_k) \over d_0(1+d_0)},\qquad s_{k\ell} := \EE[W_{ki}W_{\ell i}] = {d_k d_\ell \over d_0(1+ d_0)}.
\end{equation}
%
Further denote 
$$
\wh a_k := {1\over n}\sum_{i = 1}^nW_{ki}, \quad \wh s_{k\ell } = {1\over n}\sum_{i = 1}^n W_{ki}W_{\ell i},\quad \text{for any } 1\le k,\ell \le K
$$
and the event 
\begin{equation}\label{def_event_dir}
\E_{dir} = \left\{
|\wh a_k - a_k| \le \eps_1^k, \quad 
|\wh s_{k\ell} - s_{k\ell} | \le \eps_2^{k\ell },\quad \text{for all }1\le k, \ell \le K
\right\}.
\end{equation}
From Lemma \ref{lem_fact}, condition $\nu > 4\delta$ discussed in Remark \ref{remark_nu} is equivalent with
\begin{equation}\label{eq_sample_cond}
{\wh s_{kk} \over \wh a_k^2} \wedge {\wh s_{\ell\ell } \over \wh a_\ell^2} - {\wh s_{k\ell }  \over \wh a_k \wh a_\ell } > 4\delta,\quad \text{for any $1\le k<\ell \le K$}.
\end{equation}
In particular, Assumption \ref{ass_w} corresponds to $\delta = 0$ in (\ref{eq_sample_cond}). The following lemma provides sufficient conditions that guarantee Assumption \ref{ass_w} and $\nu > 4\delta$. Recall that $M := n \vee p \vee \max_{i\in [n]}N_i$.

\begin{lemma}\label{lem_dir}
	Assume the columns of $W$ are i.i.d. from a Dirichlet distribution with parameter $d = (d_1, \ldots, d_K)$. Set $d_0 := d_1 + \cdots + d_K$, $\od := \max_k d_k$ and $\ud := \min_k d_k$.
	%
	Assumption \ref{ass_w} holds with probability greater than $1- O(M^{-1})$,  provided	\begin{equation}\label{cond_dir}
	{1\vee \od \over \ud} d_0(1+d_0)  \le c{n \over \log M}
	\end{equation}
	holds, 
	for some constant $c>0$ sufficiently small. 
	Additionally, if
	\begin{equation}\label{cond_dir_popu}
	{\od(1+d_0)\over \ud } \le c\sqrt{n \over \log M},
	\end{equation}
	holds
	for some constant $c>0$ small enough,
	then  $\PP\{\nu > 4\delta\} \ge 1- O(M^{-1})$.
\end{lemma}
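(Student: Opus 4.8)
The plan is to establish the two distributional guarantees by reducing the sample-level conditions \eqref{eq_sample_cond} to their population counterparts via concentration of the Dirichlet empirical moments, and then verifying that the population condition holds under the stated hyper-parameter restrictions. First I would work on the event $\E_{dir}$ defined in \eqref{def_event_dir}, and control the deviations $\eps_1^k = |\wh a_k - a_k|$ and $\eps_2^{k\ell} = |\wh s_{k\ell} - s_{k\ell}|$. Since the columns $W_i$ are i.i.d.\ and each coordinate $W_{ki}$ lies in $[0,1]$, both $\wh a_k = n^{-1}\sum_i W_{ki}$ and $\wh s_{k\ell} = n^{-1}\sum_i W_{ki}W_{\ell i}$ are averages of i.i.d.\ bounded random variables, so Hoeffding's inequality (Lemma~\ref{hoeff}) applied coordinate-wise, followed by a union bound over the $O(K^2)$ pairs, yields $\eps_1^k, \eps_2^{k\ell} \lesssim \sqrt{\log M / n}$ uniformly, with probability at least $1 - O(M^{-1})$ (using $K \le n \le M$). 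The explicit expressions for $a_k, s_{k\ell}$ are given in \eqref{dir_mean_var}--\eqref{dir_sec_moment}.

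Next I would verify that the \emph{population} version of the separation holds with a strictly positive gap. Using \eqref{dir_mean_var}--\eqref{dir_sec_moment}, the population quantity controlling Assumption~\ref{ass_w} is
\[
{s_{kk} \over a_k^2} \wedge {s_{\ell\ell} \over a_\ell^2} - {s_{k\ell} \over a_k a_\ell}.
\]
Substituting $a_k = d_k/d_0$, $s_{kk} = d_k(1+d_k)/(d_0(1+d_0))$ and $s_{k\ell} = d_k d_\ell /(d_0(1+d_0))$, one computes $s_{kk}/a_k^2 = d_0(1+d_k)/(d_k(1+d_0))$ and $s_{k\ell}/(a_k a_\ell) = d_0/(1+d_0)$, so the population gap equals
\[
{d_0 \over 1+d_0}\left( {1+d_k \over d_k} \wedge {1+d_\ell \over d_\ell} - 1\right) = {d_0 \over 1+d_0}\cdot {1 \over d_k \vee d_\ell} \ge {d_0 \over (1+d_0)\,\od},
\]
which is a clean, strictly positive lower bound. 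The plan is then to show that the empirical gap \eqref{eq_sample_cond} inherits this positivity: on $\E_{dir}$, the difference between the empirical and population ratios is controlled by $\eps_1^k, \eps_2^{k\ell}$ together with lower bounds on $a_k = \ud/d_0 \wedge \cdots$ and $s_{k\ell}$, which after a first-order perturbation expansion contribute error terms of order $\eps \cdot d_0^2 / \ud^2$ times bounded factors involving $(1\vee\od)$. Matching this perturbation error against the population gap $\asymp d_0/((1+d_0)\od)$ is exactly where condition \eqref{cond_dir} enters: it ensures the perturbation is a small fraction of the gap, so the empirical gap stays positive and Assumption~\ref{ass_w} holds.

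The main obstacle I anticipate is the bookkeeping in propagating the additive concentration bounds $\eps_1^k, \eps_2^{k\ell}$ through the \emph{ratios} $\wh s_{k\ell}/(\wh a_k \wh a_\ell)$, since the denominators can be small when $\ud \ll \od$; controlling the ratio requires lower-bounding $\wh a_k$ away from zero (again using $\E_{dir}$ and $a_k = d_k/d_0 \ge \ud/d_0$), and the resulting amplification factors $d_0/\ud$ are precisely what dictate the sharp form of \eqref{cond_dir}. For the second, stronger claim $\PP\{\nu > 4\delta\} \ge 1 - O(M^{-1})$, I would additionally invoke Lemma~\ref{lem_delta} (and the bound $\delta \lesssim \max_\ell (m_\ell/\u_\ell)\sqrt{1/n} + \sqrt{\log M/n}$), so that $\delta$ itself is $o$ of the population gap; requiring $\delta$ to be beaten by the gap $\asymp d_0/((1+d_0)\od)$ tightens \eqref{cond_dir} to the square-root scaling in \eqref{cond_dir_popu}, since now both the concentration error \emph{and} $\delta$ must be dominated. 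Combining the concentration event, the population gap computation, and the perturbation control via the union bound gives the stated high-probability conclusions.
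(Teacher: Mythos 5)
Your overall architecture matches the paper's: work on a concentration event for the empirical Dirichlet moments $\wh a_k$ and $\wh s_{k\ell}$, compute the population gap (your calculation $s_{kk}/a_k^2 \wedge s_{\ell\ell}/a_\ell^2 - s_{k\ell}/(a_k a_\ell) \ge d_0/((1+d_0)\od)$ agrees with the bound the paper extracts from the numerator $\wh s_{kk}\wh a_\ell - \wh s_{k\ell}\wh a_k$), propagate the perturbation through the ratios, and then beat $4\delta$ using the bound of Lemma~\ref{lem_delta} to obtain the square-root scaling in \eqref{cond_dir_popu}. That part of the plan is sound.

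However, there is a genuine gap in the concentration step. You propose Hoeffding's inequality, which gives only the variance-free rate $\eps_1^k, \eps_2^{k\ell} \lesssim \sqrt{\log M/n}$. This is too coarse to close the perturbation argument under condition \eqref{cond_dir}. The ratios $\wh s_{k\ell}/(\wh a_k \wh a_\ell)$ require $\wh a_k \gtrsim a_k = d_k/d_0$, i.e.\ $\eps_1^k \lesssim d_k/d_0$; with the Hoeffding rate this forces $d_0^2/\ud^2 \lesssim n/\log M$, which is \emph{not} implied by \eqref{cond_dir}. Concretely, for the symmetric Dirichlet with $d_k = 1/K$ and $d_0=1$, condition \eqref{cond_dir} allows $K \asymp n/\log M$, so $a_k \asymp \log M/n \ll \sqrt{\log M/n}$ and the Hoeffding deviation swamps $a_k$ entirely — you cannot even conclude $\wh a_k > 0$. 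The paper's Lemma~\ref{lem_dir_error} instead applies Bernstein's inequality with the Dirichlet variances $\mathrm{Var}(W_{ki}) \le d_k/(d_0(1+d_0))$ and $\mathrm{Var}(W_{ki}W_{\ell i}) \le d_kd_\ell/(d_0(1+d_0))$, yielding the variance-adapted deviations in \eqref{rate_eps}, e.g.\ $\eps_1^k \asymp \sqrt{d_k/(d_0(1+d_0))}\sqrt{\log M/n}$. With these, the requirement $\eps_1^k \lesssim d_k/d_0$ reduces to $d_0(1+d_0)/d_k \lesssim n/\log M$, exactly what \eqref{cond_dir} delivers (and similarly for the four error terms $\eps_2^{kk}a_\ell$, $\eps_1^\ell s_{kk}$, $\eps_2^{k\ell}a_k$, $\eps_1^k s_{k\ell}$ against the population numerator $d_kd_\ell/(d_0^2(1+d_0))$). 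You need to replace Hoeffding by Bernstein with these second-moment bounds; otherwise the lemma is only proved under a condition strictly stronger than \eqref{cond_dir}.
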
	
\begin{proof}
	Display (\ref{eq_delta_jj}) together with 
	$$
	{m_j \over p} = \max_{1\le t\le n}\M_{jt} \le \|A_{j\cdot}\|_1,\quad {\mu_j \over p}  = {1\over n}\sum_{t=1}^n \M_{jt} \ge \|A_{j\cdot}\|_1 \min_k {1\over n}\|W_{k\cdot}\|_1 = \|A_{j\cdot}\|_1 \min_k \wh a_k$$
	from (\ref{eq_mu}) and (\ref{eq_m}) yield
	\begin{equation}\label{bd_delta}
	\delta \lesssim {1\over \min_k\wh a_k}\sqrt{1\over n}+ \sqrt{\log M \over n}.
	\end{equation}
	%
	We work on the event $\E_{dir}$ with $\eps_1^k$ and $\eps_{2}^{k\ell}$ chosen as 
	\begin{equation}\label{rate_eps}
	\eps_1^k = c_1\sqrt{d_k \over d_0(1+d_0)}\sqrt{\log M  \over n}, \quad  \eps_2^{k\ell} =  c_2\left\{\sqrt{d_kd_{\ell}\over d_0(1+d_0)}+\sqrt{ \log M \over n}\right\}\sqrt{\log M \over n},
	\end{equation}
	for any $1\le k\le \ell \le K$ where $c_1, c_2$ are some positive constants such that, from Lemma \ref{lem_dir_error}, $\PP(\E_{dir}) \ge 1-O(M^{-1})$. Note that condition (\ref{cond_dir}) and $\E_{dir}$ imply
	\begin{align}\label{bd_ak}
	\wh a_k &\le a_k +\eps_1^k \le {d_k \over d_0} +  c_1\sqrt{d_k \over d_0(1+d_0)}\sqrt{\log M  \over n} \le c_1' a_k;\\\label{low_bd_ak}
	\wh a_k &\ge a_k -\eps_1^k \le {d_k \over d_0} - c_1\sqrt{d_k \over d_0(1+d_0)}\sqrt{\log M  \over n} \ge c_1'' a_k
	\end{align}
	for all $k\in [K]$.
	In the following, we prove (\ref{eq_sample_cond}) by only showing
	\begin{equation}\label{eq_ratio}
	{\wh s_{kk} \over  \wh a_k^2} - {\wh s_{k\ell} \over \wh a_k \wh a_\ell} =  {\wh s_{kk} \wh a_\ell - \wh s_{k\ell}\wh a_k \over \wh a_k^2 \wh a_\ell}> 4\delta 
	\end{equation}
	since the other part can be deduced by using similar arguments. We first show that,  on the event $\E_{dir},$ condition (\ref{cond_dir}) guarantees 
	\[
	{\wh s_{kk} \wh a_\ell - \wh s_{k\ell}\wh a_k \over \wh a_k^2 \wh a_\ell} \ge  \rho\cdot { d_0 \over d_k (1+d_0)} 
	\]
	for some constant $\rho \in [0, 1)$. We   bound the numerator of the term on the left from below by
	\begin{align*}
	\wh s_{kk} \wh a_\ell - \wh s_{k\ell}\wh a_k &~\ge~ \left(s_{kk} -\eps_2^{kk}\right)\left(a_\ell-\eps_1^{\ell}\right) - \left(s_{k\ell }+ \eps_2^{k\ell}\right)\left( a_k+\eps_1^k\right)\\
	&~=~ s_{kk}a_\ell - s_{k\ell}a_k   - \eps_2^{kk}c_1'a_\ell- \eps_1^{\ell} s_{kk} - \eps_2^{k\ell}c_1'a_k - - \eps_1^ks_{k\ell}\\
	&~=~ {d_k d_\ell \over d_0^2(1+d_0)}- c_1'\eps_2^{kk}{d_\ell \over d_0}- \eps_1^{\ell} {d_k(1+d_k) \over d_0(1+d_0)} - c_1'\eps_2^{k\ell}{d_k \over d_0} -  \eps_1^k {d_kd_\ell \over d_0(1+d_0)}.
	\end{align*}
	We used (\ref{bd_ak}) in the second line and used (\ref{dir_mean_var}) -- (\ref{dir_sec_moment}) in the third line. We then show that
	\[
	\max\left\{c_1'\eps_2^{kk}{d_\ell \over d_0}, ~\eps_1^{\ell} {d_k(1+d_k) \over d_0(1+d_0)}, ~ c_1'\eps_2^{k\ell}{d_k \over d_0},~ \eps_1^k {d_kd_\ell \over d_0(1+d_0)}\right\} \le {d_k d_\ell \over 5d_0^2(1+d_0)}.
	\]
	From (\ref{dir_mean_var}) -- (\ref{dir_sec_moment}), it suffices to show the individual inequalities
	\[
	5c_1'\eps_2^{kk} < {d_k \over d_0(1+d_0)},\quad  5\eps_1^\ell < {d_l \over d_0 (1+d_k)},\quad  5c_1'\eps_2^{k\ell} < {d_{\ell} \over d_0(1+d_0)}, \quad5\eps_1^k  < {1\over d_0}.
	\]
	These inequalities follow by invoking condition (\ref{cond_dir}) and plugging in the expression of $\eps_1^k$ and $\eps_2^{k\ell}$ in (\ref{rate_eps}). We thus have 
	\[
	\wh s_{kk} \wh a_\ell - \wh s_{k\ell}\wh a_k \ge {1\over 5}{d_kd_\ell \over d_0^2(1+d_0)}.
	\]
	In conjunction with (\ref{bd_ak}), we conclude that 
	\[
	{\wh s_{kk} \wh a_\ell - \wh s_{k\ell}\wh a_k \over \wh a_k^2 \wh a_\ell} \ge {1\over 5(c_1')^3} {d_0 \over d_k (1+d_0)} 
	\]
	which further yields
	\[
	{\wh s_{kk} \over \wh a_k^2} \wedge {\wh s_{\ell\ell } \over  \wh a_\ell^2} - {\wh s_{k\ell }  \over \wh a_k \wh a_\ell }\ge {1\over 5(c_1')^3} {d_0 \over \od (1+d_0)}. 
	\]
	When $\delta = 0$, the first statement follows. To prove the statement of $\nu > 4\delta$, 
	it suffices to show 
	\[
	{1\over 5(c_1')^3} {d_0 \over \od (1+d_0)} > 4\delta.
	\]
	This follows by invoking (\ref{cond_dir_popu}), (\ref{bd_delta}) and (\ref{low_bd_ak}). The proof is complete.
\end{proof}

\begin{remark}
	For a symmetric Dirichlet distribution, that is, $d_1 = \cdots = d_K = d$, condition (\ref{cond_dir_popu}) and (\ref{cond_dir}) becomes, respectively, 
	$$
	1 + Kd \le c\sqrt{n\over \log M},\qquad 
	(1\vee d)K(1+Kd)\le c {n\over \log M}.$$ 
	The larger $K$ is, the smaller $d / n$ needs to be. Note that small $d$ encourages the sparsity of $W$. 
\end{remark}

\begin{lemma}\label{lem_dir_error}
	Under condition (\ref{cond_dir}), assume columns of $W$ are i.i.d. sample from the Dirichlet distribution. Then $\PP(\E_{dir}) \ge 1- M^{-1}$ with $\eps_1^k$ and $\eps_2^{k\ell}$ chosen as (\ref{rate_eps}). 
\end{lemma}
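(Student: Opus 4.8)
The plan is to treat both $\wh a_k$ and $\wh s_{k\ell}$ as empirical means of $n$ i.i.d.\ summands that are bounded in $[0,1]$, and to apply Bernstein's inequality (Lemma~\ref{lem_bernstein}) coordinate-wise before taking a union bound. First I would fix $k$ and write $\wh a_k = n^{-1}\sum_{i=1}^n W_{ki}$, a sum of i.i.d.\ variables with mean $a_k = d_k/d_0$ and, by (\ref{dir_mean_var}), variance $\sigma_{kk}^2 = d_k(d_0-d_k)/[d_0^2(d_0+1)] \le d_k/[d_0(1+d_0)]$. Since $|W_{ki}-a_k|\le 1$, Lemma~\ref{lem_bernstein} with $v = n\sigma_{kk}^2$ and $B=1$ gives, with probability at least $1-2e^{-t}$, a bound of the form $|\wh a_k - a_k| \lesssim \sqrt{\sigma_{kk}^2\, t/n} + t/n$. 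Choosing $t \asymp \log M$ and using the variance bound matches the leading term to $\eps_1^k$ in (\ref{rate_eps}); the residual Bernstein term $t/n \asymp \log M/n$ is then absorbed into $\eps_1^k$, and this is exactly where condition (\ref{cond_dir}) enters, since $\tfrac{1\vee\od}{\ud}\,d_0(1+d_0)\le c\,n/\log M$ together with $d_k\ge\ud$ and $1\vee\od\ge 1$ forces $\log M/n \lesssim d_k/[d_0(1+d_0)]$.

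Next I would repeat this for the second moments, now with summands $W_{ki}W_{\ell i}\in[0,1]$ of mean $s_{k\ell}$ given by (\ref{dir_sec_moment}). The useful observation is that $W_{ki}W_{\ell i}\le 1$ entrywise, so $(W_{ki}W_{\ell i})^2\le W_{ki}W_{\ell i}$ and hence the per-term variance is controlled by $\EE[(W_{ki}W_{\ell i})^2]\le \EE[W_{ki}W_{\ell i}] = s_{k\ell}$. Bernstein with $v = n\,s_{k\ell}$ and $B=1$ then yields $|\wh s_{k\ell}-s_{k\ell}| \lesssim \sqrt{s_{k\ell}\,\log M/n} + \log M/n$ with probability $1-2e^{-t}$ for $t\asymp\log M$. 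For the off-diagonal entries $k\ne\ell$ this reproduces $\eps_2^{k\ell}$ of (\ref{rate_eps}) exactly, because $\sqrt{s_{k\ell}\,\log M/n} = \sqrt{d_kd_\ell/[d_0(1+d_0)]}\,\sqrt{\log M/n}$ and the residual $\log M/n$ is the second summand of $\eps_2^{k\ell}$. I would then union-bound over the $K$ events controlling $\{\wh a_k\}$ and the $O(K^2)$ events controlling $\{\wh s_{k\ell}\}$; since $K\le n\le M$ there are at most $O(M^2)$ events, so taking $t=c\log M$ with $c$ large makes each tail $O(M^{-3})$ and drives the total failure probability to $O(M^{-1})$, giving $\PP(\E_{dir})\ge 1-O(M^{-1})$ as defined in (\ref{def_event_dir}).

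The step I expect to be the main obstacle is reconciling the Bernstein deviation with the prescribed margin for the \emph{diagonal} second moments $\wh s_{kk}$. Here the honest Bernstein margin is $\sqrt{s_{kk}\,\log M/n}$ with $s_{kk}=d_k(1+d_k)/[d_0(1+d_0)]$, whereas the first summand of $\eps_2^{kk}$ from (\ref{rate_eps}) only carries $d_k/\sqrt{d_0(1+d_0)}$. Splitting $s_{kk} = d_k/[d_0(1+d_0)] + d_k^2/[d_0(1+d_0)]$ and using $\sqrt{x+y}\le\sqrt x+\sqrt y$ separates the margin into a piece that matches the first summand of $\eps_2^{kk}$ (the $d_k^2$ part) and a leftover piece $\sqrt{d_k/[d_0(1+d_0)]}\,\sqrt{\log M/n}$ that must be dominated by $\eps_2^{kk}$. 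Controlling this leftover is delicate: a crude variance proxy is not enough for small $d_k$, so I would bring in the exact Dirichlet fourth-moment identity $\EE[W_{ki}^4]=\tfrac{d_k(d_k+1)(d_k+2)(d_k+3)}{d_0(d_0+1)(d_0+2)(d_0+3)}$ to sharpen $\mathrm{Var}(W_{ki}^2)$, and then lean on (\ref{cond_dir}) together with $d_k\ge\ud$ to absorb the remainder. It is precisely for these diagonal tails that the constants $c_1,c_2$ in (\ref{rate_eps}) are tuned, and I would carry out this bookkeeping carefully, since the validity of the prescribed margins for extreme (highly concentrated) parameter regimes is the one point where the matching is not automatic.
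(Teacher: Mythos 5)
Your overall strategy coincides with the paper's: the published proof is itself a one\--line sketch that applies Bernstein's inequality (Lemma~\ref{lem_bernstein}) to the bounded summands $W_{ki}$ and $W_{ki}W_{\ell i}$ with a variance proxy read off from (\ref{dir_mean_var})--(\ref{dir_sec_moment}), followed by a union bound over the $O(K^2)$ coordinates. Your treatment of the first moments (variance $\sigma_{kk}^2\le d_k/(d_0(1+d_0))$, matching $\eps_1^k$) and of the off\--diagonal second moments (using $\mathrm{Var}(W_{ki}W_{\ell i})\le \EE[(W_{ki}W_{\ell i})^2]\le \EE[W_{ki}W_{\ell i}]=s_{k\ell}=d_kd_\ell/(d_0(1+d_0))$, matching the first summand of $\eps_2^{k\ell}$) is exactly the paper's argument, and the union\--bound accounting with $t\asymp\log M$ is fine.

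The genuine gap is the one you flag yourself, and your proposed repair does not close it. For the diagonal terms you need $\mathrm{Var}(W_{ki}^2)\lesssim d_k^2/(d_0(1+d_0))$ in order to land inside the first summand of $\eps_2^{kk}$ from (\ref{rate_eps}). But the exact fourth moment $\EE[W_{ki}^4]=\frac{d_k(d_k+1)(d_k+2)(d_k+3)}{d_0(d_0+1)(d_0+2)(d_0+3)}$ is \emph{linear} in $d_k$ as $d_k\to 0$ (it behaves like $6d_k/[d_0(d_0+1)(d_0+2)(d_0+3)]$), whereas the target is quadratic in $d_k$; the ratio of the honest Bernstein margin to the prescribed one is of order $\sqrt{(1+d_k)/d_k}$, which no universal constant $c_2$ absorbs. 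Condition (\ref{cond_dir}) cannot rescue this either, since it only provides an \emph{upper} bound on $\log M/n$ and hence cannot make the $\sqrt{\log M/n}$ summand of $\eps_2^{kk}$ dominate the excess. (The paper's own sketch silently asserts $\mathrm{Var}(W_{ki}W_{\ell i})\le d_kd_\ell/(d_0(1+d_0))$ for all $k,\ell$, which for $k=\ell$ is false in the small\--$d_k$ regime, so this is a defect of the source as much as of your plan.) The workable fix is not a sharper variance computation but a larger margin: use the crude bound $\mathrm{Var}(W_{ki}^2)\le s_{kk}=d_k(1+d_k)/(d_0(1+d_0))$, redefine $\eps_2^{kk}\asymp(\sqrt{s_{kk}}+\sqrt{\log M/n})\sqrt{\log M/n}$, and then check that this enlarged tolerance still satisfies the inequality $5c_1'\eps_2^{kk}<d_k/(d_0(1+d_0))$ needed in the proof of Lemma~\ref{lem_dir}; that verification does go through under (\ref{cond_dir}) because $\sqrt{s_{kk}\log M/n}\le\sqrt{c}\,\sqrt{d_k(1+d_k)\ud/(1\vee\od)}\,/\,(d_0(1+d_0))\lesssim\sqrt{c}\,d_k/(d_0(1+d_0))$. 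Without this adjustment, the "careful bookkeeping" you defer at the end cannot be carried out as written.
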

\begin{proof}[Sketch of the proof]
	Since $W_{ki}$ and $W_{ki}W_{\ell i}$ are bounded by $1$ and displays (\ref{dir_mean_var}) -- (\ref{dir_sec_moment}) imply $Var(W_{ki}W_{\ell i}) \le d_k d_{\ell} / (d_0 (1+d_0))$, using Bernstein's inequality in Lemma \ref{lem_bernstein} with similar arguments in Lemmas \ref{lem_t1} and \ref{lem_t4} together with condition (\ref{cond_dir}) yield the desired result.
\end{proof}

\section{Cross-validation procedure for selecting $C_1$ in Algorithm \ref{alg2}}
We give a practical cross-validation procedure for selecting the proportionally constant $C_1$ used in Algorithm \ref{alg2}. Starting with a chosen fine grid $\mathcal{C}$, we randomly split  the data into a training set $\mathcal{D}_1$ and a validation set $\mathcal{D}_2$. For each $c \in \mathcal{C}$,  we apply Algorithm \ref{alg2} with $T=1$, $C_0 = 0.01$ and $C_1 = c$ on $\mathcal{D}_1$ to obtain $\wh A_c$ and $\wh C_c$, and then calculate the out-of-sample error
\[
L(c) :=   \left\| \wh\Theta^{(2)} - \wh A_c \wh C_c \wh A_c^T \right\|_1.
\]
Here $\wh\Theta^{(2)}$ is obtained as in (\ref{est_Theta}) by using $\mathcal{D}_2$ and $\|\cdot \|_1$ denotes the matrix $\ell_1$ norm. The selected $\wh C_1$ is defined as 
\[
\wh C_1 = \arg\min_{c\in \mathcal{C}} L(c).
\]
Using   estimates $\wh I$ and $\wh A$ and motivated by the structure $\Theta_{II} = A_I C A_I^T$, we can estimate $C$ by 
\[
\wh C = (\wh A_{\wh I}^T \wh A_{\wh I})^{-1} \wh A_{\wh I}^T \wh \Theta_{II} \wh A_{\wh I}(\wh A_{\wh I}^T \wh A_{\wh I})^{-1}.
\]

\section{Additional simulation results}\label{app_sim}
\subsection*{Sensitivity to topics number $K$}\label{sec_sim_K}
We study the performance of \textsc{Recover-L2} and \textsc{Recover-KL} for different $K$.
We show that even in a favorable low dimensional setting ($p = 400$ and true $K_0 = 15$) with $|I_k| = 1$, $\xi = 1/p$ and large sample sizes 
($N = 800$, $n = 1000$), the estimation error is seriously affected by a wrong choice of thenumber of topics  $K$.

We generated $50$ datasets according to our data generating mechanism and applied \textsc{Recover-L2}, \textsc{Recover-KL} by using different $K$ to each dataset to obtain $\wh A_K$. To quantify the estimation error, we   use the criterion
\[
\bigl\|\wh A_K\wh A_K^T - AA^T\bigr\|_1
\]
to evaluate  the overall fit of the word by word membership matrix  $AA^T\in \RR^{p\times p}$. We averaged this loss over $50$ datasets. To further benchmark the result, we use a random guessing method \textsc{Uniform} which randomly draws $p\times K$ entries from the Uniform$(0,1)$ distribution and normalizes each column to obtain an ``estimate'' that is independent of the data.
The performance of \textsc{Recover-L2}, \textsc{Recover-KL} and \textsc{Uniform} is shown in Figure \ref{fig_AA}. It clearly shows that both \textsc{Recover-L2} and \textsc{Recover-KL} are very sensitive indeed to correctly specified $K$. When $K$ differs from the true value $K_0$ by more than $2$ units, the performance is close to random guessing! This phenomenon continues to hold for various settings and we conclude that  specifying $K$ is critical for both \textsc{Recover-L2} and \textsc{Recover-KL}. 
\begin{figure}[ht]
	\centering
	\includegraphics[width = .5\textwidth]{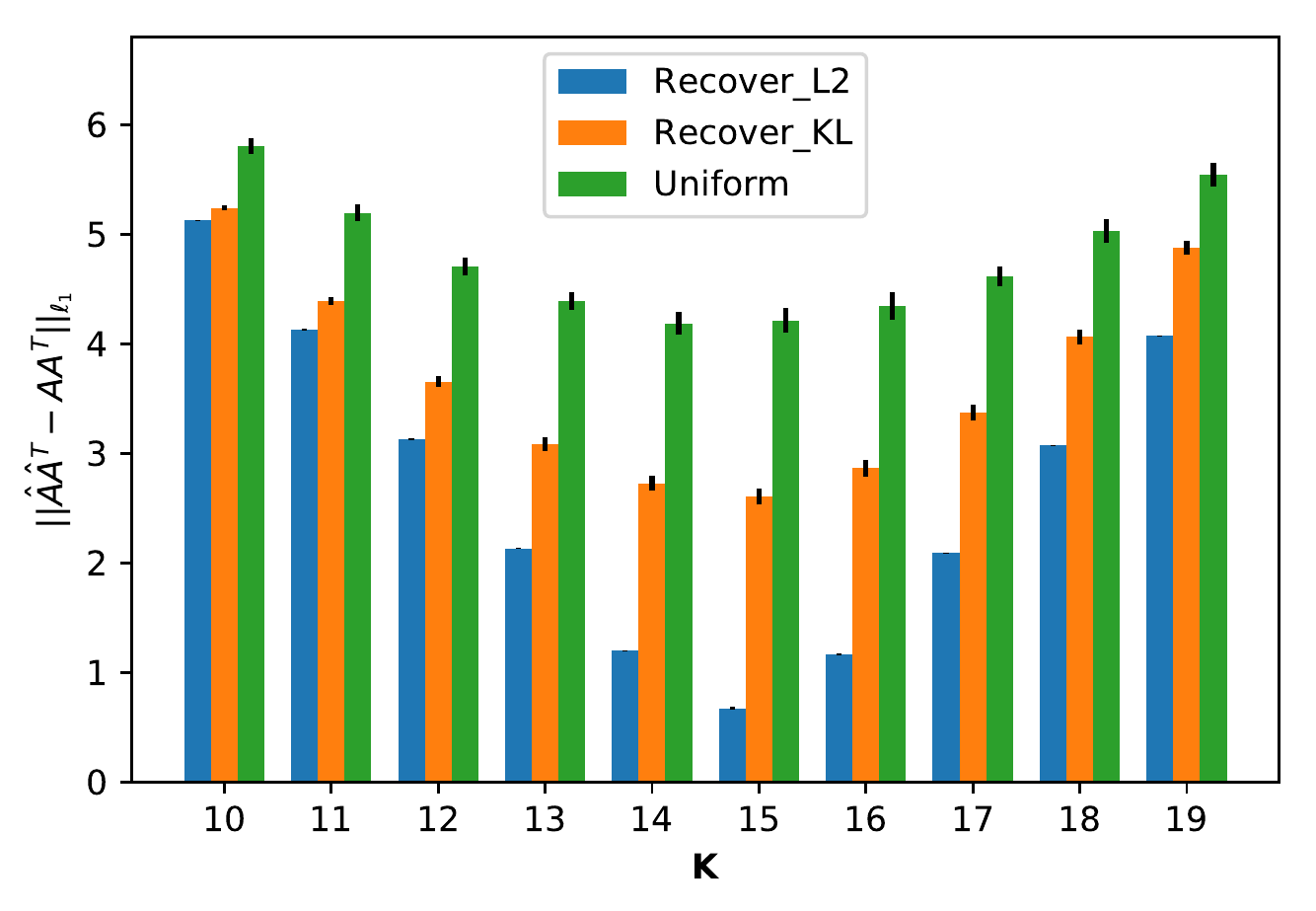}
	\caption{Plot of $\|\wh A\wh A^T - AA^T\|_1$ for using different specified $K$. The black vertical bars are standard errors over 50 datasets.}
	\label{fig_AA}
\end{figure}
\subsection*{Comparison of different algorithms for small $K$}\label{sec_sim_smallK}
We compare  \textsc{Top10}, \textsc{Recover-L2}, \textsc{Recover-KL} and \textsc{T-score}  in  the benchmark setting $N = n = 1500$, $p = 1000$ and $\xi = 1/p$, but for small values of $K$.   \textsc{T-score} is the procedure of \cite{Tracy}, who kindly made the code available to us. Since $K$ is small, the cardinality of each column $W_i$ of $W$ is randomly sampled from $\{1, 2, 3\}$.

In the first experiment, we considered $K = 5$ and varied $|I_k|$ within $\{2, 3, 4, 5, 6\}$. The   estimation error $L_1(\wh A,A) /K$, averaged  over $50$ generated datasets, is shown in Figure \ref{fig_small_K}. We see that the performance of \textsc{Recover-KL} is as good as \textsc{Top10} and even better for $|I_k| \le 3$. However, as already verified in Section \ref{sec_sim}, \textsc{Recover-KL} has the worst performance for large $K$ and is computationally demanding. The plot also demonstrates that \textsc{T-score} needs at least $4$ anchor words per group in order to have comparable performance with \textsc{Top10} and \textsc{Recover-KL}. This is as expected since $|I_k|$ needs to grow as $p^2\log^2(n)/(nN)$ in \cite{Tracy}.

In the second experiment, we set $|I_k| = p/100$ and varied $K$ from $\{5, 6, \ldots, 10\}$. We considered this range of values as, unfortunately, in the implementation of \textsc{T-score} the authors made available to us,  it often crashes  for $K=11$. The average overall $L_1$ estimation error  in Figure \ref{fig_small_K} shows that \textsc{Top10} has the smallest error over all $K$. \textsc{T-score} has similar performance when $K \le 8$ but becomes worse than \textsc{Top10} when $K$ becomes larger. 

\begin{figure}[ht]
	\centering
	\begin{tabular}{cc}
		\includegraphics[width=.35\textwidth]{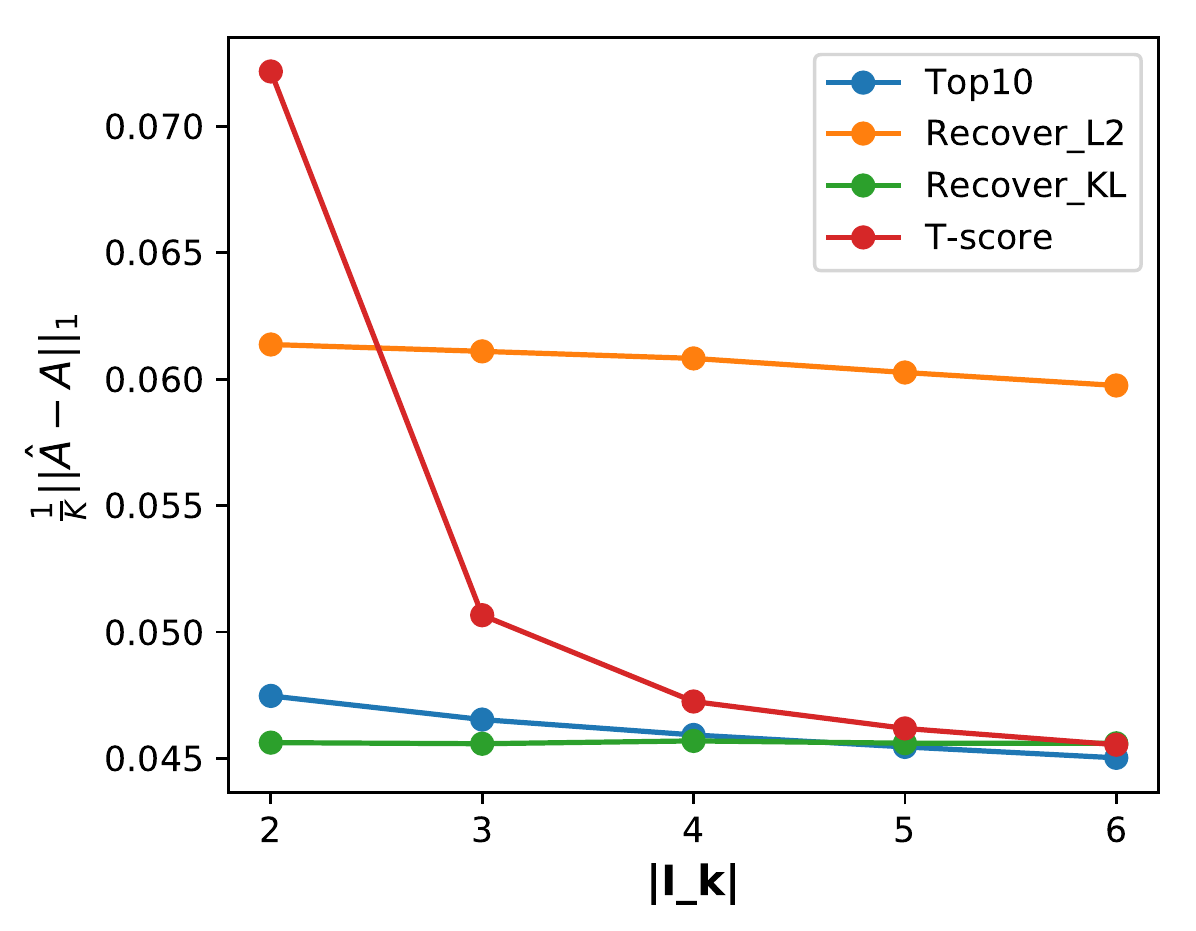} & \hskip 15pt
		\includegraphics[width=.35\textwidth]{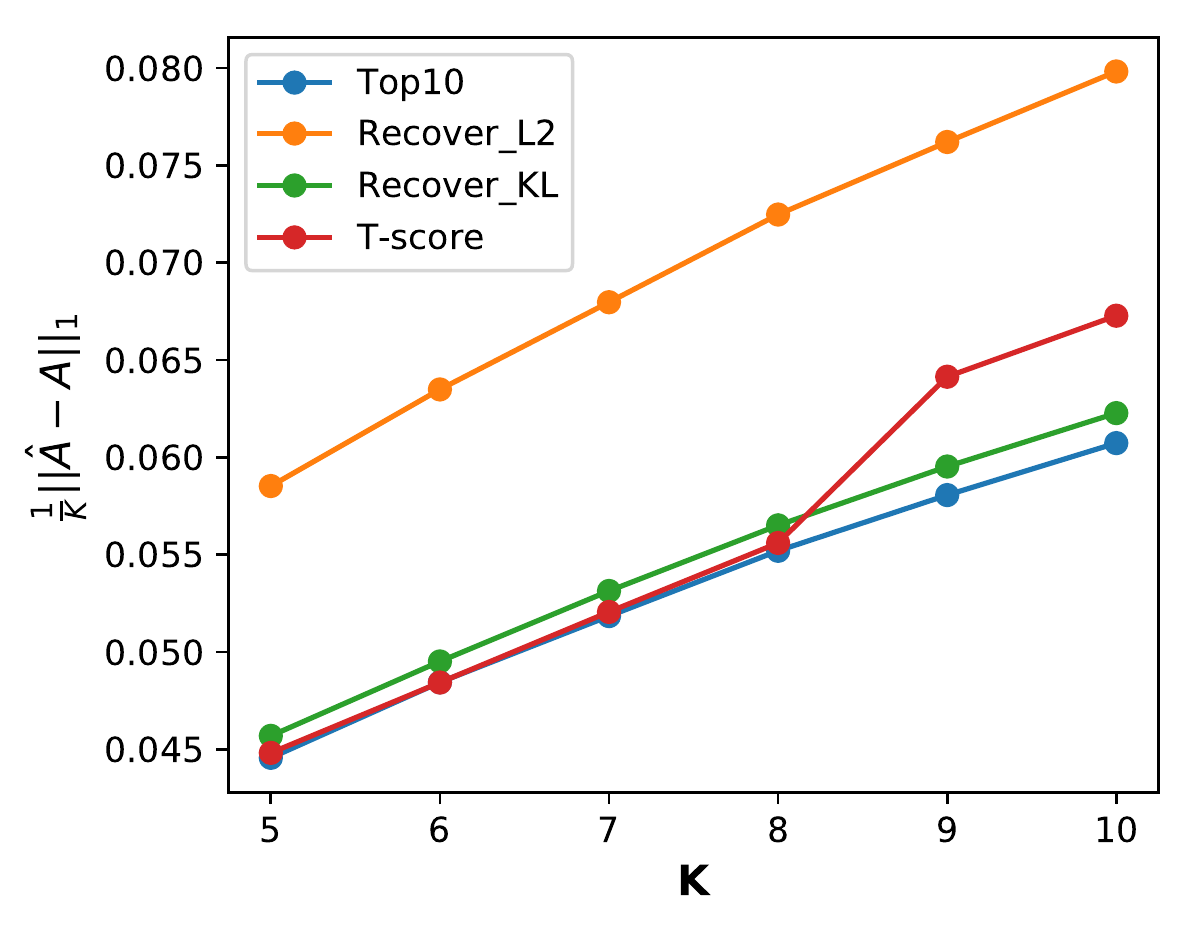}
	\end{tabular}
	\caption{The left plot is the averaged \emph{overall estimation error} by varying $|I_k|$ and setting $K = 5$. The right one is for varying $K$ and setting $|I_k| = p/100$.}
	\label{fig_small_K}
\end{figure}

\subsection*{The New York Times (NYT) dataset}
Similar to  the procedure for preprocessing the NIPs dataset, we removed common stopping words and rare words occurring in less than 150 documents. The dataset after preprocessing has $n = 299419$, $p = 3079$ and $N = 210$. Since \cite{arora2013practical} mainly considers $K = 100$, we tune our method to obtain a  similar number of topics for our comparative study, for  both the semi-synthetic data and the real data.

\paragraph{Semi-synthetic data comparison.}
To generate the semi-synthetic data, we first apply {\sc Top1} to the preprocessed dataset with $C_0 = 0.01$ and $C_1 = 15.5$ \footnote{The value of $C_1$ is chosen such that the estimated number of topics is around $100$} and obtain the estimated word-to-topic matrix $\wt A$ with $\wh K = 101$ and $206$ anchor words. Using this $\wt A$, we generate $W$ from the previous three distributions (a) -- (c), separately, with $n=\{40000, 50000, 60000, 70000\}$, and then generate $X$ based on $\wt A W$ with $N = 250$. For each setting, we generate $15$ datasets and 
the averaged {\it overall estimation error} and {\it topic-wise estimation error} of {\sc Top}, {\sc Recover-L2}, {\sc Recover-KL} and {\sc LDA} are reported in Figure \ref{fig_nyt}. The running time of different algorithm is shown in Table \ref{table_time_nyt}. We specify the true number of topics for {\sc Recover-L2}, {\sc Recover-KL} and {\sc LDA} while {\sc Top} estimates it. 

The results have essentially the same pattern as those from the NIPs dataset. For the Dirichlet distribution with parameter 0.03, {\sc Recover-KL, Recover-L2} and {\sc Top} are comparable, whereas {\sc Top} outperforms the other two when the topics become more correlated. {\sc LDA} has the worst performance, especially when $n$ is relatively small. {\sc Top} and {\sc Recover-L2} run much faster than {\sc Recover-KL} and {\sc LDA}.

\begin{figure}[ht]
	\centering
	\includegraphics[width = \textwidth]{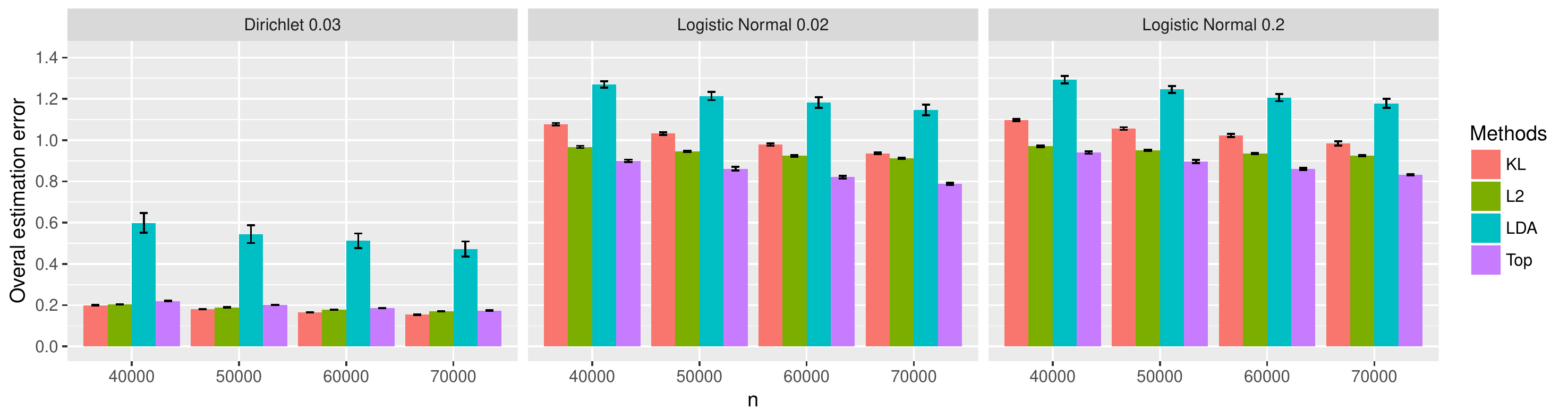}
	\includegraphics[width = \textwidth]{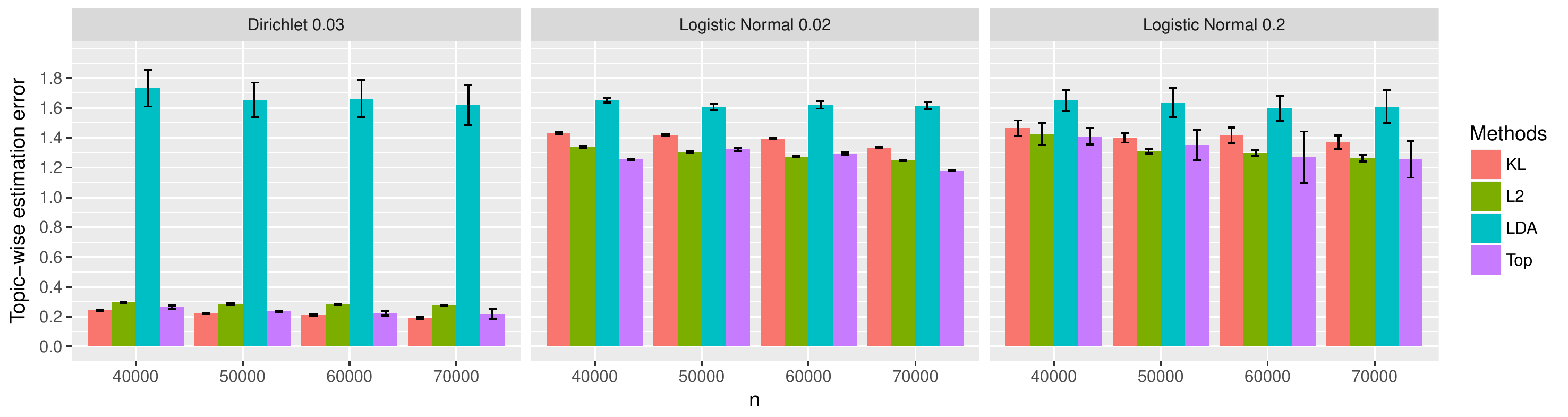}
	\caption{Plots of averaged \emph{overall estimation error} and \emph{topic-wise estimation error} of {\sc Top}, {\sc Recover-L2} (L2), {\sc Recover-KL} (KL) and LDA from the NYT dataset. {\sc Top} estimates $K$ while the other algorithms use the true $K$ as input. The bars denote one standard deviation.}
	\label{fig_nyt}
\end{figure}

\begin{table}[ht]
	\centering
	\caption{Running time (seconds) of different algorithms on the NYT dataset}
	\label{table_time_nyt}
	\begin{tabular}{ccccc}
		\hline
		& {\sc Top} & {\sc Recover-L2} & {\sc Recover-KL} & {\sc LDA} \\ 
		\hline
		$n = 40000$ & 471.5 & 795.6 & 4681.9 & 35913.7 \\ 
		$n = 50000$ & 517.2 & 841.5 & 4824.0 & 44754.1 \\ 
		$n = 60000$ & 555.3 & 894.1 & 4945.3 & 53833.8 \\ 
		$n = 70000$ & 579.2 & 957.2 & 5060.8 & 62548.8 \\ 
		\hline
	\end{tabular}
\end{table}

\paragraph{Real data comparison from the NYT dataset.}
We applied {\sc Top}, {\sc Recover-L2}, {\sc Recover-KL} and {\sc LDA} to the preprocessed NYT dataset. In order to ensure that all algorithms make use of a similar number of topics, {\sc Top} uses $C_0 = 0.01$ and $C_1 = 15.5$, which gives an estimated  $\wh{K} = 101$, and the other three algorithms use $K = 101$ as input. Evaluation of algorithms on the real data is difficult since we do not know the ground truth. However, there exists some standard ways of evaluating the fitted model. Since our main target is the word-topic matrix $A$, we consider two widely used metrics of evaluating the semantic quality of estimated topics \footnote{We do not consider the metric of {\em held-out-probability} since it requires the estimation of both $A$ and $W$ while neither {\sc Top} nor {\sc Recover} estimates $W$.} \citep{arora2013practical, Mimno-coherence,stevens-coherence}:
\begin{enumerate}
	\item[(1)] {\em Topic coherence.} 
	Coherence is a measure of the co-occurrence of the high-frequency words for each individual estimated topics. ``This metric has been shown to correlate well with human judgments of topic quality. When the topics are perfectly recovered,  all the high-probability words in a topic should co-occur frequently, otherwise, the model may be mixing unrelated concepts.'' \citep{arora2013practical}. Given a set of words $\mathcal{W}$, coherence is calculated as
	\begin{equation}\label{def_coh}
	Coherence(\mathcal{W}) := \sum_{w_1,w_2 \in \mathcal{W}}\log {D(w_1,w_2)+\eps \over D(w_2)}
	\end{equation}
	where $D(w_1)$ denotes the number of documents that the word $w_1$ occurs, $D(w_1, w_2)$ denotes the number of documents that both word $w_1$ and word $w_2$ occur \citep{Mimno-coherence} and the parameter $\eps = 0.01$ is used to avoid taking the log of zero \citep{stevens-coherence}. 
	
	In the NYT dataset, for each algorithm, we use its estimated word-topic matrix $\wh A$ and calculate $Coherence(\mathcal{W}_k)$ for $1\le k\le \wh K$, where $\mathcal{W}_k$ is the set of 20 most frequent words in topic $k$. The averaged coherence metrics and its standard deviation across topics of {\sc Top}, {\sc Recover} and {\sc LDA} are reported in Table \ref{table_coh}.\footnote{We only report one of {\sc Recover-L2} and {\sc Recover-KL} since they have the same results.} {\sc Top} has the largest coherence suggesting a better topic recovery while {\sc Recover} has the smallest coherence.

	\item[(2)] {\em Unique words:} Since coherence only measures the quality of each individual topic, we consider the metric, {\em unique words} \citep{arora2013practical}, which reflects the redundancy between topics. For each topic and its $T$ most probable words, we count how many of those $T$ words do not appear in any $T$ most probable words of the other topics.  Some overlap across topics is expected due to semantic ambiguity, but lower numbers of unique words indicate less useful models. We consider $T = 100$ and the averaged unique words and the standard deviation across topics are reported in Table \ref{table_coh}. {\sc Top} and {\sc LDA} have more unique words than {\sc Recover}. In fact, the unique words from {\sc Recover} are only the anchor words and the other most probable words of different topics are all overlapped, indicating the redundancy between the estimated topics. 
	
\end{enumerate}

\begin{table}[ht]
	\centering
	\caption{Coherence and unique words of {\sc Top, Recover} and {\sc LDA} on the NYT dataset. The numbers in the parentheses are the standard deviations across topics.}
	\label{table_coh}
	\renewcommand*\arraystretch{1.3}{
		\begin{tabular}{llll}
			\hline
			Metric & {\sc Top} & {\sc Recover} & {\sc LDA}\\\hline
			{\em Coherence}    & -328.8(51.3) & -464.1(3.8) & -340.3(44.8) \\
			{\em Unique words} & 6.7(4.4) & 1.0(0.0) & 6.3(3.1) \\
			\hline
	\end{tabular}}
\end{table}

\end{document}